\documentclass{article} 
\usepackage{iclr2027_conference,times}

\usepackage{amsmath,amsfonts,bm}

\def\eqref#1{equation~\ref{#1}}

\def\1{\bm{1}}

\DeclareMathAlphabet{\mathsfit}{\encodingdefault}{\sfdefault}{m}{sl}
\SetMathAlphabet{\mathsfit}{bold}{\encodingdefault}{\sfdefault}{bx}{n}

\usepackage{hyperref}
\usepackage{url}
\usepackage{amssymb}
\usepackage{amsmath}
\usepackage{mathtools}
\usepackage{amsthm}
\usepackage{enumitem}
\usepackage{tikz}
\usepackage{subcaption}
\usepackage{wrapfig}
\usepackage{cleveref}
\usepackage{booktabs}
\usepackage{multirow}
\usepackage{graphicx}
\usepackage{makecell}
\usepackage{comment}
\usepackage{centernot}

\usetikzlibrary{shapes.geometric,positioning,arrows.meta,decorations.pathreplacing,calc,fit,backgrounds}

\title{Conformal Prediction Sets Quantify Information Gain: A Theoretical Perspective}

\author{Kevin Zhang\thanks{EECS, MIT, Cambridge MA, USA.} \qquad\qquad\qquad Stephen Bates\footnotemark[1]}

\newtheorem{definition}{Definition}
\newtheorem{proposition}{Proposition}
\newtheorem{theorem}{Theorem}
\newtheorem{corollary}{Corollary}

\newtheorem{lemma}{Lemma}

\arxiv
\begin{document}

\maketitle

\begin{abstract}
    Conformal prediction is a popular tool for uncertainty quantification that outputs prediction sets with finite-sample coverage guarantees. While prediction set size is commonly used as a heuristic measure of uncertainty, the information-theoretic basis for this interpretation remains poorly understood. In this work, we provide such a foundation using a decision-theoretic generalization of entropy tailored to set-valued prediction. In particular, we introduce a family of generalized information measures based on the size and coverage of conformal prediction sets. Notably, Shannon mutual information admits an exact integral representation in terms of these measures. We then show that, in standard classification settings, the reduction in conformal set size from additional information (i) is sandwiched between calibration-dependent members of this family and (ii) obeys a data processing inequality, both up to finite-sample calibration and model error terms. Together, our results formally relate conformal prediction to classical information-theoretic quantities and justify using set-size reduction as an information gain metric. Empirically, we validate our theory across 11 classification settings and show that set-size reduction and Shannon mutual information can rank features differently in a greedy feature selection experiment.
\end{abstract}

\section{Introduction}

Conformal prediction is a general procedure for constructing prediction sets with finite-sample and distribution-free coverage guarantees under minimal assumptions~\citep{angelopoulos2021gentle}. Recently, the \emph{split conformal prediction} variant has gained popularity as a lightweight and scalable calibration procedure, with applications in healthcare~\citep{lu2022fair}, dynamic control~\citep{lindemann2023safe}, and language modeling~\citep{kumar2023conformal,quach2024conformal}, among other settings.

Beyond coverage, conformal predictors are commonly evaluated by their inefficiency, or expected set size~\citep{vovk2016criteria,sadinle2019least}. At the instance level, set size also provides an easy-to-compute and operationally meaningful heuristic measure of uncertainty~\citep{cresswell2024conformal}, since more ambiguous inputs typically produce larger sets~\citep{romano2019conformalized}. Motivated by this interpretation, we examine conformal set-size reduction as a natural notion of information gain.

The relationship between conformal set size and information-theoretic quantities has been studied both empirically~\citep[e.g.][]{hagos2026performance} and theoretically by using list-decoding arguments to upper bound the conditional entropy in terms of set size and coverage~\citep{correia2024information}. However, these one-sided bounds do not establish whether set-size reduction itself behaves as a principled information measure. In this work, we instead analyze conformal prediction using a decision-theoretic notion of entropy~\citep{degroot1962uncertainty}. This perspective provides new insights into expected conformal set size and its connection to Shannon entropy and mutual information.

Our primary contributions are as follows: (i) we introduce a family of generalized information measures for set-valued prediction that exactly recovers Shannon mutual information via an integral representation; (ii) we prove that, in standard classification settings, conformal set-size reduction approximately lies between calibration-dependent members of this family and obeys a data processing inequality; and (iii) we empirically validate our theory across 11 classification settings and verify that set-size reduction and Shannon mutual information can induce different feature selection orderings in practice.
Our results further the theoretical understanding of uncertainty quantification with conformal prediction and motivate set-size reduction as a principled information gain metric.

\section{Preliminaries} \label{sec:background}

We introduce the notation and relevant background on information theory and conformal prediction. In this work, we consider a classification setting in which $X \in \mathcal{X}$ is a generic feature vector and \mbox{$Y \in \mathcal{Y} \coloneqq \{1, \ldots, K\}$} is a categorical prediction target. 
 
\textbf{Generalized entropy and information.~}~Our analysis relies on a decision-theoretic generalization of entropy and information, in which uncertainty is represented using Bayes risk~\citep{degroot1962uncertainty, grunwald2004game}. Suppose we fix an action space $\mathcal{A}$ and a loss function $\ell : \mathcal{Y} \times \mathcal{A} \to \mathbb{R}$. The generalized entropy induced by $\ell$ is defined as
\begin{equation}
    H_\ell (Y \mid X) = \mathbb{E} \left[ \inf_{a \in \mathcal{A}} \, \mathbb{E} [\ell(Y, a) \mid X ] \right].
\end{equation}
Given side information $Z$, the generalized mutual information between $X$ and $Y$ quantifies the predictive value of $X$ by the reduction in Bayes risk under the chosen loss.
\begin{equation}
    I_\ell(X; Y \mid Z) = H_\ell(Y \mid Z) - H_\ell(Y \mid X, Z) \geq 0.
\end{equation}
Generalized mutual information retains several natural information-theoretic properties, including non-negativity and a data processing inequality~\citep{duchi2018multiclass}. The marginal versions $H_\ell(Y)$ and $I_\ell(X; Y)$ are defined by taking the respective conditioning variables to generate the trivial \mbox{$\sigma$-algebra} (e.g.,~if $X$ and $Z$ are almost surely constant, respectively). 
 
\textit{Example.} Shannon entropy $H(Y) = -\mathbb{E}[\log P_Y(Y)]$ is a special case of the generalized entropy by taking the action space $\mathcal{A} = \Delta^{K-1}$ to be the probability simplex over $\mathcal{Y}$ and using the log loss function $\ell(y, a) = -\log a(y)$. The Bayes-optimal action is then $a = P_Y$, giving $H_\ell(Y) = H(Y)$.

\textbf{Conformal prediction.~}~
Conformal prediction provides a general framework for converting the output of a predictive model into a prediction set with finite-sample coverage guarantees~\citep{vovk2005algorithmic, shafer2008tutorial, angelopoulos2025theoretical}. In this work, we focus on \emph{split conformal prediction}~\citep{papadopoulos2002inductive, lei2018distribution}, which we briefly review next.
 
Let $f$ be a pretrained model and $\mathcal{D}_\mathrm{cal} = \{(X_i, Y_i)\}_{i=1}^n$ be a calibration dataset. Given a score function $s: \mathcal{X} \times \mathcal{Y} \to \mathbb{R} \cup \{\infty\}$, define the calibration scores $S_i = s(X_i, Y_i)$ for $i = 1, \ldots, n$. Fix a nominal target coverage level $1 - \alpha \in (0,1)$, and define the calibrated conformal threshold $\hat{\tau}_{\alpha} = \mathrm{Quantile}\big((1 + 1/n)(1 - \alpha);\, \{S_i\}_{i=1}^n\big)$. At inference time, the conformal prediction set for a new test point is $\mathcal{C}_\alpha (X_\mathrm{test}) = \{y \in \mathcal{Y} : s(X_\mathrm{test}, y) \leq \hat{\tau}_{\alpha}\}$. 

A key appeal of split conformal prediction is that the calibration procedure is lightweight and yields a finite-sample marginal coverage guarantee.
\begin{proposition}[Conformal coverage guarantee] \label{prop:conformal_guarantee}
    Suppose that the calibration observations $\mathcal{D}_\mathrm{cal}$ and test point $(X_\mathrm{test}, Y_\mathrm{test})$ are exchangeable and that the scores $S_i$ are almost surely distinct. Then,
    \begin{equation} \label{eqn:conformal_guarantee}
        1 - \alpha \leq \mathbb{P}\big(Y_\mathrm{test} \in \mathcal{C}_\alpha (X_\mathrm{test})\big) \leq 1 - \alpha + \tfrac{1}{n+1},
    \end{equation}
    where the probability is taken jointly over the calibration data and the test point.
\end{proposition}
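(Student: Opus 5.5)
The plan is the standard rank argument for split conformal prediction. Write $S_{n+1} = s(X_\mathrm{test}, Y_\mathrm{test})$ for the test score. First I reduce the coverage event to a statement about ranks. Since $\mathcal{C}_\alpha(X_\mathrm{test}) = \{y : s(X_\mathrm{test}, y) \le \hat\tau_\alpha\}$, we have $Y_\mathrm{test} \in \mathcal{C}_\alpha(X_\mathrm{test})$ if and only if $S_{n+1} \le \hat\tau_\alpha$. Set $k = \lceil (n+1)(1-\alpha) \rceil$. Interpreting $\mathrm{Quantile}((1+1/n)(1-\alpha); \{S_i\})$ as the $k$-th order statistic $S_{(k)}$ of the calibration scores, with the convention $S_{(k)} = \infty$ when $k > n$, the event becomes $\{S_{n+1} \le S_{(k)}\}$.

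Next I show that this event is exactly $\{R_{n+1} \le k\}$, where $R_{n+1}$ is the rank of $S_{n+1}$ among all $n+1$ scores $S_1, \dots, S_{n+1}$. Under distinctness this holds because $S_{n+1} \le S_{(k)}$ iff fewer than $k$ calibration scores lie strictly below $S_{n+1}$. The case $k > n$ gives the trivial event, which is consistent with the claim.

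The key step is exchangeability. Because $(X_1,Y_1),\dots,(X_n,Y_n),(X_\mathrm{test},Y_\mathrm{test})$ are exchangeable and $s$ is a fixed function (with $f$ pretrained independently of the data), the score vector $(S_1,\dots,S_{n+1})$ is exchangeable. With almost surely distinct scores, $R_{n+1}$ is then uniform on $\{1,\dots,n+1\}$. To justify this, note that each index is equally likely to hold rank $j$ by symmetry, and these events partition the sample space. Hence
\[
\mathbb{P}(Y_\mathrm{test}\in\mathcal{C}_\alpha(X_\mathrm{test})) = \frac{\min(k, n+1)}{n+1}.
\]

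Finally I bound this fraction. Since $k \ge (n+1)(1-\alpha)$, the probability is at least $1-\alpha$. Since $k < (n+1)(1-\alpha) + 1$, it is at most $1-\alpha + \tfrac{1}{n+1}$.

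The main obstacle is the bookkeeping in the quantile convention: confirming that the empirical $(1+1/n)(1-\alpha)$ quantile of $n$ points is exactly the $\lceil (n+1)(1-\alpha)\rceil$-th order statistic, and handling the boundary case $k>n$. The upper bound is the only place distinctness is used. Without distinctness, ties can only increase coverage, so the lower bound would still hold but the upper bound could fail.
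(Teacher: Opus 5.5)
Your proof is correct. The paper does not prove this proposition itself. It is stated as background and attributed to the standard references, and your rank argument is exactly the proof given there: reduce coverage to $\{R_{n+1}\le k\}$ with $k=\lceil (n+1)(1-\alpha)\rceil$, then use exchangeability and a.s.\ distinctness to get $R_{n+1}$ uniform on $\{1,\dots,n+1\}$.

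The nearest argument inside the paper is Lemma~\ref{lemma:coverage_beta}. Under the stronger i.i.d.\ assumption, it uses the probability integral transform $G(S_{(k)})$ to show that the training-conditional coverage satisfies $1-\varepsilon \sim \mathrm{Beta}(k, n+1-k)$. Taking expectations recovers the same identity $\mathbb{P}(Y_\mathrm{test}\in\mathcal{C}_\alpha(X_\mathrm{test})) = k/(n+1)$, after which your final bounding step applies verbatim. The two routes trade off as follows:
\begin{itemize}
\item The Beta route gives the full distribution of $\varepsilon$. The paper needs this for its variance and sub-Gaussian control of $\varepsilon^{\scriptscriptstyle\Delta}$. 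The cost is i.i.d.\ data and a continuous score CDF.
\item Your route needs only exchangeability. As you correctly note, it uses distinctness only for the upper bound.
\end{itemize}

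One small addition: for the randomized APS score, take the exchangeable units to be $(X_i, Y_i, U_i)$, with the $U_i$ drawn i.i.d.\ independently of the data, so the score vector remains exchangeable. With that, your argument goes through unchanged.
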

 
Proposition~\ref{prop:conformal_guarantee} holds for \emph{any} model $f$ and score function $s$. Nonetheless, the score function determines the size and structure of the conformal sets~\citep{angelopoulos2021uncertainty}. We review two standard choices for classification. Let $q(\cdot \mid x) \in \Delta^{K-1}$ denote the predicted distribution for an input $x$.
 
Under the \emph{probability score}~\citep{sadinle2019least}, $s(x,y) = -q(y \mid x)$ is the negative predicted probability of the label. The resulting conformal prediction set $\mathcal{C}_\alpha (x) = \{ y : q(y \mid x) \geq -\hat\tau_\alpha \}$ is therefore a level set of the predicted class probabilities determined by the calibrated cutoff. 
 
The \emph{Adaptive Prediction Sets (APS) score}~\citep{romano2020classification} instead computes the score by accumulating the predicted probability mass of labels that are \mbox{at least as likely as the true label}. 
\mbox{Formally, let $\smash{q(y^{\scriptscriptstyle (1)}\mid x) > q(y^{\scriptscriptstyle (2)}\mid x) > \cdots > q(y^{\scriptscriptstyle (K)}\mid x)}$} denote the ordered predicted class probabilities (assume for simplicity that there are no ties). Let $r_x(y)$ be the rank of the label $y$, i.e.,~$\smash{y = y^{\scriptscriptstyle (r_x(y))}}$. The randomized APS score function is
\begin{equation}
    s(x, y) = \sum_{\ell=1}^{r_x(y) - 1} q(y^{(\ell)} \mid x) + U\cdot q(y \mid x),
\end{equation}
where $U \sim \mathrm{Unif}[0,1]$ is drawn independently to randomize inclusion of the boundary label. If there are ties among the predicted probabilities, we break them at random following \citet{romano2020classification}.

\section{Conformal Entropy and Mutual Information} \label{sec:methods}
 
We introduce a family of generalized entropy and mutual information measures motivated by set-valued prediction. Our main result then establishes a two-sided bound showing that information gain from conformal prediction is bracketed by calibration-dependent members of this family, up to calibration and model error terms. Lastly, we extend the construction to the APS score. All proofs are deferred to Appendix~\ref{app:sec:proofs}. 
 
\subsection{Generalized information measures for the probability score} \label{subsec:prob_score_construction}
 
We begin with a family of loss functions indexed by $\lambda \in (0,1]$. For a set-valued prediction $\Gamma \subseteq \mathcal{Y}$, define the loss function
\begin{equation} \label{eqn:conformal_loss}
    \ell_\lambda(Y, \Gamma) = |\Gamma| + \tfrac{1}{\lambda} \mathbf{1}(Y \notin \Gamma),
\end{equation}
which penalizes both the size of $\Gamma$ and its miscoverage of the true label $Y$. We use $\ell_\lambda$ to define a generalized entropy with action space $\mathcal{A} = 2^\mathcal{Y}$, the collection of all subsets of $\mathcal{Y}$.
\begin{definition}[Conformal entropy and mutual information]
    For a fixed $\lambda \in (0,1]$, the conformal entropy is defined as
    \begin{equation}
        H_\lambda(Y \mid X) = \mathbb{E}\Bigl[\, \inf_{\Gamma \subseteq \mathcal{Y}} \, \mathbb{E}\bigl[\ell_\lambda(Y, \Gamma) \mid X\bigr]\Bigr].
    \end{equation}
    The corresponding conformal mutual information is $I_\lambda(X; Y \mid Z) = H_\lambda(Y \mid Z) - H_\lambda(Y \mid X, Z)$.
\end{definition}
Since the action space is finite, the conformal entropy can equivalently be expressed as an infimum over (measurable) decision rules $\Gamma: \mathcal{X} \to \mathcal{A}$, so that $H_\lambda(Y \mid X) = \inf_{\Gamma: \mathcal{X} \to \mathcal{A}} \mathbb{E}\bigl[\ell_\lambda(Y, \Gamma(X))\bigr]$. We next characterize the Bayes-optimal actions under $\ell_\lambda$.
\begin{proposition}[Optimal set-valued predictors] \label{prop:optimal_predictor_prob_score}
    Fix $\lambda \in (0,1]$ and let $p(\cdot \mid X)$ be the conditional distribution of $Y$ given $X$. The decision rule $\Gamma_\lambda^*(X; p) = \{y \in \mathcal{Y} : p(y \mid X) \geq \lambda\}$ minimizes $\mathbb{E} \big[ \ell_\lambda(Y, \Gamma(X)) \big]$ over all $\Gamma: \mathcal{X} \to 2^\mathcal{Y}$.
\end{proposition}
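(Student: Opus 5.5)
The plan is to reduce the minimization over decision rules to a pointwise minimization over subsets for each fixed $x$. Then I will show that the pointwise objective is additive across labels, so it is minimized label by label.

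For a fixed $x$ and any $\Gamma \subseteq \mathcal{Y}$, I first compute the conditional risk. Since $\mathbf{1}(Y \notin \Gamma) = 1 - \sum_{y \in \Gamma} \mathbf{1}(Y = y)$,
\begin{equation*}
\mathbb{E}\bigl[\ell_\lambda(Y,\Gamma) \mid X = x\bigr] = |\Gamma| + \tfrac{1}{\lambda}\Bigl(1 - \sum_{y \in \Gamma} p(y \mid x)\Bigr) = \tfrac{1}{\lambda} + \sum_{y \in \Gamma}\Bigl(1 - \tfrac{p(y\mid x)}{\lambda}\Bigr).
\end{equation*}
The constant $1/\lambda$ does not depend on $\Gamma$. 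The remaining sum is separable: each label contributes $1 - p(y\mid x)/\lambda$ if it is included and $0$ otherwise. This quantity is minimized exactly by including every $y$ whose contribution is nonpositive, which is the condition $p(y\mid x) \geq \lambda$. Ties at equality are harmless, since including such a label does not change the risk. Hence $\Gamma_\lambda^*(x;p)$ attains $\inf_{\Gamma \subseteq \mathcal{Y}} \mathbb{E}[\ell_\lambda(Y,\Gamma)\mid X=x]$ for every $x$.

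Next, I pass from pointwise to global optimality. For any measurable rule $\Gamma:\mathcal{X}\to 2^{\mathcal{Y}}$, the tower property gives $\mathbb{E}[\ell_\lambda(Y,\Gamma(X))] = \mathbb{E}\bigl[\mathbb{E}[\ell_\lambda(Y,\Gamma(X))\mid X]\bigr]$. Because $\Gamma(X)$ is $X$-measurable, the inner conditional expectation equals the pointwise risk formula above evaluated at $\Gamma = \Gamma(X)$. By the first step this is at least the same quantity evaluated at $\Gamma_\lambda^*(X;p)$, almost surely. Taking expectations shows that $\Gamma_\lambda^*$ achieves risk no larger than that of any $\Gamma$.

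The only mildly delicate point is the measurability of $\Gamma_\lambda^*$. This holds because $x \mapsto p(y\mid x)$ is a measurable version of the conditional distribution for each of the finitely many $y$. Consequently, $\{x : y \in \Gamma^*_\lambda(x)\}$ is measurable for each $y$, which makes $\Gamma_\lambda^*$ a measurable map into the finite set $2^{\mathcal{Y}}$. This also justifies the interchange of the infimum and the expectation noted before the proposition.
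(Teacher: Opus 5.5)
Your proof is correct and follows essentially the same route as the paper. Both rewrite the conditional risk as $\tfrac{1}{\lambda} + \sum_{y \in \Gamma}\bigl(1 - p(y\mid x)/\lambda\bigr)$ using $\sum_y p(y\mid x) = 1$, then minimize label by label with the boundary case $p(y \mid x) = \lambda$ left free; your explicit tower-property step and measurability check for $\Gamma^*_\lambda$ fill in details the paper leaves implicit without changing the argument.
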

Proposition~\ref{prop:optimal_predictor_prob_score} explains the conformal entropy terminology: the Bayes-optimal action $\Gamma_\lambda^*(X; p)$ is exactly the conformal prediction set from an oracle model $f(X) = p(\cdot \mid X)$ using the probability score at the calibrated threshold $\smash{\hat{\tau}_\alpha = -\lambda}$. This connection should be unsurprising since $\smash{\mathbb{E}[\ell_\lambda(Y, \Gamma)]}$ is the Lagrangian relaxation of minimizing expected set size subject to a coverage constraint, which is precisely the tradeoff underlying the probability score~\citep{sadinle2019least}.

In practice, the model predictions may not coincide with the true conditional distribution of $Y$. The resulting model imperfection can be quantified using a generalized information divergence\footnote{More precisely, $D_\lambda$ is a pseudo-divergence, since $D_\lambda(p \parallel q) = 0 \centernot\implies p = q$.}.
\begin{definition}[Conformal information divergence]
    For a fixed $\lambda \in (0,1]$, the conformal information divergence between conditional distributions $p(\cdot \mid X)$ and $q(\cdot \mid X)$ is defined as
    \begin{equation}
        D_\lambda(p(\cdot \mid X) \parallel q(\cdot \mid X)) = \mathbb{E}_X \! \big[ \mathbb{E}_{Y \sim p(\cdot \mid X)} [\ell_\lambda(Y, \Gamma^*_{\lambda}(X; q)) - \ell_\lambda(Y, \Gamma^*_{\lambda}(X; p))]\big].
    \end{equation}
\end{definition}
Intuitively, $D_\lambda(p \parallel q)$ is the excess risk incurred by acting under $q$ when the true distribution is $p$. The divergence is non-negative by the optimality of $\Gamma_\lambda^*(X; p)$ and vanishes when $p = q$.

Using our construction, we can decompose Shannon mutual information into a uniform average of conformal mutual information terms over $\lambda$, analogous to the classical decomposition of the log loss into cost-weighted binary losses~\citep{schervish1989general}.
\begin{theorem}[Integral representation of Shannon information] \label{thm:integral}
    Let $I$ denote the Shannon mutual information (in nats). For any random variables $X$ and $Z$,
    \begin{equation}
        I(X; Y \mid Z) = \int_0^1 I_\lambda(X; Y \mid Z) \, d\lambda.
    \end{equation}
\end{theorem}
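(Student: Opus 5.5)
The plan is to compute the pointwise Bayes risk under $\ell_\lambda$ explicitly and then integrate over $\lambda$ before taking expectations. Fix a conditional distribution $p = p(\cdot \mid \cdot)$ of $Y$ given whatever we condition on. By Proposition~\ref{prop:optimal_predictor_prob_score}, for a single distribution $p$ on $\mathcal{Y}$ the optimal action is $\Gamma^*_\lambda = \{y : p(y) \geq \lambda\}$. This gives the minimal conditional risk
\begin{equation*}
    R_\lambda(p) = \sum_{y} \Bigl[ \mathbf{1}(p(y) \geq \lambda) + \tfrac{1}{\lambda}\, p(y)\, \mathbf{1}(p(y) < \lambda) \Bigr] = \sum_y \min\bigl\{1, p(y)/\lambda\bigr\}.
\end{equation*}
The last equality holds because each label contributes independently to the loss: we pay $1$ to include $y$, or $p(y)/\lambda$ to exclude it. The optimizer picks the cheaper option, with ties at $p(y)=\lambda$ being immaterial. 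Thus $H_\lambda(Y \mid W) = \mathbb{E}\bigl[\sum_y \phi_\lambda(p(y \mid W))\bigr]$ with $\phi_\lambda(t) = \min\{1, t/\lambda\}$, for any conditioning variable $W$ (including $W=Z$ and $W=(X,Z)$).

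Next, integrate in $\lambda$ for a fixed $t \in [0,1]$:
\begin{equation*}
    \int_0^1 \phi_\lambda(t)\, d\lambda = \int_0^t 1\, d\lambda + \int_t^1 \tfrac{t}{\lambda}\, d\lambda = t - t\log t.
\end{equation*}
Summing over $y$ and using $\sum_y p(y)=1$, we get $\int_0^1 R_\lambda(p)\, d\lambda = 1 + H(p)$, where $H$ is Shannon entropy in nats. The integrand is nonnegative, so Tonelli lets us swap $\int_0^1$ with $\mathbb{E}$. This yields $\int_0^1 H_\lambda(Y \mid W)\, d\lambda = 1 + H(Y \mid W)$ for any $W$.

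Finally, take $W=Z$ and $W=(X,Z)$ and subtract. The constant $1$ cancels, giving $\int_0^1 I_\lambda(X;Y\mid Z)\,d\lambda = H(Y\mid Z) - H(Y \mid X,Z) = I(X;Y\mid Z)$.

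The main obstacle is the subtraction step when entropies are infinite; here they are not, since $|\mathcal{Y}| = K$ gives $H_\lambda \le K$ and $H \le \log K$. So all quantities are finite and the difference of integrals equals the integral of differences. A secondary point is measurability of $\lambda \mapsto I_\lambda$ and of the regular conditional distributions. For this I would simply invoke the existence of regular conditional probabilities on the finite space $\mathcal{Y}$ and the joint measurability of $(\lambda,t)\mapsto \min\{1,t/\lambda\}$. The marginal case, with trivial $\sigma$-algebras, is covered by the same computation.
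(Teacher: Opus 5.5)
Your proposal is correct and follows essentially the same route as the paper: write $H_\lambda(Y \mid W) = \mathbb{E}\bigl[\sum_y \min\{1, p(y \mid W)/\lambda\}\bigr]$, swap the $\lambda$-integral and the expectation by Tonelli, evaluate the inner integral as $t - t\log t$ to obtain $\int_0^1 H_\lambda(Y \mid W)\,d\lambda = 1 + H(Y \mid W)$, and subtract for $W = Z$ and $W = (X,Z)$. Your remark that $0 \le H_\lambda \le K$ and $H \le \log K$ make the subtraction of integrals legitimate is a small justification that the paper leaves implicit.
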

This identity connects information gain in conformal prediction to classical information-theoretic quantities. At a high level, Proposition~\ref{prop:optimal_predictor_prob_score} expresses conformal mutual information in terms of differences in set size and miscoverage. Standard coverage guarantees then allow us to bound set-size reduction in terms of $I_\lambda$ at calibrated values of $\lambda$, each of which corresponds to a slice of Shannon mutual information by Theorem~\ref{thm:integral}. The following section makes this intuition precise.

\begin{figure}[t]
    \centering
    \input{figures/teaser/conformal_overview}
    \hfill
\begin{subfigure}[t]{0.36\textwidth}
    \centering
    \begin{tikzpicture}[
        font=\small, >=Latex,
        outer/.style={ellipse, draw=blue!80!black!80, fill=blue!55!cyan!20, minimum width=3.5cm, minimum height=2cm, thick, align=center},
        inner/.style={ellipse, draw=gold!95!yellow!90!black, fill=gold!50!yellow!35, minimum width=2.0cm, minimum height=1cm, thick, align=center},
        arrow/.style={->, thick, shorten >=2.5pt},
        pt/.style={circle, fill=black, inner sep=1pt},
        container/.style={rounded corners=7pt, draw=gray!75, fill=gray!8, thick, minimum width=4.95cm, minimum height=4.3cm},
    ]
    \definecolor{gold}{HTML}{DAA520}
    \node[outer] (cmi) {};
    \node[above=0cm of cmi.north, align=center, text=blue!80!black!80] {Conformal MI family $I_\lambda(X;Y \mid Z)$};
    \node at ([yshift=0.5cm]cmi.center) {$\lambda \in (0,1]$};
    \node[inner] (local) at ([yshift=-0.35cm]cmi.center) {\hspace{1pt} $\lambda \approx \hat{\lambda}_\alpha^{\scriptscriptstyle{(i)}}$};
    \node[text=black,        below=1.06cm of cmi, xshift=1.4cm] (shannon) {$I(X;Y \mid Z)$};
    \node[text=red!90!black,  above=0.15cm of shannon, xshift=0.4cm] (cssr) {$\Delta_\alpha$};
    \node[pt]                    (sdot) at ([xshift=-3pt]shannon.west) {};
    \node[pt, fill=red!90!black] (cdot) at ([xshift=-3pt]cssr.west) {};
    \draw[arrow] (cmi.west)   -- ++(-0.25cm,0) |- node[pos=0.72, above, yshift=-0.06cm] {integration} (sdot.west);
    \draw[arrow] (local.west) -- ++(-0.25cm,0) |- node[pos=0.72, above, yshift=-0.05cm, xshift=0.05cm] {approximation} (cdot.west);
    \begin{scope}[on background layer]
        \node[container] (box) at ([yshift=-0.5cm]cmi.center) {};
    \end{scope}
    \node[anchor=north] at ([yshift=0.55cm]box.north) {Information measures};
    \end{tikzpicture}
    \caption{}
    \label{fig:cmi-hierarchy}
\end{subfigure}
    \caption{We illustrate an example of our setting in (\ref{fig:cssr-overview}), where $Z$ is a low-resolution image and $X$ is its residual. The information gain of $X$ beyond $Z$ can be quantified by $\Delta_\alpha$. We develop the information hierarchy in (\ref{fig:cmi-hierarchy}), showing that conformal mutual information near the calibrated $\lambda$ approximates $\Delta_\alpha$, while Shannon mutual information aggregates the entire family (Theorem~\ref{thm:integral}).}
    \label{fig:teaser}
\end{figure}

\subsection{Approximate information gain using conformal set-size reduction} \label{subsec:probability_score_info}

We first formalize the information gained from additional features $X$ beyond base features $Z$ with conformal prediction. For notational convenience, let $X^{\scriptscriptstyle (1)} = Z$ and $X^{\scriptscriptstyle (2)} = (X,Z)$.

For $i \in \{1, 2\}$, let $p(\cdot \mid X^{\scriptscriptstyle(i)})$ and \mbox{$q(\cdot \mid X^{\scriptscriptstyle(i)})$} denote the true and predicted conditional distributions, respectively. Let $\hat{\tau}_\alpha^{\scriptscriptstyle (i)}$ be the calibrated probability-score threshold using $X^{\scriptscriptstyle (i)}$ at nominal coverage level \mbox{$1 - \alpha$}, obtained from a shared calibration dataset $\mathcal{D}_{\mathrm{cal}}$ of size $n$. Let $\mathcal{C}_\alpha(X^{\scriptscriptstyle(i)})$ denote the corresponding conformal prediction set. The (training-conditional) expected set-size reduction is
\begin{equation} \label{eqn:setsize_delta}
    \Delta_\alpha = \mathbb{E}\bigl[|\mathcal{C}_\alpha(X^{\scriptscriptstyle(1)})|\bigr] - \mathbb{E}\bigl[|\mathcal{C}_\alpha(X^{\scriptscriptstyle(2)})|\bigr],
\end{equation}
where the expectations are over a new test input. This setting is illustrated in Figure~\ref{fig:cssr-overview}.

We emphasize that $\Delta_\alpha$ depends on $\mathcal{D}_\mathrm{cal}$. One might expect set-size reduction to be non-negative on average over draws of the calibration data (i.e.,~\mbox{$\mathbb{E}[\Delta_\alpha] \geq 0$}). In the following analysis, we show that an approximate version of this intuition holds.
\begin{theorem}[Probability score sandwich bound] \label{thm:sandwich_prob_score}
    Fix a calibration dataset $\mathcal{D}_{\mathrm{cal}}$ and nominal coverage level $1 - \alpha$. For $i \in \{1, 2\}$, let $\smash{\hat{\lambda}_\alpha^{\scriptscriptstyle (i)} = -\hat{\tau}_\alpha^{\scriptscriptstyle (i)}}$ and assume that $\smash{\hat{\lambda}_\alpha^{\scriptscriptstyle (i)} > 0}$. Moreover, define
    \begin{equation}
        I^{\scriptscriptstyle (i)} = I_{\hat{\lambda}_\alpha^{\scriptscriptstyle (i)}} (X; Y \mid Z), \qquad D^{\scriptscriptstyle (i)} = D_{\hat{\lambda}_\alpha^{\scriptscriptstyle (i)}} (p(\cdot \mid X^{\scriptscriptstyle (i)}) \parallel q(\cdot \mid X^{\scriptscriptstyle (i)})).
    \end{equation}
    Let the training-conditional miscoverage be $\varepsilon^{\scriptscriptstyle (i)} = \mathbb{P}(Y \notin \mathcal{C}_\alpha(X^{\scriptscriptstyle (i)}); \mathcal{D}_\mathrm{cal})$, where the probability is over a fresh test point $(X^{\scriptscriptstyle (i)}, Y)$, and define $\varepsilon^{\scriptscriptstyle \Delta} = \varepsilon^{\scriptscriptstyle (2)} - \varepsilon^{\scriptscriptstyle (1)}$. Then,
    \begin{equation} \label{eqn:sandwich}
        I^{\scriptscriptstyle (2)} - D^{\scriptscriptstyle (2)} + \varepsilon^{\scriptscriptstyle \Delta}  / \hat{\lambda}_\alpha^{\scriptscriptstyle (2)} \;\leq\; \Delta_\alpha \;\leq\; I^{\scriptscriptstyle (1)} + D^{\scriptscriptstyle (1)} + \varepsilon^{\scriptscriptstyle \Delta}  / \hat{\lambda}_\alpha^{\scriptscriptstyle (1)}.
    \end{equation}
\end{theorem}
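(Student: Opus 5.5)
The plan is to write each expected set size as a Bayes risk under $\ell_\lambda$ at the calibrated $\lambda$, minus a miscoverage term, and then compare against the optimal risks that define $H_\lambda$. The key observation is that the conformal set under the probability score is exactly a plug-in Bayes set. By definition, $\mathcal{C}_\alpha(X^{(i)}) = \{y : q(y\mid X^{(i)}) \geq \hat\lambda^{(i)}_\alpha\} = \Gamma^*_{\hat\lambda^{(i)}_\alpha}(X^{(i)}; q)$. Abbreviate $\lambda_i = \hat\lambda^{(i)}_\alpha$, and condition on $\mathcal{D}_{\mathrm{cal}}$ throughout, so that $\lambda_i$ are fixed constants. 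Then for each $i$ and any $\lambda>0$,
\[
\mathbb{E}\bigl[|\mathcal{C}_\alpha(X^{(i)})|\bigr] = \mathbb{E}\bigl[\ell_{\lambda}(Y, \mathcal{C}_\alpha(X^{(i)}))\bigr] - \varepsilon^{(i)}/\lambda .
\]
We will use this with $\lambda=\lambda_1$ for the upper bound and $\lambda=\lambda_2$ for the lower bound.

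For the upper bound, fix $\lambda=\lambda_1$. For $i=1$, the definition of $D_\lambda$ together with Proposition~\ref{prop:optimal_predictor_prob_score} gives $\mathbb{E}[\ell_{\lambda_1}(Y,\mathcal{C}_\alpha(X^{(1)}))] = H_{\lambda_1}(Y\mid Z) + D^{(1)}$. Here $\Gamma^*_{\lambda_1}(\cdot;p)$ attains the inner infimum pointwise, so its risk equals the entropy. For $i=2$, the set $\mathcal{C}_\alpha(X^{(2)})$ is just some measurable rule of $(X,Z)$. The infimum-over-rules characterization of $H_\lambda$ stated after the definition therefore yields $\mathbb{E}[\ell_{\lambda_1}(Y,\mathcal{C}_\alpha(X^{(2)}))] \geq H_{\lambda_1}(Y\mid X,Z)$. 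Subtracting the two identities, with miscoverage terms $-\varepsilon^{(1)}/\lambda_1$ and $-\varepsilon^{(2)}/\lambda_1$, gives
\[
\Delta_\alpha \leq H_{\lambda_1}(Y\mid Z) + D^{(1)} - H_{\lambda_1}(Y\mid X,Z) + (\varepsilon^{(2)}-\varepsilon^{(1)})/\lambda_1 = I^{(1)} + D^{(1)} + \varepsilon^{\Delta}/\lambda_1 .
\]

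The lower bound is symmetric with $\lambda=\lambda_2$. For $i=2$, use exactness: $\mathbb{E}[\ell_{\lambda_2}(Y,\mathcal{C}_\alpha(X^{(2)}))] = H_{\lambda_2}(Y\mid X,Z)+D^{(2)}$. For $i=1$, use suboptimality: $\mathbb{E}[\ell_{\lambda_2}(Y,\mathcal{C}_\alpha(X^{(1)}))] \geq H_{\lambda_2}(Y\mid Z)$. Subtracting gives $\Delta_\alpha \geq I^{(2)} - D^{(2)} + \varepsilon^{\Delta}/\lambda_2$.

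The main point needing care is the exactness step. We need that the risk of $\Gamma^*_\lambda(\cdot;p)$ equals $H_\lambda$ under the right conditioning. In particular, for $i=1$ the relevant $p$ is $p(\cdot\mid Z)$, and the expectations in $D^{(1)}$ are over the joint law of $(Z,Y)$, which matches the test distribution. We also need that conditioning on $\mathcal{D}_{\mathrm{cal}}$ makes $\lambda_i$ deterministic and independent of the fresh test point; this holds because the test point is drawn independently of the calibration data. Tie and measurability issues are harmless because the action space is finite. The assumption $\lambda_i>0$ is what guarantees that $\ell_{\lambda_i}$ is well defined.
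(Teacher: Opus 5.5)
Your proof is correct, and it reaches the theorem by a shorter route than the paper.

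\textbf{The paper's route.} It first introduces $\rho = H_{\hat\lambda_\alpha^{(1)}}(Y\mid Z) - H_{\hat\lambda_\alpha^{(2)}}(Y\mid X,Z)$ and bounds it in a separate lemma by inserting a cross-threshold entropy. For the upper bound, for instance, it evaluates the single predictor $\mathcal{C}_\alpha(X^{(2)})$ under both $\ell_{\hat\lambda_\alpha^{(1)}}$ and $\ell_{\hat\lambda_\alpha^{(2)}}$. It then converts the bound on $\rho$ into one on $\Delta_\alpha$ via the exact identity $\Delta_\alpha = \rho - \varepsilon^{(1)}/\hat\lambda_\alpha^{(1)} + \varepsilon^{(2)}/\hat\lambda_\alpha^{(2)} + D^{(1)} - D^{(2)}$.

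\textbf{Your route.} You fix one threshold per side and compare the two predictors, which live on different feature sets, under the same loss. The set calibrated at that threshold is exact (risk $= H + D$), and the other set is merely Bayes-suboptimal.

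\textbf{Relation between them.} If you combine the paper's two steps, the terms $H_{\hat\lambda_\alpha^{(2)}}(Y\mid X,Z)$, $D^{(2)}$ and $\varepsilon^{(2)}/\hat\lambda_\alpha^{(2)}$ in the upper bound cancel. The same happens to $H_{\hat\lambda_\alpha^{(1)}}(Y\mid Z)$, $D^{(1)}$ and $\varepsilon^{(1)}/\hat\lambda_\alpha^{(1)}$ in the lower bound. What remains is exactly your single suboptimality inequality per side. So both proofs rest on the same two facts: the conformal set equals $\Gamma^*(\cdot\,;q)$ at its calibrated threshold, and any measurable rule has risk at least $H_\lambda$. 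Yours just skips the round trip.

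\textbf{What each buys.} Your version makes clear that each side of the sandwich uses only one optimality comparison. It also shows directly why $D^{(2)}$ never appears in the upper bound, nor $D^{(1)}$ in the lower. The paper's factorization through $\rho$ gives an intermediate statement about the gap between the Bayes risks at the two calibrated thresholds, and a template it reuses line by line for the APS case. It is not logically needed for this theorem.
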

Theorem~\ref{thm:sandwich_prob_score} states that conformal set-size reduction is bracketed by conformal mutual information terms at two different values of $\lambda$, up to calibration terms and model error. By standard conformal arguments, the training-conditional miscoverage concentrates around $\alpha$ as $n$ grows, which allows us to control the calibration term in expectation under suitable regularity conditions.
\begin{corollary}[Probability score sandwich bound in expectation] \label{cor:expectation_sandwich_prob_score}
    Fix a nominal coverage level $1 - \alpha$. Assume that the calibration scores are almost surely distinct and that the calibration data and future test samples are i.i.d. For $i \in \{1, 2\}$, define $\smash{\hat{\lambda}^{\scriptscriptstyle (i)}_\alpha}$, $I^{\scriptscriptstyle (i)}$, and $D^{\scriptscriptstyle (i)}$ as in Theorem~\ref{thm:sandwich_prob_score}, and suppose that $\smash{s^{\scriptscriptstyle (i)} = \mathrm{Var}(\smash{1 / \hat{\lambda}^{\scriptscriptstyle (i)}_\alpha})^{1/2} < \infty}$. Then,
    \begin{equation}
        \mathbb{E}\bigl[I^{\scriptscriptstyle (2)}\bigr] - \mathbb{E}\bigl[D^{\scriptscriptstyle (2)}\bigr] - s^{\scriptscriptstyle (2)} / \sqrt{n} \;\leq\; \mathbb{E}[\Delta_\alpha] \;\leq\; \mathbb{E}\bigl[I^{\scriptscriptstyle (1)}\bigr] + \mathbb{E}\bigl[D^{\scriptscriptstyle (1)}\bigr] + s^{\scriptscriptstyle (1)} / \sqrt{n},
    \end{equation}
    where all expectations are over the calibration dataset $\mathcal{D}_{\mathrm{cal}}$.
\end{corollary}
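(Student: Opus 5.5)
We take expectations of the pathwise bound in Theorem~\ref{thm:sandwich_prob_score} over $\mathcal{D}_{\mathrm{cal}}$. After that, the only remaining work is to control the calibration term $\mathbb{E}[\varepsilon^{\scriptscriptstyle \Delta}/\hat{\lambda}^{\scriptscriptstyle (i)}_\alpha]$ from both sides. Taking expectations of \eqref{eqn:sandwich} gives
\begin{equation*}
\mathbb{E}[I^{\scriptscriptstyle (2)}] - \mathbb{E}[D^{\scriptscriptstyle (2)}] + \mathbb{E}\bigl[\varepsilon^{\scriptscriptstyle \Delta}/\hat{\lambda}^{\scriptscriptstyle (2)}_\alpha\bigr] \leq \mathbb{E}[\Delta_\alpha] \leq \mathbb{E}[I^{\scriptscriptstyle (1)}] + \mathbb{E}[D^{\scriptscriptstyle (1)}] + \mathbb{E}\bigl[\varepsilon^{\scriptscriptstyle \Delta}/\hat{\lambda}^{\scriptscriptstyle (1)}_\alpha\bigr].
\end{equation*}
Integrability of these terms follows from the boundedness of $I_\lambda$, $D_\lambda$ and $\varepsilon$, together with $\mathbb{E}[1/\hat\lambda]<\infty$, which is implied by the finite variance. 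It therefore suffices to show $\mathbb{E}[\varepsilon^{\scriptscriptstyle \Delta}/\hat{\lambda}^{\scriptscriptstyle (2)}_\alpha] \geq -s^{\scriptscriptstyle (2)}/\sqrt n$ and $\mathbb{E}[\varepsilon^{\scriptscriptstyle \Delta}/\hat{\lambda}^{\scriptscriptstyle (1)}_\alpha] \leq s^{\scriptscriptstyle (1)}/\sqrt n$.

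For each $i$, write $\varepsilon^{\scriptscriptstyle \Delta}/\hat\lambda^{\scriptscriptstyle (i)} = \mathbb{E}[\varepsilon^{\scriptscriptstyle \Delta}]\,\mathbb{E}[1/\hat\lambda^{\scriptscriptstyle (i)}] + \mathrm{Cov}(\varepsilon^{\scriptscriptstyle \Delta}, 1/\hat\lambda^{\scriptscriptstyle (i)})$ in expectation. Under i.i.d.\ data with a.s.\ distinct scores, the classical result (Vovk) is that training-conditional coverage $1-\varepsilon^{\scriptscriptstyle (i)}$ is exactly Beta distributed, namely $\mathrm{Beta}(k, n+1-k)$ with $k=\lceil (n+1)(1-\alpha)\rceil$. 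This law depends only on $n$ and $\alpha$, not on $i$ or the score function. Hence $\varepsilon^{\scriptscriptstyle (1)}$ and $\varepsilon^{\scriptscriptstyle (2)}$ have the same marginal law, and $\mathbb{E}[\varepsilon^{\scriptscriptstyle \Delta}]=0$. The first term vanishes, leaving only the covariance. Cauchy--Schwarz then gives
\begin{equation*}
\bigl|\mathrm{Cov}(\varepsilon^{\scriptscriptstyle \Delta}, 1/\hat\lambda^{\scriptscriptstyle (i)})\bigr| \leq \mathrm{Var}(\varepsilon^{\scriptscriptstyle \Delta})^{1/2}\, s^{\scriptscriptstyle (i)}.
\end{equation*}

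The main obstacle is bounding $\mathrm{Var}(\varepsilon^{\scriptscriptstyle \Delta})$ by $1/n$, since $\varepsilon^{\scriptscriptstyle (1)}$ and $\varepsilon^{\scriptscriptstyle (2)}$ are dependent through the shared calibration set. The Beta variance is $k(n+1-k)/((n+1)^2(n+2)) \leq 1/(4(n+2))$. We then use $\mathrm{Var}(A-B)\leq 2\mathrm{Var}(A)+2\mathrm{Var}(B)$, or more sharply $(\sigma_A+\sigma_B)^2$, which gives $\mathrm{Var}(\varepsilon^{\scriptscriptstyle \Delta}) \leq 4\cdot\frac{1}{4(n+2)} \leq 1/n$. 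Taking square roots yields $\mathrm{Var}(\varepsilon^{\scriptscriptstyle \Delta})^{1/2}\leq 1/\sqrt n$, and the covariance is at most $s^{\scriptscriptstyle (i)}/\sqrt n$ in absolute value. Using the lower side for $i=2$ and the upper side for $i=1$ completes the proof.

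The points to check carefully are the following. First, the Beta result requires continuous, distinct scores, which is given. Second, the positivity assumption $\hat\lambda^{\scriptscriptstyle (i)}>0$ from Theorem~\ref{thm:sandwich_prob_score} must hold almost surely, so that $1/\hat\lambda$ is well defined. I would state this as implicit in the finiteness of $s^{\scriptscriptstyle (i)}$.
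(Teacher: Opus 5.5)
Your proposal is correct and follows the paper's argument essentially step for step. The paper also takes expectations of the pathwise sandwich bound and uses the common $\mathrm{Beta}(k, n+1-k)$ law of the training-conditional coverages to get $\mathbb{E}[\varepsilon^{\scriptscriptstyle \Delta}] = 0$ and $\mathrm{Var}(\varepsilon^{\scriptscriptstyle \Delta}) \leq 4\sigma^2 \leq 1/n$. It then bounds the remaining covariance with $1/\hat\lambda^{\scriptscriptstyle (i)}_\alpha$ by Cauchy--Schwarz, which is its lemma on the expectation of the calibration term. One small imprecision: $D^{\scriptscriptstyle (i)}$ is not uniformly bounded, only by $K + 1/\hat\lambda^{\scriptscriptstyle (i)}_\alpha$, but that is integrable under your finite-variance assumption, so your integrability remark still goes through.
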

Since $I_\lambda \geq 0$, Corollary~\ref{cor:expectation_sandwich_prob_score} implies that on average over draws of the calibration data, the expected set-size reduction from adding $X$ is asymptotically non-negative when $\mathbb{E}[D^{\scriptscriptstyle (2)}] = 0$ and $s^{\scriptscriptstyle (2)}$ remains bounded as $n$ increases. 

The variance condition in Corollary~\ref{cor:expectation_sandwich_prob_score} is mild, requiring only that small predicted probabilities of the true label \mbox{$q(Y \mid X^{\scriptscriptstyle (i)})$} are sufficiently rare. In Appendix~\ref{app:sec:proofs}, we further relax this to almost-sure positivity using an alternative bound that holds with high probability over the calibration data.
\begin{figure}[t]
    \centering
    \includegraphics[width=\textwidth]{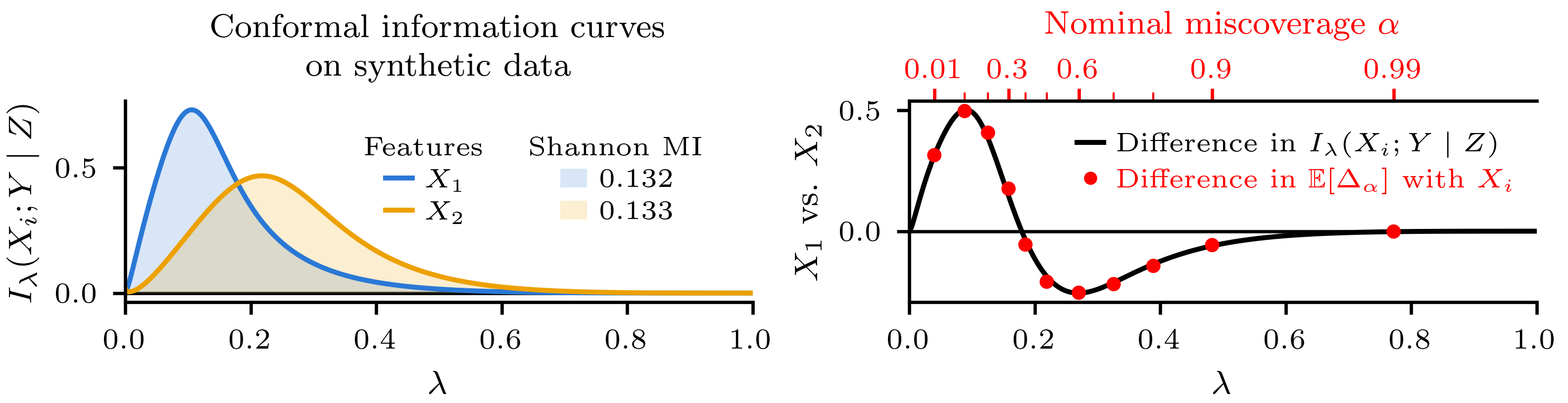}
    \caption{Conformal mutual information accurately reflects feature utility for reducing set size, whereas Shannon mutual information may not. We show information profiles \mbox{$\lambda \mapsto I_{\lambda}(X_i; Y \mid Z)$} for two candidate features $X_1$ and $X_2$ with similar Shannon mutual information (left), along with their differences in $I_\lambda$ and $\mathbb{E}[\Delta_\alpha]$ using the probability score (right). Nominal miscoverage levels $\alpha \in \{0.01, 0.1, \ldots, 0.9, 0.99\}$ are indicated along the top axis. Despite having nearly identical Shannon mutual information, $X_1$ yields a larger conformal set-size reduction at high coverage \mbox{($1 - \alpha > 0.6$)}, whereas $X_2$ reduces set size more at low coverage levels. Each difference in $\mathbb{E}[\Delta_\alpha]$ is positioned at the geometric mean of the two calibrated thresholds.}
    \label{fig:crossing_profile}
\end{figure}

We now revisit the integral representation in Theorem~\ref{thm:integral}. Shannon mutual information uniformly averages $I_\lambda$ over all $\lambda \in (0,1]$, whereas $\Delta_\alpha$ is approximated by members near the calibrated $\smash{\hat{\lambda}_\alpha^{\scriptscriptstyle (i)}}$. We summarize this relationship in Figure~\ref{fig:cmi-hierarchy}. Since these two quantities operate across different regions of $\lambda$, Shannon mutual information can misrepresent the utility of new features for conformal prediction. We demonstrate this behavior in Figure~\ref{fig:crossing_profile} using a long-tailed synthetic setting with two candidate features $X_1$ and $X_2$. Details on the data-generating process are described in Appendix~\ref{app:sec:datasets}.
 
\subsection{Data processing inequality for conformal set-size reduction} \label{subsec:dpi_prob_score}
 
Conformal set-size reduction enjoys additional properties that make it attractive as an approximate notion of mutual information. Here, we highlight one such property by deriving a data processing inequality (DPI) for conformal prediction sets.
 
Suppose that for each data point we also observe a \emph{single realization} of a degraded feature $\smash{\tilde{X}}$ obtained from a (possibly noisy) channel, so that $\smash{Y \to (X, Z) \to (\tilde{X}, Z)}$ forms a Markov chain. \mbox{Let $\smash{\tilde{X}^{\scriptscriptstyle (2)} = (\tilde{X}, Z)}$} and, similarly to $\Delta_\alpha$, define the set-size reduction with the degraded features as
\begin{equation}
    \tilde{\Delta}_\alpha = \mathbb{E}\bigl[|\mathcal{C}_\alpha(X^{\scriptscriptstyle (1)})|\bigr] - \mathbb{E}\bigl[|\mathcal{C}_\alpha(\tilde{X}^{\scriptscriptstyle (2)})|\bigr].    
\end{equation}
The following analysis shows that conformal set-size reduction satisfies an approximate version of the DPI by comparing $\Delta_\alpha$ and $\smash{\tilde{\Delta}_\alpha}$.
\begin{theorem}[Probability score DPI] \label{thm:dpi_prob_score}
    Fix a calibration dataset $\mathcal{D}_{\mathrm{cal}}$ and nominal coverage level $1 - \alpha$. Let $\smash{\hat{\lambda}_\alpha^{\scriptscriptstyle (2)}}$, $D^{\scriptscriptstyle(2)}$, and $\varepsilon^{\scriptscriptstyle (2)}$ be defined as in Theorem~\ref{thm:sandwich_prob_score} and assume $\smash{\hat{\lambda}_\alpha^{\scriptscriptstyle (2)} > 0}$. Define the training-conditional miscoverage for the degraded features as $\smash{\tilde{\varepsilon}^{\scriptscriptstyle (2)} = \mathbb{P}\bigl(Y \notin \mathcal{C}_\alpha(\tilde{X}^{\scriptscriptstyle (2)}) ; \mathcal{D}_{\mathrm{cal}}\bigr)}$. Then,
    \begin{equation}
        \tilde{\Delta}_\alpha \leq \Delta_\alpha + D^{\scriptscriptstyle(2)} + (\tilde{\varepsilon}^{\scriptscriptstyle (2)} - \varepsilon^{\scriptscriptstyle (2)}) / {\hat{\lambda}_\alpha^{\scriptscriptstyle (2)}}.
    \end{equation}
\end{theorem}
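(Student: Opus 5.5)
The inequality to prove is equivalent to
\[
\mathbb{E}\bigl[|\mathcal{C}_\alpha(X^{\scriptscriptstyle (2)})|\bigr] \leq \mathbb{E}\bigl[|\mathcal{C}_\alpha(\tilde{X}^{\scriptscriptstyle (2)})|\bigr] + D^{\scriptscriptstyle(2)} + (\tilde{\varepsilon}^{\scriptscriptstyle (2)} - \varepsilon^{\scriptscriptstyle (2)})/\hat{\lambda}_\alpha^{\scriptscriptstyle (2)},
\]
because the common term $\mathbb{E}[|\mathcal{C}_\alpha(X^{\scriptscriptstyle (1)})|]$ cancels. I will use the loss $\ell_\lambda$ at the single value $\lambda = \hat{\lambda}_\alpha^{\scriptscriptstyle (2)}$, written $\hat\lambda$ below.

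The first step compares both conformal sets against a single benchmark. With $\mathcal{D}_\mathrm{cal}$ fixed, the set $\mathcal{C}_\alpha(X^{\scriptscriptstyle(2)}) = \{y : q(y\mid X^{\scriptscriptstyle(2)}) \geq \hat\lambda\}$ is exactly $\Gamma^*_{\hat\lambda}(X^{\scriptscriptstyle(2)}; q)$. Taking expectations of the loss over a fresh test point gives
\[
\mathbb{E}\bigl[\ell_{\hat\lambda}(Y, \mathcal{C}_\alpha(X^{\scriptscriptstyle(2)}))\bigr] = \mathbb{E}\bigl[|\mathcal{C}_\alpha(X^{\scriptscriptstyle(2)})|\bigr] + \varepsilon^{\scriptscriptstyle(2)}/\hat\lambda.
\]
By the definition of $D^{\scriptscriptstyle(2)}$ this equals $\mathbb{E}[\ell_{\hat\lambda}(Y,\Gamma^*_{\hat\lambda}(X^{\scriptscriptstyle(2)};p))] + D^{\scriptscriptstyle(2)}$. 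Proposition~\ref{prop:optimal_predictor_prob_score} shows that the first term on the right is $H_{\hat\lambda}(Y\mid X, Z)$.

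The second step is the data processing step. The set $\mathcal{C}_\alpha(\tilde X^{\scriptscriptstyle(2)})$ is a fixed measurable function of $(\tilde X, Z)$ once the calibration data are fixed. In particular, its threshold $\tilde\tau$ may differ from $\hat\tau^{\scriptscriptstyle(2)}_\alpha$ without affecting the argument. Therefore
\[
\mathbb{E}\bigl[\ell_{\hat\lambda}(Y, \mathcal{C}_\alpha(\tilde X^{\scriptscriptstyle(2)}))\bigr] \geq H_{\hat\lambda}(Y \mid \tilde X, Z) \geq H_{\hat\lambda}(Y\mid X,Z).
\]
The first inequality holds because $H_{\hat\lambda}(Y\mid \tilde X,Z)$ is an infimum over decision rules of $(\tilde X, Z)$. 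The second is the generalized DPI. To verify the DPI directly, use the Markov chain $Y \to (X,Z)\to(\tilde X,Z)$: conditioning on $(X,Z)$ shows that any rule based on $(\tilde X,Z)$ is a randomized rule based on $(X,Z)$. Its risk is then an average of conditional risks, each of which is at least the pointwise Bayes risk. Since $\hat\lambda$ is a deterministic function of $\mathcal{D}_\mathrm{cal}$, independence of the test point keeps all of these statements valid conditionally on the calibration data.

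The final step chains the displays:
\[
\mathbb{E}\bigl[|\mathcal{C}_\alpha(X^{\scriptscriptstyle(2)})|\bigr] + \varepsilon^{\scriptscriptstyle(2)}/\hat\lambda \;\leq\; \mathbb{E}\bigl[|\mathcal{C}_\alpha(\tilde X^{\scriptscriptstyle(2)})|\bigr] + \tilde\varepsilon^{\scriptscriptstyle(2)}/\hat\lambda + D^{\scriptscriptstyle(2)}.
\]
Rearranging gives the claimed bound. I expect the main obstacle to be the second step. There one must handle the possibly randomized structure of the degraded channel, and justify that the conformal set built from $\tilde X^{\scriptscriptstyle(2)}$ with its own threshold is still an admissible decision rule under the loss evaluated at the other threshold $\hat\lambda$. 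I will do this by working entirely conditionally on $\mathcal{D}_\mathrm{cal}$ and applying the tower property over $(X,Z)$.
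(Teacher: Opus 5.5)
Your proposal is correct and follows essentially the same route as the paper. You identify the risk of $\mathcal{C}_\alpha(X^{\scriptscriptstyle (2)})$ under $\ell_{\hat\lambda_\alpha^{\scriptscriptstyle (2)}}$ with $H_{\hat\lambda_\alpha^{\scriptscriptstyle (2)}}(Y \mid X^{\scriptscriptstyle (2)}) + D^{\scriptscriptstyle (2)}$, lower bound the risk of $\mathcal{C}_\alpha(\tilde X^{\scriptscriptstyle (2)})$ by $H_{\hat\lambda_\alpha^{\scriptscriptstyle (2)}}(Y \mid \tilde X^{\scriptscriptstyle (2)})$ because it is a feasible rule whatever its own threshold, and close with the generalized DPI (Lemma~\ref{lemma:dpi_generalized_entropy}), exactly as the paper does.
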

As before, a suitable regularity condition controls the finite-sample calibration term in expectation.
\begin{corollary}[Probability score DPI in expectation] \label{cor:expectation_dpi_prob_score}
    Fix a nominal coverage level $1 - \alpha$. Assume that the calibration scores are almost surely distinct and that the calibration data and future test samples are i.i.d. Let $\smash{\hat{\lambda}_\alpha^{\scriptscriptstyle (2)}}$ and $D^{\scriptscriptstyle(2)}$ be defined as in Theorem~\ref{thm:sandwich_prob_score}, and suppose that $\smash{s^{\scriptscriptstyle (2)} = \mathrm{Var}(\smash{1 / \hat{\lambda}^{\scriptscriptstyle (2)}_\alpha})^{1/2} < \infty}$. Then,
    \begin{equation}
        \mathbb{E}[\tilde{\Delta}_\alpha] \leq \mathbb{E}[\Delta_\alpha] + \mathbb{E}[D^{\scriptscriptstyle (2)}] + s^{\scriptscriptstyle (2)} / \sqrt{n}.
    \end{equation}
\end{corollary}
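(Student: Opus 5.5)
Taking expectations over $\mathcal{D}_{\mathrm{cal}}$ in Theorem~\ref{thm:dpi_prob_score} gives
\[
\mathbb{E}[\tilde{\Delta}_\alpha] \leq \mathbb{E}[\Delta_\alpha] + \mathbb{E}[D^{\scriptscriptstyle(2)}] + \mathbb{E}\bigl[(\tilde{\varepsilon}^{\scriptscriptstyle (2)} - \varepsilon^{\scriptscriptstyle (2)}) / \hat{\lambda}_\alpha^{\scriptscriptstyle (2)}\bigr],
\]
so the work is to show that the last term is at most $s^{\scriptscriptstyle (2)}/\sqrt{n}$. Write $W = 1/\hat{\lambda}_\alpha^{\scriptscriptstyle (2)}$ and $\delta = \tilde{\varepsilon}^{\scriptscriptstyle (2)} - \varepsilon^{\scriptscriptstyle (2)}$. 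I will decompose
\[
\mathbb{E}[\delta W] = \mathbb{E}[\delta]\,\mathbb{E}[W] + \mathrm{Cov}(\delta, W),
\]
and bound the two pieces separately.

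For the mean term, I use the standard distribution of training-conditional coverage under i.i.d.\ data with almost surely distinct scores. For either feature set, $1-\varepsilon$ is the CDF of the score evaluated at the $\lceil (n+1)(1-\alpha)\rceil$-th order statistic of $n$ i.i.d.\ scores, so it is Beta distributed with parameters $k=\lceil (n+1)(1-\alpha)\rceil$ and $n+1-k$. In particular the distribution does not depend on the model or the features. Hence $\varepsilon^{\scriptscriptstyle (2)}$ and $\tilde{\varepsilon}^{\scriptscriptstyle (2)}$ have the same law, so $\mathbb{E}[\delta]=0$ and the mean term vanishes. This requires $\hat{\lambda}^{\scriptscriptstyle(2)}_\alpha>0$ almost surely, which is implied by the finite variance assumption.

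For the covariance term, Cauchy--Schwarz gives $\mathrm{Cov}(\delta,W) \leq \mathrm{Var}(\delta)^{1/2}\, s^{\scriptscriptstyle (2)}$. It remains to show $\mathrm{Var}(\delta)\leq 1/n$. Since $\mathrm{Var}(\delta)\leq 2\mathrm{Var}(\tilde\varepsilon^{\scriptscriptstyle(2)})+2\mathrm{Var}(\varepsilon^{\scriptscriptstyle(2)})$ and a Beta$(k,n+1-k)$ variable has variance $k(n+1-k)/((n+1)^2(n+2)) \leq 1/(4(n+2))$, we get
\[
\mathrm{Var}(\delta) \leq \frac{4}{4(n+2)} \leq \frac{1}{n}.
\]
This yields exactly $s^{\scriptscriptstyle(2)}/\sqrt{n}$.

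The main obstacle I anticipate is the dependence between $\delta$ and $W$. Both are built from the same shared calibration set, which is why I route through the covariance rather than trying to argue independence. The only delicate points are then the Beta variance constant and the uniformity of the Beta law across feature sets, both of which rely on the distinct-scores assumption. Randomized tie-breaking handles the degraded features if needed.
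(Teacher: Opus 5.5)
Your proposal is correct and follows the paper's argument. You take expectations in Theorem~\ref{thm:dpi_prob_score}, use the common $\mathrm{Beta}(k, n+1-k)$ law of $\tilde{\varepsilon}^{\scriptscriptstyle (2)}$ and $\varepsilon^{\scriptscriptstyle (2)}$ (Lemma~\ref{lemma:coverage_beta}) to reduce the calibration term to a covariance, and bound that covariance by Cauchy--Schwarz exactly as in Lemma~\ref{lemma:expectation_bound}; your bound $\mathrm{Var}(\tilde{\varepsilon}^{\scriptscriptstyle (2)} - \varepsilon^{\scriptscriptstyle (2)}) \leq 2\mathrm{Var}(\tilde{\varepsilon}^{\scriptscriptstyle (2)}) + 2\mathrm{Var}(\varepsilon^{\scriptscriptstyle (2)}) \leq 1/(n+2)$ gives the same constant as the paper's expansion with Cauchy--Schwarz on the cross term.
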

Corollary~\ref{cor:expectation_dpi_prob_score} establishes the approximate DPI for the conformal set-size reduction: on average over draws of the calibration data, the degraded feature $\smash{\tilde{X}}$ reduces the set size by no more than $X$ does, up to model imperfection and an $\smash{O(n^{-1/2})}$ calibration term. However, in practice, these slack terms could make the degraded feature spuriously appear more informative.

\subsection{Extension of conformal mutual information for aps score} \label{subsec:aps_construction}

We extend our results to conformal prediction sets with the APS score. 
Unlike the probability score, whose calibrated threshold acts as a single global cutoff on $q(\cdot \mid x)$, the APS score effectively applies an input-dependent threshold with randomization at the boundary. Hence, we make two key extensions in our analysis. First, we index the conformal mutual information by a random $\Lambda \in (0,1]$ in place of the fixed $\lambda$. Second, we introduce $U \sim \mathrm{Unif}[0,1]$ independent of $(\Lambda, X, Y)$ to randomize the set-valued action. Let $\Omega$ denote the space of measurable functions $[0,1] \to 2^\mathcal{Y}$.
\begin{definition}[Conformal entropy and mutual information for random $\Lambda$]
    Let $\Lambda \in (0,1]$ be a random index variable and let $\ell_\Lambda(Y, \Gamma)$ denote the loss in Equation~\ref{eqn:conformal_loss} evaluated using the value of $\Lambda$. Define the conformal entropy under $\Lambda$ as
    \begin{equation}
        H_\Lambda(Y \mid X) = \mathbb{E}\Bigl[ \inf_{\Gamma \in \Omega} \, \mathbb{E}\bigl[\ell_\Lambda(Y, \Gamma(U)) \mid X \bigr] \Bigr],
    \end{equation}
    where the inner expectation is taken over $Y$, $U$, and $\Lambda$. The corresponding conformal mutual information is $I_\Lambda(X; Y \mid Z) = H_\Lambda(Y \mid Z) - H_\Lambda(Y \mid X, Z)$.
\end{definition}
The random $\Lambda$ allows the loss index to depend on the input. Since the action space is finite, we again express the conformal entropy as an infimum over measurable decision rules $\Gamma: \mathcal{X} \times [0,1] \to \mathcal{A}$ and show that conformal prediction sets using the APS score achieve $H_\Lambda$ for suitable choices of $\Lambda$.
\begin{proposition}[Optimal set-valued predictors for random $\Lambda$] \label{prop:optimal_predictor_aps}
    Suppose that $\Lambda = \xi(X)$ for some fixed measurable function $\xi: \mathcal{X} \to (0,1]$, and let $p(\cdot \mid X)$ be the conditional distribution of $Y$ given $X$. Let $\Gamma_\Lambda^*(X,U;p)$ be any decision rule such that, for all $u \in [0,1]$,
    \begin{equation}
        \bigl\{y : p(y \mid X) > \xi(X)\bigr\} \subseteq \Gamma^*_{\Lambda}(X, u; p) \subseteq \bigl\{y : p(y \mid X) \geq \xi(X)\bigr\}.
    \end{equation}
    Then, $\Gamma_\Lambda^*(X,U;p)$ minimizes $\mathbb{E} \left[ \ell_\Lambda(Y, \Gamma(X, U)) \right]$ over all measurable $\Gamma: \mathcal{X} \times [0,1] \to 2^\mathcal{Y}$.
\end{proposition}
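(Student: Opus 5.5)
The plan is to reduce the problem to a pointwise minimization, conditional on $(X,U)$. Because $\Lambda = \xi(X)$ is a function of $X$, once we condition on $X$ the loss index is fixed at the deterministic value $\xi(X)$. Since $U$ is independent of $(X,Y)$, conditioning on $(X,U)=(x,u)$ leaves $Y \sim p(\cdot\mid x)$. For any measurable $\Gamma:\mathcal{X}\times[0,1]\to 2^{\mathcal{Y}}$, the tower property then gives
\begin{equation*}
\mathbb{E}\bigl[\ell_\Lambda(Y,\Gamma(X,U))\bigr] = \mathbb{E}\Bigl[\, R_{X}\bigl(\Gamma(X,U)\bigr) \Bigr],
\qquad
R_x(A) \coloneqq \sum_{y\in A} \Bigl(1 - \tfrac{p(y\mid x)}{\xi(x)}\Bigr) + \tfrac{1}{\xi(x)}.
\end{equation*}
To obtain $R_x$, write $|A| + \frac{1}{\xi(x)}\bigl(1 - \sum_{y\in A} p(y\mid x)\bigr)$, which is the conditional expected loss. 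This is the same computation that underlies Proposition~\ref{prop:optimal_predictor_prob_score}, now applied with $\lambda = \xi(x)$.

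Next we minimize $R_x$ over subsets $A\subseteq\mathcal{Y}$ for fixed $x$. The function $R_x$ is additive over labels. Label $y$ contributes $1 - p(y\mid x)/\xi(x)$, and $\xi(x)>0$. This contribution is negative when $p(y\mid x) > \xi(x)$, zero at equality, and positive otherwise. Hence $A$ minimizes $R_x$ if and only if
\begin{equation*}
\{y : p(y\mid x) > \xi(x)\} \subseteq A \subseteq \{y : p(y\mid x) \geq \xi(x)\}.
\end{equation*}
The minimal value is $m(x) = \sum_y \min\{0, 1 - p(y\mid x)/\xi(x)\} + 1/\xi(x)$. By hypothesis, $\Gamma^*_\Lambda(x,u;p)$ lies in this range for every $u$, so $R_x(\Gamma^*_\Lambda(x,u;p)) = m(x)$ for all $(x,u)$.

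To conclude, note that for any measurable $\Gamma$ we have $R_X(\Gamma(X,U)) \geq m(X)$ pointwise. Taking expectations gives $\mathbb{E}[\ell_\Lambda(Y,\Gamma(X,U))] \geq \mathbb{E}[m(X)] = \mathbb{E}[\ell_\Lambda(Y,\Gamma^*_\Lambda(X,U;p))]$, which proves optimality. As a by-product, $\mathbb{E}[m(X)]$ equals $H_\Lambda(Y\mid X)$, since the inner infimum over $\Omega$ is attained by any constant-in-$u$ selection from the optimal range.

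The only delicate step is the conditioning. We must justify that the conditional law of $Y$ given $(X,U)$ is $p(\cdot\mid X)$, which holds because $U$ is independent of $(X,Y)$. We must also justify that $\Lambda$ is $\sigma(X)$-measurable, which holds because $\Lambda = \xi(X)$ with $\xi$ measurable. Finally, the map $(x,u)\mapsto R_x(\Gamma(x,u))$ must be measurable. This is immediate because $\mathcal{Y}$ is finite and $\Gamma$ and $\xi$ are measurable. Integrability is not an issue, since the loss is bounded by $K + 1/\xi(X)$ and is nonnegative, so the expectations are well defined in $[0,\infty]$.
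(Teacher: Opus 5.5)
Your proposal is correct and follows essentially the same route as the paper: rewrite the risk via the decomposition from Proposition~\ref{prop:optimal_predictor_prob_score} as $\mathbb{E}\bigl[1/\xi(X) + \sum_{y \in \Gamma(X,U)} (1 - p(y \mid X)/\xi(X))\bigr]$, then minimize pointwise, with labels at the boundary $p(y \mid x) = \xi(x)$ free to be included or not. Your explicit conditioning on $(X,U)$, using the independence of $U$, is a slightly more careful version of the paper's step of fixing $u$ and arguing pointwise in $x$.
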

The conformal prediction set using an oracle model and the APS score is a Bayes-optimal action $\Gamma^*_\Lambda(X,U; p)$, where $\Lambda$ is the \emph{induced} input-dependent probability threshold equal to the first predicted probability (in decreasing order) for which the cumulative mass exceeds $\hat{\tau}_\alpha$. When $\hat{\tau}_\alpha < 1$, this quantity is well-defined and satisfies $\Lambda = \xi(X) \in (0,1]$ for some measurable function $\xi$.

We define $\Delta_\alpha$ as in Equation~\ref{eqn:setsize_delta}, with the expectation now also taken over $U$. Furthermore, we overload $\hat{\tau}_\alpha^{\scriptscriptstyle (i)}$ to denote the calibrated APS-score threshold using $X^{\scriptscriptstyle (i)}$ and $\varepsilon^{\scriptscriptstyle (i)}$ to be the $X^{\scriptscriptstyle (2)}$-conditional miscoverage. We next formalize the conformal information divergence under a random $\Lambda$.
\begin{definition}[Conformal information divergence for random $\Lambda$] \label{def:aps_divergence}
    Suppose that $\Lambda = \xi(X)$ for some fixed measurable function $\xi: \mathcal{X} \to (0,1]$. Define the conformal information divergence under $\Lambda$ between conditional distributions $p(\cdot \mid X)$ and $q(\cdot \mid X)$ as
    \begin{equation}
        D_\Lambda(p(\cdot \mid X) \parallel q(\cdot \mid X)) = \mathbb{E}_X \! \big[ \mathbb{E}_{Y \sim p(\cdot \mid X), \, U} [\ell_\Lambda(Y, \Gamma^*_{\Lambda}(X, U; q)) - \ell_\Lambda(Y, \Gamma^*_{\Lambda}(X, U; p))]\big].
    \end{equation}
\end{definition}
In Definition~\ref{def:aps_divergence}, the value of $D_\Lambda$ depends on the boundary randomization choice for $\Gamma^*_\Lambda(X, U; q)$. When $\Lambda$ is the probability threshold induced by the APS score of $q(\cdot \mid X)$ at some $\tau < 1$, we use the convention that $\Gamma^*_\Lambda(X, U; q)$ is the corresponding conformal prediction set. We are now ready to extend Theorem~\ref{thm:sandwich_prob_score} for the APS score.  
\begin{theorem}[APS score sandwich bound] \label{thm:sandwich_aps}
    Fix a calibration dataset $\mathcal{D}_{\mathrm{cal}}$ and nominal coverage level $1 - \alpha$. For $i \in \{1, 2\}$, assume that $\hat{\tau}_\alpha^{\scriptscriptstyle (i)} < 1$, and let $\smash{\hat{\Lambda}_\alpha^{\scriptscriptstyle (i)}}$ be the input-dependent random probability threshold induced by $\hat{\tau}_\alpha^{\scriptscriptstyle (i)}$. Moreover, define
    \begin{equation}
        I^{\scriptscriptstyle (i)} = I_{\hat{\Lambda}_\alpha^{\scriptscriptstyle (i)}} (X; Y \mid Z), \qquad D^{\scriptscriptstyle (i)} = D_{\hat{\Lambda}_\alpha^{\scriptscriptstyle (i)}} (p(\cdot \mid X^{\scriptscriptstyle (i)}) \parallel q(\cdot \mid X^{\scriptscriptstyle (i)})).
    \end{equation}
    Let $\varepsilon^{\scriptscriptstyle (i)} = \mathbb{P}(Y \not\in \mathcal{C}_\alpha(X^{\scriptscriptstyle(i)}) \mid X^{\scriptscriptstyle (2)}; \mathcal{D}_\mathrm{cal})$, where the probability is taken jointly over a fresh test label $Y$ given $X^{\scriptscriptstyle (2)}$ and the APS randomization $U$. Define $\smash{\varepsilon^{\scriptscriptstyle \Delta} = \varepsilon^{\scriptscriptstyle (2)} - \varepsilon^{\scriptscriptstyle (1)}}$. Then,
    \begin{equation} \label{eqn:sandwich_aps}
        I^{\scriptscriptstyle (2)} - D^{\scriptscriptstyle (2)} + \mathbb{E}\big[\varepsilon^{\scriptscriptstyle \Delta} / \hat{\Lambda}^{\scriptscriptstyle (2)}_\alpha\big] \;\leq\; \Delta_\alpha \;\leq\; I^{\scriptscriptstyle (1)} + D^{\scriptscriptstyle (1)} + \mathbb{E}\big[\varepsilon^{\scriptscriptstyle \Delta} / \hat{\Lambda}^{\scriptscriptstyle (1)}_\alpha\big] ,
    \end{equation}
    where the expectations are over $X^{\scriptscriptstyle (2)}$, and both bounds are finite.
\end{theorem}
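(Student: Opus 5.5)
The plan is to mirror the argument behind Theorem~\ref{thm:sandwich_prob_score}, replacing the fixed index $\lambda$ by the input-dependent index $\hat{\Lambda}_\alpha^{\scriptscriptstyle (i)} = \xi^{\scriptscriptstyle (i)}(X^{\scriptscriptstyle (i)})$. Both bounds come from the same identity. For each $i$, the APS conformal set $\mathcal{C}_\alpha(X^{\scriptscriptstyle(i)})$ is, by the convention after Definition~\ref{def:aps_divergence}, exactly $\Gamma^*_{\hat\Lambda^{(i)}_\alpha}(X^{\scriptscriptstyle(i)},U;q)$. Hence its expected $\ell_{\hat\Lambda^{(i)}}$-loss equals
\begin{equation*}
\mathbb{E}\bigl[|\mathcal{C}_\alpha(X^{\scriptscriptstyle(i)})|\bigr] + \mathbb{E}\bigl[\varepsilon^{\scriptscriptstyle (i)}/\hat\Lambda^{\scriptscriptstyle (i)}_\alpha\bigr].
\end{equation*}
To get this, I condition on $X^{\scriptscriptstyle(2)}$ and pull the $X^{\scriptscriptstyle(i)}$-measurable factor $1/\hat\Lambda$ out of the inner expectation over $(Y,U)$. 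This is why $\varepsilon^{\scriptscriptstyle(i)}$ is defined conditionally on $X^{\scriptscriptstyle(2)}$. By Definition~\ref{def:aps_divergence} and Proposition~\ref{prop:optimal_predictor_aps}, the same expected loss equals $H_{\hat\Lambda^{(i)}}(Y\mid X^{\scriptscriptstyle(i)}) + D^{\scriptscriptstyle(i)}$. The Bayes-optimal rule under $p$ attains the entropy, and $D$ is exactly the excess risk of the $q$-rule.

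For the upper bound, take $i=1$. The index $\Lambda=\hat\Lambda^{(1)}_\alpha$ is a function of $Z$, hence also of $X^{\scriptscriptstyle(2)}$. So the rule $\mathcal{C}_\alpha(X^{\scriptscriptstyle(2)})$ is an admissible (possibly suboptimal) decision rule for the entropy $H_{\hat\Lambda^{(1)}}(Y\mid X,Z)$, evaluated with the index $\hat\Lambda^{(1)}$. This gives
\begin{equation*}
H_{\hat\Lambda^{(1)}}(Y\mid X,Z) \le \mathbb{E}\bigl[|\mathcal{C}_\alpha(X^{\scriptscriptstyle(2)})|\bigr] + \mathbb{E}\bigl[\varepsilon^{\scriptscriptstyle(2)}/\hat\Lambda^{(1)}_\alpha\bigr].
\end{equation*}
Here I use the infimum-over-rules characterization of $H_\Lambda$ stated before Proposition~\ref{prop:optimal_predictor_aps}, with $\Lambda$ part of the joint law. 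Subtracting this from the identity for $i=1$ yields
\begin{equation*}
\Delta_\alpha \le I^{\scriptscriptstyle(1)} + D^{\scriptscriptstyle(1)} + \mathbb{E}\bigl[\varepsilon^{\scriptscriptstyle\Delta}/\hat\Lambda^{(1)}_\alpha\bigr].
\end{equation*}

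The lower bound runs symmetrically with $i=2$, but with one extra subtlety: $\hat\Lambda^{(2)}_\alpha$ depends on $X^{\scriptscriptstyle(2)}$, not only on $Z$. So $H_{\hat\Lambda^{(2)}}(Y\mid Z)$ must be read as the conformal entropy under the random index $\Lambda$, with the inner expectation over $(Y,U,\Lambda)$ given $Z$. The rule $\mathcal{C}_\alpha(Z)$ is still admissible, which gives
\begin{equation*}
H_{\hat\Lambda^{(2)}}(Y\mid Z) \le \mathbb{E}\bigl[|\mathcal{C}_\alpha(Z)|\bigr] + \mathbb{E}\bigl[\varepsilon^{\scriptscriptstyle(1)}/\hat\Lambda^{(2)}_\alpha\bigr].
\end{equation*}
To obtain the $\varepsilon^{\scriptscriptstyle(1)}$ term, I compute the miscoverage of $\mathcal{C}_\alpha(Z)$ conditionally on $X^{\scriptscriptstyle(2)}$, so that it multiplies $1/\hat\Lambda^{(2)}$ correctly. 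This is again where the $X^{\scriptscriptstyle(2)}$-conditional definition of $\varepsilon^{\scriptscriptstyle(1)}$ is essential. Subtracting the identity for $i=2$ then gives the left inequality of \eqref{eqn:sandwich_aps}. I will also note that $I_\Lambda\ge0$ is not needed here.

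The main obstacle is the bookkeeping that makes every term well defined and finite. First, I must check that the induced threshold $\hat\Lambda$ is measurable and lies in $(0,1]$ when $\hat\tau_\alpha<1$; the paper asserts this, and I will verify it briefly from the cumulative-mass description. Second, finiteness: $\hat\Lambda^{(i)}_\alpha$ is at least the predicted probability of the label at which cumulative mass first exceeds $\hat\tau$, but that value can be arbitrarily small, so I cannot bound $1/\hat\Lambda$ uniformly in $x$. My first attempt is the observation that $1/\hat\Lambda^{(i)} \le 1/q(y^{(r)}\mid x)$, combined with the fact that this label must carry mass at least $1-\hat\tau$ minus the preceding mass. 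In fact $\hat\Lambda$ is at least $(1-\hat\tau)/K$: since the remaining $K-r+1$ labels carry mass at least $1-\hat\tau$, the largest of them does. Hence $1/\hat\Lambda \le K/(1-\hat\tau)<\infty$, and all expectations, including $H_\Lambda$ and $D_\Lambda$, are finite. Third, the conditional independence of $U$ from $(\Lambda,X,Y)$ must be checked, so that the APS set qualifies as a rule in $\Omega$.
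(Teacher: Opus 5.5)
Your proposal is correct and uses the same ingredients as the paper's proof. These are the Bayes-optimality identity $H_{\hat\Lambda^{(i)}}(Y\mid X^{(i)}) + D^{(i)} = \mathbb{E}[|\mathcal{C}_\alpha(X^{(i)})|] + \mathbb{E}[\varepsilon^{(i)}/\hat\Lambda^{(i)}_\alpha]$, the suboptimality of the other feature set's conformal set under the other random index (with $X^{(2)}$-conditioning producing the $\varepsilon$ terms), and finiteness from $\xi(x)\ge(1-\hat\tau_\alpha)/K$. The only difference is that you subtract directly instead of going through the intermediate quantity $\rho$ and the cross-threshold entropy difference; this is a harmless streamlining, since the extra $D$ and $\varepsilon/\hat\Lambda$ terms in the paper's version cancel anyway.
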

Theorem~\ref{thm:sandwich_aps} reduces to Theorem~\ref{thm:sandwich_prob_score} when $\smash{\hat{\Lambda}}$ is constant. For a random $\smash{\hat{\Lambda}}$, the $\varepsilon^{\scriptscriptstyle \Delta}$ terms depend on the $X^{\scriptscriptstyle (2)}$-conditional miscoverage, rather than the training-conditional miscoverage. In Appendix~\ref{app:sec:proofs}, we derive corresponding bounds in expectation and with high probability over the calibration data, as well as a corresponding data processing inequality. In general, the APS results incur additional slack on both sides of the bound, because the conditional miscoverage typically does not concentrate as $n$ grows. For oracle models, this slack vanishes in the upper bound.

\section{Related Work}

We build on the decision-theoretic notion of information introduced by \citet{degroot1962uncertainty}. A long line of work has studied generalized entropies in relation to proper scoring rules~\citep{gneiting2007strictly} and information-theoretic divergences~\citep{reid2011information,garcia2012divergences}. We instead focus on a particular family of generalized information measures tailored to set-valued prediction. The loss in Equation~\ref{eqn:conformal_loss} is related to class-selective rejection~\citep{ha1996optimum} and classification with rejection~\citep{herbei2006classification}, though these ideas have largely been developed independently of conformal prediction~\citep{chzhen2021setvalued}.

A growing body of empirical work studies conformal set size as an operational measure of predictive uncertainty~\citep{fayyad2024empirical,portela2025conformal,hagos2026performance}. On the theoretical side, several works characterize expected set size as a measure of predictive inefficiency using finite-sample estimates~\citep{dhillon2024expected} and model-dependent bounds~\citep{zecchin2024generalization}. However, these works do not formally motivate set size as a measure of uncertainty or information.

The work closest to ours is \citet{correia2024information}, who use list-decoding arguments to upper bound the conditional entropy in terms of set size and coverage. In contrast, our decision-theoretic approach uses generalized entropy to exactly characterize set size and miscoverage. This formulation yields two-sided bounds on set-size reduction, which we further connect to Shannon mutual information via an integral representation. These results provide a direct information-theoretic interpretation of set-size reduction that is not captured by existing work.

Information gain metrics are widely used in practice for active learning~\citep{houlsby2011bayesian,kirsch2019batchbald,bickfordsmith2023prediction} and feature acquisition~\citep{covert2023learning}. While these approaches typically rely on classical information-theoretic quantities, \citet{neiswanger2022generalizing} study decision-theoretic information to optimize downstream utility. Separately, conformal prediction has been used to design acquisition policies based on prediction set size~\citep{kharazian2024copal}. Our main contribution is identifying theoretical connections between these notions of information gain.

\section{Experiments} \label{sec:experiments}

We empirically examine our theoretical results and then probe set-size reduction and Shannon mutual information in a feature selection experiment. Motivated by the behavior in Figure~\ref{fig:crossing_profile}, we show that these two criteria can rank features differently in practice. For all experiments, we fix the nominal coverage to be $1 - \alpha = 0.9$, unless noted otherwise.

\textbf{Datasets.~}~We consider 11 datasets spanning synthetic, image, and tabular domains. We use three synthetic settings. The first (S1) defines the true conditional distribution using a two-layer ReLU network, while the second (S2) uses a Gaussian mixture model with each mixture component representing a class. For the feature selection experiments, we additionally introduce a third \mbox{setting (S3)}, where the conditional distribution has a long low-probability tail. All synthetic datasets provide access to the true conditional probabilities, allowing us to evaluate our results using an oracle model.

For tabular domains, we use the Wine Quality (Wine)~\citep{cortez2009wine}, Human Activity Recognition (HAR)~\citep{reyesortiz2013human}, Letter Recognition (Letter)~\citep{slate1991letter}, and Covertype~\citep{blackard1998covertype} datasets. For image classification, we use FMNIST~\citep{xiao2017fashion}, CIFAR-10, CIFAR-100~\citep{krizhevsky2009learning}, and ImageNet-1k~\citep{deng2009imagenet}. Dataset sizes, dimensions, numbers of classes, and preprocessing details are provided in Appendix~\ref{app:sec:datasets}.

\textbf{Feature construction.~}~We partition coordinates of the synthetic data and features of the tabular datasets into disjoint subsets $Z$ and $X$. For images, we take $Z$ to be a Gaussian-blurred version of the original data and $X$ to be its residual (see Figure~\ref{fig:cssr-overview}). The DPI experiments construct $\smash{\tilde{X}}$ by either adding Gaussian noise or applying Gaussian blur to $X$. Candidate features in the selection experiment are partitions of the input vector for synthetic and tabular datasets or a pretrained dense representation for image datasets. Appendix~\ref{app:sec:datasets} provides further details on feature construction.

\textbf{Models.~}~We train a model for each feature set on the real-world datasets and hold it fixed across subsequent calibration procedures. The tabular datasets use MLPs with two or three hidden layers, ReLU activations, and dropout. For FMNIST, CIFAR-10, and CIFAR-100, we train CNNs with max pooling, while for ImageNet-1k we fine-tune a pretrained ResNet-18~\citep{he2016deep}.

\textbf{Training.~}~We use the oracle model for the synthetic settings and generate a fixed evaluation pool of $200{,}000$ samples. For the tabular datasets, we allocate $60\%$ of the data to training, $15\%$ to validation, and the remaining $25\%$ to an evaluation pool. The image datasets have dedicated evaluation pools, and the remaining data is split into $80\%$ for training and $20\%$ for validation.

Across all training procedures, we use early stopping based on validation loss and select the learning rate, weight decay, and dropout rate by grid search. In Appendix~\ref{app:sec:training}, we provide additional details on model architectures, training procedures, hyperparameters, and predictive accuracy.

\begin{figure}[t]
    \centering
    \includegraphics[width=0.99\textwidth]{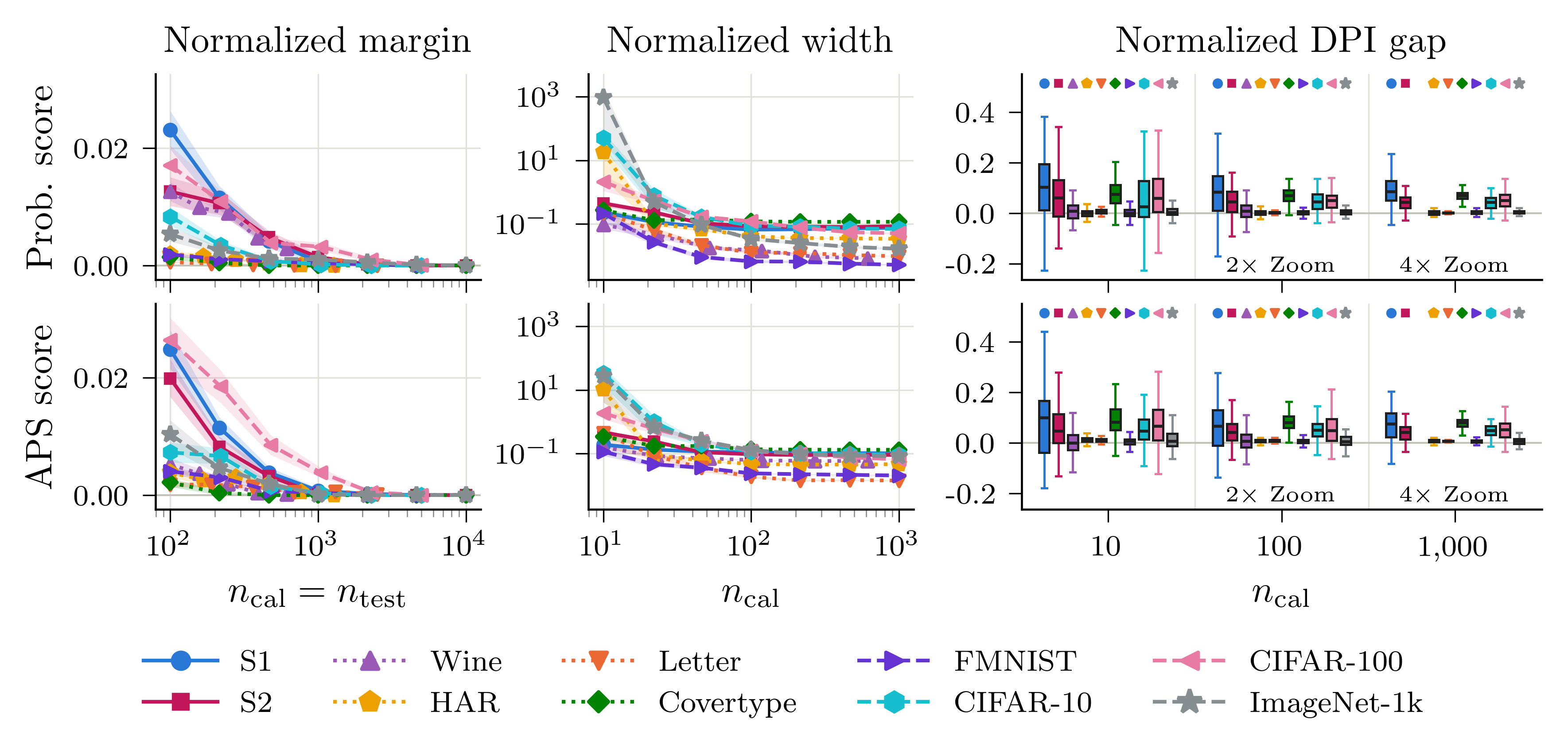}
    \caption{We show the normalized margin, width, and DPI gap based on $100$ random draws of the calibration and test sets from the fixed evaluation pool. The margin and width panels report the \mbox{mean $\pm$ standard error}, while the DPI gap panel summarizes the distribution using boxplots. Datasets whose evaluation pool is too small for a given calibration size are omitted.}
    \label{fig:validation_experiments}
    \vspace{-1pt}
\end{figure}

\subsection{Sandwich and dpi validation}

We first evaluate the sandwich bounds in Theorems~\ref{thm:sandwich_prob_score} and~\ref{thm:sandwich_aps}. From each evaluation pool, we sample separate calibration and test sets to estimate $\Delta_\alpha$ and $\varepsilon^{\scriptscriptstyle(i)}$. In the synthetic settings, the true conditional distribution is known, so $I^{\scriptscriptstyle(i)}$ is computable and \mbox{$D^{\scriptscriptstyle(i)} = 0$}. For the real-world datasets, $I^{\scriptscriptstyle(i)}$ and $D^{\scriptscriptstyle(i)}$ are inaccessible, so we use the plug-in model estimates, which set $D^{\scriptscriptstyle (i)} = 0$. Moreover, we use a $k$-NN approximation for $I^{\scriptscriptstyle(2)}$ in the APS lower bound since $\smash{\hat\Lambda_\alpha^{\scriptscriptstyle(2)}}$ depends on $X$ (see Appendix~\ref{app:sec:analysis}). The real-data results are thus an empirical corroboration rather than strict validation of the theory.

Letting $L$ and $U$ denote the lower and upper bounds, we measure the magnitude of violations using the normalized margin $d(\Delta_\alpha,[L,U])/K$, where $d(x,[L,U])$ is the distance from $x$ to $[L,U]$. We separately quantify the informativeness of the bound using the normalized width $(U-L)/K$.

The left column of Figure~\ref{fig:validation_experiments} reports the normalized margin as we vary $n_{\mathrm{cal}} = n_{\mathrm{test}}$. We limit the synthetic sample sizes to match the real datasets for a controlled comparison. Even with oracle models, the average margin is positive since finite samples introduce Monte Carlo error, but it approaches zero as sample sizes increase. The margins on the real data exhibit similar behavior. The middle column reports the normalized width for a fixed $n_\mathrm{test}$. The intervals narrow with more calibration data, but the width remains nonzero due to the gap between $I^{\scriptscriptstyle(1)}$ and $I^{\scriptscriptstyle(2)}$. At very small $n_\mathrm{cal}$ and large $K$, the $\smash{1/\hat\lambda_\alpha^{\scriptscriptstyle(i)}}$ factor makes the bounds vacuous since the calibrated threshold is typically small.

We next empirically analyze Theorem~\ref{thm:dpi_prob_score} and its APS analogue (Theorem~\ref{thm:dpi_aps} in Appendix~\ref{app:sec:proofs}). The right column of Figure~\ref{fig:validation_experiments} reports the normalized DPI gap $\smash{(\Delta_\alpha - \tilde{\Delta}_\alpha)/K}$ as we vary $n_\mathrm{cal}$ with $n_\mathrm{test}$ fixed. At small calibration sizes, finite-sample effects can produce negative DPI gaps, meaning that the degraded features $\smash{\tilde X}$ appear to reduce set size more than the original features $X$. As $n_\mathrm{cal}$ increases, however, these violations shrink in magnitude and the gap concentrates around non-negative values.

We include additional experimental analyses in Appendix~\ref{app:sec:experiments}, including violation rates for Theorems~\ref{thm:sandwich_prob_score} and~\ref{thm:sandwich_aps}, results across nominal coverage levels, and DPI behavior under varying noise levels.

\begin{table}[t]
    \centering
    \caption{We evaluate feature selection performance using conformal set-size reduction. Each row represents one selection criterion and values are reported as \mbox{mean $\pm$ standard error} over 10 random splits of the evaluation pool. Arrows indicate the direction of improvement, and the best-performing method for each metric is bolded. Empirical coverage is reported for reference without bolding.}
    \label{tab:feature_selection}
    \resizebox{0.99\columnwidth}{!}{%
    \begin{tabular}{c c cccccc}
        \toprule
        \multirow{2}{*}{Dataset} & \multirow{2}{*}{Criterion}
        & \multicolumn{2}{c}{Prob.}
        & \multicolumn{2}{c}{APS}
        & \multicolumn{2}{c}{Acc.} \\
        \cmidrule(lr){3-4} \cmidrule(lr){5-6} \cmidrule(lr){7-8}
        &
        & $\mathbb{E}[|\mathcal{C}|]$ ($\downarrow$)
        & Cov.
        & $\mathbb{E}[|\mathcal{C}|]$ ($\downarrow$)
        & Cov.
        & Top-1 ($\uparrow$)
        & Top-5 ($\uparrow$) \\
        \midrule

        \multirow{5}{*}{\rotatebox[origin=c]{90}{\shortstack{S3\\($K = 40$)}}}
        & $\Delta_\alpha$ (Prob.)
        & $\mathbf{2.239 \pm 0.002}$
        & $0.900 \pm 0.000$
        & $2.629 \pm 0.002$
        & $0.900 \pm 0.000$
        & $0.750 \pm 0.001$
        & $\mathbf{0.952 \pm 0.000}$ \\
        &
        $\Delta_\alpha$ (APS)
        & $2.243 \pm 0.002$
        & $0.900 \pm 0.000$
        & $\mathbf{2.611 \pm 0.002}$
        & $0.900 \pm 0.000$
        & $0.752 \pm 0.000$
        & $\mathbf{0.952 \pm 0.000}$ \\
        &
        Shannon MI
        & $2.447 \pm 0.005$
        & $0.900 \pm 0.000$
        & $2.876 \pm 0.006$
        & $0.900 \pm 0.000$
        & $0.767 \pm 0.000$
        & $\mathbf{0.952 \pm 0.000}$ \\
        &
        Accuracy
        & $3.613 \pm 0.020$
        & $0.900 \pm 0.000$
        & $4.487 \pm 0.029$
        & $0.900 \pm 0.000$
        & $\mathbf{0.768 \pm 0.001}$
        & $0.914 \pm 0.001$ \\
        &
        Random
        & $3.046 \pm 0.100$
        & $0.900 \pm 0.000$
        & $3.591 \pm 0.116$
        & $0.900 \pm 0.000$
        & $0.672 \pm 0.009$
        & $0.931 \pm 0.004$ \\

        \midrule

        \multirow{5}{*}{\rotatebox[origin=c]{90}{\shortstack{CIFAR\\($K = 100$)}}}
        & $\Delta_\alpha$ (Prob.)
        & $12.683 \pm 0.100$
        & $0.904 \pm 0.002$
        & $15.661 \pm 0.127$
        & $0.902 \pm 0.002$
        & $0.466 \pm 0.002$
        & $\mathbf{0.753 \pm 0.002}$ \\
        &
        $\Delta_\alpha$ (APS)
        & $\mathbf{12.594 \pm 0.088}$
        & $0.902 \pm 0.002$
        & $\mathbf{15.472 \pm 0.129}$
        & $0.902 \pm 0.002$
        & $\mathbf{0.467 \pm 0.002}$
        & $\mathbf{0.753 \pm 0.002}$ \\
        &
        Shannon MI
        & $13.763 \pm 0.115$
        & $0.902 \pm 0.002$
        & $16.597 \pm 0.145$
        & $0.902 \pm 0.002$
        & $0.458 \pm 0.003$
        & $0.741 \pm 0.002$ \\
        &
        Accuracy
        & $12.872 \pm 0.130$
        & $0.902 \pm 0.002$
        & $15.877 \pm 0.158$
        & $0.901 \pm 0.002$
        & $\mathbf{0.467 \pm 0.003}$
        & $0.752 \pm 0.002$ \\
        &
        Random
        & $14.243 \pm 0.253$
        & $0.902 \pm 0.003$
        & $17.189 \pm 0.224$
        & $0.902 \pm 0.003$
        & $0.454 \pm 0.002$
        & $0.737 \pm 0.002$ \\

        \midrule

        \multirow{5}{*}{\rotatebox[origin=c]{90}{\shortstack{Letter\\($K = 26$)}}}
        & $\Delta_\alpha$ (Prob.)
        & $\mathbf{2.753 \pm 0.016}$
        & $0.898 \pm 0.003$
        & $\mathbf{3.358 \pm 0.019}$
        & $0.900 \pm 0.001$
        & $0.756 \pm 0.001$
        & $\mathbf{0.926 \pm 0.000}$ \\
        &
        $\Delta_\alpha$ (APS)
        & $2.835 \pm 0.038$
        & $0.898 \pm 0.002$
        & $3.398 \pm 0.032$
        & $0.900 \pm 0.001$
        & $0.758 \pm 0.002$
        & $0.924 \pm 0.001$ \\
        &
        Shannon MI
        & $2.801 \pm 0.019$
        & $0.899 \pm 0.003$
        & $3.399 \pm 0.026$
        & $0.901 \pm 0.001$
        & $0.758 \pm 0.001$
        & $0.924 \pm 0.001$ \\
        &
        Accuracy
        & $2.879 \pm 0.020$
        & $0.899 \pm 0.002$
        & $3.415 \pm 0.022$
        & $0.901 \pm 0.001$
        & $\mathbf{0.763 \pm 0.001}$
        & $0.922 \pm 0.001$ \\
        &
        Random
        & $4.459 \pm 0.184$
        & $0.897 \pm 0.002$
        & $5.054 \pm 0.174$
        & $0.900 \pm 0.001$
        & $0.642 \pm 0.011$
        & $0.871 \pm 0.007$ \\

        \bottomrule
    \end{tabular}%
}

\end{table}

\subsection{Feature selection}

We now investigate conformal set-size reduction as an information gain metric in a sequential feature selection setting with candidate features $\{X_1, \ldots, X_m\}$. At each step, we select the subsequent feature by maximizing the conformal set-size reduction or Shannon mutual information (MI), with top-1 accuracy gain and random selection as baselines. The purpose of this experiment is to empirically probe the theoretical differences between these two criteria, as highlighted in Figure~\ref{fig:crossing_profile}, on real-world datasets, rather than to propose a competitive active feature acquisition method.

Evaluating the selection criteria requires estimating the label distribution for many subsets of candidate features. We amortize this computation by training a single mask-conditioned classifier that can make predictions from arbitrary subsets of observed features~\citep{yoon2019invase,covert2021explaining}. The evaluation pool is split into two halves to determine the feature ordering and evaluate its performance on held-out data. Further details on the analysis procedure are provided in Appendix~\ref{app:sec:analysis}.

We measure the resulting conformal prediction set size and coverage using the probability and APS scores, as well as top-1 and top-5 accuracy. To summarize performance across the selection trajectory, we average each metric over all non-empty proper prefixes of the feature ordering. Table~\ref{tab:feature_selection} summarizes results for one dataset from each domain, with the remaining results in Appendix~\ref{app:sec:experiments}.

Selecting features according to conformal set-size reduction consistently produces the smallest sets, although the relative performance of the probability-score and APS criteria varies. This advantage is partly expected since $\Delta_\alpha$ directly targets set size, but the gap to Shannon MI shows that it can induce different feature orderings on real data. In an ablation on CIFAR-100, this gap persists when we retrain a separate model for each selected feature subset, indicating that it is not solely an artifact of the shared mask-conditioned model. Accuracy-based selection achieves the highest top-1 accuracy, while $\Delta_\alpha$ matches it on top-5 accuracy. We visualize metric trajectories in Appendix~\ref{app:sec:experiments}.

\section{Discussion}

In this work, we provide new theoretical insights relating conformal prediction and information theory. Our main results show that in standard classification settings, $\Delta_\alpha$ is bracketed by calibration-dependent members of the conformal mutual information family, which exactly recovers Shannon mutual information via an integral representation. Moreover, set-size reduction satisfies an approximate data processing inequality, further motivating it as a principled measure of information gain.

\textbf{Limitations.~}~Our theoretical analysis is limited to classification with the probability and APS scores, while our empirical evaluation considers sequential feature selection with a globally fixed ordering. Future work could extend these connections to other conformal scores and regression settings or develop active feature acquisition methods using $\Delta_\alpha$ as a proxy objective.



\bibliography{iclr2027_conference}

@inproceedings{ha1996optimum,
  title={An Optimum Class-Selective Rejection Rule for Pattern Recognition},
  author={Ha, Thien M.},
  booktitle={Proceedings of the 13th International Conference on Pattern Recognition},
  volume={2},
  year={1996},
  publisher={IEEE}
}

@article{herbei2006classification,
  title={Classification with Reject Option},
  author={Herbei, Radu and Wegkamp, Marten H.},
  journal={The Canadian Journal of Statistics},
  volume={34},
  year={2006}
}

@article{chzhen2021setvalued,
  title={Set-valued Classification -- Overview via a Unified Framework},
  author={Chzhen, Evgenii and Denis, Christophe and Hebiri, Mohamed and Lorieul, Titouan},
  journal={CoRR},
  volume={abs/2102.12318},
  year={2021}
}

@article{degroot1962uncertainty,
  title={Uncertainty, Information, and Sequential Experiments},
  author={DeGroot, Morris H.},
  journal={The Annals of Mathematical Statistics},
  volume={33},
  year={1962}
}

@article{schervish1989general,
  title={A General Method for Comparing Probability Assessors},
  author={Schervish, Mark J.},
  journal={The Annals of Statistics},
  volume={17},
  year={1989},
  publisher={Institute of Mathematical Statistics}
}

@article{grunwald2004game,
  title={Game Theory, Maximum Entropy, Minimum Discrepancy and Robust Bayesian Decision Theory},
  author={Gr{\"u}nwald, Peter D. and Dawid, A. Philip},
  journal={The Annals of Statistics},
  volume={32},
  year={2004}
}

@article{gneiting2007strictly,
  title={Strictly Proper Scoring Rules, Prediction, and Estimation},
  author={Gneiting, Tilmann and Raftery, Adrian E.},
  journal={Journal of the American Statistical Association},
  volume={102},
  year={2007}
}

@article{reid2011information,
  title={Information, Divergence and Risk for Binary Experiments},
  author={Reid, Mark D. and Williamson, Robert C.},
  journal={Journal of Machine Learning Research},
  volume={12},
  year={2011}
}

@inproceedings{garcia2012divergences,
  title={Divergences and Risks for Multiclass Experiments},
  author={Garc{\'i}a-Garc{\'i}a, Dario and Williamson, Robert C.},
  booktitle={Proceedings of the 25th Annual Conference on Learning Theory},
  volume={23},
  year={2012}
}

@article{duchi2018multiclass,
  title={Multiclass Classification, Information, Divergence, and Surrogate Risk},
  author={Duchi, John C. and Khosravi, Khashayar and Ruan, Feng},
  journal={The Annals of Statistics},
  volume={46},
  year={2018}
}

@inproceedings{neiswanger2022generalizing,
  title={Generalizing Bayesian Optimization with Decision-theoretic Entropies},
  author={Neiswanger, Willie and Yu, Lantao and Zhao, Shengjia and Meng, Chenlin and Ermon, Stefano},
  booktitle={Advances in Neural Information Processing Systems},
  volume={35},
  year={2022}
}

@article{houlsby2011bayesian,
  title={Bayesian Active Learning for Classification and Preference Learning},
  author={Houlsby, Neil and Husz{\'a}r, Ferenc and Ghahramani, Zoubin and Lengyel, M{\'a}t{\'e}},
  journal={CoRR},
  volume={abs/1112.5745},
  year={2011}
}

@inproceedings{kirsch2019batchbald,
  title={BatchBALD: Efficient and Diverse Batch Acquisition for Deep Bayesian Active Learning},
  author={Kirsch, Andreas and van Amersfoort, Joost and Gal, Yarin},
  booktitle={Advances in Neural Information Processing Systems},
  volume={32},
  year={2019}
}

@inproceedings{bickfordsmith2023prediction,
  title={Prediction-Oriented Bayesian Active Learning},
  author={Bickford Smith, Freddie and Kirsch, Andreas and Farquhar, Sebastian and Gal, Yarin and Foster, Adam and Rainforth, Tom},
  booktitle={Proceedings of the 26th International Conference on Artificial Intelligence and Statistics},
  volume={206},
  year={2023}
}

@inproceedings{papadopoulos2002inductive,
  title={Inductive Confidence Machines for Regression},
  author={Papadopoulos, Harris and Proedrou, Kostas and Vovk, Vladimir and Gammerman, Alex},
  booktitle={Machine Learning: ECML 2002},
  publisher={Springer},
  year={2002}
}

@book{vovk2005algorithmic,
  title={Algorithmic Learning in a Random World},
  author={Vovk, Vladimir and Gammerman, Alex and Shafer, Glenn},
  volume={29},
  publisher={Springer},
  year={2005}
}

@article{shafer2008tutorial,
  title={A Tutorial on Conformal Prediction},
  author={Shafer, Glenn and Vovk, Vladimir},
  journal={Journal of Machine Learning Research},
  volume={9},
  year={2008}
}

@inproceedings{vovk2016criteria,
  title={Criteria of Efficiency for Conformal Prediction},
  author={Vovk, Vladimir and Fedorova, Valentina and Nouretdinov, Ilia and Gammerman, Alex},
  booktitle={Conformal and Probabilistic Prediction with Applications},
  year={2016},
  publisher={Springer}
}

@article{lei2018distribution,
  title={Distribution-Free Predictive Inference for Regression},
  author={Lei, Jing and G'Sell, Max and Rinaldo, Alessandro and Tibshirani, Ryan J. and Wasserman, Larry},
  journal={Journal of the American Statistical Association},
  volume={113},
  year={2018}
}

@article{sadinle2019least,
  title={Least Ambiguous Set-Valued Classifiers With Bounded Error Levels},
  author={Sadinle, Mauricio and Lei, Jing and Wasserman, Larry},
  journal={Journal of the American Statistical Association},
  volume={114},
  year={2019}
}

@inproceedings{romano2019conformalized,
  title={Conformalized Quantile Regression},
  author={Romano, Yaniv and Patterson, Evan and Cand{\`e}s, Emmanuel J.},
  booktitle={Advances in Neural Information Processing Systems},
  volume={32},
  year={2019}
}

@inproceedings{romano2020classification,
  title={Classification with Valid and Adaptive Coverage},
  author={Romano, Yaniv and Sesia, Matteo and Cand{\`e}s, Emmanuel J.},
  booktitle={Advances in Neural Information Processing Systems},
  volume={33},
  year={2020}
}

@inproceedings{angelopoulos2021uncertainty,
  title={Uncertainty Sets for Image Classifiers Using Conformal Prediction},
  author={Angelopoulos, Anastasios N. and Bates, Stephen and Jordan, Michael I. and Malik, Jitendra},
  booktitle={International Conference on Learning Representations},
  year={2021}
}

@article{angelopoulos2021gentle,
  title={A Gentle Introduction to Conformal Prediction and Distribution-Free Uncertainty Quantification},
  author={Angelopoulos, Anastasios N. and Bates, Stephen},
  journal={CoRR},
  volume={abs/2107.07511},
  year={2021}
}

@book{angelopoulos2025theoretical,
  title={Theoretical Foundations of Conformal Prediction},
  author={Angelopoulos, Anastasios N. and Barber, Rina Foygel and Bates, Stephen},
  publisher={Cambridge University Press},
  year={2025},
}

@inproceedings{lu2022fair,
  title={Fair Conformal Predictors for Applications in Medical Imaging},
  author={Lu, Charles and Lemay, Andr{\'e}anne and Chang, Ken and H{\"o}bel, Katharina and Kalpathy-Cramer, Jayashree},
  booktitle={Proceedings of the AAAI Conference on Artificial Intelligence},
  volume={36},
  year={2022},
}

@article{lindemann2023safe,
  title={Safe Planning in Dynamic Environments Using Conformal Prediction},
  author={Lindemann, Lars and Cleaveland, Matthew and Shim, Gihyun and Pappas, George J.},
  journal={IEEE Robotics and Automation Letters},
  volume={8},
  year={2023},
}

@inproceedings{kumar2023conformal,
  title={Conformal Prediction with Large Language Models for Multi-Choice Question Answering},
  author={Lu, Charles and Kumar, Bhawesh and Gupta, Gauri and Palepu, Anil and Bellamy, David and Raskar, Ramesh and Beam, Andrew},
  booktitle={ICML 2023 Workshop on Neural Conversational AI: What's Left to TEACH (Trustworthy, Enhanced, Adaptable, Capable and Human-centric) Chatbots?},
  year={2023}
}

@inproceedings{cresswell2024conformal,
  title={Conformal Prediction Sets Improve Human Decision Making},
  author={Cresswell, Jesse C. and Sui, Yi and Kumar, Bhargava and Vouitsis, No{\"e}l},
  booktitle={Proceedings of the 41st International Conference on Machine Learning},
  volume={235},
  year={2024}
}

@inproceedings{correia2024information,
  title={An Information Theoretic Perspective on Conformal Prediction},
  author={Correia, Alvaro H. C. and Massoli, Fabio Valerio and Louizos, Christos and Behboodi, Arash},
  booktitle={Advances in Neural Information Processing Systems},
  volume={37},
  year={2024}
}

@article{fayyad2024empirical,
  title={Empirical Validation of Conformal Prediction for Trustworthy Skin Lesions Classification},
  author={Fayyad, Jamil and Alijani, Shadi and Najjaran, Homayoun},
  journal={Computer Methods and Programs in Biomedicine},
  volume={253},
  year={2024}
}

@inproceedings{dhillon2024expected,
  title={On the Expected Size of Conformal Prediction Sets},
  author={Dhillon, Guneet S. and Deligiannidis, George and Rainforth, Tom},
  booktitle={Proceedings of the 27th International Conference on Artificial Intelligence and Statistics},
  volume={238},
  year={2024}
}

@inproceedings{zecchin2024generalization,
  title={Generalization and Informativeness of Conformal Prediction},
  author={Zecchin, Matteo and Park, Sangwoo and Simeone, Osvaldo and Hellstr{\"o}m, Fredrik},
  booktitle={2024 IEEE International Symposium on Information Theory},
  year={2024},
  publisher={IEEE}
}

@inproceedings{kharazian2024copal,
  title={CoPAL: Conformal Prediction in Active Learning An Algorithm for Enhancing Remaining Useful Life Estimation in Predictive Maintenance},
  author={Kharazian, Zahra and Lindgren, Tony and Magnusson, Sindri and Boström, Henrik},
  booktitle={Proceedings of the Thirteenth Symposium on Conformal and Probabilistic Prediction with Applications},
  volume={230},
  year={2024},
  publisher={PMLR}
}

@inproceedings{quach2024conformal,
  title={Conformal Language Modeling},
  author={Quach, Victor and Fisch, Adam and Schuster, Tal and Yala, Adam and Sohn, Jae Ho and Jaakkola, Tommi and Barzilay, Regina},
  booktitle={International Conference on Learning Representations},
  year={2024}
}

@article{portela2025conformal,
  title={Conformal Prediction for Uncertainty Quantification in Dynamic Biological Systems},
  author={Portela, Alberto and Banga, Julio R. and Matabuena, Marcos},
  journal={PLOS Computational Biology},
  volume={21},
  year={2025}
}

@inproceedings{hagos2026performance,
  title={Performance of Conformal Prediction in Capturing Aleatoric Uncertainty},
  author={Hagos, Misgina Tsighe and Lundstr{\"o}m, Claes},
  booktitle={Proceedings of the IEEE/CVF Winter Conference on Applications of Computer Vision},
  year={2026}
}

@article{marchal2017subgaussianity,
  title={On the Sub-Gaussianity of the Beta and Dirichlet Distributions},
  author={Marchal, Olivier and Arbel, Julyan},
  journal={Electronic Communications in Probability},
  volume={22},
  year={2017},
}

@inproceedings{deng2009imagenet,
  title={ImageNet: A Large-Scale Hierarchical Image Database},
  author={Deng, Jia and Dong, Wei and Socher, Richard and Li, Li-Jia and Li, Kai and Fei-Fei, Li},
  booktitle={IEEE Conference on Computer Vision and Pattern Recognition},
  year={2009}
}

@techreport{krizhevsky2009learning,
  title={Learning Multiple Layers of Features from Tiny Images},
  author={Krizhevsky, Alex},
  institution={University of Toronto},
  year={2009}
}

@inproceedings{he2016deep,
  title={Deep Residual Learning for Image Recognition},
  author={He, Kaiming and Zhang, Xiangyu and Ren, Shaoqing and Sun, Jian},
  booktitle={Proceedings of the IEEE Conference on Computer Vision and Pattern Recognition},
  year={2016}
}

@article{xiao2017fashion,
  title={Fashion-MNIST: A Novel Image Dataset for Benchmarking Machine Learning Algorithms},
  author={Xiao, Han and Rasul, Kashif and Vollgraf, Roland},
  journal={CoRR},
  volume={abs/1708.07747},
  year={2017}
}

@article{chrabaszcz2017downsampled,
  title={A Downsampled Variant of ImageNet as an Alternative to the CIFAR Datasets},
  author={Chrabaszcz, Patryk and Loshchilov, Ilya and Hutter, Frank},
  journal={CoRR},
  volume={abs/1707.08819},
  year={2017}
}

@misc{cortez2009wine,
  title={Wine Quality},
  author={Cortez, Paulo and Cerdeira, Ant{\'o}nio and Almeida, Fernando and Matos, Telmo and Reis, Jos{\'e}},
  year={2009},
  howpublished={UCI Machine Learning Repository}
}

@misc{reyesortiz2013human,
  title={Human Activity Recognition Using Smartphones},
  author={Reyes-Ortiz, Jorge Luis and Anguita, Davide and Ghio, Alessandro and Oneto, Luca and Parra, Xavier},
  year={2013},
  howpublished={UCI Machine Learning Repository}
}

@misc{slate1991letter,
  title={Letter Recognition},
  author={Slate, David},
  year={1991},
  howpublished={UCI Machine Learning Repository}
}

@misc{blackard1998covertype,
  title={Covertype},
  author={Blackard, Jock},
  year={1998},
  howpublished={UCI Machine Learning Repository}
}

@inproceedings{shim2018joint,
  title={Joint Active Feature Acquisition and Classification with Variable-Size Set Encoding},
  author={Shim, Hajin and Hwang, Sung Ju and Yang, Eunho},
  booktitle={Advances in Neural Information Processing Systems},
  volume={31},
  year={2018}
}

@inproceedings{yoon2019invase,
  title={INVASE: Instance-wise Variable Selection Using Neural Networks},
  author={Yoon, Jinsung and Jordon, James and van der Schaar, Mihaela},
  booktitle={International Conference on Learning Representations},
  year={2019}
}

@article{covert2021explaining,
  title={Explaining by Removing: A Unified Framework for Model Explanation},
  author={Covert, Ian and Lundberg, Scott and Lee, Su-In},
  journal={Journal of Machine Learning Research},
  volume={22},
  year={2021}
}

@inproceedings{covert2023learning,
  title={Learning to Maximize Mutual Information for Dynamic Feature Selection},
  author={Covert, Ian Connick and Qiu, Wei and Lu, Mingyu and Kim, Na Yoon and White, Nathan J. and Lee, Su-In},
  booktitle={Proceedings of the 40th International Conference on Machine Learning},
  volume={202},
  pages={6424--6447},
  year={2023},
  publisher={PMLR}
}
\bibliographystyle{iclr2027_conference}

\clearpage

\appendix
\section*{Supplementary Material for Conformal Prediction Sets Quantify Information Gain: A Theoretical Perspective}

\section{Proofs} \label{app:sec:proofs}

We first establish several useful results about conformal entropy and mutual information, followed by concentration results for the finite-sample miscoverage terms. We then prove our main results on conformal prediction with the probability score in Appendix~\ref{app:subsec:prob_score_proofs}, and extend these results to the APS score in Appendix~\ref{app:subsec:aps_score_proofs}.

\subsection{Properties of conformal entropy and mutual information} \label{app:proofs:conformal_entropy_props}

In this section, we first prove our results on optimal set-valued predictors for conformal entropy and derive the integral representation of Shannon mutual information. We also establish a data processing inequality for generalized mutual information.

\subsubsection{Proposition~\ref{prop:optimal_predictor_prob_score}: Optimal set-valued predictors} \label{app:proof:optimal_predictor_prob_score}

Consider the risk for a fixed $\lambda \in (0,1]$. Using Equation~\ref{eqn:conformal_loss},
\begin{equation}
    \mathbb{E} \left[ \ell_\lambda(Y, \Gamma(X)) \right] = \mathbb{E} [|\Gamma(X)|] + \tfrac{1}{\lambda}\, \mathbb{P}(Y \notin \Gamma(X)).
\end{equation}
We can decompose the risk into a sum over the elements of $\Gamma(X)$ and its complement. Since $p(\cdot \mid X)$ is the true conditional probability of $Y$ given $X$,
\begin{equation} \label{eqn:risk_decomposition_prob_score_helper}
    \mathbb{E} [|\Gamma(X)|] + \tfrac{1}{\lambda}\,\mathbb{P}(Y \notin \Gamma(X)) = \mathbb{E} \left[ \sum_{y \in \Gamma(X)} 1 + \tfrac{1}{\lambda} \sum_{y \not\in \Gamma(X)} p(y \mid X) \right].
\end{equation}
Using $\sum_{y \in \mathcal{Y}} p(y \mid X) = 1$, we have
\begin{equation} \label{eqn:risk_decomposition_prob_score}
    \mathbb{E}_X \left[ \mathbb{E}_{p(\cdot \mid X)} [\ell_\lambda(Y, \Gamma(X))] \right] = \tfrac{1}{\lambda} + \mathbb{E} \left[ \sum_{y \in \Gamma(X)} \Bigl(1 - \tfrac{1}{\lambda}\, p(y \mid X)\Bigr) \right].
\end{equation}
Hence, $\Gamma(X)$ minimizes the risk pointwise in $x$ if and only if it includes every $y$ with $p(y \mid x) > \lambda$ and excludes every $y$ with $p(y \mid x) < \lambda$. The decision rule that outputs probability superlevel sets
\begin{equation}
    \Gamma^*_\lambda(X; p) = \{y \in \mathcal{Y} : p(y \mid X) \geq \lambda \}
\end{equation}
is one such minimizer for every $x \in \mathcal{X}$ and thus achieves the conformal entropy $H_\lambda(Y \mid X)$.

\subsubsection{Theorem~\ref{thm:integral}: Integral representation of shannon information} \label{app:proof:integral}

We first establish the corresponding integral representation for the Shannon entropy.
\begin{lemma}[Integral representation of Shannon entropy] \label{lemma:integral_entropy}
    Let $H$ denote the Shannon entropy (in nats). For any random variable $X$,
    \begin{equation} 
        1 + H(Y \mid X) = \int_0^1 H_\lambda(Y \mid X) \,d\lambda.
    \end{equation}
\end{lemma}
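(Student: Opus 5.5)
The plan is to work pointwise in $x$ and then integrate over $X$ with Fubini/Tonelli. By Proposition~\ref{prop:optimal_predictor_prob_score} and Equation~\ref{eqn:risk_decomposition_prob_score}, the conditional Bayes risk at a fixed $x$ is
\begin{equation}
    h_\lambda(x) \coloneqq \inf_{\Gamma} \mathbb{E}[\ell_\lambda(Y,\Gamma)\mid X=x] = \tfrac{1}{\lambda} + \sum_{y \in \mathcal{Y}} \min\Bigl(0,\, 1 - \tfrac{1}{\lambda}\, p(y\mid x)\Bigr) = \tfrac{1}{\lambda} - \sum_{y} \Bigl(\tfrac{p(y\mid x)}{\lambda} - 1\Bigr)_+ .
\end{equation}
Here the minimum over subsets separates across labels: label $y$ is included exactly when $1 - p(y\mid x)/\lambda \le 0$.

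Next I would rewrite this as a sum over labels of a scalar function. Using $\sum_y p(y\mid x) = 1$, write $\tfrac{1}{\lambda} = \sum_y p(y\mid x)/\lambda$. This gives
\begin{equation}
    h_\lambda(x) = \sum_y \phi_\lambda\bigl(p(y\mid x)\bigr), \qquad \phi_\lambda(p) = \min\bigl(p/\lambda,\, 1\bigr),
\end{equation}
since $p/\lambda - (p/\lambda - 1)_+ = \min(p/\lambda, 1)$. The problem then reduces to computing $\int_0^1 \min(p/\lambda, 1)\, d\lambda$ for $p \in [0,1]$. For $\lambda \le p$ the integrand equals $1$, contributing $p$. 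For $\lambda > p$ it equals $p/\lambda$, contributing $p\int_p^1 \lambda^{-1}\,d\lambda = -p\log p$. Hence the integral is $p - p\log p$, with value $0$ at $p=0$.

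Summing over $y$ gives
\begin{equation}
    \int_0^1 h_\lambda(x)\,d\lambda = \sum_y p(y\mid x) - \sum_y p(y\mid x)\log p(y\mid x) = 1 + H(Y\mid X=x).
\end{equation}
Finally I would take expectations over $X$ and exchange $\mathbb{E}_X$ with $\int_0^1 d\lambda$. This is justified by Tonelli, because $h_\lambda(x) \ge 0$ and is jointly measurable in $(\lambda, x)$: it is a finite sum of minima of measurable functions. This yields $\int_0^1 H_\lambda(Y\mid X)\,d\lambda = 1 + H(Y\mid X)$. The identity holds even if $H(Y\mid X)$ is infinite, although with finite $K$ it is at most $\log K$. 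The marginal case follows by taking $X$ to be trivial.

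The main obstacle is the first step: identifying the infimum over sets correctly as a per-label minimum and rewriting the constant $1/\lambda$ so that the integrand becomes the bounded function $\min(p/\lambda,1)$. Integrating the raw form directly would split $\int_0^1 \lambda^{-1}\,d\lambda$ into two divergent pieces, so this rewriting is what makes the $\lambda$-integral absolutely convergent. Once that is in place, the remaining steps are elementary calculus and Tonelli.
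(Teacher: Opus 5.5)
Your proposal is correct and follows essentially the same route as the paper. Both arguments reduce the conditional Bayes risk to the per-label form $\sum_y \min\{1, p(y\mid x)/\lambda\}$, evaluate $\int_0^1 \min\{1, p/\lambda\}\,d\lambda = p - p\log p$, and exchange the $\lambda$-integral with $\mathbb{E}_X$ by Tonelli; the only cosmetic difference is that you get the per-label form by rewriting $1/\lambda = \sum_y p(y\mid x)/\lambda$ in the risk decomposition, whereas the paper reads it off directly from the optimal set $\{y : p(y\mid X)\ge\lambda\}$, where each label contributes either $1$ or $p(y\mid x)/\lambda$.
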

\begin{proof}
    Recall that the decision rule $\Gamma_\lambda^*(X; p) = \{y : p(y \mid X) \geq \lambda \}$ achieves $H_\lambda(Y \mid X)$. At this optimum value, each $y \in \mathcal{Y}$ either increases the set size by one when $p(y \mid x) \geq \lambda$ or increases the miscoverage penalty term by $p(y \mid x) / \lambda$ otherwise. Hence,
    \begin{equation}
        H_\lambda(Y \mid X) = \mathbb{E}_X \left[ \sum_{y \in \mathcal{Y}} \min \left\{ 1, \frac{p(y \mid X)}{\lambda} \right\} \right],
    \end{equation}
    and therefore by Tonelli's theorem,
    \begin{equation} \label{eqn:entropy_integral_expansion}
        \int_0^1 H_\lambda(Y \mid X) \,d\lambda = \mathbb{E}_X \left[ \sum_{y \in \mathcal{Y}} \; \int_0^1 \min \left\{ 1, \frac{p(y \mid X)}{\lambda} \right\} \,d\lambda \right].
    \end{equation}
    Evaluating the inner integral,
    \begin{equation} \label{eqn:inner_integral_evaluation}
        \int_0^1 \min \left\{ 1, \frac{p(y \mid X)}{\lambda} \right\} \,d\lambda = \underbrace{\int_0^{p(y \mid X)} d\lambda}_{p(y \,\mid\, X)} + \underbrace{\int_{p(y \mid X)}^1 \frac{p(y \mid X)}{\lambda} \,d\lambda}_{- p(y \,\mid\, X) \log p(y \,\mid\, X)}.
    \end{equation}
    Substituting Equation~\ref{eqn:inner_integral_evaluation} into Equation~\ref{eqn:entropy_integral_expansion},
    \begin{equation}
        \int_0^1 H_\lambda(Y \mid X) \,d\lambda = \mathbb{E}_X \left[ \sum_{y \in \mathcal{Y}} p(y \mid X) - p(y \mid X) \log p(y \mid X) \right] = 1 + H(Y \mid X).
    \end{equation}
\end{proof}
The integral representation of Shannon information now follows immediately from Lemma~\ref{lemma:integral_entropy}.
\begin{equation}
    H(Y \mid Z) - H(Y \mid X, Z) = \int_0^1 H_\lambda(Y \mid Z) \,d\lambda - \int_0^1 H_\lambda(Y \mid X, Z) \,d\lambda.
\end{equation}
Thus,
\begin{equation}
    I(X; Y \mid Z) = \int_0^1 I_\lambda(X; Y \mid Z) \,d\lambda.
\end{equation}

\subsubsection{Proposition~\ref{prop:optimal_predictor_aps}: Optimal set-valued predictors for random \texorpdfstring{$\Lambda$}{Lambda}}

Consider the risk when $\Lambda = \xi(X)$, where $\xi: \mathcal{X} \to (0,1]$ is a fixed measurable function. Using Equation~\ref{eqn:conformal_loss} with a fixed value of $u \in [0,1]$,
\begin{equation}
    \mathbb{E} \left[ \ell_\Lambda (Y, \Gamma(X, u)) \right] = \mathbb{E}_X [|\Gamma(X, u)|] + \mathbb{E}_X \left[ \tfrac{1}{\xi(X)} \mathbb{P}(Y \notin \Gamma(X, u) \mid X) \right].
\end{equation}
Since $p(\cdot \mid X)$ is the true conditional probability of $Y$ given $X$, the risk decomposition in Equations~\ref{eqn:risk_decomposition_prob_score_helper}~and~\ref{eqn:risk_decomposition_prob_score} gives that
\begin{equation}
    \mathbb{E} \left[ \ell_\Lambda (Y, \Gamma(X, u)) \right] = \mathbb{E}_X \left[ \tfrac{1}{\xi(X)} + \sum_{y \in \Gamma(X, u)} \left( 1 - \tfrac{1}{\xi(X)} p(y \mid X) \right) \right].
\end{equation}
Hence, $\Gamma(X, u)$ minimizes the risk pointwise in $x$ if and only if it includes every $y$ with \mbox{$p(y \mid x) > \xi(x)$} and excludes every $y$ with $p(y \mid x) < \xi(x)$. Thus, any decision rule $\Gamma^*_{\Lambda}(X, U; p)$ such that, for all $u \in [0,1]$,
\begin{equation}
    \bigl\{y : p(y \mid X) > \xi(X) \bigr\} \subseteq \Gamma^*_{\Lambda} (X, u; p) \subseteq \bigl\{y : p(y \mid X) \geq \xi(X)\bigr\},
\end{equation}
minimizes $\mathbb{E} \left[ \ell_\Lambda (Y, \Gamma(X, U)) \right]$ and thus achieves the conformal entropy $H_\Lambda(Y \mid X)$. The decision rule may choose to include or exclude labels at the boundary $p(y \mid X) = \xi(X)$ depending on $U$.

\subsubsection{Lemma~\ref{lemma:dpi_generalized_entropy}: Generalized information dpi}

The data processing inequality for conformal prediction sets follows from a more general result for generalized mutual information. For completeness, we include the result here. Let $H_\ell$ and $I_\ell$ denote the generalized entropy and mutual information under a loss $\ell : \mathcal{Y} \times \mathcal{A} \to \mathbb{R}$, as defined in Section~\ref{sec:background}.
\begin{lemma}[Generalized information DPI] \label{lemma:dpi_generalized_entropy}
    Let $\ell$ be any loss with finite action space $\mathcal{A}$, and suppose $Y \to X \to \tilde{X}$ forms a Markov chain. Then,
    \begin{equation}
        H_\ell(Y \mid X) \leq H_\ell(Y \mid \tilde{X}).
    \end{equation}
    Equivalently, $I_\ell(\tilde{X}; Y) \leq I_\ell(X; Y)$.
\end{lemma}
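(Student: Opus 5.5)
The plan is the standard garbling argument: any decision rule that uses only $\tilde{X}$ is also a decision rule based on $X$, so the Bayes risk given $X$ cannot exceed the Bayes risk given $\tilde{X}$. To make this precise, I will use the reformulation noted after the definition of conformal entropy. Because $\mathcal{A}$ is finite, the inner infimum is a minimum over finitely many actions, and a measurable minimizer can be selected by choosing the smallest index among the minimizers. This gives
\begin{equation*}
H_\ell(Y \mid \tilde{X}) = \inf_{\tilde{\Gamma}} \mathbb{E}\bigl[\ell(Y, \tilde{\Gamma}(\tilde{X}))\bigr], \qquad H_\ell(Y \mid X) = \inf_{\Gamma} \mathbb{E}\bigl[\ell(Y, \Gamma(X))\bigr],
\end{equation*}
where the infima run over measurable maps into $\mathcal{A}$.

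The key step handles the fact that $\tilde{X}$ is a possibly noisy channel output rather than a deterministic function of $X$. Fix any measurable $\tilde{\Gamma}$ and condition on $X$. By the Markov chain $Y \to X \to \tilde{X}$, the variables $Y$ and $\tilde{X}$ are conditionally independent given $X$. The tower property then gives
\begin{equation*}
\mathbb{E}\bigl[\ell(Y, \tilde{\Gamma}(\tilde{X}))\bigr] = \mathbb{E}_X\Bigl[\sum_{a \in \mathcal{A}} \mathbb{P}(\tilde{\Gamma}(\tilde{X}) = a \mid X)\, \mathbb{E}[\ell(Y,a) \mid X]\Bigr].
\end{equation*}
This is a convex combination of the conditional risks $\mathbb{E}[\ell(Y,a)\mid X]$. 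It is therefore at least $\min_a \mathbb{E}[\ell(Y,a)\mid X]$ pointwise in $X$. Taking expectations yields $\mathbb{E}[\ell(Y,\tilde{\Gamma}(\tilde{X}))] \geq H_\ell(Y \mid X)$. Taking the infimum over $\tilde{\Gamma}$ gives $H_\ell(Y\mid X) \leq H_\ell(Y \mid \tilde{X})$.

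For the equivalent statement, I subtract both sides from the common marginal term $H_\ell(Y)$ to get $I_\ell(\tilde{X};Y) \leq I_\ell(X;Y)$. This requires $H_\ell(Y)$ to be finite, which holds because $\mathcal{A}$ is finite and $\mathcal{Y}$ is finite, so the loss is bounded. The same argument conditioned on $Z$ yields the conditional version used later, with $(X,Z)$ and $(\tilde{X},Z)$ in place of $X$ and $\tilde{X}$.

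The main obstacle is the conditional-independence step. It needs a regular conditional distribution of $\tilde{X}$ given $X$, which exists on standard Borel spaces, and care that $\mathbb{E}[\ell(Y,a)\mid X]$ factors out of the inner conditional expectation. Finiteness of $\mathcal{A}$ removes any measurable-selection or integrability subtleties.
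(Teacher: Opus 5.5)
Your proof is correct and follows the paper's argument. You fix a rule based on $\tilde{X}$, use $Y \perp \tilde{X} \mid X$ to bound its risk below by $\min_{a} \mathbb{E}[\ell(Y,a)\mid X]$ pointwise in $X$, then take expectations, minimize over rules, and subtract from $H_\ell(Y)$ for the information form. The paper conditions on $(X,\tilde{X})$, whereas you condition on $X$ and average over actions with weights $\mathbb{P}(\tilde{\Gamma}(\tilde{X}) = a \mid X)$; with $\mathcal{A}$ and $\mathcal{Y}$ finite, your factorization follows directly from the definition of conditional independence, so the regular-conditional-distribution caveat in your last paragraph is unnecessary.
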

\begin{proof}
    Fix any decision rule $g$ for the $\tilde{X}$-problem. Since $Y \perp \tilde{X} \mid X$,
    \begin{equation}
        \mathbb{E} \bigl[ \ell(Y, g(\tilde{X})) \mid X, \tilde{X} = \tilde{x} \bigr] = \mathbb{E} \bigl[ \ell(Y, g(\tilde{x})) \mid X \bigr] \geq \inf_{a \in \mathcal{A}} \mathbb{E} \bigl[ \ell(Y,a) \mid X \bigr].
    \end{equation}
    Taking the joint expectation over $(X,\tilde{X})$,
    \begin{equation}
        \mathbb{E} \left[ \inf_{a \in \mathcal{A}} \mathbb{E} \bigl[ \ell(Y,a) \mid X \bigr] \right] \leq \mathbb{E} \bigl[ \ell(Y,g(\tilde{X})) \bigr].
    \end{equation}
    Therefore,
    \begin{equation}
        H_\ell(Y \mid X) \leq \mathbb{E}\bigl[ \ell(Y,g(\tilde{X})) \bigr].
    \end{equation}
    Minimizing the right-hand side over $g$ gives $H_\ell(Y \mid X) \leq H_\ell(Y \mid \tilde{X})$. The equivalent form expressed in terms of mutual information follows by subtracting both sides from $H_\ell(Y)$.
\end{proof}

\subsection{Probability score results} \label{app:subsec:prob_score_proofs}

We now prove our main results for the probability score in Section~\ref{subsec:probability_score_info}.

\subsubsection{Theorem~\ref{thm:sandwich_prob_score}: Probability score sandwich bound}

We wish to bound the expected conformal set-size reduction
\begin{equation}
    \Delta_{\alpha} = \mathbb{E}\bigl[|\mathcal{C}_\alpha(X^{\scriptscriptstyle (1)})|\bigr] - \mathbb{E}\bigl[|\mathcal{C}_\alpha(X^{\scriptscriptstyle (2)})|\bigr]
\end{equation}
in terms of the conformal mutual information $I_\lambda$, for a fixed realization of the calibration data $\mathcal{D}_{\mathrm{cal}}$. Since $\mathcal{D}_{\mathrm{cal}}$ is fixed, we treat the calibrated thresholds $\smash{\hat\lambda_\alpha^{\scriptscriptstyle (1)}, \hat\lambda_\alpha^{\scriptscriptstyle (2)}}$ and the training-conditional miscoverage rates $\varepsilon^{\scriptscriptstyle (1)}, \varepsilon^{\scriptscriptstyle (2)}$ as constants. A key intermediate quantity in our analysis is
\begin{equation}
    \rho \coloneqq H_{\hat\lambda_\alpha^{\scriptscriptstyle (1)}}(Y \mid X^{\scriptscriptstyle (1)}) - H_{\hat\lambda_\alpha^{\scriptscriptstyle (2)}}(Y \mid X^{\scriptscriptstyle (2)}),
\end{equation}
which compares the conformal entropies at the two different calibrated thresholds. We first establish a sandwich bound on $\rho$, then convert it into a bound on $\Delta_{\alpha}$ using the relationship between conformal prediction sets and Bayes-optimal actions.
\begin{lemma}[Probability score sandwich bound on $\rho$] \label{lemma:rho_sandwich_prob_score}
    Fix a calibration dataset $\mathcal{D}_\mathrm{cal}$ and nominal coverage level $1 - \alpha$. For $i \in \{1, 2\}$, define $\smash{\hat{\lambda}_\alpha^{\scriptscriptstyle (i)}}$, $I^{\scriptscriptstyle (i)}$, $D^{\scriptscriptstyle (i)}$, and $\varepsilon^{\scriptscriptstyle (i)}$ as in Theorem~\ref{thm:sandwich_prob_score}, and assume that $\smash{\hat{\lambda}_\alpha^{\scriptscriptstyle (i)}} > 0$. Then,
    \begin{equation}\label{eqn:rho_bound_prob_score}
        I^{\scriptscriptstyle (2)} - D^{\scriptscriptstyle (1)} + \varepsilon^{\scriptscriptstyle (1)} \!\left(\frac{1}{\hat\lambda_\alpha^{\scriptscriptstyle (1)}} - \frac{1}{\hat\lambda_\alpha^{\scriptscriptstyle (2)}}\right) \;\leq\; \rho \;\leq\; I^{\scriptscriptstyle (1)} + D^{\scriptscriptstyle (2)} + \varepsilon^{\scriptscriptstyle (2)} \!\left(\frac{1}{\hat\lambda_\alpha^{\scriptscriptstyle (1)}} - \frac{1}{\hat\lambda_\alpha^{\scriptscriptstyle (2)}}\right).
    \end{equation}
\end{lemma}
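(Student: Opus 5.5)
The plan is to express each conformal entropy through the risk of the actual conformal set, and then to use each conformal set as a feasible (suboptimal) decision rule for the loss at the \emph{other} calibrated threshold. Throughout, $\mathcal{D}_\mathrm{cal}$ is fixed, so $\hat\lambda^{\scriptscriptstyle(i)}_\alpha$ and $\varepsilon^{\scriptscriptstyle(i)}$ are constants. Write $\lambda_i = \hat\lambda^{\scriptscriptstyle(i)}_\alpha$ and $\mathcal{C}_i = \mathcal{C}_\alpha(X^{\scriptscriptstyle(i)})$.

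\textbf{Step 1 (risk identity).} Under the probability score, $\mathcal{C}_i = \{y : q(y \mid X^{\scriptscriptstyle(i)}) \geq \lambda_i\} = \Gamma^*_{\lambda_i}(X^{\scriptscriptstyle(i)}; q)$ exactly, including boundary ties. This set is a measurable function of $X^{\scriptscriptstyle(i)}$ alone. Its $\ell_{\lambda_i}$-risk is therefore
\begin{equation*}
\mathbb{E}\bigl[\ell_{\lambda_i}(Y,\mathcal{C}_i)\bigr] = \mathbb{E}|\mathcal{C}_i| + \varepsilon^{\scriptscriptstyle(i)}/\lambda_i .
\end{equation*}
By Proposition~\ref{prop:optimal_predictor_prob_score}, the risk of $\Gamma^*_{\lambda_i}(X^{\scriptscriptstyle(i)};p)$ equals $H_{\lambda_i}(Y\mid X^{\scriptscriptstyle(i)})$. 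The definition of $D^{\scriptscriptstyle(i)}$ then gives
\begin{equation*}
H_{\lambda_i}(Y \mid X^{\scriptscriptstyle(i)}) = \mathbb{E}|\mathcal{C}_i| + \varepsilon^{\scriptscriptstyle(i)}/\lambda_i - D^{\scriptscriptstyle(i)} .
\end{equation*}

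\textbf{Step 2 (upper bound).} Write
\begin{equation*}
\rho = I^{\scriptscriptstyle(1)} + H_{\lambda_1}(Y\mid X^{\scriptscriptstyle(2)}) - H_{\lambda_2}(Y\mid X^{\scriptscriptstyle(2)}),
\end{equation*}
using $I^{\scriptscriptstyle(1)} = H_{\lambda_1}(Y\mid X^{\scriptscriptstyle(1)}) - H_{\lambda_1}(Y\mid X^{\scriptscriptstyle(2)})$. Since $H_{\lambda_1}(Y\mid X^{\scriptscriptstyle(2)})$ is an infimum over decision rules of $X^{\scriptscriptstyle(2)}$, plugging in $\mathcal{C}_2$ gives
\begin{equation*}
H_{\lambda_1}(Y\mid X^{\scriptscriptstyle(2)}) \leq \mathbb{E}|\mathcal{C}_2| + \varepsilon^{\scriptscriptstyle(2)}/\lambda_1 .
\end{equation*}
Substituting $\mathbb{E}|\mathcal{C}_2| = H_{\lambda_2}(Y\mid X^{\scriptscriptstyle(2)}) + D^{\scriptscriptstyle(2)} - \varepsilon^{\scriptscriptstyle(2)}/\lambda_2$ from Step 1 yields
\begin{equation*}
H_{\lambda_1}(Y\mid X^{\scriptscriptstyle(2)}) - H_{\lambda_2}(Y\mid X^{\scriptscriptstyle(2)}) \leq D^{\scriptscriptstyle(2)} + \varepsilon^{\scriptscriptstyle(2)}(1/\lambda_1 - 1/\lambda_2),
\end{equation*}
which is the right-hand inequality.

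\textbf{Step 3 (lower bound).} Symmetrically, write
\begin{equation*}
\rho = I^{\scriptscriptstyle(2)} + H_{\lambda_1}(Y\mid X^{\scriptscriptstyle(1)}) - H_{\lambda_2}(Y\mid X^{\scriptscriptstyle(1)}).
\end{equation*}
Using $\mathcal{C}_1$ as a feasible rule for the $\lambda_2$-problem on $X^{\scriptscriptstyle(1)}$ gives
\begin{equation*}
H_{\lambda_2}(Y\mid X^{\scriptscriptstyle(1)}) \leq \mathbb{E}|\mathcal{C}_1| + \varepsilon^{\scriptscriptstyle(1)}/\lambda_2 = H_{\lambda_1}(Y\mid X^{\scriptscriptstyle(1)}) + D^{\scriptscriptstyle(1)} - \varepsilon^{\scriptscriptstyle(1)}/\lambda_1 + \varepsilon^{\scriptscriptstyle(1)}/\lambda_2 .
\end{equation*}
Rearranging gives the left-hand inequality.

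\textbf{Main obstacle.} The only delicate point is Step 1: the identification of $\mathcal{C}_i$ with $\Gamma^*_{\lambda_i}(X^{\scriptscriptstyle(i)};q)$, so that $D^{\scriptscriptstyle(i)}$ measures exactly the excess risk of the conformal set. This requires three checks.
\begin{itemize}
\item The boundary convention ($\geq$) must match the probability-score set.
\item $\mathcal{C}_i$ must be a deterministic measurable function of $X^{\scriptscriptstyle(i)}$ once $\mathcal{D}_\mathrm{cal}$ is fixed, so that it is admissible in the infimum defining $H_\lambda$.
\item All terms must be finite, which holds because $\lambda_i>0$ and $K$ is finite.
\end{itemize}
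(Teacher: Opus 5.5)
Your proposal is correct and matches the paper's proof step for step. It uses the same add-and-subtract decompositions of $\rho$, inserting $H_{\hat\lambda_\alpha^{(1)}}(Y \mid X^{(2)})$ for the upper bound and $H_{\hat\lambda_\alpha^{(2)}}(Y \mid X^{(1)})$ for the lower bound, together with the same identity $H_{\hat\lambda_\alpha^{(i)}}(Y \mid X^{(i)}) = \mathbb{E}|\mathcal{C}_\alpha(X^{(i)})| + \varepsilon^{(i)}/\hat\lambda_\alpha^{(i)} - D^{(i)}$ obtained from $\mathcal{C}_\alpha(X^{(i)}) = \Gamma^*_{\hat\lambda_\alpha^{(i)}}(X^{(i)}; q)$ and the same plug-in of each conformal set as a feasible decision rule at the other calibrated threshold.
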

\begin{proof}
    In both bounds we insert an intermediate conformal entropy term that isolates $I_\lambda$ and leaves a cross-threshold conformal entropy difference that we control with an appropriately chosen predictor. Let $p(\cdot \mid X)$ be the true conditional distribution and $q(\cdot \mid X)$ be the predicted conditional distribution. We start with the upper bound.

    \textit{Upper bound:} Adding and subtracting $H_{\hat\lambda_\alpha^{\scriptscriptstyle (1)}}(Y \mid X^{\scriptscriptstyle (2)})$ decomposes $\rho$ as
    \begin{equation} \label{eqn:rho_upper_bound_prob_score_decomp}
        \rho = I^{\scriptscriptstyle (1)} + H_{\hat\lambda_\alpha^{\scriptscriptstyle (1)}}(Y \mid X^{\scriptscriptstyle (2)}) - H_{\hat\lambda_\alpha^{\scriptscriptstyle (2)}}(Y \mid X^{\scriptscriptstyle (2)}).
    \end{equation}
    It remains to upper bound the cross-threshold conformal entropy difference. By the definition of the conformal information divergence,
    \begin{equation} \label{eqn:rho_upper_bound_prob_score_lam2_identity}
        H_{\hat\lambda_\alpha^{\scriptscriptstyle (2)}}(Y \mid X^{\scriptscriptstyle (2)}) = \mathbb{E}\bigl[|\Gamma^*_{\hat\lambda_\alpha^{\scriptscriptstyle (2)}}(X^{\scriptscriptstyle (2)}; q)|\bigr] + \frac{1}{\hat\lambda_\alpha^{\scriptscriptstyle (2)}}\, \mathbb{P}\bigl(Y \notin \Gamma^*_{\hat\lambda_\alpha^{\scriptscriptstyle (2)}}(X^{\scriptscriptstyle (2)}; q)\bigr) - D^{\scriptscriptstyle (2)}.
    \end{equation}
    Under the $\ell_{\hat\lambda_\alpha^{\scriptscriptstyle (1)}}$ loss, the same predictor has risk no smaller than $\smash{H_{\hat\lambda_\alpha^{\scriptscriptstyle (1)}}(Y \mid X^{\scriptscriptstyle (2)})}$. Thus,
    \begin{equation} \label{eqn:rho_upper_bound_prob_score_lam1_bound}
        H_{\hat\lambda_\alpha^{\scriptscriptstyle (1)}}(Y \mid X^{\scriptscriptstyle (2)}) \leq \mathbb{E}\bigl[|\Gamma^*_{\hat\lambda_\alpha^{\scriptscriptstyle (2)}}(X^{\scriptscriptstyle (2)}; q)|\bigr] + \frac{1}{\hat\lambda_\alpha^{\scriptscriptstyle (1)}}\,\mathbb{P}\bigl(Y \notin \Gamma^*_{\hat\lambda_\alpha^{\scriptscriptstyle (2)}}(X^{\scriptscriptstyle (2)}; q)\bigr).
    \end{equation}
    By Proposition~\ref{prop:optimal_predictor_prob_score}, we have $\Gamma^*_{\hat\lambda_\alpha^{\scriptscriptstyle (2)}}(X^{\scriptscriptstyle (2)}; q) = \mathcal{C}_\alpha(X^{\scriptscriptstyle (2)})$, so the miscoverage probabilities in Equations~\ref{eqn:rho_upper_bound_prob_score_lam1_bound} and~\ref{eqn:rho_upper_bound_prob_score_lam2_identity} are both $\varepsilon^{\scriptscriptstyle (2)}$. Subtracting Equation~\ref{eqn:rho_upper_bound_prob_score_lam2_identity} from Equation~\ref{eqn:rho_upper_bound_prob_score_lam1_bound} gives
    \begin{equation} \label{eqn:rho_upper_bound_prob_score_crossH_bound}
        H_{\hat\lambda_\alpha^{\scriptscriptstyle (1)}}(Y \mid X^{\scriptscriptstyle (2)}) - H_{\hat\lambda_\alpha^{\scriptscriptstyle (2)}}(Y \mid X^{\scriptscriptstyle (2)}) \leq D^{\scriptscriptstyle (2)} + \varepsilon^{\scriptscriptstyle (2)} \!\left(\frac{1}{\hat\lambda_\alpha^{\scriptscriptstyle (1)}} - \frac{1}{\hat\lambda_\alpha^{\scriptscriptstyle (2)}}\right).
    \end{equation}
    Substituting Equation~\ref{eqn:rho_upper_bound_prob_score_crossH_bound} into~\ref{eqn:rho_upper_bound_prob_score_decomp} establishes the upper bound.

    \textit{Lower bound:} Adding and subtracting $H_{\hat\lambda_\alpha^{\scriptscriptstyle (2)}}(Y \mid X^{\scriptscriptstyle (1)})$ instead decomposes $\rho$ as
    \begin{equation} \label{eqn:rho_lower_bound_prob_score_decomp}
        \rho = I^{\scriptscriptstyle (2)} + H_{\hat\lambda_\alpha^{\scriptscriptstyle (1)}}(Y \mid X^{\scriptscriptstyle (1)}) - H_{\hat\lambda_\alpha^{\scriptscriptstyle (2)}}(Y \mid X^{\scriptscriptstyle (1)}).
    \end{equation}
    It remains to lower bound the cross-threshold conformal entropy difference. By the definition of the conformal information divergence,
    \begin{equation} \label{eqn:rho_lower_bound_prob_score_lam1_identity}
        H_{\hat\lambda_\alpha^{\scriptscriptstyle (1)}}(Y \mid X^{\scriptscriptstyle (1)}) = \mathbb{E}\bigl[|\Gamma^*_{\hat\lambda_\alpha^{\scriptscriptstyle (1)}}(X^{\scriptscriptstyle (1)}; q)|\bigr] + \frac{1}{\hat\lambda_\alpha^{\scriptscriptstyle (1)}}\, \mathbb{P}\bigl(Y \notin \Gamma^*_{\hat\lambda_\alpha^{\scriptscriptstyle (1)}}(X^{\scriptscriptstyle (1)}; q)\bigr) - D^{\scriptscriptstyle (1)}.
    \end{equation}
    Under the $\ell_{\hat\lambda_\alpha^{\scriptscriptstyle (2)}}$ loss, the same predictor has risk no smaller than $\smash{H_{\hat\lambda_\alpha^{\scriptscriptstyle (2)}}(Y \mid X^{\scriptscriptstyle (1)})}$. Thus,
    \begin{equation} \label{eqn:rho_lower_bound_prob_score_lam2_bound}
        H_{\hat\lambda_\alpha^{\scriptscriptstyle (2)}}(Y \mid X^{\scriptscriptstyle (1)}) \leq \mathbb{E}\bigl[|\Gamma^*_{\hat\lambda_\alpha^{\scriptscriptstyle (1)}}(X^{\scriptscriptstyle (1)}; q)|\bigr] + \frac{1}{\hat\lambda_\alpha^{\scriptscriptstyle (2)}}\,\mathbb{P}\bigl(Y \notin \Gamma^*_{\hat\lambda_\alpha^{\scriptscriptstyle (1)}}(X^{\scriptscriptstyle (1)}; q)\bigr).
    \end{equation}
    By Proposition~\ref{prop:optimal_predictor_prob_score}, we have $\Gamma^*_{\hat\lambda_\alpha^{\scriptscriptstyle (1)}}(X^{\scriptscriptstyle (1)}; q) = \mathcal{C}_\alpha(X^{\scriptscriptstyle (1)})$, so the miscoverage probabilities in Equations~\ref{eqn:rho_lower_bound_prob_score_lam2_bound} and~\ref{eqn:rho_lower_bound_prob_score_lam1_identity} are both $\varepsilon^{\scriptscriptstyle (1)}$. Subtracting Equation~\ref{eqn:rho_lower_bound_prob_score_lam2_bound} from Equation~\ref{eqn:rho_lower_bound_prob_score_lam1_identity} gives
    \begin{equation} \label{eqn:rho_lower_bound_prob_score_crossH_bound}
        H_{\hat\lambda_\alpha^{\scriptscriptstyle (1)}}(Y \mid X^{\scriptscriptstyle (1)}) - H_{\hat\lambda_\alpha^{\scriptscriptstyle (2)}}(Y \mid X^{\scriptscriptstyle (1)}) \geq \varepsilon^{\scriptscriptstyle (1)} \!\left(\frac{1}{\hat\lambda_\alpha^{\scriptscriptstyle (1)}} - \frac{1}{\hat\lambda_\alpha^{\scriptscriptstyle (2)}}\right) - D^{\scriptscriptstyle (1)}.
    \end{equation}
    Substituting Equation~\ref{eqn:rho_lower_bound_prob_score_crossH_bound} into Equation~\ref{eqn:rho_lower_bound_prob_score_decomp} establishes the lower bound.
\end{proof}

We now convert the two-sided bound on $\rho$ in Lemma~\ref{lemma:rho_sandwich_prob_score} into the corresponding bound on $\Delta_\alpha$ in Theorem~\ref{thm:sandwich_prob_score}. Using Proposition~\ref{prop:optimal_predictor_prob_score}, we can write the conformal entropy at $\hat\lambda_\alpha^{\scriptscriptstyle (1)}$ and $\hat\lambda_\alpha^{\scriptscriptstyle (2)}$ as
\begin{equation} \label{eqn:delta_identity_H1}
    H_{\hat\lambda_\alpha^{\scriptscriptstyle (1)}}(Y \mid X^{\scriptscriptstyle (1)}) = \mathbb{E}\bigl[|\mathcal{C}_\alpha(X^{\scriptscriptstyle (1)})|\bigr] + \frac{1}{\hat\lambda_\alpha^{\scriptscriptstyle (1)}}\, \mathbb{P}\bigl(Y \notin \mathcal{C}_\alpha(X^{\scriptscriptstyle (1)}) ; \mathcal{D}_{\mathrm{cal}}\bigr) - D^{\scriptscriptstyle (1)},
\end{equation}
and
\begin{equation} \label{eqn:delta_identity_H2}
    H_{\hat\lambda_\alpha^{\scriptscriptstyle (2)}}(Y \mid X^{\scriptscriptstyle (2)}) = \mathbb{E}\bigl[|\mathcal{C}_\alpha(X^{\scriptscriptstyle (2)})|\bigr] + \frac{1}{\hat\lambda_\alpha^{\scriptscriptstyle (2)}}\, \mathbb{P}\bigl(Y \notin \mathcal{C}_\alpha(X^{\scriptscriptstyle (2)}) ; \mathcal{D}_{\mathrm{cal}}\bigr) - D^{\scriptscriptstyle (2)}.
\end{equation}
Recall $\rho = H_{\hat\lambda_\alpha^{\scriptscriptstyle (1)}}(Y \mid X^{\scriptscriptstyle (1)}) - H_{\hat\lambda_\alpha^{\scriptscriptstyle (2)}}(Y \mid X^{\scriptscriptstyle (2)})$, and the miscoverage probabilities in Equations~\ref{eqn:delta_identity_H1} and~\ref{eqn:delta_identity_H2} are $\varepsilon^{\scriptscriptstyle (1)}$ and $\varepsilon^{\scriptscriptstyle (2)}$, respectively. Subtracting the two identities,
\begin{equation} \label{eqn:delta_identity}
    \Delta_\alpha = \rho - \frac{\varepsilon^{\scriptscriptstyle (1)}}{\hat\lambda_\alpha^{\scriptscriptstyle (1)}} + \frac{\varepsilon^{\scriptscriptstyle (2)}}{\hat\lambda_\alpha^{\scriptscriptstyle (2)}} + D^{\scriptscriptstyle (1)} - D^{\scriptscriptstyle (2)}.
\end{equation}
We combine Lemma~\ref{lemma:rho_sandwich_prob_score} with the identity in Equation~\ref{eqn:delta_identity} to obtain the corresponding bounds on $\Delta_\alpha$.

\textit{Upper bound:} Substituting the upper bound from Equation~\ref{eqn:rho_bound_prob_score},
\begin{equation}
    \Delta_{\alpha} \leq I^{\scriptscriptstyle (1)} + D^{\scriptscriptstyle (2)} + \varepsilon^{\scriptscriptstyle (2)} \!\left(\frac{1}{\hat\lambda_\alpha^{\scriptscriptstyle (1)}} - \frac{1}{\hat\lambda_\alpha^{\scriptscriptstyle (2)}}\right) - \frac{\varepsilon^{\scriptscriptstyle (1)}}{\hat\lambda_\alpha^{\scriptscriptstyle (1)}} + \frac{\varepsilon^{\scriptscriptstyle (2)}}{\hat\lambda_\alpha^{\scriptscriptstyle (2)}} + D^{\scriptscriptstyle (1)} - D^{\scriptscriptstyle (2)}.
\end{equation}
Therefore,
\begin{equation} \label{eqn:sandwich_upper_bound_prob_score}
    \Delta_{\alpha} \leq I^{\scriptscriptstyle (1)} + D^{\scriptscriptstyle (1)} + \frac{\varepsilon^{\scriptscriptstyle (2)} - \varepsilon^{\scriptscriptstyle (1)}}{\hat\lambda_\alpha^{\scriptscriptstyle (1)}}.
\end{equation}

\textit{Lower bound:} Substituting the lower bound from Equation~\ref{eqn:rho_bound_prob_score},
\begin{equation}
    \Delta_{\alpha} \geq I^{\scriptscriptstyle (2)} - D^{\scriptscriptstyle (1)} + \varepsilon^{\scriptscriptstyle (1)} \!\left(\frac{1}{\hat\lambda_\alpha^{\scriptscriptstyle (1)}} - \frac{1}{\hat\lambda_\alpha^{\scriptscriptstyle (2)}}\right) - \frac{\varepsilon^{\scriptscriptstyle (1)}}{\hat\lambda_\alpha^{\scriptscriptstyle (1)}} + \frac{\varepsilon^{\scriptscriptstyle (2)}}{\hat\lambda_\alpha^{\scriptscriptstyle (2)}} + D^{\scriptscriptstyle (1)} - D^{\scriptscriptstyle (2)}.
\end{equation}
Therefore,
\begin{equation} \label{eqn:sandwich_lower_bound_prob_score}
    \Delta_{\alpha} \geq I^{\scriptscriptstyle (2)} - D^{\scriptscriptstyle (2)} + \frac{\varepsilon^{\scriptscriptstyle (2)} - \varepsilon^{\scriptscriptstyle (1)}}{\hat\lambda_\alpha^{\scriptscriptstyle (2)}}.
\end{equation}
This completes the proof.

\subsubsection{Concentration of finite-sample miscoverage difference}

The bounds in Equations~\ref{eqn:sandwich_lower_bound_prob_score} and~\ref{eqn:sandwich_upper_bound_prob_score} contain the finite-sample calibration terms $\varepsilon^{\scriptscriptstyle \Delta} / \hat{\lambda}^{\scriptscriptstyle (i)}_\alpha$ for $i \in \{1, 2\}$ involving the difference in miscoverage rates. We now turn our attention to these terms under a random calibration dataset. Using standard arguments from conformal prediction, the marginal distribution of the coverage $1 - \varepsilon^{\scriptscriptstyle (i)}$ is that of an order statistic of a uniform sample. We make this result precise below and include the proof for completeness.\footnote{We follow the proof given in \citet{angelopoulos2025theoretical}; see also~\citet{vovk2005algorithmic}.}
\begin{lemma}[Training-conditional coverage] \label{lemma:coverage_beta}
    Fix a nominal coverage level $1 - \alpha \in (0,1)$ such that $k = \lceil (1-\alpha)(n+1) \rceil \leq n$. Assume that the calibration scores are almost surely distinct and that the calibration data and future test samples are i.i.d. Then,
    \begin{equation}
        1 - \varepsilon^{\scriptscriptstyle (i)} \sim \mathrm{Beta}(k,\, n + 1 - k), \qquad \text{for } i \in \{1, 2\}.
    \end{equation}
\end{lemma}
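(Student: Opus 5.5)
We follow the standard probability-integral-transform argument. Fix $i$ and write the calibration scores as $S_1,\dots,S_n$, i.i.d.\ with common distribution function $F(t) = \mathbb{P}(S \le t)$, where $S = s(X^{\scriptscriptstyle (i)}, Y)$ is the score of a fresh test point. By definition of the conformal set, $Y_\mathrm{test} \in \mathcal{C}_\alpha(X^{\scriptscriptstyle (i)}_\mathrm{test})$ if and only if $S_\mathrm{test} \le \hat\tau^{\scriptscriptstyle (i)}_\alpha$. We will also use that $\hat\tau^{\scriptscriptstyle (i)}_\alpha = S_{(k)}$, the $k$-th order statistic with $k = \lceil (1-\alpha)(n+1)\rceil \le n$; this is just the definition of the $(1+1/n)(1-\alpha)$ empirical quantile. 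Conditioning on $\mathcal{D}_\mathrm{cal}$, the training-conditional coverage is therefore
\begin{equation}
    1 - \varepsilon^{\scriptscriptstyle (i)} = \mathbb{P}\bigl(S \le S_{(k)} \mid \mathcal{D}_\mathrm{cal}\bigr) = F\bigl(S_{(k)}\bigr),
\end{equation}
where the second equality uses that the test point is independent of $\mathcal{D}_\mathrm{cal}$.

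Next we transform to uniforms. Set $U_j = F(S_j)$. Since $F$ is nondecreasing, $F(S_{(k)}) = U_{(k)}$, the $k$-th order statistic of the $U_j$. It therefore suffices to show that the $U_j$ are i.i.d.\ $\mathrm{Unif}[0,1]$. Given that, the classical result that the $k$-th order statistic of $n$ i.i.d.\ uniforms is $\mathrm{Beta}(k, n+1-k)$ finishes the proof. That classical result can be checked directly: $U_{(k)} \le u$ exactly when at least $k$ of the $U_j$ fall in $[0,u]$, which is a binomial tail, and differentiating the tail gives the Beta density.

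The main obstacle is the uniformity of $F(S_j)$. This requires $F$ to be continuous, and that is where the almost-sure distinctness of the scores enters. If $F$ had an atom at some $t$, then two i.i.d.\ scores would coincide with probability at least $\mathbb{P}(S=t)^2 > 0$, contradicting distinctness. So $F$ is continuous, and for continuous $F$ the standard fact $F(S) \sim \mathrm{Unif}[0,1]$ applies: $\mathbb{P}(F(S) \le u) = u$, using the generalized inverse of $F$. One caveat: under the APS score the score includes the randomization $U$. This is harmless, because we simply treat the randomization as part of the i.i.d.\ data point. The argument applies verbatim to each $i \in \{1,2\}$ separately, since only the marginal law of each $\varepsilon^{\scriptscriptstyle (i)}$ is claimed.
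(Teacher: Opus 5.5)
Your proposal is correct and follows the paper's proof essentially step for step. You identify $\hat\tau_\alpha = S_{(k)}$ so that $1-\varepsilon^{(i)} = F(S_{(k)})$, use almost-sure distinctness to get continuity of $F$, apply the probability integral transform to obtain i.i.d.\ uniforms, and invoke the $\mathrm{Beta}(k, n+1-k)$ law of the $k$-th uniform order statistic; your extra justifications (the atom argument for continuity, monotonicity of $F$ preserving order statistics, and absorbing the APS randomization into the i.i.d.\ data point) fill in steps the paper asserts without comment.
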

\begin{proof}
    The argument is identical for $i \in \{1, 2\}$, so we suppress the superscript. Let $S_1, \dots, S_n$ be the non-conformity scores on the calibration data and $S_{n+1}$ be the score of an independent test point. Let $G$ denote the CDF of the score distribution. The test point is covered if and only if $S_{n+1} \leq \hat{\tau}_\alpha = S_{(k)}$. Hence, the training-conditional coverage is
    \begin{equation}
        1 - \varepsilon = \mathbb{P}\bigl(S_{n+1} \leq \hat{\tau}_\alpha \;;\; \mathcal{D}_{\mathrm{cal}}\bigr) = G\bigl(S_{(k)}\bigr).
    \end{equation}
    If the scores are almost surely distinct, then $G$ is continuous. Thus, $G(S_j) \stackrel{\text{i.i.d.}}{\sim} \mathrm{Unif}(0,1)$, so $G(S_{(k)})$ is the $k$-th order statistic of $n$ uniform samples, which follows $\mathrm{Beta}(k, n+1-k)$.
\end{proof}
Because Lemma~\ref{lemma:coverage_beta} applies for $i \in \{1,2\}$, the two miscoverages share the same mean. Thus,
\begin{equation} \label{eqn:equal_miscoverage_means}
    \mathbb{E}\bigl[\varepsilon^{\scriptscriptstyle (1)}\bigr] = \mathbb{E}\bigl[\varepsilon^{\scriptscriptstyle (2)}\bigr] = 1 - \frac{k}{n+1} = \frac{n + 1 - k}{n + 1} \leq \alpha.
\end{equation}
Since $\mathbb{E}[\varepsilon^{\scriptscriptstyle \Delta}] = 0$, one might hope that the calibration term $\varepsilon^{\scriptscriptstyle \Delta} / \hat{\lambda}^{\scriptscriptstyle (i)}_\alpha$ also vanishes in expectation. Unfortunately, this is not the case. Taking expectations gives
\begin{equation} \label{eqn:covariance_identity}
    \mathbb{E}\!\left[\frac{\varepsilon^{\scriptscriptstyle \Delta}}{\hat\lambda_\alpha^{\scriptscriptstyle (i)}}\right] = \mathrm{Cov}\!\left(\frac{1}{\hat\lambda_\alpha^{\scriptscriptstyle (i)}},\; \varepsilon^{\scriptscriptstyle \Delta}\right),
\end{equation}
which is generally nonzero because the calibrated thresholds are correlated with the miscoverages. We provide two approaches to control $\smash{\varepsilon^{\scriptscriptstyle \Delta} / \hat\lambda_\alpha^{\scriptscriptstyle (i)}}$ without placing assumptions on the \emph{joint} distribution of $(\varepsilon^{\scriptscriptstyle (1)}, \varepsilon^{\scriptscriptstyle (2)})$. First, we analyze its expectation when the reciprocal of the calibrated threshold has finite variance. We then derive a high-probability guarantee that only requires the calibrated threshold to be almost surely positive.
\begin{lemma}[Expectation bound on the calibration term] \label{lemma:expectation_bound}
    Fix a nominal coverage level $1 - \alpha$. Assume that the calibration scores are almost surely distinct and that the calibration data and future test samples are i.i.d. For $i \in \{1, 2\}$, if $\smash{s^{\scriptscriptstyle (i)} = \mathrm{Var}(1 / \hat{\lambda}^{\scriptscriptstyle (i)}_\alpha)^{1/2} < \infty}$, then
    \begin{equation}
        \left|\mathbb{E}\!\left[\frac{\varepsilon^{\scriptscriptstyle \Delta}}{\hat\lambda_\alpha^{\scriptscriptstyle (i)}}\right]\right| \leq \frac{s^{\scriptscriptstyle (i)}}{\sqrt{n}}.
    \end{equation}
\end{lemma}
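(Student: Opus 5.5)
Fix $i \in \{1,2\}$. The plan is to start from the covariance identity in Equation~\ref{eqn:covariance_identity}. Since Lemma~\ref{lemma:coverage_beta} gives $\mathbb{E}[\varepsilon^{\scriptscriptstyle (1)}] = \mathbb{E}[\varepsilon^{\scriptscriptstyle (2)}]$, we have $\mathbb{E}[\varepsilon^{\scriptscriptstyle \Delta}] = 0$. Hence
\begin{equation*}
    \mathbb{E}\!\left[\frac{\varepsilon^{\scriptscriptstyle \Delta}}{\hat\lambda_\alpha^{\scriptscriptstyle (i)}}\right] = \mathrm{Cov}\!\left(\frac{1}{\hat\lambda_\alpha^{\scriptscriptstyle (i)}},\, \varepsilon^{\scriptscriptstyle \Delta}\right),
\end{equation*}
which is well defined because $1/\hat\lambda_\alpha^{\scriptscriptstyle (i)}$ has finite variance and $\varepsilon^{\scriptscriptstyle \Delta} \in [-1,1]$ is bounded. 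Cauchy--Schwarz then gives
\begin{equation*}
    \left|\mathbb{E}\!\left[\frac{\varepsilon^{\scriptscriptstyle \Delta}}{\hat\lambda_\alpha^{\scriptscriptstyle (i)}}\right]\right| \leq s^{\scriptscriptstyle (i)}\, \mathrm{Var}(\varepsilon^{\scriptscriptstyle \Delta})^{1/2}.
\end{equation*}
This reduces the lemma to showing $\mathrm{Var}(\varepsilon^{\scriptscriptstyle \Delta}) \leq 1/n$, with no assumption on the joint law of $(\varepsilon^{\scriptscriptstyle (1)}, \varepsilon^{\scriptscriptstyle (2)})$.

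To bound the variance, I would use $\mathrm{Var}(A - B) \leq 2\mathrm{Var}(A) + 2\mathrm{Var}(B)$, or more sharply $\mathrm{Var}(A-B)^{1/2} \leq \mathrm{Var}(A)^{1/2} + \mathrm{Var}(B)^{1/2}$. Either way, only the marginals are needed, and Lemma~\ref{lemma:coverage_beta} says each $1 - \varepsilon^{\scriptscriptstyle (j)}$ is $\mathrm{Beta}(k, n+1-k)$ with $k = \lceil (1-\alpha)(n+1)\rceil$. The Beta variance is
\begin{equation*}
    \frac{k(n+1-k)}{(n+1)^2(n+2)} \leq \frac{1}{4(n+2)},
\end{equation*}
using $k(n+1-k) \leq (n+1)^2/4$. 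The triangle inequality for standard deviations then gives
\begin{equation*}
    \mathrm{Var}(\varepsilon^{\scriptscriptstyle \Delta})^{1/2} \leq 2 \cdot \frac{1}{2\sqrt{n+2}} = \frac{1}{\sqrt{n+2}} \leq \frac{1}{\sqrt{n}},
\end{equation*}
which yields the claimed bound $s^{\scriptscriptstyle (i)}/\sqrt{n}$.

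The only delicate point is the constant. The crude bound $2\mathrm{Var}(A)+2\mathrm{Var}(B)$ gives $\mathrm{Var}(\varepsilon^{\scriptscriptstyle \Delta}) \leq 1/(n+2)$, which also suffices, so there is slack either way. I would also note two routine conditions. First, the hypothesis $k \leq n$ of Lemma~\ref{lemma:coverage_beta} is implicitly needed for $\hat\lambda_\alpha^{\scriptscriptstyle (i)}$ to be finite, so the Beta law applies. Second, the covariance identity requires $\mathbb{E}[1/\hat\lambda_\alpha^{\scriptscriptstyle (i)}] < \infty$, which is implied by finite variance.
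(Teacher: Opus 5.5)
Your proposal is correct and follows the same route as the paper. It combines the covariance identity (using $\mathbb{E}[\varepsilon^{\scriptscriptstyle \Delta}] = 0$), Cauchy--Schwarz, and the bound $\mathrm{Var}(\varepsilon^{\scriptscriptstyle \Delta}) \leq 4\sigma^2 \leq 1/n$ from the Beta marginals of Lemma~\ref{lemma:coverage_beta}; your triangle inequality for standard deviations is the paper's Cauchy--Schwarz bound on $\mathrm{Cov}(\varepsilon^{\scriptscriptstyle (1)}, \varepsilon^{\scriptscriptstyle (2)})$ written in another form.
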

\begin{proof}
    Fix $i \in \{1, 2\}$. Let $\mathrm{Var}(\varepsilon^{\scriptscriptstyle (i)}) = \sigma^2$. By Lemma~\ref{lemma:coverage_beta},
    \begin{equation}
        \sigma^2 = \frac{(n+1-k)\,k}{(n+1)^2(n+2)} \leq \frac{1}{4(n+2)} \leq \frac{1}{4n}.
    \end{equation}
    Expanding the variance of $\varepsilon^{\scriptscriptstyle \Delta} = \varepsilon^{\scriptscriptstyle (2)} - \varepsilon^{\scriptscriptstyle (1)}$ and applying Cauchy--Schwarz to the cross term,
    \begin{equation} \label{eqn:eps_delta_variance}
        \mathrm{Var}(\varepsilon^{\scriptscriptstyle \Delta}) = 2\sigma^2 - 2\,\mathrm{Cov}\bigl(\varepsilon^{\scriptscriptstyle (1)}, \varepsilon^{\scriptscriptstyle (2)}\bigr) \leq 2\sigma^2 + 2\sqrt{\mathrm{Var}(\varepsilon^{\scriptscriptstyle (1)})\,\mathrm{Var}(\varepsilon^{\scriptscriptstyle (2)})} = 4\sigma^2 \leq 1/n.
    \end{equation}
    Since $\mathbb{E}[\varepsilon^{\scriptscriptstyle \Delta}] = 0$, Equation~\ref{eqn:covariance_identity} and the Cauchy--Schwarz inequality give
    \begin{equation} \label{eqn:expectation_calibration_term_bound}
        \left|\mathbb{E}\!\left[\frac{\varepsilon^{\scriptscriptstyle \Delta}}{\hat\lambda_\alpha^{\scriptscriptstyle (i)}}\right]\right| = \left|\mathrm{Cov}\!\left(\frac{1}{\hat\lambda_\alpha^{\scriptscriptstyle (i)}},\; \varepsilon^{\scriptscriptstyle \Delta}\right)\right| \leq \sqrt{\mathrm{Var} (1 / \hat\lambda_\alpha^{\scriptscriptstyle (i)}) \mathrm{Var}(\varepsilon^{\scriptscriptstyle \Delta})} \leq \frac{s^{\scriptscriptstyle (i)}}{\sqrt{n}}.
    \end{equation}
\end{proof}
Lemma~\ref{lemma:expectation_bound} only requires $\smash{1 / \hat{\lambda}^{\scriptscriptstyle (i)}_\alpha}$ to have finite variance.
Under additional smoothness assumptions (e.g., Lipschitz continuity of the reciprocal quantile function of the scores), the expectation can instead be bounded at a faster $O(1/n)$ rate. Alternatively, we can relax the condition to almost-sure positivity and control the calibration term with high probability.
\begin{lemma}[High-probability bound on the calibration term] \label{lemma:hp_calibration_term}
    Fix a nominal coverage level $1 - \alpha$ and $\delta \in (0,1)$. Assume that the calibration scores are almost surely distinct and that the calibration data and future test samples are i.i.d. Suppose that there exists $\lambda_{\delta/2} > 0$ such that
    \begin{equation}
        \mathbb{P}\bigl(\min\{\hat\lambda_\alpha^{\scriptscriptstyle (1)},\, \hat\lambda_\alpha^{\scriptscriptstyle (2)}\} \geq \lambda_{\delta/2}\bigr) \geq 1 - \tfrac{\delta}{2}.
    \end{equation}
    Then, with probability at least $1 - \delta$ over the draw of $\mathcal{D}_\mathrm{cal}$,
    \begin{equation}
        \left|\frac{\varepsilon^{\scriptscriptstyle \Delta}}{\hat\lambda_\alpha^{\scriptscriptstyle (i)}}\right| \leq \frac{1}{\lambda_{\delta/2}}\sqrt{\frac{2\log(8/\delta)}{n}}, \qquad \text{for both } i \in \{1, 2\}.
    \end{equation}
\end{lemma}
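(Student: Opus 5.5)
We split the failure event into two pieces and control each with probability at least $1-\delta/2$ (union bound). The first piece is the event that one of the calibrated thresholds is small. By hypothesis, on an event $E_1$ with $\mathbb{P}(E_1) \geq 1 - \delta/2$ we have $\hat\lambda_\alpha^{\scriptscriptstyle (i)} \geq \lambda_{\delta/2}$ for both $i$. On $E_1$,
\begin{equation*}
    \left|\frac{\varepsilon^{\scriptscriptstyle \Delta}}{\hat\lambda_\alpha^{\scriptscriptstyle (i)}}\right| \leq \frac{|\varepsilon^{\scriptscriptstyle \Delta}|}{\lambda_{\delta/2}} .
\end{equation*}
The problem therefore reduces to a high-probability bound on $|\varepsilon^{\scriptscriptstyle \Delta}|$ that makes no assumption on the joint law of $(\varepsilon^{\scriptscriptstyle (1)}, \varepsilon^{\scriptscriptstyle (2)})$.

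For the second piece, we use only the marginals. By the triangle inequality and the equal means in Equation~\ref{eqn:equal_miscoverage_means}, with $\mu = \mathbb{E}[\varepsilon^{\scriptscriptstyle (i)}]$ we have
\begin{equation*}
    |\varepsilon^{\scriptscriptstyle \Delta}| \leq |\varepsilon^{\scriptscriptstyle (1)} - \mu| + |\varepsilon^{\scriptscriptstyle (2)} - \mu| .
\end{equation*}
By Lemma~\ref{lemma:coverage_beta}, each $1-\varepsilon^{\scriptscriptstyle (i)}$ is $\mathrm{Beta}(k, n+1-k)$. We then invoke the sub-Gaussianity of the Beta distribution \citep{marchal2017subgaussianity}: $\mathrm{Beta}(a,b)$ is sub-Gaussian with variance proxy $1/(4(a+b+1))$. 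Here that proxy is $1/(4(n+2)) \leq 1/(4n)$.

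The two-sided sub-Gaussian tail then gives
\begin{equation*}
    \mathbb{P}\bigl(|\varepsilon^{\scriptscriptstyle (i)} - \mu| > t\bigr) \leq 2\exp(-2nt^2).
\end{equation*}
We allocate $\delta/4$ to each $i$ and set $2\exp(-2nt^2) = \delta/4$, which gives $t = \sqrt{\log(8/\delta)/(2n)}$. On the intersection $E_2$ of the two good events, which has $\mathbb{P}(E_2) \geq 1-\delta/2$,
\begin{equation*}
    |\varepsilon^{\scriptscriptstyle \Delta}| \leq 2t = \sqrt{2\log(8/\delta)/n} .
\end{equation*}
This matches the stated constant exactly, which confirms the allocation.

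Finally, a union bound gives $\mathbb{P}(E_1 \cap E_2) \geq 1-\delta$. On $E_1 \cap E_2$ the claimed bound holds simultaneously for both $i$, since the single bound on $|\varepsilon^{\scriptscriptstyle \Delta}|$ and the lower bound on $\min_i \hat\lambda_\alpha^{\scriptscriptstyle (i)}$ do not depend on $i$.

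The main obstacle is the Beta concentration step. We must verify that the optimal sub-Gaussian proxy $1/(4(\alpha+\beta+1))$ indeed applies to every $\mathrm{Beta}(k,n+1-k)$. Otherwise a cruder bound is needed: since $G(S_{(k)})$ is a uniform order statistic, the DKW inequality gives $\mathbb{P}(\sup|\hat G - G| > t) \leq 2e^{-2nt^2}$, and this controls $G(S_{(k)})$ to within $t$ of $k/n$, though with a small deterministic offset $k/n - k/(n+1) = O(1/n)$ that would spoil the exact constant. We therefore plan to rely on the cited sub-Gaussian result, noting that it requires $k \leq n$ so that both Beta parameters are positive, which is assumed in Lemma~\ref{lemma:coverage_beta}.
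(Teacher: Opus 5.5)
Your proposal is correct and follows the paper's proof essentially step for step. It uses the same Beta sub-Gaussian tail with proxy $1/(4(n+2)) \leq 1/(4n)$, the same $\delta/4$ allocation per index giving $|\varepsilon^{\scriptscriptstyle \Delta}| \leq 2t_\delta = \sqrt{2\log(8/\delta)/n}$ via the common mean, and the same final union bound with the hypothesized event on $\min_i \hat\lambda_\alpha^{\scriptscriptstyle (i)}$. Your worry about the proxy is unnecessary: the cited result bounds the optimal proxy variance of $\mathrm{Beta}(a,b)$ by $1/(4(a+b+1))$ for all $a,b>0$, so the DKW fallback is not needed.
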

\begin{proof}
    By Lemma~\ref{lemma:coverage_beta}, we have $\varepsilon^{\scriptscriptstyle (i)} \sim \mathrm{Beta}(n+1-k,\, k)$, and let $\mu$ denote their common mean. The $\mathrm{Beta}(\alpha, \beta)$ distribution is sub-Gaussian with proxy variance $\sigma_p^2 = 1/(4(\alpha+\beta+1))$~\citep{marchal2017subgaussianity}. Hence, by the sub-Gaussian tail bound,
    \begin{equation}
        \mathbb{P}\bigl(|\varepsilon^{\scriptscriptstyle (i)} - \mu| \geq t\bigr) \leq 2\exp \bigl(-t^2 / (2\sigma^2_p)\bigr) = 2\exp \bigl(-2(n+2) t^2 \bigr) \leq 2\exp \bigl(-2n t^2\bigr).
    \end{equation}
    Setting the right-hand side equal to $\delta/4$ gives $t_\delta = \sqrt{\log(8/\delta) / (2n)}$. By a union bound over $i \in \{1,2\}$, with probability at least $1 - \frac{\delta}{2}$ we have $|\varepsilon^{\scriptscriptstyle (i)} - \mu| \leq t_\delta$, and therefore
    \begin{equation}
        |\varepsilon^{\scriptscriptstyle \Delta}| = \bigl|(\varepsilon^{\scriptscriptstyle (2)} - \mu) - (\varepsilon^{\scriptscriptstyle (1)} - \mu)\bigr| \leq 2 t_\delta = \sqrt{\frac{2\log(8/\delta)}{n}}.
    \end{equation}
    By assumption, with probability at least $1 - \frac{\delta}{2}$ we have $\hat\lambda_\alpha^{\scriptscriptstyle (i)} \geq \lambda_{\delta/2}$ for both $i$. By a union bound, with probability at least $1 - \delta$,
    \begin{equation}
        \left|\frac{\varepsilon^{\scriptscriptstyle \Delta}}{\hat\lambda_\alpha^{\scriptscriptstyle (i)}}\right| \leq \frac{|\varepsilon^{\scriptscriptstyle \Delta}|}{\lambda_{\delta/2}} \leq \frac{1}{\lambda_{\delta/2}}\sqrt{\frac{2\log(8/\delta)}{n}}. \qedhere
    \end{equation}
\end{proof}

\subsubsection{Corollary~\ref{cor:expectation_sandwich_prob_score}: Probability score sandwich bound in expectation}

The result follows by applying Lemma~\ref{lemma:expectation_bound} to Theorem~\ref{thm:sandwich_prob_score}. For $i \in \{1, 2\}$,
\begin{equation} \label{eqn:sandwich_prob_score_expectation_calibration_term}
    -\frac{s^{\scriptscriptstyle (i)}}{\sqrt{n}} \leq \mathbb{E}\!\left[\frac{\varepsilon^{\scriptscriptstyle \Delta}}{\hat\lambda_\alpha^{\scriptscriptstyle (i)}}\right] \leq \frac{s^{\scriptscriptstyle (i)}}{\sqrt{n}}.
\end{equation}
Taking expectations over the calibration data in Theorem~\ref{thm:sandwich_prob_score} gives
\begin{equation} \label{eqn:sandwich_prob_score_expectation_helper}
    \mathbb{E}[I^{\scriptscriptstyle (2)}] - \mathbb{E}[D^{\scriptscriptstyle (2)}] + \mathbb{E} \left[ \frac{\varepsilon^{\scriptscriptstyle \Delta}}{\hat{\lambda}_\alpha^{\scriptscriptstyle (2)}} \right] \;\leq\; \mathbb{E}[\Delta_\alpha] \;\leq\; \mathbb{E}[I^{\scriptscriptstyle (1)}] + \mathbb{E}[D^{\scriptscriptstyle (1)}] + \mathbb{E} \left[ \frac{\varepsilon^{\scriptscriptstyle \Delta}}{\hat{\lambda}_\alpha^{\scriptscriptstyle (1)}} \right].
\end{equation}
Substituting the appropriate lower and upper bounds from Equation~\ref{eqn:sandwich_prob_score_expectation_calibration_term} into~\ref{eqn:sandwich_prob_score_expectation_helper} completes the proof.

\subsubsection{Corollary~\ref{cor:hp_sandwich_prob_score}: Probability score sandwich bound with high probability}

\begin{corollary}[Probability score sandwich bound with high probability] \label{cor:hp_sandwich_prob_score}
    Fix a nominal coverage level $1 - \alpha$ and $\delta \in (0,1)$. Assume that the calibration scores are almost surely distinct and that the calibration data and future test samples are i.i.d. Suppose that there exists $\lambda_{\delta/2} > 0$ such that
    \begin{equation}
        \mathbb{P}(\min\{\hat\lambda_\alpha^{\scriptscriptstyle (1)}, \hat\lambda_\alpha^{\scriptscriptstyle (2)}\} \geq \lambda_{\delta/2}) \geq 1 - \tfrac{\delta}{2}. 
    \end{equation}
    Then, with probability at least $1 - \delta$ over the draw of $\mathcal{D}_\mathrm{cal}$,
    \begin{equation}
        I^{\scriptscriptstyle (2)} - D^{\scriptscriptstyle (2)} - \sqrt{\frac{2\log(8/\delta)}{n\lambda_{\delta/2}^2}} \;\leq\; \Delta_\alpha \;\leq\; I^{\scriptscriptstyle (1)} + D^{\scriptscriptstyle (1)} + \sqrt{\frac{2\log(8/\delta)}{n\lambda_{\delta/2}^2}}.
    \end{equation}
\end{corollary}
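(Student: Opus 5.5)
The plan is to combine the deterministic sandwich bound of Theorem~\ref{thm:sandwich_prob_score} with the high-probability control of the calibration term in Lemma~\ref{lemma:hp_calibration_term}.

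\textbf{Step 1: the deterministic sandwich holds on the good event.} For any fixed realization of $\mathcal{D}_\mathrm{cal}$ with $\hat\lambda_\alpha^{\scriptscriptstyle (1)}, \hat\lambda_\alpha^{\scriptscriptstyle (2)} > 0$, Theorem~\ref{thm:sandwich_prob_score} gives
\begin{equation*}
I^{\scriptscriptstyle (2)} - D^{\scriptscriptstyle (2)} + \varepsilon^{\scriptscriptstyle \Delta}/\hat\lambda_\alpha^{\scriptscriptstyle (2)} \leq \Delta_\alpha \leq I^{\scriptscriptstyle (1)} + D^{\scriptscriptstyle (1)} + \varepsilon^{\scriptscriptstyle \Delta}/\hat\lambda_\alpha^{\scriptscriptstyle (1)}.
\end{equation*}
Define the event $E$ on which the conclusion of Lemma~\ref{lemma:hp_calibration_term} holds, together with $\min_i \hat\lambda_\alpha^{\scriptscriptstyle (i)} \geq \lambda_{\delta/2}$.

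\textbf{Step 2: $E$ has probability at least $1-\delta$.} The proof of Lemma~\ref{lemma:hp_calibration_term} already obtains this. It intersects two events, each of probability at least $1-\delta/2$:
\begin{itemize}
\item the sub-Gaussian concentration event for $\varepsilon^{\scriptscriptstyle (1)}$ and $\varepsilon^{\scriptscriptstyle (2)}$;
\item the threshold event $\min_i \hat\lambda_\alpha^{\scriptscriptstyle (i)} \geq \lambda_{\delta/2}$.
\end{itemize}
On $E$ both thresholds are strictly positive, since $\lambda_{\delta/2}>0$. So the positivity hypothesis of Theorem~\ref{thm:sandwich_prob_score} is met there, and Step 1 applies.

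\textbf{Step 3: substitute the calibration bound.} On $E$, for each $i$,
\begin{equation*}
\bigl|\varepsilon^{\scriptscriptstyle \Delta}/\hat\lambda_\alpha^{\scriptscriptstyle (i)}\bigr| \leq \lambda_{\delta/2}^{-1}\sqrt{2\log(8/\delta)/n} = \sqrt{2\log(8/\delta)/(n\lambda_{\delta/2}^2)}.
\end{equation*}
Lower-bound the $i=2$ calibration term by minus this quantity in the left inequality. Upper-bound the $i=1$ term by plus this quantity in the right inequality. This yields the stated two-sided bound on $E$.

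\textbf{Main obstacle.} The only delicate point is the quantifiers. The bound for both $i$ must hold simultaneously on a single event, so that no extra union bound (and no degradation of $\delta$) is needed. Lemma~\ref{lemma:hp_calibration_term} is stated "for both $i\in\{1,2\}$", so this is already handled. I would also check that $I^{\scriptscriptstyle (i)}$ and $D^{\scriptscriptstyle (i)}$ are well defined on $E$, which again follows from positivity of the thresholds.
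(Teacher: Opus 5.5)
Your proposal is correct and takes essentially the same approach as the paper. The paper likewise applies Lemma~\ref{lemma:hp_calibration_term} to the deterministic sandwich of Theorem~\ref{thm:sandwich_prob_score} on a single event of probability at least $1-\delta$, and then substitutes the calibration-term bounds. Your explicit point that this event must contain the threshold event, which gives $\hat\lambda_\alpha^{\scriptscriptstyle (i)} > 0$ and means the Lemma's proof rather than just its statement is needed to avoid an extra union bound, is a more careful version of what the paper leaves implicit in ``on this event.''
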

The result follows by applying Lemma~\ref{lemma:hp_calibration_term} to Theorem~\ref{thm:sandwich_prob_score}. By Lemma~\ref{lemma:hp_calibration_term}, with probability at least $1 - \delta$,
\begin{equation} \label{eqn:sandwich_prob_score_hp_calibration_term}
    -\sqrt{\frac{2\log(8/\delta)}{n\lambda_{\delta/2}^2}} \leq \frac{\varepsilon^{\scriptscriptstyle \Delta}}{\hat\lambda_\alpha^{\scriptscriptstyle (i)}} \leq \sqrt{\frac{2\log(8/\delta)}{n\lambda_{\delta/2}^2}}.
\end{equation}
Since Theorem~\ref{thm:sandwich_prob_score} holds for every fixed calibration dataset with $\smash{\hat\lambda_\alpha^{\scriptscriptstyle (i)}} > 0$, on this event
\begin{equation} \label{eqn:sandwich_prob_score_hp_helper}
    I^{\scriptscriptstyle (2)} - D^{\scriptscriptstyle (2)} + \frac{\varepsilon^{\scriptscriptstyle \Delta}}{\hat{\lambda}_\alpha^{\scriptscriptstyle (2)}} \;\leq\; \Delta_\alpha \;\leq\; I^{\scriptscriptstyle (1)} + D^{\scriptscriptstyle (1)} + \frac{\varepsilon^{\scriptscriptstyle \Delta}}{\hat{\lambda}_\alpha^{\scriptscriptstyle (1)}}.
\end{equation}
Substituting the appropriate lower and upper bounds from Equation~\ref{eqn:sandwich_prob_score_hp_calibration_term} into~\ref{eqn:sandwich_prob_score_hp_helper} completes the proof.

\subsubsection{Theorem~\ref{thm:dpi_prob_score}: Probability score dpi}

We convert the data processing inequality for generalized mutual information into a corresponding approximate result for conformal prediction sets for a fixed calibration dataset. Since $\Delta_\alpha$ and $\smash{\tilde{\Delta}_\alpha}$ share the term $\mathbb{E}[|\mathcal{C}_\alpha(X^{\scriptscriptstyle (1)})|]$,
\begin{equation}
    \tilde{\Delta}_\alpha - \Delta_\alpha = \mathbb{E}\bigl[|\mathcal{C}_\alpha(X^{\scriptscriptstyle (2)})|\bigr] - \mathbb{E}\bigl[|\mathcal{C}_\alpha(\tilde{X}^{\scriptscriptstyle (2)})|\bigr].
\end{equation}
Using Proposition~\ref{prop:optimal_predictor_prob_score} and the definition of the conformal information divergence, we can write
\begin{equation} \label{eqn:dpi_prob_score_H2_identity}
    \mathbb{E}\bigl[|\mathcal{C}_\alpha(X^{\scriptscriptstyle (2)})|\bigr] = H_{\hat\lambda_\alpha^{\scriptscriptstyle (2)}}(Y \mid X^{\scriptscriptstyle (2)}) - \frac{\varepsilon^{\scriptscriptstyle (2)}}{\hat\lambda_\alpha^{\scriptscriptstyle (2)}} + D^{\scriptscriptstyle (2)}.
\end{equation}
For the degraded features, $\mathcal{C}_\alpha(\tilde{X}^{\scriptscriptstyle (2)})$ is a feasible decision rule under the $\ell_{\hat\lambda_\alpha^{\scriptscriptstyle (2)}}$ loss, so its risk is no smaller than $\smash{H_{\hat\lambda_\alpha^{\scriptscriptstyle (2)}}(Y \mid \tilde{X}^{\scriptscriptstyle (2)})}$. Evaluating the risk gives
\begin{equation} \label{eqn:dpi_prob_score_H2tilde_bound}
    H_{\hat\lambda_\alpha^{\scriptscriptstyle (2)}}(Y \mid \tilde{X}^{\scriptscriptstyle (2)}) \leq \mathbb{E}\bigl[|\mathcal{C}_\alpha(\tilde{X}^{\scriptscriptstyle (2)})|\bigr] + \frac{\tilde\varepsilon^{\scriptscriptstyle (2)}}{\hat\lambda_\alpha^{\scriptscriptstyle (2)}}.
\end{equation}
Combining Equations~\ref{eqn:dpi_prob_score_H2tilde_bound}~and~\ref{eqn:dpi_prob_score_H2_identity},
\begin{equation} \label{eqn:dpi_prob_score_inequality}
    \tilde{\Delta}_\alpha - \Delta_\alpha \leq H_{\hat\lambda_\alpha^{\scriptscriptstyle (2)}}(Y \mid X^{\scriptscriptstyle (2)}) - H_{\hat\lambda_\alpha^{\scriptscriptstyle (2)}}(Y \mid \tilde{X}^{\scriptscriptstyle (2)}) + D^{\scriptscriptstyle (2)} + \frac{\tilde\varepsilon^{\scriptscriptstyle (2)} - \varepsilon^{\scriptscriptstyle (2)}}{\hat\lambda_\alpha^{\scriptscriptstyle (2)}}.
\end{equation}
By Lemma~\ref{lemma:dpi_generalized_entropy}, $H_{\hat\lambda_\alpha^{\scriptscriptstyle (2)}}(Y \mid X^{\scriptscriptstyle (2)}) \leq H_{\hat\lambda_\alpha^{\scriptscriptstyle (2)}}(Y \mid \tilde{X}^{\scriptscriptstyle (2)})$. Thus,
\begin{equation}
    \tilde{\Delta}_\alpha - \Delta_\alpha \leq D^{\scriptscriptstyle (2)} + \frac{\tilde\varepsilon^{\scriptscriptstyle (2)} - \varepsilon^{\scriptscriptstyle (2)}}{\hat\lambda_\alpha^{\scriptscriptstyle (2)}}.
\end{equation}
This completes the proof.

\subsubsection{Corollary~\ref{cor:expectation_dpi_prob_score}: Probability score dpi in expectation}

Since $\tilde\varepsilon^{\scriptscriptstyle (2)}$ and $\varepsilon^{\scriptscriptstyle (2)}$ are both training-conditional miscoverages at coverage level $1 - \alpha$, they share the same marginal distribution by Lemma~\ref{lemma:coverage_beta}. The result follows by applying Lemma~\ref{lemma:expectation_bound} to Theorem~\ref{thm:dpi_prob_score} with $\varepsilon^{\scriptscriptstyle \Delta} = \tilde\varepsilon^{\scriptscriptstyle (2)} - \varepsilon^{\scriptscriptstyle (2)}$. Thus, we get
\begin{equation} \label{eqn:dpi_prob_score_expectation_calibration_term}
    \mathbb{E}\left[\frac{\varepsilon^{\scriptscriptstyle \Delta}}{\hat\lambda_\alpha^{\scriptscriptstyle (2)}}\right] \leq \frac{s^{\scriptscriptstyle (2)}}{\sqrt{n}}.
\end{equation}
Taking expectations over the calibration data in Theorem~\ref{thm:dpi_prob_score} gives
\begin{equation} \label{eqn:dpi_prob_score_expectation_helper}
    \mathbb{E}[\tilde{\Delta}_\alpha] \;\leq\; \mathbb{E}[\Delta_\alpha] + \mathbb{E}[D^{\scriptscriptstyle (2)}] + \mathbb{E}\!\left[ \frac{\varepsilon^{\scriptscriptstyle \Delta}}{\hat{\lambda}_\alpha^{\scriptscriptstyle (2)}} \right].
\end{equation}
Substituting Equation~\ref{eqn:dpi_prob_score_expectation_calibration_term} into Equation~\ref{eqn:dpi_prob_score_expectation_helper} completes the proof.

\subsubsection{Corollary~\ref{cor:hp_dpi_prob_score}: Probability score dpi with high probability}

\begin{corollary}[Probability score DPI with high probability] \label{cor:hp_dpi_prob_score}
    Fix a nominal coverage level $1 - \alpha$ and $\delta \in (0,1)$. Assume that the calibration scores are almost surely distinct and that the calibration data and future test samples are i.i.d. Suppose that there exists $\lambda_{\delta/2} > 0$ such that
    \begin{equation}
        \mathbb{P}(\hat\lambda_\alpha^{\scriptscriptstyle (2)} \geq \lambda_{\delta/2}) \geq 1 - \tfrac{\delta}{2}.
    \end{equation}
    Then, with probability at least $1 - \delta$ over the draw of $\mathcal{D}_\mathrm{cal}$,
    \begin{equation}
        \tilde{\Delta}_\alpha \;\leq\; \Delta_\alpha + D^{\scriptscriptstyle (2)} + \sqrt{\frac{2\log(8/\delta)}{n\lambda_{\delta/2}^2}}.
    \end{equation}
\end{corollary}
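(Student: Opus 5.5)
The plan is to combine the deterministic inequality of Theorem~\ref{thm:dpi_prob_score} with a high-probability control of its calibration term. This mirrors the derivation of Corollary~\ref{cor:hp_sandwich_prob_score} from Lemma~\ref{lemma:hp_calibration_term}.

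Theorem~\ref{thm:dpi_prob_score} holds for every fixed calibration dataset with $\hat\lambda_\alpha^{\scriptscriptstyle (2)} > 0$. It gives
\begin{equation*}
    \tilde{\Delta}_\alpha \leq \Delta_\alpha + D^{\scriptscriptstyle (2)} + \frac{\tilde\varepsilon^{\scriptscriptstyle (2)} - \varepsilon^{\scriptscriptstyle (2)}}{\hat\lambda_\alpha^{\scriptscriptstyle (2)}}.
\end{equation*}
It therefore suffices to show that, with probability at least $1-\delta$, $\hat\lambda_\alpha^{\scriptscriptstyle (2)}>0$ and the last term is at most $\lambda_{\delta/2}^{-1}\sqrt{2\log(8/\delta)/n}$.

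For this, I rerun the argument of Lemma~\ref{lemma:hp_calibration_term} with the pair $(\varepsilon^{\scriptscriptstyle (1)}, \varepsilon^{\scriptscriptstyle (2)})$ replaced by $(\varepsilon^{\scriptscriptstyle (2)}, \tilde\varepsilon^{\scriptscriptstyle (2)})$.

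First, Lemma~\ref{lemma:coverage_beta} applies to the degraded features $\tilde X^{\scriptscriptstyle (2)}$. Its proof uses only i.i.d.\ data and a.s.\ distinct scores, and the degraded scores are computed on the same i.i.d.\ calibration draws. Hence $\tilde\varepsilon^{\scriptscriptstyle (2)}$ and $\varepsilon^{\scriptscriptstyle (2)}$ are both $\mathrm{Beta}(n+1-k,k)$ with common mean $\mu$.

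Second, the sub-Gaussian tail bound for the Beta distribution gives $|\varepsilon - \mu| \le t_\delta = \sqrt{\log(8/\delta)/(2n)}$ with probability at least $1-\delta/4$ for each of the two miscoverages. A union bound then yields $|\tilde\varepsilon^{\scriptscriptstyle (2)} - \varepsilon^{\scriptscriptstyle (2)}| \le 2t_\delta = \sqrt{2\log(8/\delta)/n}$ with probability at least $1-\delta/2$. This step places no assumption on the joint law of the two miscoverages.

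Third, by the hypothesis, $\hat\lambda_\alpha^{\scriptscriptstyle (2)} \ge \lambda_{\delta/2} > 0$ with probability at least $1-\delta/2$. On this event the positivity requirement of Theorem~\ref{thm:dpi_prob_score} holds automatically. Intersecting the two events costs at most $\delta$ in total. On the intersection,
\begin{equation*}
    \frac{\tilde\varepsilon^{\scriptscriptstyle (2)} - \varepsilon^{\scriptscriptstyle (2)}}{\hat\lambda_\alpha^{\scriptscriptstyle (2)}} \le \frac{|\tilde\varepsilon^{\scriptscriptstyle (2)} - \varepsilon^{\scriptscriptstyle (2)}|}{\lambda_{\delta/2}} \le \sqrt{\frac{2\log(8/\delta)}{n\lambda_{\delta/2}^2}}.
\end{equation*}
Substituting this into the displayed inequality from Theorem~\ref{thm:dpi_prob_score} gives the claim.

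The only point needing care is the first step: verifying that the training-conditional coverage lemma transfers verbatim to the degraded feature channel. The channel randomness must be realized once per data point and be i.i.d.\ across points, which the setup of Section~\ref{subsec:dpi_prob_score} provides. Everything else is a direct reuse of Lemma~\ref{lemma:hp_calibration_term}. Only the one-sided (upper) bound is needed here, and only the threshold $\hat\lambda_\alpha^{\scriptscriptstyle (2)}$ appears. This is why the hypothesis concerns $\hat\lambda_\alpha^{\scriptscriptstyle (2)}$ alone rather than the minimum of the two thresholds.
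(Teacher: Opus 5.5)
Your proposal is correct and matches the paper's proof: you apply Theorem~\ref{thm:dpi_prob_score} for each fixed calibration set, use Lemma~\ref{lemma:coverage_beta} to give $\tilde\varepsilon^{(2)}$ and $\varepsilon^{(2)}$ the same Beta marginal, and rerun the sub-Gaussian and union-bound argument of Lemma~\ref{lemma:hp_calibration_term}, which needs a lower bound only on $\hat\lambda_\alpha^{(2)}$. You also spell out why the coverage lemma transfers to the degraded channel, which the paper asserts without comment.
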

The result follows by applying the proof in Lemma~\ref{lemma:hp_calibration_term} to Theorem~\ref{thm:dpi_prob_score} with $\varepsilon^{\scriptscriptstyle \Delta} = \tilde\varepsilon^{\scriptscriptstyle (2)} - \varepsilon^{\scriptscriptstyle (2)}$. Since $\tilde\varepsilon^{\scriptscriptstyle (2)}$ and $\varepsilon^{\scriptscriptstyle (2)}$ are both training-conditional miscoverages at level $1 - \alpha$, they share the same marginal distribution by Lemma~\ref{lemma:coverage_beta}. Then, Lemma~\ref{lemma:hp_calibration_term} gives, with probability at least $1 - \delta$,
\begin{equation} \label{eqn:dpi_prob_score_hp_calibration_term}
    \frac{\varepsilon^{\scriptscriptstyle \Delta}}{\hat\lambda_\alpha^{\scriptscriptstyle (2)}} \leq \sqrt{\frac{2\log(8/\delta)}{n\lambda_{\delta/2}^2}}.
\end{equation}
Since Theorem~\ref{thm:dpi_prob_score} holds for every fixed calibration dataset with $\smash{\hat\lambda_\alpha^{\scriptscriptstyle (2)}} > 0$, on this event
\begin{equation} \label{eqn:dpi_prob_score_hp_helper}
    \tilde{\Delta}_\alpha \;\leq\; \Delta_\alpha + D^{\scriptscriptstyle (2)} + \frac{\varepsilon^{\scriptscriptstyle \Delta}}{\hat{\lambda}_\alpha^{\scriptscriptstyle (2)}}.
\end{equation}
Substituting the upper bound from Equation~\ref{eqn:dpi_prob_score_hp_calibration_term} into Equation~\ref{eqn:dpi_prob_score_hp_helper} completes the proof.

\subsection{APS score results} \label{app:subsec:aps_score_proofs}

We now prove our results using the APS score from Section~\ref{subsec:aps_construction}. The arguments mirror the proofs in Appendix~\ref{app:subsec:prob_score_proofs}, with the fixed calibrated thresholds replaced by the input-dependent thresholds induced by the APS score.  As in Section~\ref{sec:background}, we assume without loss of generality that there are no ties among the predicted probabilities, since any ties can be broken at random following \citet{romano2020classification}. 

\subsubsection{Regularity of the induced aps threshold}

We first show that $\smash{\hat\tau_\alpha < 1}$ ensures $\hat\Lambda_\alpha \in (0,1]$ as well as boundedness of $\smash{\mathbb{E}[1/\hat\Lambda_\alpha]}$ and $D_{\hat\Lambda_\alpha}$.

\begin{lemma}[APS threshold and divergence bounds] \label{lemma:aps_threshold_lower_bound}
    Fix a calibration dataset $\mathcal{D}_{\mathrm{cal}}$ and nominal coverage level $1 - \alpha$. Let $\smash{\hat\tau_\alpha}$ be the calibrated APS threshold and let $\smash{\hat\Lambda_\alpha = \xi(X)}$ be the induced probability threshold. If $\hat\tau_\alpha < 1$, then for every $x \in \mathcal{X}$,
    \begin{equation} \label{eqn:aps_threshold_lower_bound}
        \xi(x) \geq \frac{1 - \hat\tau_\alpha}{K}.
    \end{equation}
    Consequently, $\hat\Lambda_\alpha \in (0, 1]$ and $\mathbb{E}[1 / \hat\Lambda_\alpha] \leq K / (1 - \hat\tau_\alpha)$. Moreover,
    \begin{equation} \label{eqn:aps_divergence_bound}
        D_{\hat\Lambda_\alpha}\bigl(p(\cdot \mid X) \parallel q(\cdot \mid X)\bigr) \leq K + \frac{K}{1 - \hat\tau_\alpha}.
    \end{equation}
\end{lemma}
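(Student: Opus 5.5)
Fix $x \in \mathcal{X}$ and write $q_{(1)} > \cdots > q_{(K)}$ for the ordered predicted probabilities $q(y^{(\ell)} \mid x)$. Let $c_j = \sum_{\ell \le j} q_{(\ell)}$, with $c_0 = 0$ and $c_K = 1$. By definition, the induced threshold is $\xi(x) = q_{(j^*)}$, where $j^*$ is the first index with $c_{j^*} > \hat\tau_\alpha$. Such an index exists because $\hat\tau_\alpha < 1 = c_K$. Minimality of $j^*$ gives $c_{j^*-1} \le \hat\tau_\alpha$.

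For the bound \eqref{eqn:aps_threshold_lower_bound}, the remaining mass $1 - c_{j^*-1} = \sum_{\ell \ge j^*} q_{(\ell)}$ is at least $1 - \hat\tau_\alpha$. This sum has at most $K - j^* + 1 \le K$ terms, and its largest term is $q_{(j^*)}$. Hence $q_{(j^*)} \ge (1 - \hat\tau_\alpha)/K$. This is strictly positive since $\hat\tau_\alpha < 1$. Also $\xi(x) \le 1$ trivially, so $\hat\Lambda_\alpha \in (0,1]$, and $\mathbb{E}[1/\hat\Lambda_\alpha] \le K/(1 - \hat\tau_\alpha)$ follows immediately. I would note that measurability of $\xi$ is inherited from that of $q$, as asserted in the main text.

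For the divergence bound \eqref{eqn:aps_divergence_bound}, I would bound the two terms inside $D_{\hat\Lambda_\alpha}$ separately, pointwise in $(X, U)$.
\begin{itemize}
\item \textbf{First term.} For any set $\Gamma$, the loss satisfies $\ell_\Lambda(Y, \Gamma) = |\Gamma| + \tfrac{1}{\Lambda}\mathbf{1}(Y \notin \Gamma) \le K + 1/\xi(X)$.
\item \textbf{Second term.} The loss is non-negative, so subtracting $\ell_\Lambda(Y, \Gamma^*_\Lambda(X,U;p)) \ge 0$ can only decrease the difference.
\end{itemize}
The integrand is therefore at most $K + 1/\xi(X) \le K + K/(1 - \hat\tau_\alpha)$. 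Taking expectations over $X$, $Y \sim p(\cdot \mid X)$ and $U$ gives the claim. Since the bound on $1/\xi(x)$ is uniform in $x$, no integrability issue arises.

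The only delicate step is the first: pinning down the definition of the induced threshold precisely enough that $c_{j^*-1} \le \hat\tau_\alpha$ holds. Ties are broken at random as assumed, and the randomization $U$ does not affect $\xi$. Everything else is direct.
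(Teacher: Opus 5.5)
Your proof is correct and follows essentially the same route as the paper. The paper also takes the first index $j$ whose cumulative predicted mass exceeds $\hat\tau_\alpha$, uses $P_{j-1} \le \hat\tau_\alpha$ and the fact that the trailing probabilities are at most $\xi(x)$ to get $1-\hat\tau_\alpha \le (K-j+1)\,\xi(x)$ (you fold the $j$-th term into the tail sum directly), and bounds the divergence by the risk $K + \mathbb{E}[1/\hat\Lambda_\alpha]$ of the $q$-based rule after dropping the nonnegative risk of the $p$-based rule.
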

\begin{proof}
    Fix $x$ and order the classes as in Section~\ref{sec:background}. Define $P_j = \sum_{\ell \leq j} q(y^{\scriptscriptstyle (\ell)} \mid x)$ with $P_0 = 0$. 
    
    Since $P_K = 1 > \hat\tau_\alpha$, there is a first index $j$ with $P_j > \hat\tau_\alpha$, and $\smash{\xi(x) = q(y^{\scriptscriptstyle (j)} \mid x)}$ by definition. Furthermore, we have $P_{j-1} \leq \hat\tau_\alpha$, so $P_j = P_{j-1} + \xi(x) \leq \hat\tau_\alpha + \xi(x)$. Each of the $K - j$ remaining labels has predicted probability at most $\xi(x)$, so
    \begin{equation}
        1 - \hat\tau_\alpha - \xi(x) \;\leq\; 1 - P_j \;=\; \sum_{\ell > j} q(y^{\scriptscriptstyle (\ell)} \mid x) \;\leq\; (K - j)\, \xi(x).
    \end{equation}
    Rearranging gives $1 - \hat\tau_\alpha \leq (K - j + 1)\, \xi(x) \leq K \xi(x)$, which is Equation~\ref{eqn:aps_threshold_lower_bound}. Taking expectations over $X$ gives the bound on $\mathbb{E}[1 / \hat\Lambda_\alpha]$.

    For the divergence bound, the decision rule $\Gamma^*_{\hat\Lambda_\alpha}(X, U; q)$ has risk at most $K + \mathbb{E}[1 / \hat\Lambda_\alpha]$ when the conditional distribution is $p(\cdot \mid X)$. Equation~\ref{eqn:aps_divergence_bound} then follows from the definition of $D_{\hat\Lambda_\alpha}$.
\end{proof}

\subsubsection{Theorem~\ref{thm:sandwich_aps}: APS score sandwich bound}

We wish to bound the expected conformal set-size reduction
\begin{equation}
    \Delta_{\alpha} = \mathbb{E}\bigl[|\mathcal{C}_\alpha(X^{\scriptscriptstyle (1)})|\bigr] - \mathbb{E}\bigl[|\mathcal{C}_\alpha(X^{\scriptscriptstyle (2)})|\bigr]
\end{equation}
in terms of the conformal mutual information $I_\Lambda$ under a random index, for a fixed realization of the calibration data $\mathcal{D}_{\mathrm{cal}}$. For $i \in \{1, 2\}$, let $\xi^{\scriptscriptstyle (i)}$ be the fixed measurable map sending an input to its induced probability threshold, so that $\smash{\hat{\Lambda}_\alpha^{\scriptscriptstyle (i)} = \xi^{\scriptscriptstyle (i)}(X^{\scriptscriptstyle (i)})}$. 

Since $\mathcal{D}_{\mathrm{cal}}$ is fixed, the conditional miscoverages $\varepsilon^{\scriptscriptstyle (1)}, \varepsilon^{\scriptscriptstyle (2)}$ are deterministic functions of $X^{\scriptscriptstyle (2)}$, where $\varepsilon^{\scriptscriptstyle (i)} = \mathbb{P}(Y \notin \mathcal{C}_\alpha(X^{\scriptscriptstyle (i)}) \mid X^{\scriptscriptstyle (2)}; \mathcal{D}_{\mathrm{cal}})$ with the conditional probability taken over the fresh test label $Y \sim p(\cdot \mid X^{\scriptscriptstyle (2)})$ and the independent randomization $U \sim \mathrm{Unif}[0,1]$.

A key intermediate quantity in our analysis is
\begin{equation}
    \rho \coloneqq H_{\hat{\Lambda}_\alpha^{\scriptscriptstyle (1)}}(Y \mid X^{\scriptscriptstyle (1)}) - H_{\hat{\Lambda}_\alpha^{\scriptscriptstyle (2)}}(Y \mid X^{\scriptscriptstyle (2)}),
\end{equation}
which compares the conformal entropies at the two different calibrated thresholds. We first establish a sandwich bound on $\rho$, then convert it into a bound on $\Delta_{\alpha}$ by making use of the fact that conformal prediction sets using the APS score satisfy the Bayes-optimality criterion in Proposition~\ref{prop:optimal_predictor_aps}. 
\begin{lemma}[APS score sandwich bound on $\rho$] \label{lemma:rho_sandwich_aps}
    Fix a calibration dataset $\mathcal{D}_\mathrm{cal}$ and nominal coverage level $1 - \alpha$. For $i \in \{1, 2\}$, assume that $\hat{\tau}_\alpha^{\scriptscriptstyle (i)} < 1$, and define $\smash{\hat{\Lambda}_\alpha^{\scriptscriptstyle (i)}}$, $I^{\scriptscriptstyle (i)}$, $D^{\scriptscriptstyle (i)}$, and $\varepsilon^{\scriptscriptstyle (i)}$ as in Theorem~\ref{thm:sandwich_aps}. Then,
    \begin{equation}\label{eqn:rho_bound_aps}
        I^{\scriptscriptstyle (2)} - D^{\scriptscriptstyle (1)} + \mathbb{E}\!\left[\varepsilon^{\scriptscriptstyle (1)} \!\left(\frac{1}{\hat{\Lambda}_\alpha^{\scriptscriptstyle (1)}} - \frac{1}{\hat{\Lambda}_\alpha^{\scriptscriptstyle (2)}}\right)\right] \;\leq\; \rho \;\leq\; I^{\scriptscriptstyle (1)} + D^{\scriptscriptstyle (2)} + \mathbb{E}\!\left[\varepsilon^{\scriptscriptstyle (2)} \!\left(\frac{1}{\hat{\Lambda}_\alpha^{\scriptscriptstyle (1)}} - \frac{1}{\hat{\Lambda}_\alpha^{\scriptscriptstyle (2)}}\right)\right],
    \end{equation}
    where the expectations are over $X^{\scriptscriptstyle (2)}$, and both bounds are finite.
\end{lemma}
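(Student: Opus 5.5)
The proof will follow the same skeleton as Lemma~\ref{lemma:rho_sandwich_prob_score}. Constants $1/\hat\lambda$ are replaced by random weights $1/\hat\Lambda_\alpha^{\scriptscriptstyle (i)}$, so miscoverage terms are weighted inside expectations over $X^{\scriptscriptstyle (2)}$. Both thresholds $\hat\Lambda_\alpha^{\scriptscriptstyle (1)}=\xi^{\scriptscriptstyle (1)}(Z)$ and $\hat\Lambda_\alpha^{\scriptscriptstyle (2)}=\xi^{\scriptscriptstyle (2)}(X,Z)$ are measurable functions of $X^{\scriptscriptstyle (2)}$. Hence the cross-threshold entropy $H_{\hat\Lambda_\alpha^{\scriptscriptstyle (1)}}(Y\mid X^{\scriptscriptstyle (2)})$ is well defined, and Proposition~\ref{prop:optimal_predictor_aps} applies to it with $\xi = \xi^{\scriptscriptstyle (1)}\circ\pi_Z$.

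The lower-bound cross term $H_{\hat\Lambda_\alpha^{\scriptscriptstyle (2)}}(Y\mid X^{\scriptscriptstyle (1)})$ needs more care, because the index then depends on $X$, which is not in the conditioning. The plan is to interpret $H_\Lambda(Y\mid Z)$ as the infimum over rules $\Gamma(Z,U)$ of $\mathbb{E}[\ell_\Lambda(Y,\Gamma(Z,U))]$, with the expectation also over $\Lambda$ given $Z$. This is exactly what the definition with inner expectation over $(Y,U,\Lambda)$ gives. Proposition~\ref{prop:optimal_predictor_aps} is then not needed for this term, since only feasibility of a particular rule is used.

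For the upper bound, I would write $\rho = I^{\scriptscriptstyle (1)} + H_{\hat\Lambda^{\scriptscriptstyle (1)}_\alpha}(Y\mid X^{\scriptscriptstyle (2)}) - H_{\hat\Lambda^{\scriptscriptstyle (2)}_\alpha}(Y\mid X^{\scriptscriptstyle (2)})$.

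By Definition~\ref{def:aps_divergence} and the convention that $\Gamma^*_{\hat\Lambda}(X,U;q)$ is the APS set, we have
$H_{\hat\Lambda^{\scriptscriptstyle (2)}_\alpha}(Y\mid X^{\scriptscriptstyle (2)}) = \mathbb{E}|\mathcal{C}_\alpha(X^{\scriptscriptstyle (2)})| + \mathbb{E}[\varepsilon^{\scriptscriptstyle (2)}/\hat\Lambda^{\scriptscriptstyle (2)}_\alpha] - D^{\scriptscriptstyle (2)}$.
Here I use the tower property, $\mathbb{E}[\mathbf{1}(Y\notin\mathcal{C})/\hat\Lambda^{\scriptscriptstyle (2)}] = \mathbb{E}[\varepsilon^{\scriptscriptstyle (2)}/\hat\Lambda^{\scriptscriptstyle (2)}]$, since $\hat\Lambda^{\scriptscriptstyle (2)}$ is $X^{\scriptscriptstyle (2)}$-measurable. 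This identity is the APS analogue of Equation~\ref{eqn:rho_upper_bound_prob_score_lam2_identity}.

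The same rule $\mathcal{C}_\alpha(X^{\scriptscriptstyle (2)})$, a measurable function of $(X^{\scriptscriptstyle (2)},U)$, is feasible for the $\ell_{\hat\Lambda^{\scriptscriptstyle (1)}}$ problem. Therefore $H_{\hat\Lambda^{\scriptscriptstyle (1)}_\alpha}(Y\mid X^{\scriptscriptstyle (2)}) \le \mathbb{E}|\mathcal{C}_\alpha(X^{\scriptscriptstyle (2)})| + \mathbb{E}[\varepsilon^{\scriptscriptstyle (2)}/\hat\Lambda^{\scriptscriptstyle (1)}_\alpha]$. Subtracting gives the upper bound.

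For the lower bound, I would write $\rho = I^{\scriptscriptstyle (2)} + H_{\hat\Lambda^{\scriptscriptstyle (1)}_\alpha}(Y\mid X^{\scriptscriptstyle (1)}) - H_{\hat\Lambda^{\scriptscriptstyle (2)}_\alpha}(Y\mid X^{\scriptscriptstyle (1)})$.

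The identity $H_{\hat\Lambda^{\scriptscriptstyle (1)}_\alpha}(Y\mid X^{\scriptscriptstyle (1)}) = \mathbb{E}|\mathcal{C}_\alpha(X^{\scriptscriptstyle (1)})| + \mathbb{E}[\mathbf{1}(Y\notin \mathcal{C}_\alpha(X^{\scriptscriptstyle (1)}))/\hat\Lambda^{\scriptscriptstyle (1)}_\alpha] - D^{\scriptscriptstyle (1)}$ holds as before. Conditioning on $X^{\scriptscriptstyle (2)}$, the miscoverage term equals $\mathbb{E}[\varepsilon^{\scriptscriptstyle (1)}/\hat\Lambda^{\scriptscriptstyle (1)}_\alpha]$. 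This is the reason $\varepsilon^{\scriptscriptstyle (i)}$ is defined conditionally on $X^{\scriptscriptstyle (2)}$ rather than $X^{\scriptscriptstyle (i)}$.

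Using $\mathcal{C}_\alpha(X^{\scriptscriptstyle (1)})$ as a feasible $Z$-measurable rule under $\ell_{\hat\Lambda^{\scriptscriptstyle (2)}}$ gives $H_{\hat\Lambda^{\scriptscriptstyle (2)}_\alpha}(Y\mid X^{\scriptscriptstyle (1)}) \le \mathbb{E}|\mathcal{C}_\alpha(X^{\scriptscriptstyle (1)})| + \mathbb{E}[\mathbf{1}(Y\notin\mathcal{C}_\alpha(X^{\scriptscriptstyle (1)}))/\hat\Lambda^{\scriptscriptstyle (2)}_\alpha]$. Conditioning on $X^{\scriptscriptstyle (2)}$ again, the last term becomes $\mathbb{E}[\varepsilon^{\scriptscriptstyle (1)}/\hat\Lambda^{\scriptscriptstyle (2)}_\alpha]$, using that $U$ is independent and $Y\perp$ nothing extra given $X^{\scriptscriptstyle (2)}$. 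Subtracting yields the lower bound.

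The main obstacle is bookkeeping rather than a new idea. Two points need checking. The first is that all conditional expectations are taken with respect to $X^{\scriptscriptstyle (2)}$, which works because both indices are $X^{\scriptscriptstyle (2)}$-measurable. The second is that every term is finite, so that the subtractions are legitimate and the bounds are not of the form $\infty-\infty$. Finiteness comes from Lemma~\ref{lemma:aps_threshold_lower_bound}. Since $\hat\tau^{\scriptscriptstyle (i)}_\alpha<1$, we have $1/\hat\Lambda^{\scriptscriptstyle (i)}_\alpha \le K/(1-\hat\tau^{\scriptscriptstyle (i)}_\alpha)$ uniformly. Then all entropies lie in $[0, K + K/(1-\hat\tau)]$, the divergences are bounded by Equation~\ref{eqn:aps_divergence_bound}, and $\varepsilon^{\scriptscriptstyle (i)}\in[0,1]$. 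Together these make every expectation finite, which gives the final claim of the lemma.
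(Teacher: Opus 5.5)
Your proposal is correct and matches the paper's proof step for step. Both use the same two add-and-subtract decompositions of $\rho$, the divergence identity (with the APS-set convention) for the matched-threshold entropy, feasibility of that same APS set under the other threshold's loss, tower-property conditioning on $X^{\scriptscriptstyle (2)}$ to obtain the terms $\mathbb{E}[\varepsilon^{\scriptscriptstyle (i)}/\hat\Lambda_\alpha^{\scriptscriptstyle (j)}]$, and finiteness via Lemma~\ref{lemma:aps_threshold_lower_bound}.
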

\begin{proof}
    The proof closely mirrors the argument in Lemma~\ref{lemma:rho_sandwich_prob_score}. Let $p(\cdot \mid X)$ be the true conditional distribution of $Y$ given $X$ and $q(\cdot \mid X)$ be the predicted conditional distribution. By Lemma~\ref{lemma:aps_threshold_lower_bound}, we have $\smash{\hat{\Lambda}_\alpha^{\scriptscriptstyle (i)} \in (0,1]}$ with $\smash{\mathbb{E}[1/\hat{\Lambda}_\alpha^{\scriptscriptstyle (i)}]}$ and $D^{\scriptscriptstyle (i)}$ finite, so we can make use of Proposition~\ref{prop:optimal_predictor_aps} and every term in our analysis is finite.

    \textit{Upper bound:} Adding and subtracting $H_{\hat{\Lambda}_\alpha^{\scriptscriptstyle (1)}}(Y \mid X^{\scriptscriptstyle (2)})$ decomposes $\rho$ as
    \begin{equation} \label{eqn:rho_upper_bound_aps_decomp}
        \rho = I^{\scriptscriptstyle (1)} + H_{\hat{\Lambda}_\alpha^{\scriptscriptstyle (1)}}(Y \mid X^{\scriptscriptstyle (2)}) - H_{\hat{\Lambda}_\alpha^{\scriptscriptstyle (2)}}(Y \mid X^{\scriptscriptstyle (2)}).
    \end{equation}
    It remains to upper bound the cross-threshold conformal entropy difference. By the definition of the conformal information divergence,
    {\small \begin{equation} \label{eqn:rho_upper_bound_aps_lam2_identity}
        H_{\hat{\Lambda}_\alpha^{\scriptscriptstyle (2)}}(Y \mid X^{\scriptscriptstyle (2)}) = \mathbb{E}\bigl[|\Gamma^*_{\hat{\Lambda}_\alpha^{\scriptscriptstyle (2)}}(X^{\scriptscriptstyle (2)}, U; q)|\bigr] + \mathbb{E}\!\left[\frac{1}{\hat{\Lambda}_\alpha^{\scriptscriptstyle (2)}}\, \mathbb{P}\bigl(Y \notin \Gamma^*_{\hat{\Lambda}_\alpha^{\scriptscriptstyle (2)}}(X^{\scriptscriptstyle (2)}, U; q) \mid X^{\scriptscriptstyle (2)}\bigr)\right] - D^{\scriptscriptstyle (2)}.
    \end{equation}
    }Under the $\ell_{\hat{\Lambda}_\alpha^{\scriptscriptstyle (1)}}$ loss, the same predictor has risk no smaller than $\smash{H_{\hat{\Lambda}_\alpha^{\scriptscriptstyle (1)}}(Y \mid X^{\scriptscriptstyle (2)})}$. Thus,
    \begin{equation} \label{eqn:rho_upper_bound_aps_lam1_bound}
        H_{\hat{\Lambda}_\alpha^{\scriptscriptstyle (1)}}(Y \mid X^{\scriptscriptstyle (2)}) \leq \mathbb{E}\bigl[|\Gamma^*_{\hat{\Lambda}_\alpha^{\scriptscriptstyle (2)}}(X^{\scriptscriptstyle (2)}, U; q)|\bigr] + \mathbb{E}\!\left[\frac{1}{\hat{\Lambda}_\alpha^{\scriptscriptstyle (1)}}\, \mathbb{P}\bigl(Y \notin \Gamma^*_{\hat{\Lambda}_\alpha^{\scriptscriptstyle (2)}}(X^{\scriptscriptstyle (2)}, U; q) \mid X^{\scriptscriptstyle (2)}\bigr)\right].
    \end{equation}
    By Proposition~\ref{prop:optimal_predictor_aps}, we can choose $\Gamma^*_{\hat{\Lambda}_\alpha^{\scriptscriptstyle (2)}}(X^{\scriptscriptstyle (2)}, U; q) = \mathcal{C}_\alpha(X^{\scriptscriptstyle (2)})$ since $\smash{\hat{\Lambda}_\alpha^{\scriptscriptstyle (2)} = \xi^{\scriptscriptstyle (2)}(X^{\scriptscriptstyle (2)})}$. 
    
    Hence, the miscoverage terms in Equations~\ref{eqn:rho_upper_bound_aps_lam2_identity}~and~\ref{eqn:rho_upper_bound_aps_lam1_bound} become $\smash{\mathbb{E}[\varepsilon^{\scriptscriptstyle (2)} / \hat{\Lambda}_\alpha^{\scriptscriptstyle (2)}]}$ and $\smash{\mathbb{E}[\varepsilon^{\scriptscriptstyle (2)} / \hat{\Lambda}_\alpha^{\scriptscriptstyle (1)}]}$, respectively. Subtracting Equation~\ref{eqn:rho_upper_bound_aps_lam2_identity} from Equation~\ref{eqn:rho_upper_bound_aps_lam1_bound} gives
    \begin{equation} \label{eqn:rho_upper_bound_aps_crossH_bound}
        H_{\hat{\Lambda}_\alpha^{\scriptscriptstyle (1)}}(Y \mid X^{\scriptscriptstyle (2)}) - H_{\hat{\Lambda}_\alpha^{\scriptscriptstyle (2)}}(Y \mid X^{\scriptscriptstyle (2)}) \leq D^{\scriptscriptstyle (2)} + \mathbb{E}\!\left[\varepsilon^{\scriptscriptstyle (2)} \!\left(\frac{1}{\hat{\Lambda}_\alpha^{\scriptscriptstyle (1)}} - \frac{1}{\hat{\Lambda}_\alpha^{\scriptscriptstyle (2)}}\right)\right].
    \end{equation}
    Substituting Equation~\ref{eqn:rho_upper_bound_aps_crossH_bound} into~\ref{eqn:rho_upper_bound_aps_decomp} establishes the upper bound.

    \textit{Lower bound:} Adding and subtracting $H_{\hat{\Lambda}_\alpha^{\scriptscriptstyle (2)}}(Y \mid X^{\scriptscriptstyle (1)})$ instead decomposes $\rho$ as
    \begin{equation} \label{eqn:rho_lower_bound_aps_decomp}
        \rho = I^{\scriptscriptstyle (2)} + H_{\hat{\Lambda}_\alpha^{\scriptscriptstyle (1)}}(Y \mid X^{\scriptscriptstyle (1)}) - H_{\hat{\Lambda}_\alpha^{\scriptscriptstyle (2)}}(Y \mid X^{\scriptscriptstyle (1)}).
    \end{equation}
    It remains to lower bound the cross-threshold conformal entropy difference. By the definition of the conformal information divergence and the law of iterated expectation,
    {\small \begin{equation} \label{eqn:rho_lower_bound_aps_lam1_identity}
        H_{\hat{\Lambda}_\alpha^{\scriptscriptstyle (1)}}(Y \mid X^{\scriptscriptstyle (1)}) = \mathbb{E}\bigl[|\Gamma^*_{\hat{\Lambda}_\alpha^{\scriptscriptstyle (1)}}(X^{\scriptscriptstyle (1)}, U; q)|\bigr] + \mathbb{E}\!\left[\frac{1}{\hat{\Lambda}_\alpha^{\scriptscriptstyle (1)}}\, \mathbb{P}\bigl(Y \notin \Gamma^*_{\hat{\Lambda}_\alpha^{\scriptscriptstyle (1)}}(X^{\scriptscriptstyle (1)}, U; q) \mid X^{\scriptscriptstyle (2)}\bigr)\right] - D^{\scriptscriptstyle (1)}.
    \end{equation}
    }Under the $\ell_{\hat{\Lambda}_\alpha^{\scriptscriptstyle (2)}}$ loss, the same predictor has risk no smaller than $\smash{H_{\hat{\Lambda}_\alpha^{\scriptscriptstyle (2)}}(Y \mid X^{\scriptscriptstyle (1)})}$. Thus,
    \begin{equation} \label{eqn:rho_lower_bound_aps_lam2_bound}
        H_{\hat{\Lambda}_\alpha^{\scriptscriptstyle (2)}}(Y \mid X^{\scriptscriptstyle (1)}) \leq \mathbb{E}\bigl[|\Gamma^*_{\hat{\Lambda}_\alpha^{\scriptscriptstyle (1)}}(X^{\scriptscriptstyle (1)}, U; q)|\bigr] + \mathbb{E}\!\left[\frac{1}{\hat{\Lambda}_\alpha^{\scriptscriptstyle (2)}}\, \mathbb{P}\bigl(Y \notin \Gamma^*_{\hat{\Lambda}_\alpha^{\scriptscriptstyle (1)}}(X^{\scriptscriptstyle (1)}, U; q) \mid X^{\scriptscriptstyle (2)}\bigr)\right].
    \end{equation}
    By Proposition~\ref{prop:optimal_predictor_aps}, we can choose $\Gamma^*_{\hat{\Lambda}_\alpha^{\scriptscriptstyle (1)}}(X^{\scriptscriptstyle (1)}, U; q) = \mathcal{C}_\alpha(X^{\scriptscriptstyle (1)})$ since $\smash{\hat{\Lambda}_\alpha^{\scriptscriptstyle (1)} = \xi^{\scriptscriptstyle (1)}(X^{\scriptscriptstyle (1)})}$. 
    
    Hence, the miscoverage terms in Equations~\ref{eqn:rho_lower_bound_aps_lam1_identity} and~\ref{eqn:rho_lower_bound_aps_lam2_bound} become $\smash{\mathbb{E}[\varepsilon^{\scriptscriptstyle (1)} / \hat{\Lambda}_\alpha^{\scriptscriptstyle (1)}]}$ and $\smash{\mathbb{E}[\varepsilon^{\scriptscriptstyle (1)} / \hat{\Lambda}_\alpha^{\scriptscriptstyle (2)}]}$, respectively. Subtracting Equation~\ref{eqn:rho_lower_bound_aps_lam2_bound} from Equation~\ref{eqn:rho_lower_bound_aps_lam1_identity} gives
    \begin{equation} \label{eqn:rho_lower_bound_aps_crossH_bound}
        H_{\hat{\Lambda}_\alpha^{\scriptscriptstyle (1)}}(Y \mid X^{\scriptscriptstyle (1)}) - H_{\hat{\Lambda}_\alpha^{\scriptscriptstyle (2)}}(Y \mid X^{\scriptscriptstyle (1)}) \geq \mathbb{E}\!\left[\varepsilon^{\scriptscriptstyle (1)} \!\left(\frac{1}{\hat{\Lambda}_\alpha^{\scriptscriptstyle (1)}} - \frac{1}{\hat{\Lambda}_\alpha^{\scriptscriptstyle (2)}}\right)\right] - D^{\scriptscriptstyle (1)}.
    \end{equation}
    Substituting Equation~\ref{eqn:rho_lower_bound_aps_crossH_bound} into Equation~\ref{eqn:rho_lower_bound_aps_decomp} establishes the lower bound.
\end{proof}
We now convert the two-sided bound on $\rho$ in Lemma~\ref{lemma:rho_sandwich_aps} into the corresponding bound on $\Delta_\alpha$ in Theorem~\ref{thm:sandwich_aps}. Because $\smash{\hat{\Lambda}_\alpha^{\scriptscriptstyle (i)} = \xi^{\scriptscriptstyle (i)}(X^{\scriptscriptstyle (i)})}$ and $\hat{\tau}^{\scriptscriptstyle (i)}_\alpha < 1$, Proposition~\ref{prop:optimal_predictor_aps} lets us write the conformal entropy at $\smash{\hat{\Lambda}_\alpha^{\scriptscriptstyle (1)}}$ and $\smash{\hat{\Lambda}_\alpha^{\scriptscriptstyle (2)}}$ as
\begin{equation} \label{eqn:delta_identity_aps_H1}
    H_{\hat{\Lambda}_\alpha^{\scriptscriptstyle (1)}}(Y \mid X^{\scriptscriptstyle (1)}) = \mathbb{E}\bigl[|\mathcal{C}_\alpha(X^{\scriptscriptstyle (1)})|\bigr] + \mathbb{E}\!\left[\frac{\varepsilon^{\scriptscriptstyle (1)}}{\hat{\Lambda}_\alpha^{\scriptscriptstyle (1)}}\right] - D^{\scriptscriptstyle (1)},
\end{equation}
and
\begin{equation} \label{eqn:delta_identity_aps_H2}
    H_{\hat{\Lambda}_\alpha^{\scriptscriptstyle (2)}}(Y \mid X^{\scriptscriptstyle (2)}) = \mathbb{E}\bigl[|\mathcal{C}_\alpha(X^{\scriptscriptstyle (2)})|\bigr] + \mathbb{E}\!\left[\frac{\varepsilon^{\scriptscriptstyle (2)}}{\hat{\Lambda}_\alpha^{\scriptscriptstyle (2)}}\right] - D^{\scriptscriptstyle (2)},
\end{equation}
where each term is finite by Lemma~\ref{lemma:aps_threshold_lower_bound}. Recall $\smash{\rho = H_{\hat{\Lambda}_\alpha^{\scriptscriptstyle (1)}}(Y \mid X^{\scriptscriptstyle (1)}) - H_{\hat{\Lambda}_\alpha^{\scriptscriptstyle (2)}}(Y \mid X^{\scriptscriptstyle (2)})}$. Subtracting the two identities,
\begin{equation} \label{eqn:delta_identity_aps}
    \Delta_\alpha = \rho - \mathbb{E}\!\left[\frac{\varepsilon^{\scriptscriptstyle (1)}}{\hat{\Lambda}_\alpha^{\scriptscriptstyle (1)}}\right] + \mathbb{E}\!\left[\frac{\varepsilon^{\scriptscriptstyle (2)}}{\hat{\Lambda}_\alpha^{\scriptscriptstyle (2)}}\right] + D^{\scriptscriptstyle (1)} - D^{\scriptscriptstyle (2)}.
\end{equation}
We combine Lemma~\ref{lemma:rho_sandwich_aps} with the identity in Equation~\ref{eqn:delta_identity_aps} to obtain the corresponding bounds on $\Delta_\alpha$.

\textit{Upper bound:} Substituting the upper bound from Equation~\ref{eqn:rho_bound_aps},
\begin{equation}
    \Delta_{\alpha} \leq I^{\scriptscriptstyle (1)} + D^{\scriptscriptstyle (2)} + \mathbb{E}\!\left[\varepsilon^{\scriptscriptstyle (2)} \!\left(\frac{1}{\hat{\Lambda}_\alpha^{\scriptscriptstyle (1)}} - \frac{1}{\hat{\Lambda}_\alpha^{\scriptscriptstyle (2)}}\right)\right] - \mathbb{E}\!\left[\frac{\varepsilon^{\scriptscriptstyle (1)}}{\hat{\Lambda}_\alpha^{\scriptscriptstyle (1)}}\right] + \mathbb{E}\!\left[\frac{\varepsilon^{\scriptscriptstyle (2)}}{\hat{\Lambda}_\alpha^{\scriptscriptstyle (2)}}\right] + D^{\scriptscriptstyle (1)} - D^{\scriptscriptstyle (2)}.
\end{equation}
Therefore,
\begin{equation} \label{eqn:sandwich_upper_bound_aps}
    \Delta_{\alpha} \leq I^{\scriptscriptstyle (1)} + D^{\scriptscriptstyle (1)} + \mathbb{E}\!\left[\frac{\varepsilon^{\scriptscriptstyle (2)} - \varepsilon^{\scriptscriptstyle (1)}}{\hat{\Lambda}_\alpha^{\scriptscriptstyle (1)}}\right].
\end{equation}

\textit{Lower bound:} Substituting the lower bound from Equation~\ref{eqn:rho_bound_aps},
\begin{equation}
    \Delta_{\alpha} \geq I^{\scriptscriptstyle (2)} - D^{\scriptscriptstyle (1)} + \mathbb{E}\!\left[\varepsilon^{\scriptscriptstyle (1)} \!\left(\frac{1}{\hat{\Lambda}_\alpha^{\scriptscriptstyle (1)}} - \frac{1}{\hat{\Lambda}_\alpha^{\scriptscriptstyle (2)}}\right)\right] - \mathbb{E}\!\left[\frac{\varepsilon^{\scriptscriptstyle (1)}}{\hat{\Lambda}_\alpha^{\scriptscriptstyle (1)}}\right] + \mathbb{E}\!\left[\frac{\varepsilon^{\scriptscriptstyle (2)}}{\hat{\Lambda}_\alpha^{\scriptscriptstyle (2)}}\right] + D^{\scriptscriptstyle (1)} - D^{\scriptscriptstyle (2)}.
\end{equation}
Therefore,
\begin{equation} \label{eqn:sandwich_lower_bound_aps}
    \Delta_{\alpha} \geq I^{\scriptscriptstyle (2)} - D^{\scriptscriptstyle (2)} + \mathbb{E}\!\left[\frac{\varepsilon^{\scriptscriptstyle (2)} - \varepsilon^{\scriptscriptstyle (1)}}{\hat{\Lambda}_\alpha^{\scriptscriptstyle (2)}}\right].
\end{equation}
This completes the proof.

\subsubsection{Decomposition of the aps calibration term}

Unlike for the probability score, the calibration terms involve a threshold $\hat\Lambda_\alpha^{\scriptscriptstyle (i)}$ and conditional miscoverage $\varepsilon^{\scriptscriptstyle (i)}$ which vary with the test input $X^{\scriptscriptstyle (2)}$. We next isolate the part of the calibration term that concentrates with $n$ from an input-dependent covariance that does not.
\begin{lemma}[Decomposition of the APS calibration term] \label{lemma:aps_calibration_decomp}
    Fix a calibration dataset $\mathcal{D}_{\mathrm{cal}}$ and nominal coverage level $1 - \alpha$. For $i \in \{1, 2\}$, assume $\hat{\tau}_\alpha^{\scriptscriptstyle (i)} < 1$ and define
    \begin{equation}
        \bar\varepsilon^{\scriptscriptstyle (i)} = \mathbb{P}\bigl(Y \notin \mathcal{C}_\alpha(X^{\scriptscriptstyle (i)}) ; \mathcal{D}_{\mathrm{cal}}\bigr), \qquad M^{\scriptscriptstyle (i)} = \mathbb{E}\!\left[\frac{1}{\hat\Lambda_\alpha^{\scriptscriptstyle (i)}}\right],
    \end{equation}
    and let $\bar\varepsilon^{\scriptscriptstyle \Delta} = \bar\varepsilon^{\scriptscriptstyle (2)} - \bar\varepsilon^{\scriptscriptstyle (1)}$. Then,
    \begin{equation} \label{eqn:aps_calibration_decomp}
        \mathbb{E}_{X^{\scriptscriptstyle (2)}}\!\left[\frac{\varepsilon^{\scriptscriptstyle \Delta}}{\hat\Lambda_\alpha^{\scriptscriptstyle (i)}}\right] = \bar\varepsilon^{\scriptscriptstyle \Delta}\, M^{\scriptscriptstyle (i)} + R^{\scriptscriptstyle (i)}, \qquad \text{where } R^{\scriptscriptstyle (i)} \coloneqq \mathrm{Cov}_{X^{\scriptscriptstyle (2)}}\!\left(\varepsilon^{\scriptscriptstyle \Delta},\, \frac{1}{\hat\Lambda_\alpha^{\scriptscriptstyle (i)}}\right).
    \end{equation}
    Moreover, let $\bar\varepsilon^{\scriptscriptstyle (i)}_Z = \mathbb{P}(Y \notin \mathcal{C}_\alpha(X^{\scriptscriptstyle (i)}) \mid X^{\scriptscriptstyle (1)}; \mathcal{D}_{\mathrm{cal}})$ denote the $X^{\scriptscriptstyle (1)}$-conditional miscoverage. Define
    \begin{equation}
        \kappa^{\scriptscriptstyle (2)} = \sqrt{\mathrm{Var}_{X^{\scriptscriptstyle (2)}}(\varepsilon^{\scriptscriptstyle (2)} - \varepsilon^{\scriptscriptstyle (1)})}, \qquad \bar\kappa^{\scriptscriptstyle (1)} = \sqrt{\mathrm{Var}_{X^{\scriptscriptstyle (1)}}(\bar\varepsilon^{\scriptscriptstyle (2)}_Z - \bar\varepsilon^{\scriptscriptstyle (1)}_Z)}
    \end{equation}
    and
    \begin{equation}
        v^{\scriptscriptstyle (2)} = \sqrt{\mathrm{Var}_{X^{\scriptscriptstyle (2)}}(1/\hat\Lambda_\alpha^{\scriptscriptstyle (2)})}, \qquad \bar v^{\scriptscriptstyle (1)} = \sqrt{\mathrm{Var}_{X^{\scriptscriptstyle (1)}}(1/\hat\Lambda_\alpha^{\scriptscriptstyle (1)})}.
    \end{equation}
    Then, $|R^{\scriptscriptstyle (1)}| \leq \bar\kappa^{\scriptscriptstyle (1)}\, \bar v^{\scriptscriptstyle (1)}$ and $|R^{\scriptscriptstyle (2)}| \leq \kappa^{\scriptscriptstyle (2)}\, v^{\scriptscriptstyle (2)}$.
\end{lemma}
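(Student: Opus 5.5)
The decomposition is the covariance identity $\mathbb{E}[AB] = \mathbb{E}[A]\mathbb{E}[B] + \mathrm{Cov}(A,B)$ applied under the test distribution of $X^{\scriptscriptstyle (2)}$ with the calibration data fixed. Take $A = \varepsilon^{\scriptscriptstyle \Delta}$ and $B = 1/\hat\Lambda_\alpha^{\scriptscriptstyle (i)}$. First check integrability. Both $\varepsilon^{\scriptscriptstyle (1)}$ and $\varepsilon^{\scriptscriptstyle (2)}$ lie in $[0,1]$. By Lemma~\ref{lemma:aps_threshold_lower_bound}, $1/\hat\Lambda_\alpha^{\scriptscriptstyle (i)} \leq K/(1-\hat\tau_\alpha^{\scriptscriptstyle (i)})$ pointwise, so all products, means, variances and covariances are finite. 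Next identify the means. By the tower property over $(Y,U)$ given $X^{\scriptscriptstyle (2)}$, $\mathbb{E}_{X^{\scriptscriptstyle (2)}}[\varepsilon^{\scriptscriptstyle (i)}] = \bar\varepsilon^{\scriptscriptstyle (i)}$, hence $\mathbb{E}[\varepsilon^{\scriptscriptstyle \Delta}] = \bar\varepsilon^{\scriptscriptstyle \Delta}$. Also $\mathbb{E}[1/\hat\Lambda_\alpha^{\scriptscriptstyle (i)}] = M^{\scriptscriptstyle (i)}$. Substituting these into the covariance identity gives Equation~\ref{eqn:aps_calibration_decomp} with $R^{\scriptscriptstyle (i)}$ as stated.

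\textbf{Bound on $R^{\scriptscriptstyle (2)}$.} Cauchy--Schwarz on the covariance under the law of $X^{\scriptscriptstyle (2)}$ gives $|R^{\scriptscriptstyle (2)}| \leq \kappa^{\scriptscriptstyle (2)} v^{\scriptscriptstyle (2)}$ directly.

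\textbf{Bound on $R^{\scriptscriptstyle (1)}$.} This is the only step needing care. We want the sharper factor $\bar\kappa^{\scriptscriptstyle (1)}$, defined at the coarser $X^{\scriptscriptstyle (1)}$ level. The key observation is that $1/\hat\Lambda_\alpha^{\scriptscriptstyle (1)} = 1/\xi^{\scriptscriptstyle (1)}(X^{\scriptscriptstyle (1)})$ is $\sigma(X^{\scriptscriptstyle (1)})$-measurable, and $X^{\scriptscriptstyle (1)} = Z$ is a coordinate of $X^{\scriptscriptstyle (2)}$.

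Condition on $X^{\scriptscriptstyle (1)}$. By the tower property,
\[
\mathbb{E}\bigl[\varepsilon^{\scriptscriptstyle \Delta}/\hat\Lambda_\alpha^{\scriptscriptstyle (1)}\bigr] = \mathbb{E}\bigl[\mathbb{E}[\varepsilon^{\scriptscriptstyle \Delta} \mid X^{\scriptscriptstyle (1)}]/\hat\Lambda_\alpha^{\scriptscriptstyle (1)}\bigr].
\]
Again by iterated conditioning over $(Y,U)$, $\mathbb{E}[\varepsilon^{\scriptscriptstyle (j)} \mid X^{\scriptscriptstyle (1)}] = \bar\varepsilon^{\scriptscriptstyle (j)}_Z$. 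It follows that
\[
R^{\scriptscriptstyle (1)} = \mathrm{Cov}_{X^{\scriptscriptstyle (1)}}\bigl(\bar\varepsilon^{\scriptscriptstyle (2)}_Z - \bar\varepsilon^{\scriptscriptstyle (1)}_Z,\; 1/\hat\Lambda_\alpha^{\scriptscriptstyle (1)}\bigr).
\]
This holds because both the joint expectation and the product of means are unchanged when $\varepsilon^{\scriptscriptstyle \Delta}$ is replaced by its $X^{\scriptscriptstyle (1)}$-conditional expectation. Cauchy--Schwarz under the law of $X^{\scriptscriptstyle (1)}$ then yields $|R^{\scriptscriptstyle (1)}| \leq \bar\kappa^{\scriptscriptstyle (1)} \bar v^{\scriptscriptstyle (1)}$.

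The one technical point to verify is the measurability of the conditional expectation $\mathbb{E}[\varepsilon^{\scriptscriptstyle (j)} \mid X^{\scriptscriptstyle (1)}]$ and its identification with $\bar\varepsilon^{\scriptscriptstyle (j)}_Z$. This follows because $U$ is independent and the joint law of $(X,Z,Y)$ is fixed.
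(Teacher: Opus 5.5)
Your proposal is correct and follows the paper's proof essentially step for step: the identity $\mathbb{E}[AB]=\mathbb{E}[A]\,\mathbb{E}[B]+\mathrm{Cov}(A,B)$ with the tower property for the means, the pointwise bound from Lemma~\ref{lemma:aps_threshold_lower_bound} for finiteness, and Cauchy--Schwarz for $R^{(2)}$. For $R^{(1)}$, the paper rewrites the covariance as a covariance of $X^{(1)}$-conditional expectations by citing the law of total covariance; you derive the same identity directly from the tower property, using that $1/\hat\Lambda_\alpha^{(1)}$ is $\sigma(X^{(1)})$-measurable, and then apply Cauchy--Schwarz, which is equivalent.
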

\begin{proof}
    By the law of iterated expectation, $\bar\varepsilon^{\scriptscriptstyle (i)} = \mathbb{E}_{X^{\scriptscriptstyle (2)}}[\varepsilon^{\scriptscriptstyle (i)}]$. Thus, Equation~\ref{eqn:aps_calibration_decomp} follows from the identity $\mathbb{E}[AB] = \mathbb{E}[A]\,\mathbb{E}[B] + \mathrm{Cov}(A, B)$ with $A = \varepsilon^{\scriptscriptstyle \Delta}$ and $B = 1/\hat\Lambda_\alpha^{\scriptscriptstyle (i)}$, together with Lemma~\ref{lemma:aps_threshold_lower_bound}.
    
    By the law of total covariance,
    \begin{equation}
        \mathrm{Cov}_{X^{\scriptscriptstyle (2)}} \!\left( \varepsilon^{\scriptscriptstyle \Delta},\, \frac{1}{\hat\Lambda_\alpha^{\scriptscriptstyle (1)}} \right) = \mathrm{Cov}_{Z}\!\left(\mathbb{E}_X[\varepsilon^{\scriptscriptstyle \Delta} \mid Z], \frac{1}{\hat\Lambda_\alpha^{\scriptscriptstyle (1)}}\right).
    \end{equation}
    By the law of iterated expectation, $\mathbb{E}_X[\varepsilon^{\scriptscriptstyle (i)} \mid Z] = \bar\varepsilon^{\scriptscriptstyle (i)}_Z$. The bounds on $R^{\scriptscriptstyle (i)}$ now follow directly from the Cauchy--Schwarz inequality for covariance, $|\mathrm{Cov}(A, B)| \leq \sqrt{\mathrm{Var}(A)\,\mathrm{Var}(B)}$, together with Lemma~\ref{lemma:aps_threshold_lower_bound} to ensure that $\mathrm{Var}(1 / \hat\Lambda_\alpha^{\scriptscriptstyle (i)})$ is finite.
\end{proof}
The $\kappa$ factors quantify the spread of the coverage gap across inputs, while the $v$ factors measure the spread of the induced APS threshold. These typically will not shrink with $n$. However, under an oracle model, $R^{\scriptscriptstyle (1)}$ vanishes since the randomized APS score achieves exact conditional coverage. We formalize this next.
\begin{lemma}[Oracle conditional coverage] \label{lemma:aps_conditional_coverage}
    Assume the oracle model $q(\cdot \mid X^{\scriptscriptstyle (i)}) = p(\cdot \mid X^{\scriptscriptstyle (i)})$ for $i \in \{1, 2\}$. Then, for each $i \in \{1, 2\}$ and every realization of $\mathcal{D}_{\mathrm{cal}}$,
    \begin{equation} \label{eqn:aps_oracle_conditional_coverage}
        \mathbb{P}\bigl(Y \notin \mathcal{C}_\alpha(X^{\scriptscriptstyle (i)}) \bigm| X^{\scriptscriptstyle (i)} ; \mathcal{D}_{\mathrm{cal}}\bigr) = 1 - \hat\tau_\alpha^{\scriptscriptstyle (i)} \qquad \text{almost surely,}
    \end{equation}
    where the conditional probability is over a fresh label $Y \sim p(\cdot \mid X^{\scriptscriptstyle (i)})$ and $U \sim \mathrm{Unif}[0,1]$.
\end{lemma}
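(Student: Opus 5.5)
The plan is to fix $x$ and $\mathcal{D}_{\mathrm{cal}}$, suppress the superscript $i$, and compute the conditional miscoverage of the randomized APS set directly from the structure of the score. Since the calibration data is fixed, $\hat\tau_\alpha = \hat\tau_\alpha^{\scriptscriptstyle (i)}$ is a constant. The only randomness is the fresh label $Y \sim p(\cdot \mid x)$ and the independent $U \sim \mathrm{Unif}[0,1]$. Under the oracle model, $q(\cdot\mid x)=p(\cdot\mid x)$, so the ordering $y^{\scriptscriptstyle(1)},\dots,y^{\scriptscriptstyle(K)}$ and the cumulative masses $P_j=\sum_{\ell\le j} p(y^{\scriptscriptstyle(\ell)}\mid x)$ come from the true conditional law. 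Ties are broken at random as in the paper, which changes nothing below. The key observation is that, conditionally on $X=x$, the pair $(Y,U)$ makes the APS score
\begin{equation*}
    s(x,Y)=P_{r_x(Y)-1}+U\,p(Y\mid x)
\end{equation*}
exactly uniform on $[0,1]$.

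\textbf{Step 1: the score is conditionally uniform.} For $t\in[0,1]$, I will write
\begin{equation*}
    \mathbb{P}(s(x,Y)\le t\mid X=x)=\sum_{j=1}^K p(y^{\scriptscriptstyle(j)}\mid x)\,\mathbb{P}\bigl(P_{j-1}+U\,p(y^{\scriptscriptstyle(j)}\mid x)\le t\bigr).
\end{equation*}
Each summand equals $p(y^{\scriptscriptstyle(j)}\mid x)$ when $t\ge P_j$, equals $0$ when $t\le P_{j-1}$, and equals $t-P_{j-1}$ when $t\in(P_{j-1},P_j)$. Labels with zero probability contribute nothing. Summing over $j$ telescopes to exactly $t$. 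Hence the conditional law of the score given $X=x$ is $\mathrm{Unif}[0,1]$ for every $x$.

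\textbf{Step 2: coverage equals $\hat\tau_\alpha$.} The label $Y$ lies in $\mathcal{C}_\alpha(x)$ if and only if $s(x,Y)\le\hat\tau_\alpha$. By Step 1, the conditional coverage is therefore $\min\{\max\{\hat\tau_\alpha,0\},1\}$. To conclude that the miscoverage is $1-\hat\tau_\alpha$, I need $\hat\tau_\alpha\in[0,1]$. This holds because $\hat\tau_\alpha$ is an empirical quantile of calibration scores, each of which lies in $[0,1]$, assuming the quantile index $k\le n$ so that the threshold is finite.

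\textbf{Step 3: measure-theoretic bookkeeping.} The identity in Step 2 holds pointwise in $x$. The ``almost surely'' qualifier only accounts for choosing a version of the regular conditional probability given $X^{\scriptscriptstyle(i)}$. The identity holds for every realization of $\mathcal{D}_{\mathrm{cal}}$ because nothing in Steps 1 and 2 used the distribution of the calibration set.

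I expect the main obstacle to be the boundary and tie conventions, rather than the telescoping computation itself. Two issues need care: that the set $\{y: s(x,y)\le\hat\tau_\alpha\}$ uses the same $U$ across labels, as in the paper's randomized APS set, and that random tie-breaking still yields a uniform score. I will handle ties by noting that any fixed tie-breaking order gives the same telescoping argument, and then averaging over the tie-breaking randomization.
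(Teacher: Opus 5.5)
Your proposal is correct and follows the same route as the paper: under the oracle model, the randomized APS score at the true label is a mixture of uniforms on the consecutive intervals $(P_{j-1},P_j]$ with weights $P_j-P_{j-1}$. It is therefore $\mathrm{Unif}(0,1]$ given $X=x$, and the miscoverage is $\mathbb{P}(s(x,Y)>\hat\tau_\alpha)=1-\hat\tau_\alpha$. Your explicit CDF telescoping is the same argument written out, and your requirement that $\hat\tau_\alpha\in[0,1]$ (a finite quantile, i.e., $k\le n$) states a condition the paper leaves implicit.
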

\begin{proof}
    Fix $x$ and order the classes so that $p(y^{\scriptscriptstyle (1)} \mid x) > \cdots > p(y^{\scriptscriptstyle (K)} \mid x)$. Write 
    \begin{equation}
        P_j = \sum_{\ell \leq j} p(y^{\scriptscriptstyle (\ell)} \mid x),
    \end{equation}
    with $P_0 = 0$. If $Y = y^{\scriptscriptstyle (j)}$, the randomized APS score is $s(x, Y) = P_{j-1} + U\, p(y^{\scriptscriptstyle (j)} \mid x)$, which is uniform on $(P_{j-1}, P_j]$. Since $\mathbb{P}(Y = y^{\scriptscriptstyle (j)} \mid X = x) = P_j - P_{j-1}$, mixing over all $j$ gives $s(x, Y) \mid X = x \sim \mathrm{Unif}(0, 1]$. Hence,
    \begin{equation}
        \mathbb{P}\bigl(Y \notin \mathcal{C}_\alpha(x) \mid X = x\bigr) = \mathbb{P}\bigl(s(x, Y) > \hat\tau_\alpha^{\scriptscriptstyle (i)} \mid X = x\bigr) = 1 - \hat\tau_\alpha^{\scriptscriptstyle (i)}.
    \end{equation}
    Thus, Equation~\ref{eqn:aps_oracle_conditional_coverage} holds almost surely. 
\end{proof}
Using Lemma~\ref{lemma:aps_conditional_coverage} with $X^{\scriptscriptstyle (1)}$, we get $\bar\varepsilon^{\scriptscriptstyle (1)}_Z = 1 - \hat\tau_\alpha^{\scriptscriptstyle (1)}$. With $X^{\scriptscriptstyle (2)}$ and the law of total probability,
\begin{equation}
    \bar\varepsilon^{\scriptscriptstyle (2)}_Z = \mathbb{E}\bigl[\varepsilon^{\scriptscriptstyle (2)}\mid X^{\scriptscriptstyle (1)}\bigr] = \mathbb{E}\bigl[1-\hat\tau^{\scriptscriptstyle (2)}_\alpha \mid X^{\scriptscriptstyle (1)}\bigr] = 1-\hat\tau^{\scriptscriptstyle (2)}_\alpha.
\end{equation}
Given a fixed calibration dataset, $\bar\varepsilon^{\scriptscriptstyle (2)}_Z - \bar\varepsilon^{\scriptscriptstyle (1)}_Z = \hat\tau_\alpha^{\scriptscriptstyle (1)} - \hat\tau_\alpha^{\scriptscriptstyle (2)}$ has zero variance and $\bar\kappa^{\scriptscriptstyle (1)} = 0$. Thus, $R^{\scriptscriptstyle (1)} = 0$ under an oracle model. The same argument does not apply to $R^{\scriptscriptstyle (2)}$, since $\varepsilon^{\scriptscriptstyle (1)}$ is conditioned on $X^{\scriptscriptstyle (2)}$, whereas the corresponding predictor only observes $X^{\scriptscriptstyle (1)}$.

\subsubsection{Corollary~\ref{cor:expectation_sandwich_aps}: APS score sandwich bound in expectation}

\begin{corollary}[APS score sandwich bound in expectation] \label{cor:expectation_sandwich_aps}
    Fix a nominal coverage level $1 - \alpha$. Assume that the calibration scores are almost surely distinct and that the calibration data and future test samples are i.i.d. For $i \in \{1, 2\}$, define $I^{\scriptscriptstyle (i)}$ and $D^{\scriptscriptstyle (i)}$ as in Theorem~\ref{thm:sandwich_aps} and $\kappa^{\scriptscriptstyle (2)}$, $\bar\kappa^{\scriptscriptstyle (1)}$, $v^{\scriptscriptstyle (2)}$, $\bar v^{\scriptscriptstyle (1)}$, and $M^{\scriptscriptstyle (i)}$ as in Lemma~\ref{lemma:aps_calibration_decomp}. Furthermore, suppose that $\smash{\mathbb{E}[(1 - \hat\tau_\alpha^{\scriptscriptstyle (i)})^{-2}] < \infty}$ and let $\smash{m^{\scriptscriptstyle (i)} = \mathrm{Var}(M^{\scriptscriptstyle (i)})^{1/2}}$. Then,
    \begin{equation} \label{eqn:expectation_sandwich_aps}
        \mathbb{E}[I^{\scriptscriptstyle (2)} - D^{\scriptscriptstyle (2)} - \kappa^{\scriptscriptstyle (2)}\, v^{\scriptscriptstyle (2)}] - \frac{m^{\scriptscriptstyle (2)}}{\sqrt{n}} \;\leq\; \mathbb{E}[\Delta_\alpha] \;\leq\; \mathbb{E}[I^{\scriptscriptstyle (1)} + D^{\scriptscriptstyle (1)} + \bar\kappa^{\scriptscriptstyle (1)}\, \bar v^{\scriptscriptstyle (1)}] + \frac{m^{\scriptscriptstyle (1)}}{\sqrt{n}},
    \end{equation}
    where all expectations are over the calibration dataset $\mathcal{D}_{\mathrm{cal}}$, and both bounds are finite.
\end{corollary}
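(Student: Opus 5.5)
The plan is to take expectations over $\mathcal{D}_{\mathrm{cal}}$ in Theorem~\ref{thm:sandwich_aps} and control the calibration terms through the decomposition in Lemma~\ref{lemma:aps_calibration_decomp}. For each fixed calibration dataset with $\hat\tau_\alpha^{\scriptscriptstyle (i)} < 1$, Theorem~\ref{thm:sandwich_aps} gives
\begin{equation*}
    I^{\scriptscriptstyle (2)} - D^{\scriptscriptstyle (2)} + \mathbb{E}_{X^{\scriptscriptstyle (2)}}\bigl[\varepsilon^{\scriptscriptstyle \Delta}/\hat\Lambda_\alpha^{\scriptscriptstyle (2)}\bigr] \leq \Delta_\alpha \leq I^{\scriptscriptstyle (1)} + D^{\scriptscriptstyle (1)} + \mathbb{E}_{X^{\scriptscriptstyle (2)}}\bigl[\varepsilon^{\scriptscriptstyle \Delta}/\hat\Lambda_\alpha^{\scriptscriptstyle (1)}\bigr].
\end{equation*}
The finiteness assumption $\mathbb{E}[(1-\hat\tau_\alpha^{\scriptscriptstyle (i)})^{-2}]<\infty$ forces $\hat\tau_\alpha^{\scriptscriptstyle (i)}<1$ almost surely, so this holds a.s. By Lemma~\ref{lemma:aps_calibration_decomp}, each calibration term equals $\bar\varepsilon^{\scriptscriptstyle \Delta} M^{\scriptscriptstyle (i)} + R^{\scriptscriptstyle (i)}$. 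We also have $|R^{\scriptscriptstyle (2)}| \leq \kappa^{\scriptscriptstyle (2)} v^{\scriptscriptstyle (2)}$ and $|R^{\scriptscriptstyle (1)}| \leq \bar\kappa^{\scriptscriptstyle (1)}\bar v^{\scriptscriptstyle (1)}$. Substituting these gives pointwise bounds in which only $\bar\varepsilon^{\scriptscriptstyle \Delta} M^{\scriptscriptstyle (i)}$ remains random in a way that must be averaged.

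Next we handle $\mathbb{E}[\bar\varepsilon^{\scriptscriptstyle \Delta} M^{\scriptscriptstyle (i)}]$ over $\mathcal{D}_{\mathrm{cal}}$ exactly as in Lemma~\ref{lemma:expectation_bound}. The key observation is that $\bar\varepsilon^{\scriptscriptstyle (i)}$ is the ordinary training-conditional miscoverage of a split conformal procedure; its score includes the randomization $U$, which makes the scores distinct. Lemma~\ref{lemma:coverage_beta} therefore applies, so $1-\bar\varepsilon^{\scriptscriptstyle (i)} \sim \mathrm{Beta}(k, n+1-k)$ for both $i$. Two consequences follow:
\begin{itemize}
    \item $\mathbb{E}[\bar\varepsilon^{\scriptscriptstyle \Delta}]=0$;
    \item $\mathrm{Var}(\bar\varepsilon^{\scriptscriptstyle \Delta}) \le 4\sigma^2 \le 1/n$, by the Cauchy--Schwarz computation in Equation~\ref{eqn:eps_delta_variance}.
\end{itemize}
Then $\mathbb{E}[\bar\varepsilon^{\scriptscriptstyle \Delta} M^{\scriptscriptstyle (i)}] = \mathrm{Cov}(\bar\varepsilon^{\scriptscriptstyle \Delta}, M^{\scriptscriptstyle (i)})$, whose absolute value is at most $m^{\scriptscriptstyle (i)}/\sqrt{n}$ by Cauchy--Schwarz.

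Two integrability points need checking. The first is that $m^{\scriptscriptstyle (i)}$ is finite. Lemma~\ref{lemma:aps_threshold_lower_bound} gives $0 < M^{\scriptscriptstyle (i)} \leq K/(1-\hat\tau_\alpha^{\scriptscriptstyle (i)})$, so $\mathbb{E}[(M^{\scriptscriptstyle (i)})^2] \leq K^2\,\mathbb{E}[(1-\hat\tau_\alpha^{\scriptscriptstyle (i)})^{-2}] < \infty$. The second is that every expectation over $\mathcal{D}_{\mathrm{cal}}$ in the final display is finite, which is the claimed finiteness of both bounds. Using Lemma~\ref{lemma:aps_threshold_lower_bound} again, $D^{\scriptscriptstyle (i)} \le K + K/(1-\hat\tau_\alpha^{\scriptscriptstyle (i)})$, which is integrable. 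The conformal entropies, and hence $I^{\scriptscriptstyle (i)}$, are bounded by $K + \mathbb{E}[1/\hat\Lambda]$, which is again controlled by $K/(1-\hat\tau)$. The product $v^{\scriptscriptstyle (i)}$ is at most the $L^2$ norm of $1/\hat\Lambda$, which is at most $K/(1-\hat\tau)$. The $\kappa$ factors are at most $1$, since they are standard deviations of differences of probabilities in $[0,1]$. All of these are integrable under the stated moment assumption.

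Finally, we take expectations of the pointwise bounds, split the expectation linearly, and insert $\pm m^{\scriptscriptstyle (i)}/\sqrt{n}$; this yields Equation~\ref{eqn:expectation_sandwich_aps}. I expect the main obstacle to be justifying that Lemma~\ref{lemma:coverage_beta} applies to $\bar\varepsilon^{\scriptscriptstyle (i)}$ with randomized APS scores. The scores must be i.i.d. including $U$, and a.s. distinct so that the CDF $G$ is continuous. I would handle this by treating $(X,U,Y)$ as the data point and noting that the uniform randomization gives the score a continuous conditional distribution whenever $q(y\mid x)>0$. The remaining steps are routine bookkeeping of signs and measurability.
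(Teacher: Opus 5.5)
Your proposal is correct and follows the same route as the paper. You take expectations in Theorem~\ref{thm:sandwich_aps} and split each calibration term via Lemma~\ref{lemma:aps_calibration_decomp}. You then bound $\mathbb{E}[\bar\varepsilon^{\scriptscriptstyle \Delta} M^{\scriptscriptstyle (i)}]$ as a covariance using Lemma~\ref{lemma:coverage_beta} and Cauchy--Schwarz, control $R^{\scriptscriptstyle (i)}$ by the $\kappa v$ products, and check finiteness through Lemma~\ref{lemma:aps_threshold_lower_bound}. The obstacle you anticipate about randomized APS scores does not arise, because the corollary already assumes the calibration scores are almost surely distinct, and that is all Lemma~\ref{lemma:coverage_beta} needs.
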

\begin{proof}
    By Lemma~\ref{lemma:aps_calibration_decomp},
    \begin{equation} \label{eqn:aps_expectation_decomp}
        \mathbb{E}_{X^{\scriptscriptstyle (2)}}\!\left[\frac{\varepsilon^{\scriptscriptstyle \Delta}}{\hat\Lambda_\alpha^{\scriptscriptstyle (i)}}\right] = \bar\varepsilon^{\scriptscriptstyle \Delta}\, M^{\scriptscriptstyle (i)} + R^{\scriptscriptstyle (i)}.
    \end{equation}
    By Lemma~\ref{lemma:coverage_beta}, we have $1 - \bar\varepsilon^{\scriptscriptstyle (i)} \sim \mathrm{Beta}(k, n + 1 - k)$, where $k = \lceil (1 - \alpha)(n + 1) \rceil$. Hence, $\mathbb{E}[\bar\varepsilon^{\scriptscriptstyle \Delta}] = 0$ and, following the proof of Lemma~\ref{lemma:expectation_bound}, we have $\mathrm{Var}(\bar\varepsilon^{\scriptscriptstyle \Delta}) \leq 1/n$. Lemma~\ref{lemma:aps_threshold_lower_bound} also gives that $M^{\scriptscriptstyle (i)} \leq K / (1 - \hat\tau_\alpha^{\scriptscriptstyle (i)})$, so $m^{\scriptscriptstyle (i)}$ is finite. By the Cauchy--Schwarz inequality,
    \begin{equation} \label{eqn:aps_sandwich_eps_delta_bound}
        \bigl|\mathbb{E}[\bar\varepsilon^{\scriptscriptstyle \Delta}\, M^{\scriptscriptstyle (i)}]\bigr| = \bigl|\mathrm{Cov}(\bar\varepsilon^{\scriptscriptstyle \Delta}, M^{\scriptscriptstyle (i)})\bigr| \leq \frac{m^{\scriptscriptstyle (i)}}{\sqrt{n}}.
    \end{equation}
    We also have $\mathbb{E}[R^{\scriptscriptstyle (2)}] \geq -\mathbb{E}[\kappa^{\scriptscriptstyle (2)}\, v^{\scriptscriptstyle (2)}]$ and $\mathbb{E}[R^{\scriptscriptstyle (1)}] \leq \mathbb{E}[\bar\kappa^{\scriptscriptstyle (1)}\, \bar v^{\scriptscriptstyle (1)}]$ by Lemma~\ref{lemma:aps_calibration_decomp}. Taking expectations over $\mathcal{D}_{\mathrm{cal}}$ in Theorem~\ref{thm:sandwich_aps} and then substituting Equation~\ref{eqn:aps_expectation_decomp} together with the bounds on $\mathbb{E}[R^{\scriptscriptstyle (i)}]$ gives Equation~\ref{eqn:expectation_sandwich_aps}.

    It remains to show that all other terms are finite. Since $\smash{\mathbb{E}[(1 - \hat\tau_\alpha^{\scriptscriptstyle (i)})^{-2}] < \infty}$, we have $\hat\tau_\alpha^{\scriptscriptstyle (i)} < 1$ almost surely, so Theorem~\ref{thm:sandwich_aps} applies. Lemma~\ref{lemma:aps_threshold_lower_bound} then gives 
    \begin{equation*}
        \frac{1}{\hat\Lambda_\alpha^{\scriptscriptstyle (i)}} \leq \frac{K}{1 - \hat\tau_\alpha^{\scriptscriptstyle (i)}}, \qquad D^{\scriptscriptstyle (i)} \leq \frac{2K}{1 - \hat\tau_\alpha^{\scriptscriptstyle (i)}},
    \end{equation*}
    where the right-hand sides have finite expectation since $\mathbb E[(1 - \hat\tau_\alpha^{\scriptscriptstyle (i)})^{-2}] < \infty$. Since $v^{\scriptscriptstyle (2)}$ and $\bar v^{\scriptscriptstyle (1)}$ are standard deviations of $\smash{1 / \hat\Lambda_\alpha^{\scriptscriptstyle (i)}}$ over the test input, they are also bounded by $K / (1 - \hat\tau_\alpha^{\scriptscriptstyle (i)})$. Moreover, \mbox{$\kappa^{\scriptscriptstyle (2)}, \bar\kappa^{\scriptscriptstyle (1)} \leq 1$}. Thus, both bounds are finite.
\end{proof}

\subsubsection{Corollary~\ref{cor:hp_sandwich_aps}: APS score sandwich bound with high probability}

\begin{corollary}[APS score sandwich bound with high probability] \label{cor:hp_sandwich_aps}
    Fix a nominal coverage level $1 - \alpha$ and $\delta \in (0, 1)$. Assume that the calibration scores are almost surely distinct and that the calibration data and future test samples are i.i.d. Suppose that there exists a constant $\eta_{\delta/2} < 1$ such that
    \begin{equation}
        \mathbb{P} \bigl( \max \bigl\{ \hat\tau_\alpha^{\scriptscriptstyle (1)}, \, \hat\tau_\alpha^{\scriptscriptstyle (2)} \bigr\} \leq \eta_{\delta/2} \bigr) \geq 1 - \tfrac{\delta}{2}.
    \end{equation}
    Let $c_{\delta/2} = \frac{K}{1 - \eta_{\delta/2}}$. Then, with probability at least $1 - \delta$ over the draw of $\mathcal{D}_{\mathrm{cal}}$,
    {\small \begin{equation}
        I^{\scriptscriptstyle (2)} - D^{\scriptscriptstyle (2)} - \sqrt{\frac{2 c_{\delta/2}^2 \log(8/\delta)}{n}} - \kappa^{\scriptscriptstyle (2)} \, v^{\scriptscriptstyle (2)} \;\leq\; \Delta_\alpha \;\leq\; I^{\scriptscriptstyle (1)} + D^{\scriptscriptstyle (1)} + \sqrt{\frac{2 c_{\delta/2}^2 \log(8/\delta)}{n}} + \bar\kappa^{\scriptscriptstyle (1)}\, \bar v^{\scriptscriptstyle (1)},
    \end{equation}}
    where $I^{\scriptscriptstyle (i)}$ and $D^{\scriptscriptstyle (i)}$ are defined as in Theorem~\ref{thm:sandwich_aps} and $\bar\kappa^{\scriptscriptstyle (1)}$, $\kappa^{\scriptscriptstyle (2)}$, $\bar v^{\scriptscriptstyle (1)}$, and $v^{\scriptscriptstyle (2)}$ are defined as in Lemma~\ref{lemma:aps_calibration_decomp}.
\end{corollary}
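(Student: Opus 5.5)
The plan is to start from Theorem~\ref{thm:sandwich_aps}, which holds for every fixed calibration dataset with $\hat\tau_\alpha^{\scriptscriptstyle (i)} < 1$, and to use Lemma~\ref{lemma:aps_calibration_decomp} to split each calibration term as
\begin{equation*}
\mathbb{E}_{X^{\scriptscriptstyle (2)}}\bigl[\varepsilon^{\scriptscriptstyle \Delta}/\hat\Lambda_\alpha^{\scriptscriptstyle (i)}\bigr] = \bar\varepsilon^{\scriptscriptstyle \Delta} M^{\scriptscriptstyle (i)} + R^{\scriptscriptstyle (i)}.
\end{equation*}
The covariance remainders are handled deterministically by the same lemma. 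For the lower bound we use $R^{\scriptscriptstyle (2)} \geq -\kappa^{\scriptscriptstyle (2)} v^{\scriptscriptstyle (2)}$, and for the upper bound $R^{\scriptscriptstyle (1)} \leq \bar\kappa^{\scriptscriptstyle (1)} \bar v^{\scriptscriptstyle (1)}$. Only the product $\bar\varepsilon^{\scriptscriptstyle \Delta} M^{\scriptscriptstyle (i)}$ then needs a high-probability bound over $\mathcal{D}_{\mathrm{cal}}$.

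\textbf{Step 1: the miscoverage difference.} By Lemma~\ref{lemma:coverage_beta}, each training-conditional miscoverage $\bar\varepsilon^{\scriptscriptstyle (i)}$ is $\mathrm{Beta}(n+1-k,k)$ with a common mean $\mu$. Repeat the sub-Gaussian argument from the proof of Lemma~\ref{lemma:hp_calibration_term}: apply the tail bound at level $\delta/4$ to each $i$ and take a union bound. This gives an event $E_1$ of probability at least $1-\delta/2$ on which
\begin{equation*}
|\bar\varepsilon^{\scriptscriptstyle \Delta}| \leq \sqrt{2\log(8/\delta)/n}.
\end{equation*}

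\textbf{Step 2: the reciprocal thresholds.} Let $E_2$ be the assumed event $\max_i \hat\tau_\alpha^{\scriptscriptstyle (i)} \leq \eta_{\delta/2}$, which has probability at least $1-\delta/2$. On $E_2$, both $\hat\tau_\alpha^{\scriptscriptstyle (i)}<1$, so Theorem~\ref{thm:sandwich_aps} applies. Lemma~\ref{lemma:aps_threshold_lower_bound} then gives $1/\hat\Lambda_\alpha^{\scriptscriptstyle (i)} \leq K/(1-\hat\tau_\alpha^{\scriptscriptstyle (i)}) \leq c_{\delta/2}$ pointwise in $x$, so $0< M^{\scriptscriptstyle (i)} \leq c_{\delta/2}$.

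\textbf{Step 3: combining.} On $E_1\cap E_2$, which has probability at least $1-\delta$, we get
\begin{equation*}
|\bar\varepsilon^{\scriptscriptstyle \Delta} M^{\scriptscriptstyle (i)}| \leq \sqrt{2c_{\delta/2}^2\log(8/\delta)/n}.
\end{equation*}
Substituting this, together with the $R^{\scriptscriptstyle (i)}$ bounds, into the two sides of Equation~\ref{eqn:sandwich_aps} yields the stated inequalities.

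The step needing care is the measurability and ordering of events. The bound $M^{\scriptscriptstyle (i)}\leq c_{\delta/2}$ must be uniform over $x$ and hold only on $E_2$. The $R^{\scriptscriptstyle (i)}$ bounds are deterministic, given the finiteness of $v$, which Lemma~\ref{lemma:aps_threshold_lower_bound} guarantees on $E_2$, so they require no further probability budget. Beyond that, the argument is routine bookkeeping mirroring Corollary~\ref{cor:hp_sandwich_prob_score}.
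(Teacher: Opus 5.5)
Your proposal is correct and follows the paper's proof essentially step for step. You decompose each calibration term via Lemma~\ref{lemma:aps_calibration_decomp}, bound the covariance remainders $R^{\scriptscriptstyle (i)}$ deterministically, control $|\bar\varepsilon^{\scriptscriptstyle \Delta}|$ on an event of probability $1-\delta/2$ by the Beta sub-Gaussian argument of Lemma~\ref{lemma:hp_calibration_term}, bound $M^{\scriptscriptstyle (i)} \leq c_{\delta/2}$ on the assumed threshold event via Lemma~\ref{lemma:aps_threshold_lower_bound}, and union-bound before substituting into Theorem~\ref{thm:sandwich_aps}, which applies on that event since both $\hat\tau_\alpha^{\scriptscriptstyle (i)} < 1$.
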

\begin{proof}
    By Lemma~\ref{lemma:aps_calibration_decomp}, if $\smash{\hat\tau_\alpha^{\scriptscriptstyle (i)} < 1}$, then
    \begin{equation}
        \mathbb{E}_{X^{\scriptscriptstyle (2)}}\!\left[\frac{\varepsilon^{\scriptscriptstyle \Delta}}{\hat\Lambda_\alpha^{\scriptscriptstyle (i)}}\right] = \bar\varepsilon^{\scriptscriptstyle \Delta}\, M^{\scriptscriptstyle (i)} + R^{\scriptscriptstyle (i)}.
    \end{equation}
    By Lemma~\ref{lemma:coverage_beta}, $\bar\varepsilon^{\scriptscriptstyle (i)} \sim \mathrm{Beta}(n + 1 - k, k)$. Following the proof of Lemma~\ref{lemma:hp_calibration_term}, each $\bar\varepsilon^{\scriptscriptstyle (i)}$ is sub-Gaussian with proxy variance $1/(4(n+2))$, so with probability at least $1 - \frac{\delta}{2}$,
    \begin{equation}
        |\bar\varepsilon^{\scriptscriptstyle \Delta}| \leq \sqrt{\frac{2 \log(8/\delta)}{n}}.
    \end{equation}
    By assumption, with probability at least $1 - \frac{\delta}{2}$ we have $\smash{\hat\tau_\alpha^{\scriptscriptstyle (i)} \leq \eta_{\delta/2}}$ for both $i$, so Lemma~\ref{lemma:aps_threshold_lower_bound} gives $M^{\scriptscriptstyle (i)} \leq c_{\delta/2}$. By a union bound, with probability at least $1 - \delta$,
    \begin{equation} \label{eqn:hp_aps_sandwich_eps_delta_bound}
        |\bar\varepsilon^{\scriptscriptstyle \Delta}\, M^{\scriptscriptstyle (i)}| \leq c_{\delta/2} \sqrt{\frac{2 \log(8/\delta)}{n}} = \sqrt{\frac{2 c_{\delta/2}^2 \log(8/\delta)}{n}}.
    \end{equation}
    Moreover, $R^{\scriptscriptstyle (2)} \geq -\kappa^{\scriptscriptstyle (2)} \, v^{\scriptscriptstyle (2)}$ and $R^{\scriptscriptstyle (1)} \leq \bar\kappa^{\scriptscriptstyle (1)}\, \bar v^{\scriptscriptstyle (1)}$ by Lemma~\ref{lemma:aps_calibration_decomp}. Since Theorem~\ref{thm:sandwich_aps} holds for every fixed calibration dataset with $\smash{\hat\tau_\alpha^{\scriptscriptstyle (i)} < 1}$, substituting these bounds completes the proof.
\end{proof}

\subsubsection{Theorem~\ref{thm:dpi_aps}: APS score dpi}

We extend the data processing inequality to the APS score. As in Section~\ref{subsec:dpi_prob_score}, suppose that for each data point we also observe a \emph{single realization} of a degraded feature $\smash{\tilde{X}}$ obtained from a (possibly noisy) channel, so that $\smash{Y \to (X, Z) \to (\tilde{X}, Z)}$ forms a Markov chain. Let $\smash{\tilde{X}^{\scriptscriptstyle (2)} = (\tilde{X}, Z)}$ and define the set-size reduction with the degraded features as
\begin{equation}
    \smash{\tilde{\Delta}_\alpha = \mathbb{E}[|\mathcal{C}_\alpha(X^{\scriptscriptstyle (1)})|] - \mathbb{E}[|\mathcal{C}_\alpha(\tilde{X}^{\scriptscriptstyle (2)})|]}.
\end{equation}
\begin{theorem}[APS score DPI] \label{thm:dpi_aps}
    Fix a calibration dataset $\mathcal{D}_{\mathrm{cal}}$ and nominal coverage level $1 - \alpha$. Define the conditional miscoverage $\smash{\tilde{\varepsilon}^{\scriptscriptstyle (2)} = \mathbb{P}(Y \notin \mathcal{C}_\alpha(\tilde{X}^{\scriptscriptstyle (2)}) \mid X^{\scriptscriptstyle (2)}; \mathcal{D}_{\mathrm{cal}})}$, where the probability is taken jointly over a noisy realization of $\smash{\tilde{X}}$ and fresh test label $Y$ given $X^{\scriptscriptstyle (2)}$ and the APS randomization $U$. Moreover, let $\smash{\hat{\Lambda}_\alpha^{\scriptscriptstyle (2)}}$, $D^{\scriptscriptstyle(2)}$, and $\varepsilon^{\scriptscriptstyle (2)}$ be defined as in Theorem~\ref{thm:sandwich_aps}, and assume that $\smash{\hat{\tau}_\alpha^{\scriptscriptstyle (2)} < 1}$. Then,
    \begin{equation}
        \tilde{\Delta}_\alpha \leq \Delta_\alpha + D^{\scriptscriptstyle(2)} + \mathbb{E}\bigl[(\tilde{\varepsilon}^{\scriptscriptstyle (2)} - \varepsilon^{\scriptscriptstyle (2)}) / \hat{\Lambda}_\alpha^{\scriptscriptstyle (2)}\bigr],
    \end{equation}
    where the expectation is over $X^{\scriptscriptstyle (2)}$, and the right-hand side is finite.
\end{theorem}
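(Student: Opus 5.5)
The plan follows the proof of Theorem~\ref{thm:dpi_prob_score}, with the constant threshold replaced by the random index $\hat\Lambda_\alpha^{\scriptscriptstyle (2)} = \xi^{\scriptscriptstyle (2)}(X^{\scriptscriptstyle (2)})$. The key point is that the \emph{same} random index must be used both for the original features and for the degraded features. Since $\Delta_\alpha$ and $\tilde\Delta_\alpha$ share the term $\mathbb{E}[|\mathcal{C}_\alpha(X^{\scriptscriptstyle (1)})|]$, it suffices to bound
\begin{equation*}
\tilde\Delta_\alpha - \Delta_\alpha = \mathbb{E}\bigl[|\mathcal{C}_\alpha(X^{\scriptscriptstyle (2)})|\bigr] - \mathbb{E}\bigl[|\mathcal{C}_\alpha(\tilde X^{\scriptscriptstyle (2)})|\bigr].
\end{equation*}

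\textbf{Step 1: identity for the original features.} Because $\hat\tau_\alpha^{\scriptscriptstyle (2)} < 1$, Lemma~\ref{lemma:aps_threshold_lower_bound} gives $\hat\Lambda_\alpha^{\scriptscriptstyle (2)} \in (0,1]$ and makes all terms finite. Proposition~\ref{prop:optimal_predictor_aps} then lets us take $\Gamma^*_{\hat\Lambda_\alpha^{\scriptscriptstyle (2)}}(X^{\scriptscriptstyle (2)},U;q) = \mathcal{C}_\alpha(X^{\scriptscriptstyle (2)})$, and Equation~\ref{eqn:delta_identity_aps_H2} reads
\begin{equation*}
\mathbb{E}\bigl[|\mathcal{C}_\alpha(X^{\scriptscriptstyle (2)})|\bigr] = H_{\hat\Lambda_\alpha^{\scriptscriptstyle (2)}}(Y \mid X^{\scriptscriptstyle (2)}) - \mathbb{E}\bigl[\varepsilon^{\scriptscriptstyle (2)}/\hat\Lambda_\alpha^{\scriptscriptstyle (2)}\bigr] + D^{\scriptscriptstyle (2)}.
\end{equation*}

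\textbf{Step 2: feasibility bound for the degraded features.} Here the index $\hat\Lambda_\alpha^{\scriptscriptstyle (2)}$ is a function of $X^{\scriptscriptstyle (2)}$, not of $\tilde X^{\scriptscriptstyle (2)}$. So I would not use $H_{\hat\Lambda}(Y \mid \tilde X^{\scriptscriptstyle (2)})$, whose definition requires the index to be measurable in the conditioning variable. Instead I would work with the augmented observation $W = (X^{\scriptscriptstyle (2)}, \tilde X^{\scriptscriptstyle (2)})$. The rule $\mathcal{C}_\alpha(\tilde X^{\scriptscriptstyle (2)})$, with its APS randomization and the channel noise absorbed into $U$ (or into $W$), is a feasible measurable rule of $(W,U)$. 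Hence
\begin{equation*}
H_{\hat\Lambda_\alpha^{\scriptscriptstyle (2)}}(Y \mid W) \le \mathbb{E}\bigl[|\mathcal{C}_\alpha(\tilde X^{\scriptscriptstyle (2)})|\bigr] + \mathbb{E}\bigl[\tilde\varepsilon^{\scriptscriptstyle (2)}/\hat\Lambda_\alpha^{\scriptscriptstyle (2)}\bigr].
\end{equation*}
To get this form, condition on $X^{\scriptscriptstyle (2)}$ and use the tower property: $1/\hat\Lambda_\alpha^{\scriptscriptstyle (2)}$ is $X^{\scriptscriptstyle (2)}$-measurable, and $\tilde\varepsilon^{\scriptscriptstyle (2)}$ is by definition the miscoverage conditional on $X^{\scriptscriptstyle (2)}$, averaged over $\tilde X$, $Y$, and $U$.

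\textbf{Step 3: DPI and combination.} Next I would show $H_{\hat\Lambda_\alpha^{\scriptscriptstyle (2)}}(Y \mid X^{\scriptscriptstyle (2)}) \le H_{\hat\Lambda_\alpha^{\scriptscriptstyle (2)}}(Y \mid W)$. The Markov chain gives $Y \perp \tilde X^{\scriptscriptstyle (2)} \mid X^{\scriptscriptstyle (2)}$, and the index depends only on $X^{\scriptscriptstyle (2)}$. So the proof of Lemma~\ref{lemma:dpi_generalized_entropy} goes through verbatim, with the index held fixed as a function of $X^{\scriptscriptstyle (2)}$: for any rule $g(W,U)$,
\begin{equation*}
\mathbb{E}\bigl[\ell_{\hat\Lambda}(Y, g(W,U)) \mid X^{\scriptscriptstyle (2)}, \tilde X^{\scriptscriptstyle (2)}\bigr] \ge \inf_{\Gamma \in \Omega} \mathbb{E}\bigl[\ell_{\hat\Lambda}(Y, \Gamma(U)) \mid X^{\scriptscriptstyle (2)}\bigr].
\end{equation*}
Subtracting the Step 2 bound from the Step 1 identity and applying this inequality yields
\begin{equation*}
\tilde\Delta_\alpha - \Delta_\alpha \le D^{\scriptscriptstyle (2)} + \mathbb{E}\bigl[(\tilde\varepsilon^{\scriptscriptstyle (2)} - \varepsilon^{\scriptscriptstyle (2)})/\hat\Lambda_\alpha^{\scriptscriptstyle (2)}\bigr].
\end{equation*}
For finiteness, $|\tilde\varepsilon^{\scriptscriptstyle (2)} - \varepsilon^{\scriptscriptstyle (2)}| \le 1$ and Lemma~\ref{lemma:aps_threshold_lower_bound} bound the expectation by $K/(1-\hat\tau_\alpha^{\scriptscriptstyle (2)})$, and the same lemma bounds $D^{\scriptscriptstyle (2)}$.

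\textbf{Main obstacle.} The delicate step is Step 3: justifying the DPI when the index is a random function of the finer variable rather than of the coarser one. The plan is to make the augmented variable $W$ explicit and rerun the conditioning argument of Lemma~\ref{lemma:dpi_generalized_entropy}. One must also check that the randomization in $\mathcal{C}_\alpha(\tilde X^{\scriptscriptstyle (2)})$, which comes from both the channel and the APS variable, can legitimately be absorbed into $\Omega$ so that the rule is feasible.
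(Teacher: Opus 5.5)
Your proof is correct and follows essentially the same route as the paper: you cancel the shared base-set term, use the Bayes-optimality identity for $\mathcal{C}_\alpha(X^{\scriptscriptstyle (2)})$, bound the risk of the feasible rule $\mathcal{C}_\alpha(\tilde X^{\scriptscriptstyle (2)})$, and rerun the conditioning argument of Lemma~\ref{lemma:dpi_generalized_entropy} with the index held fixed given $X^{\scriptscriptstyle (2)}$. The detour through $W = (X^{\scriptscriptstyle (2)}, \tilde X^{\scriptscriptstyle (2)})$ is harmless but unnecessary. The paper's definition of $H_\Lambda$ averages over $\Lambda$ inside the conditional expectation, so $H_{\hat\Lambda_\alpha^{\scriptscriptstyle (2)}}(Y \mid \tilde X^{\scriptscriptstyle (2)})$ is already well defined, and under the Markov chain your intermediate quantity $H_{\hat\Lambda_\alpha^{\scriptscriptstyle (2)}}(Y \mid W)$ simply equals $H_{\hat\Lambda_\alpha^{\scriptscriptstyle (2)}}(Y \mid X^{\scriptscriptstyle (2)})$.
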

\begin{proof}
    We convert the data processing inequality for generalized mutual information into a corresponding approximate result for conformal prediction sets for a fixed calibration dataset. Since $\Delta_\alpha$ and $\smash{\tilde{\Delta}_\alpha}$ share the term $\mathbb{E}[|\mathcal{C}_\alpha(X^{\scriptscriptstyle (1)})|]$,
    \begin{equation}
        \tilde{\Delta}_\alpha - \Delta_\alpha = \mathbb{E}\bigl[|\mathcal{C}_\alpha(X^{\scriptscriptstyle (2)})|\bigr] - \mathbb{E}\bigl[|\mathcal{C}_\alpha(\tilde{X}^{\scriptscriptstyle (2)})|\bigr].
    \end{equation}
    Because $\smash{\hat{\Lambda}_\alpha^{\scriptscriptstyle (2)} = \xi^{\scriptscriptstyle (2)}(X^{\scriptscriptstyle (2)})}$ is a function of $X^{\scriptscriptstyle (2)}$, using Proposition~\ref{prop:optimal_predictor_aps} gives
    \begin{equation} \label{eqn:dpi_aps_H2_identity}
        \mathbb{E}\bigl[|\mathcal{C}_\alpha(X^{\scriptscriptstyle (2)})|\bigr] = H_{\hat{\Lambda}_\alpha^{\scriptscriptstyle (2)}}(Y \mid X^{\scriptscriptstyle (2)}) - \mathbb{E}\!\left[\frac{\varepsilon^{\scriptscriptstyle (2)}}{\hat{\Lambda}_\alpha^{\scriptscriptstyle (2)}}\right] + D^{\scriptscriptstyle (2)},
    \end{equation}
    where each term is finite by Lemma~\ref{lemma:aps_threshold_lower_bound}. For the degraded features, $\mathcal{C}_\alpha(\tilde{X}^{\scriptscriptstyle (2)})$ is a feasible decision rule under the $\ell_{\hat{\Lambda}_\alpha^{\scriptscriptstyle (2)}}$ loss, so its risk is no smaller than $\smash{H_{\hat{\Lambda}_\alpha^{\scriptscriptstyle (2)}}(Y \mid \tilde{X}^{\scriptscriptstyle (2)})}$. Evaluating the risk and using the law of iterated expectation gives
    \begin{equation} \label{eqn:dpi_aps_H2tilde_bound}
        H_{\hat{\Lambda}_\alpha^{\scriptscriptstyle (2)}}(Y \mid \tilde{X}^{\scriptscriptstyle (2)}) \leq \mathbb{E}\bigl[|\mathcal{C}_\alpha(\tilde{X}^{\scriptscriptstyle (2)})|\bigr] + \mathbb{E}\!\left[\frac{\tilde{\varepsilon}^{\scriptscriptstyle (2)}}{\hat{\Lambda}_\alpha^{\scriptscriptstyle (2)}}\right].
    \end{equation}
    Combining Equations~\ref{eqn:dpi_aps_H2tilde_bound}~and~\ref{eqn:dpi_aps_H2_identity},
    \begin{equation} \label{eqn:dpi_aps_inequality}
        \tilde{\Delta}_\alpha - \Delta_\alpha \leq H_{\hat{\Lambda}_\alpha^{\scriptscriptstyle (2)}}(Y \mid X^{\scriptscriptstyle (2)}) - H_{\hat{\Lambda}_\alpha^{\scriptscriptstyle (2)}}(Y \mid \tilde{X}^{\scriptscriptstyle (2)}) + D^{\scriptscriptstyle (2)} + \mathbb{E}\!\left[\frac{\tilde{\varepsilon}^{\scriptscriptstyle (2)} - \varepsilon^{\scriptscriptstyle (2)}}{\hat{\Lambda}_\alpha^{\scriptscriptstyle (2)}}\right].
    \end{equation}
    Since $\smash{\hat{\Lambda}_\alpha^{\scriptscriptstyle (2)} = \xi^{\scriptscriptstyle (2)}(X^{\scriptscriptstyle (2)})}$ is a function of $X^{\scriptscriptstyle (2)}$, the loss index is fixed given $X^{\scriptscriptstyle (2)}$ and the proof of Lemma~\ref{lemma:dpi_generalized_entropy} applies to $\ell_{\hat{\Lambda}_\alpha^{\scriptscriptstyle (2)}}$, so $\smash{H_{\hat{\Lambda}_\alpha^{\scriptscriptstyle (2)}}(Y \mid X^{\scriptscriptstyle (2)}) \leq H_{\hat{\Lambda}_\alpha^{\scriptscriptstyle (2)}}(Y \mid \tilde{X}^{\scriptscriptstyle (2)})}$. Thus,
    \begin{equation}
        \tilde{\Delta}_\alpha - \Delta_\alpha \leq D^{\scriptscriptstyle (2)} + \mathbb{E}\!\left[\frac{\tilde{\varepsilon}^{\scriptscriptstyle (2)} - \varepsilon^{\scriptscriptstyle (2)}}{\hat{\Lambda}_\alpha^{\scriptscriptstyle (2)}}\right].
    \end{equation}
    Finiteness of each term on the right-hand side follows from Lemma~\ref{lemma:aps_threshold_lower_bound}.
\end{proof}

As with the sandwich bound in Theorem~\ref{thm:sandwich_aps}, we provide versions of the DPI either in expectation or with high probability over the calibration data.

\subsubsection{Corollary~\ref{cor:expectation_dpi_aps}: APS score dpi in expectation}

\begin{corollary}[APS score DPI in expectation] \label{cor:expectation_dpi_aps}
    Fix a nominal coverage level $1 - \alpha$. Assume that the calibration scores are almost surely distinct and that the calibration data and future test samples are i.i.d. Define $D^{\scriptscriptstyle (2)}$ as in Theorem~\ref{thm:sandwich_aps} and $v^{\scriptscriptstyle (2)}$ as in Lemma~\ref{lemma:aps_calibration_decomp}, and let $\tilde\kappa^{\scriptscriptstyle (2)} = \sqrt{\mathrm{Var}_{X^{\scriptscriptstyle (2)}}(\tilde\varepsilon^{\scriptscriptstyle (2)} - \varepsilon^{\scriptscriptstyle (2)})}$. Moreover, suppose $\smash{\hat\tau_\alpha^{\scriptscriptstyle (2)}}$ satisfies the conditions of Corollary~\ref{cor:expectation_sandwich_aps}, with $m^{\scriptscriptstyle (2)}$ defined accordingly. Then,
    \begin{equation}
        \mathbb{E}[\tilde{\Delta}_\alpha] \;\leq\; \mathbb{E}[\Delta_\alpha] + \mathbb{E}[D^{\scriptscriptstyle (2)}] + \frac{m^{\scriptscriptstyle (2)}}{\sqrt{n}} + \mathbb{E}\bigl[\tilde\kappa^{\scriptscriptstyle (2)}\, v^{\scriptscriptstyle (2)}\bigr],
    \end{equation}
    where all expectations are over the calibration dataset $\mathcal{D}_{\mathrm{cal}}$, and the right-hand side is finite.
\end{corollary}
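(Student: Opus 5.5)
The plan is to mirror the proof of Corollary~\ref{cor:expectation_sandwich_aps}, starting from the fixed-calibration bound of Theorem~\ref{thm:dpi_aps}:
\begin{equation*}
    \tilde{\Delta}_\alpha \leq \Delta_\alpha + D^{\scriptscriptstyle(2)} + \mathbb{E}_{X^{\scriptscriptstyle (2)}}\bigl[(\tilde{\varepsilon}^{\scriptscriptstyle (2)} - \varepsilon^{\scriptscriptstyle (2)}) / \hat{\Lambda}_\alpha^{\scriptscriptstyle (2)}\bigr].
\end{equation*}
This bound holds whenever $\hat\tau_\alpha^{\scriptscriptstyle (2)} < 1$. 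That holds almost surely because $\mathbb{E}[(1-\hat\tau_\alpha^{\scriptscriptstyle (2)})^{-2}] < \infty$. Set $\varepsilon^{\scriptscriptstyle \Delta} = \tilde\varepsilon^{\scriptscriptstyle (2)} - \varepsilon^{\scriptscriptstyle (2)}$.

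First, repeat the argument of Lemma~\ref{lemma:aps_calibration_decomp} with $\mathbb{E}[AB] = \mathbb{E}[A]\mathbb{E}[B] + \mathrm{Cov}(A,B)$ over $X^{\scriptscriptstyle (2)}$. This gives
\begin{equation*}
    \mathbb{E}_{X^{\scriptscriptstyle (2)}}\bigl[\varepsilon^{\scriptscriptstyle \Delta}/\hat\Lambda_\alpha^{\scriptscriptstyle (2)}\bigr] = \bar\varepsilon^{\scriptscriptstyle \Delta} M^{\scriptscriptstyle (2)} + \tilde R^{\scriptscriptstyle (2)}.
\end{equation*}
Here $\bar\varepsilon^{\scriptscriptstyle \Delta} = \bar{\tilde\varepsilon}^{\scriptscriptstyle (2)} - \bar\varepsilon^{\scriptscriptstyle (2)}$ is the difference of the training-conditional miscoverages, obtained from the law of iterated expectation. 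The remainder $\tilde R^{\scriptscriptstyle (2)} = \mathrm{Cov}_{X^{\scriptscriptstyle (2)}}(\varepsilon^{\scriptscriptstyle \Delta}, 1/\hat\Lambda_\alpha^{\scriptscriptstyle (2)})$ satisfies $|\tilde R^{\scriptscriptstyle (2)}| \leq \tilde\kappa^{\scriptscriptstyle (2)} v^{\scriptscriptstyle (2)}$ by Cauchy--Schwarz. Finiteness of $v^{\scriptscriptstyle (2)}$ and $M^{\scriptscriptstyle (2)}$ follows from Lemma~\ref{lemma:aps_threshold_lower_bound}, since $1/\hat\Lambda_\alpha^{\scriptscriptstyle (2)} \leq K/(1-\hat\tau_\alpha^{\scriptscriptstyle (2)})$.

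Second, control the concentrating term. Both $\bar{\tilde\varepsilon}^{\scriptscriptstyle (2)}$ and $\bar\varepsilon^{\scriptscriptstyle (2)}$ are training-conditional APS miscoverages at level $1-\alpha$ with i.i.d.\ data and almost surely distinct scores. By Lemma~\ref{lemma:coverage_beta}, each has the same $\mathrm{Beta}(n+1-k,k)$ law, regardless of their joint law. Hence $\mathbb{E}[\bar\varepsilon^{\scriptscriptstyle \Delta}] = 0$, and the argument of Lemma~\ref{lemma:expectation_bound} gives $\mathrm{Var}(\bar\varepsilon^{\scriptscriptstyle \Delta}) \leq 1/n$. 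Cauchy--Schwarz over $\mathcal{D}_{\mathrm{cal}}$ then yields
\begin{equation*}
    |\mathbb{E}[\bar\varepsilon^{\scriptscriptstyle \Delta} M^{\scriptscriptstyle (2)}]| = |\mathrm{Cov}(\bar\varepsilon^{\scriptscriptstyle \Delta}, M^{\scriptscriptstyle (2)})| \leq m^{\scriptscriptstyle (2)}/\sqrt{n},
\end{equation*}
where $m^{\scriptscriptstyle (2)} < \infty$ because $\mathbb{E}[(M^{\scriptscriptstyle (2)})^2] \leq K^2\,\mathbb{E}[(1-\hat\tau_\alpha^{\scriptscriptstyle (2)})^{-2}]$.

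The main obstacle is the subtle point in the second step: justifying that $\bar{\tilde\varepsilon}^{\scriptscriptstyle (2)}$ also has the Beta law. The degraded score uses a single noisy realization of $\tilde X$ per data point, so $(\tilde X_i, Z_i, Y_i)$ remain i.i.d.\ with a fresh, independent test copy, and Lemma~\ref{lemma:coverage_beta} applies. I would state this explicitly, along with the almost-sure distinctness of the degraded scores, which randomized APS provides.

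Finally, take expectations over $\mathcal{D}_{\mathrm{cal}}$ in Theorem~\ref{thm:dpi_aps}, substitute the decomposition, bound $\mathbb{E}[\tilde R^{\scriptscriptstyle (2)}] \leq \mathbb{E}[\tilde\kappa^{\scriptscriptstyle (2)} v^{\scriptscriptstyle (2)}]$, and use the $m^{\scriptscriptstyle (2)}/\sqrt{n}$ bound. Finiteness of the right-hand side follows as in Corollary~\ref{cor:expectation_sandwich_aps}: $D^{\scriptscriptstyle (2)} \leq 2K/(1-\hat\tau_\alpha^{\scriptscriptstyle (2)})$, $v^{\scriptscriptstyle (2)} \leq K/(1-\hat\tau_\alpha^{\scriptscriptstyle (2)})$ and $\tilde\kappa^{\scriptscriptstyle (2)} \leq 1$. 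Each of these has finite expectation under the moment assumption. Finiteness of $\mathbb{E}[\Delta_\alpha]$ and $\mathbb{E}[\tilde\Delta_\alpha]$ is immediate because set sizes are bounded by $K$.
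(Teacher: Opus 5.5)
Your proposal is correct and follows essentially the same route as the paper. You start from Theorem~\ref{thm:dpi_aps}, split the calibration term into $\bar\varepsilon^{\scriptscriptstyle \Delta} M^{\scriptscriptstyle (2)}$ plus a covariance bounded by $\tilde\kappa^{\scriptscriptstyle (2)} v^{\scriptscriptstyle (2)}$ as in Lemma~\ref{lemma:aps_calibration_decomp}, use the shared Beta marginal from Lemma~\ref{lemma:coverage_beta} with Cauchy--Schwarz to get $m^{\scriptscriptstyle (2)}/\sqrt{n}$, and get finiteness from Lemma~\ref{lemma:aps_threshold_lower_bound}, exactly as the paper does. Your explicit check that the single-realization degraded data $(\tilde X_i, Z_i, Y_i)$ stay i.i.d.\ is a point the paper only asserts. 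One small overstatement: randomized APS by itself gives almost-sure distinct scores only when $q(y \mid x) > 0$, but distinctness is already a hypothesis of the corollary, so nothing is lost.
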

\begin{proof}
    The proof follows the same argument as Corollary~\ref{cor:expectation_sandwich_aps}. Define $\smash{\bar{\tilde\varepsilon}^{\scriptscriptstyle (2)} \coloneqq \mathbb{P}(Y \notin \mathcal{C}_\alpha(\tilde{X}^{\scriptscriptstyle (2)}); \mathcal{D}_{\mathrm{cal}})}$ and $\smash{\bar\varepsilon_{\mathrm{DPI}}^{\scriptscriptstyle \Delta} \coloneqq \bar{\tilde\varepsilon}^{\scriptscriptstyle (2)} - \bar\varepsilon^{\scriptscriptstyle (2)}}$. By Lemma~\ref{lemma:aps_calibration_decomp} with $\smash{\bar\varepsilon_{\mathrm{DPI}}^{\scriptscriptstyle \Delta}}$,
    \begin{equation} \label{eqn:aps_dpi_expectation_decomp}
        \mathbb{E}_{X^{\scriptscriptstyle (2)}}\!\left[\frac{\tilde\varepsilon^{\scriptscriptstyle (2)} - \varepsilon^{\scriptscriptstyle (2)}}{\hat\Lambda_\alpha^{\scriptscriptstyle (2)}}\right] = \bar\varepsilon_{\mathrm{DPI}}^{\scriptscriptstyle \Delta}\, M^{\scriptscriptstyle (2)} + R_{\mathrm{DPI}}^{\scriptscriptstyle (2)}, \qquad |R_{\mathrm{DPI}}^{\scriptscriptstyle (2)}| \leq \tilde\kappa^{\scriptscriptstyle (2)}\, v^{\scriptscriptstyle (2)}.
    \end{equation}
    Since $\smash{\bar{\tilde\varepsilon}^{\scriptscriptstyle (2)}}$ and $\smash{\bar\varepsilon^{\scriptscriptstyle (2)}}$ are both training-conditional miscoverages at coverage level $1 - \alpha$, Lemma~\ref{lemma:coverage_beta} gives that they share the same marginal distribution. Following the argument for Equation~\ref{eqn:aps_sandwich_eps_delta_bound} with $\smash{\bar\varepsilon_{\mathrm{DPI}}^{\scriptscriptstyle \Delta}}$ in place of $\smash{\bar\varepsilon^{\scriptscriptstyle \Delta}}$,
    \begin{equation} \label{eqn:aps_dpi_eps_delta_bound}
        \bigl|\mathbb{E}[\bar\varepsilon_{\mathrm{DPI}}^{\scriptscriptstyle \Delta}\, M^{\scriptscriptstyle (2)}]\bigr| \leq \frac{m^{\scriptscriptstyle (2)}}{\sqrt{n}}.
    \end{equation}
    Taking expectations over $\mathcal{D}_{\mathrm{cal}}$ in Theorem~\ref{thm:dpi_aps} and substituting Equations~\ref{eqn:aps_dpi_expectation_decomp}~and~\ref{eqn:aps_dpi_eps_delta_bound} completes the proof. All terms on the right-hand side are finite using the same argument as in Corollary~\ref{cor:expectation_sandwich_aps}.
\end{proof}

\subsubsection{Corollary~\ref{cor:hp_dpi_aps}: APS score dpi with high probability}

\begin{corollary}[APS score DPI with high probability] \label{cor:hp_dpi_aps}
    Fix a nominal coverage level $1 - \alpha$ and \mbox{$\delta \in (0, 1)$}. Assume that the calibration scores are almost surely distinct and that the calibration data and future test samples are i.i.d. Suppose that there exists a constant $\eta_{\delta/2} < 1$ such that
    \begin{equation}
        \mathbb{P} \bigl( \hat\tau_\alpha^{\scriptscriptstyle (2)} \leq \eta_{\delta/2} \bigr) \geq 1 - \tfrac{\delta}{2}.
    \end{equation}
    Let $c_{\delta/2} = \frac{K}{1 - \eta_{\delta/2}}$. Then, with probability at least $1 - \delta$ over the draw of $\mathcal{D}_{\mathrm{cal}}$,
    \begin{equation}
        \tilde{\Delta}_\alpha \;\leq\; \Delta_\alpha + D^{\scriptscriptstyle (2)} + \sqrt{\frac{2 c_{\delta/2}^2 \log(8/\delta)}{n}} + \tilde\kappa^{\scriptscriptstyle (2)}\, v^{\scriptscriptstyle (2)},
    \end{equation}
    where $D^{\scriptscriptstyle (2)}$ is defined as in Theorem~\ref{thm:sandwich_aps} and $\tilde\kappa^{\scriptscriptstyle (2)}$ and $v^{\scriptscriptstyle (2)}$ are defined as in Corollary~\ref{cor:expectation_dpi_aps}.
\end{corollary}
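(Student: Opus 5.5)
The argument copies the proof of Corollary~\ref{cor:hp_sandwich_aps}, with the DPI calibration term in place of the sandwich calibration term. Throughout, fix a calibration dataset with $\hat\tau_\alpha^{\scriptscriptstyle (2)} < 1$. Theorem~\ref{thm:dpi_aps} then gives
\begin{equation*}
    \tilde{\Delta}_\alpha \leq \Delta_\alpha + D^{\scriptscriptstyle (2)} + \mathbb{E}_{X^{\scriptscriptstyle (2)}}\bigl[(\tilde{\varepsilon}^{\scriptscriptstyle (2)} - \varepsilon^{\scriptscriptstyle (2)}) / \hat{\Lambda}_\alpha^{\scriptscriptstyle (2)}\bigr].
\end{equation*}
Define $\bar{\tilde\varepsilon}^{\scriptscriptstyle (2)}$ and $\bar\varepsilon^{\scriptscriptstyle \Delta}_{\mathrm{DPI}} = \bar{\tilde\varepsilon}^{\scriptscriptstyle (2)} - \bar\varepsilon^{\scriptscriptstyle (2)}$ as in the proof of Corollary~\ref{cor:expectation_dpi_aps}. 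Applying Lemma~\ref{lemma:aps_calibration_decomp}, as in Equation~\ref{eqn:aps_dpi_expectation_decomp}, splits the last term into $\bar\varepsilon^{\scriptscriptstyle \Delta}_{\mathrm{DPI}} M^{\scriptscriptstyle (2)} + R^{\scriptscriptstyle (2)}_{\mathrm{DPI}}$ with $|R^{\scriptscriptstyle (2)}_{\mathrm{DPI}}| \leq \tilde\kappa^{\scriptscriptstyle (2)} v^{\scriptscriptstyle (2)}$. This step is deterministic, so the covariance remainder contributes the term $\tilde\kappa^{\scriptscriptstyle (2)} v^{\scriptscriptstyle (2)}$ on every calibration draw. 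It remains to control $\bar\varepsilon^{\scriptscriptstyle \Delta}_{\mathrm{DPI}} M^{\scriptscriptstyle (2)}$ with high probability.

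\textbf{Concentration of the miscoverage difference.} By Lemma~\ref{lemma:coverage_beta}, both $\bar{\tilde\varepsilon}^{\scriptscriptstyle (2)}$ and $\bar\varepsilon^{\scriptscriptstyle (2)}$ are training-conditional miscoverages at level $1-\alpha$, so each is marginally $\mathrm{Beta}(n+1-k,k)$ with common mean $\mu$. This needs the scores to be almost surely distinct, for the degraded features as well as the original ones. Each is then sub-Gaussian with proxy variance $1/(4(n+2))$. As in Lemma~\ref{lemma:hp_calibration_term}, take a tail bound at level $\delta/4$ for each and a union bound. With probability at least $1-\delta/2$,
\begin{equation*}
    |\bar\varepsilon^{\scriptscriptstyle \Delta}_{\mathrm{DPI}}| \leq 2\sqrt{\log(8/\delta)/(2n)} = \sqrt{2\log(8/\delta)/n}.
\end{equation*}
No assumption on the joint law of the two miscoverages is needed, since only the marginals enter.

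\textbf{Bounding the threshold factor and concluding.} On the event $\{\hat\tau_\alpha^{\scriptscriptstyle (2)} \leq \eta_{\delta/2}\}$, which has probability at least $1-\delta/2$ by assumption, we have $\hat\tau_\alpha^{\scriptscriptstyle (2)} < 1$, so Theorem~\ref{thm:dpi_aps} applies. On the same event, Lemma~\ref{lemma:aps_threshold_lower_bound} gives
\begin{equation*}
    M^{\scriptscriptstyle (2)} = \mathbb{E}\bigl[1/\hat\Lambda^{\scriptscriptstyle (2)}_\alpha\bigr] \leq K/(1-\hat\tau_\alpha^{\scriptscriptstyle (2)}) \leq c_{\delta/2}.
\end{equation*}
A union bound over the two events gives, with probability at least $1-\delta$,
\begin{equation*}
    \bar\varepsilon^{\scriptscriptstyle \Delta}_{\mathrm{DPI}} M^{\scriptscriptstyle (2)} \leq \sqrt{2c_{\delta/2}^2\log(8/\delta)/n}.
\end{equation*}
Substituting this and the bound $R^{\scriptscriptstyle (2)}_{\mathrm{DPI}} \leq \tilde\kappa^{\scriptscriptstyle (2)} v^{\scriptscriptstyle (2)}$ into the DPI inequality above yields the claim.

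The main obstacle is justifying that $\bar{\tilde\varepsilon}^{\scriptscriptstyle (2)}$ has the Beta marginal. The degraded set $\mathcal{C}_\alpha(\tilde{X}^{\scriptscriptstyle (2)})$ must be calibrated on its own scores $s(\tilde X_i, Z_i, Y_i, U_i)$. These are i.i.d.\ across calibration points, because the noisy channel is applied independently to each point and the APS randomization makes the scores continuous. With that established, the remaining steps are direct substitutions.
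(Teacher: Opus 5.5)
Your proposal is correct and follows the paper's proof essentially step for step. You apply Theorem~\ref{thm:dpi_aps} on $\{\hat\tau_\alpha^{\scriptscriptstyle (2)} \leq \eta_{\delta/2}\}$ and split the calibration term via Lemma~\ref{lemma:aps_calibration_decomp} into $\bar\varepsilon^{\scriptscriptstyle \Delta}_{\mathrm{DPI}} M^{\scriptscriptstyle (2)} + R^{\scriptscriptstyle (2)}_{\mathrm{DPI}}$; you then concentrate $\bar\varepsilon^{\scriptscriptstyle \Delta}_{\mathrm{DPI}}$ through the shared Beta marginals and sub-Gaussian tails, bound $M^{\scriptscriptstyle (2)} \leq c_{\delta/2}$ by Lemma~\ref{lemma:aps_threshold_lower_bound}, and union bound, all as the paper does.

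Your extra remark that the Beta marginal of $\bar{\tilde\varepsilon}^{\scriptscriptstyle (2)}$ requires i.i.d., almost surely distinct degraded calibration scores is a point the paper leaves implicit in its standing hypothesis. One small caveat: APS randomization alone does not guarantee continuity, since there is an atom at $1$ if $q(Y \mid \tilde X^{\scriptscriptstyle (2)}) = 0$ with positive probability, so rely on the stated distinct-scores assumption there.
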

\begin{proof}
    The proof follows the same argument as Corollary~\ref{cor:hp_sandwich_aps}. Define $\smash{\bar{\tilde\varepsilon}^{\scriptscriptstyle (2)} \coloneqq \mathbb{P}(Y \notin \mathcal{C}_\alpha(\tilde{X}^{\scriptscriptstyle (2)}); \mathcal{D}_{\mathrm{cal}})}$ and $\smash{\bar\varepsilon_{\mathrm{DPI}}^{\scriptscriptstyle \Delta} \coloneqq \bar{\tilde\varepsilon}^{\scriptscriptstyle (2)} - \bar\varepsilon^{\scriptscriptstyle (2)}}$ as in Corollary~\ref{cor:expectation_dpi_aps}. By Lemma~\ref{lemma:aps_calibration_decomp}, if $\smash{\hat\tau_\alpha^{\scriptscriptstyle (2)} < 1}$, then
    \begin{equation}
        \mathbb{E}_{X^{\scriptscriptstyle (2)}}\!\left[\frac{\tilde\varepsilon^{\scriptscriptstyle (2)} - \varepsilon^{\scriptscriptstyle (2)}}{\hat\Lambda_\alpha^{\scriptscriptstyle (2)}}\right] = \bar\varepsilon_{\mathrm{DPI}}^{\scriptscriptstyle \Delta}\, M^{\scriptscriptstyle (2)} + R_{\mathrm{DPI}}^{\scriptscriptstyle (2)}, \qquad |R_{\mathrm{DPI}}^{\scriptscriptstyle (2)}| \leq \tilde\kappa^{\scriptscriptstyle (2)}\, v^{\scriptscriptstyle (2)}.
    \end{equation}
    Since $\smash{\bar{\tilde\varepsilon}^{\scriptscriptstyle (2)}}$ and $\smash{\bar\varepsilon^{\scriptscriptstyle (2)}}$ are both training-conditional miscoverages at coverage level $1 - \alpha$, Lemma~\ref{lemma:coverage_beta} gives that they share the same marginal distribution. Following the proof of Lemma~\ref{lemma:hp_calibration_term}, with probability at least $1 - \frac{\delta}{2}$,
    \begin{equation}
        |\bar\varepsilon_{\mathrm{DPI}}^{\scriptscriptstyle \Delta}| \leq \sqrt{\frac{2 \log(8/\delta)}{n}}.
    \end{equation}
    By assumption, with probability at least $1 - \frac{\delta}{2}$ we have $\smash{\hat\tau_\alpha^{\scriptscriptstyle (2)} \leq \eta_{\delta/2}}$, so Lemma~\ref{lemma:aps_threshold_lower_bound} gives $M^{\scriptscriptstyle (2)} \leq c_{\delta/2}$. By a union bound, with probability at least $1 - \delta$,
    \begin{equation} \label{eqn:hp_aps_dpi_eps_delta_bound}
        |\bar\varepsilon_{\mathrm{DPI}}^{\scriptscriptstyle \Delta}\, M^{\scriptscriptstyle (2)}| \leq c_{\delta/2} \sqrt{\frac{2 \log(8/\delta)}{n}} = \sqrt{\frac{2 c_{\delta/2}^2 \log(8/\delta)}{n}}.
    \end{equation}
    Since Theorem~\ref{thm:dpi_aps} holds for every fixed calibration dataset with $\smash{\hat\tau_\alpha^{\scriptscriptstyle (2)} < 1}$, substituting Equation~\ref{eqn:hp_aps_dpi_eps_delta_bound} together with $R_{\mathrm{DPI}}^{\scriptscriptstyle (2)} \leq \tilde\kappa^{\scriptscriptstyle (2)}\, v^{\scriptscriptstyle (2)}$ completes the proof.
\end{proof}

\clearpage

\section{Datasets} \label{app:sec:datasets}

We provide additional details on the synthetic and real datasets used in our experiments. The dataset statistics, such as sample sizes and feature dimensions, are summarized in Table~\ref{app:tab:dataset_statistics}.

\subsection{Dataset details}

We provide a brief overview of the datasets. Throughout, we will use $\oplus$ as a concatenation operator.

\paragraph{S1.} The first synthetic dataset samples the features $Z \oplus X \sim \mathcal{N}(0, I_d)$, where $d = 4$. The first two coordinates are the base features $Z$ and the remaining two coordinates are the additional features $X$. The label $Y$ is sampled as
\begin{equation}
    Y \mid X, Z \sim \mathrm{Categorical}\bigl(\mathrm{softmax}(W_2 \, \mathrm{ReLU}(W_1 (Z \oplus X) + b_1) + b_2)\bigr),
\end{equation}
with $K = 10$ classes and hidden width $h = 16$. 

The weights $W_1 \in \mathbb{R}^{h \times d}$, $W_2 \in \mathbb{R}^{K \times h}$ and biases $b_1 \in \mathbb{R}^h$, $b_2 \in \mathbb{R}^K$ are drawn once, with i.i.d.\ entries $W_1 \sim \mathcal{N}(0, 1/d)$, $W_2 \sim \mathcal{N}(0, 1/h)$, and $b_1, b_2 \sim \mathcal{N}(0, 0.01)$. Columns of $W_1$ corresponding to $X$ are scaled by a factor of $4$. We compute the marginal oracle models $p(\cdot \mid Z)$ by Gauss--Hermite quadrature, while $p(\cdot \mid X, Z)$ is available in closed form.

\paragraph{S2.} The second synthetic dataset samples a label $Y \sim \mathrm{Unif}\{0, \ldots, K-1\}$ with $K = 50$ classes and features $Z \oplus X \mid Y = k \sim \mathcal{N}(\mu_k, I_d)$, where $d = 20$. The first $10$ coordinates are the base features $Z$ and the remaining $10$ coordinates are the additional features $X$. 

Each class mean $\mu_k = \mu_k^Z \oplus \mu_k^X$ is drawn once by sampling a vector of i.i.d.\ standard normal entries in each block, then rescaling it to a fixed norm where $\|\mu_k^Z\| \approx \sqrt{2}$ and $\|\mu_k^X\| \approx 2$. Since $Y$ is uniform and the features are Gaussian, the oracle posteriors $p(\cdot \mid Z)$ and $p(\cdot \mid X, Z)$ are available in closed form by Bayes' rule.

\paragraph{S3.} The third synthetic dataset is designed for the feature selection experiment. We use $K = 40$ classes separated into a small set of $8$ head classes containing $70\%$ of the prior mass and a large set of $32$ tail classes containing the remaining $30\%$ of the prior mass. The base feature is 
\begin{equation}
    Z \mid Y = k \sim \mathcal{N}(c_k + \delta_k, \, I_{d_Z}),
\end{equation}
with $d_Z = 8$, where $c_k = c_{\mathrm{head}}$ for head classes and $c_k = c_{\mathrm{tail}}$ for tail classes. Each center $c_k$ has norm $2$ and $\delta_k$ is a smaller per-class offset of norm $0.5$.

A pool of $m = 50$ scalar candidate features $X_1, \ldots, X_m$ is observed independently given $Y$, 
\begin{equation}
    X_j \mid Y = k \sim \mathcal{N}(a_j \, \mathrm{dir}_j[k], \, 1),
\end{equation}
where $a_j \in \mathbb{R}$ is a per-candidate amplitude and $\mathrm{dir}_j \in \mathbb{R}^K$ is a per-candidate direction vector over the classes. Candidates are split across three roles, which set $a_j$ and $\mathrm{dir}_j$ as follows:
\begin{itemize}[leftmargin=*, topsep=2pt, itemsep=1pt]
    \item $18$ \emph{head} detectors: the $8$ head classes are randomly partitioned into two groups of $4$. The direction $\mathrm{dir}_j$ is zero for all non-head classes and takes values $\smash{\pm 1/\sqrt{8}}$ on the two head groups, with opposite signs for each group, so that $\mathrm{dir}_j$ has unit norm. The amplitude is $a_j = 2$;
    
    \item $18$ \emph{tail} detectors: the same construction, splitting the $32$ tail classes into two groups of $16$;
    
    \item $14$ \emph{filler} features: either $\mathrm{dir}_j$ is a random unit vector with $a_j = 0.4$ or $\mathrm{dir}_j = 0$.
\end{itemize}
The tail detector amplitude is calibrated by bisection so that the average Shannon information $I(X_j; Y \mid Z)$ across tail detectors matches that across head detectors. Since the base and candidate features are conditionally independent given $Y$, the posterior $p(y \mid Z, X_S)$ is available in closed form via Bayes' rule for any subset $S \subseteq \{1,\ldots,m\}$. 

Note that S3 is relatively complex because it is designed to exhibit clear cases where conformal set-size reduction and Shannon mutual information disagree in feature selection. We observe similar behavior on several real-world datasets, albeit to a lesser extent.

An adapted version of S3 is used for Figure~\ref{fig:crossing_profile}, with $K = 24$ classes split into head, tail, and filler groups and $X_1, X_2 \in \mathbb{R}^8$. The class-conditional mean of $X_1$ is a distinct random vector for each of the $8$ tail classes and zero otherwise, and $X_2$ is defined analogously for the $4$ head classes.

\paragraph{Wine Quality (Wine).} The Wine Quality dataset~\citep{cortez2009wine} contains physicochemical measurements and quality ratings for red and white wines. We use the white wine subset, which contains 4,898 samples with 11 physicochemical features. The prediction target is the wine quality rating ranging from 3 to 9.

\paragraph{Human Activity Recognition (HAR).} The Human Activity Recognition dataset~\citep{reyesortiz2013human} contains smartphone sensor measurements collected from participants performing six activities. We use the preprocessed feature representation, consisting of 10,299 samples with 561 features. The prediction target is one of six activities: walking, walking upstairs, walking downstairs, sitting, standing, or laying.

\paragraph{Letter Recognition (Letter).} The Letter Recognition dataset~\citep{slate1991letter} contains 20,000 samples of capital English letters, each represented by 16 numerical features extracted from the corresponding character image. The prediction target is one of the 26 letters of the English alphabet.

\paragraph{Covertype.} The Covertype dataset~\citep{blackard1998covertype} contains 581,012 samples describing forested areas using 54 cartographic features, comprising 10 quantitative features and 44 binary features encoding wilderness area and soil type information. The prediction target is one of seven forest cover types.

\paragraph{Fashion-MNIST (FMNIST).} Fashion-MNIST~\citep{xiao2017fashion} consists of 70,000 grayscale \mbox{28 $\times$ 28} images of fashion items from 10 classes. The classes are T-shirt/top, trouser, pullover, dress, coat, sandal, shirt, sneaker, bag, and ankle boot. The dataset is split into 60,000 training samples and 10,000 test samples.

\paragraph{CIFAR-10.} CIFAR-10~\citep{krizhevsky2009learning} consists of 60,000 RGB $32 \times 32$ images from 10 object classes: airplane, automobile, bird, cat, deer, dog, frog, horse, ship, and truck. The dataset contains 50,000 training samples and 10,000 test samples.

\paragraph{CIFAR-100.} CIFAR-100~\citep{krizhevsky2009learning} consists of 60,000 RGB $32 \times 32$ images from 100 object classes. The dataset contains 50,000 training samples and 10,000 test samples.

\paragraph{ImageNet-1k.} ImageNet-1k~\citep{deng2009imagenet} is a large-scale image classification dataset containing images from 1,000 object classes. We use the 64 $\times$ 64 downsampled version of the dataset~\citep{chrabaszcz2017downsampled} containing 1,281,167 training samples and 50,000 test samples.

\begin{table}[t]
    \centering
    \caption{Sample sizes, feature dimensions, and number of classes for all datasets. For the synthetic datasets S1–S3, the conditional distributions are known and no training data are required. Here, $n_{\mathrm{eval}}$ denotes the number of samples available for calibration and evaluation. For tabular datasets, the input features are partitioned into $X$ and $Z$. For image datasets, $X$ and $Z$ correspond to separate image representations with the indicated channel dimensions. The number of candidate features in the feature selection experiments is denoted by $m$.}
    \label{app:tab:dataset_statistics}
    \resizebox{\textwidth}{!}{%
    \begin{tabular}{lccccccc}
    \toprule
    Dataset & $n_{\mathrm{total}}$ & $n_{\mathrm{eval}}$ & $d_{\mathrm{total}}$ & $d_X$ & $d_Z$ & $m$ & $K$ \\
    \midrule
    S1 & 200{,}000 & 200{,}000 & 4 & 2 & 2 & --- & 10 \\
    S2 & 200{,}000 & 200{,}000 & 20 & 10 & 10 & 10 & 50 \\
    S3 & 200{,}000 & 200{,}000 & 58 & 50 & 8 & 50 & 40 \\
    \midrule
    Wine      & 4{,}898   & 1{,}224   & 11  & 6   & 5   & 11 & 7 \\
    HAR       & 10{,}299  & 2{,}575   & 561 & 348 & 213 & 51 & 6 \\
    Letter    & 20{,}000  & 5{,}000   & 16  & 9   & 7   & 16 & 26 \\
    Covertype & 581{,}012 & 145{,}253 & 54  & 10  & 44  & 54 & 7 \\
    \midrule
    FMNIST     & 70{,}000    & 10{,}000  & 2 $\times$ 28 $\times$ 28 & 1 $\times$ 28 $\times$ 28 & 1 $\times$ 28 $\times$ 28 & 64 & 10 \\
    CIFAR-10   & 60{,}000    & 10{,}000  & 6 $\times$ 32 $\times$ 32 & 3 $\times$ 32 $\times$ 32 & 3 $\times$ 32 $\times$ 32 & 64 & 10 \\
    CIFAR-100  & 60{,}000    & 10{,}000  & 6 $\times$ 32 $\times$ 32 & 3 $\times$ 32 $\times$ 32 & 3 $\times$ 32 $\times$ 32 & 64 & 100 \\
    ImageNet-1k & 1{,}331{,}167 & 50{,}000 & 6 $\times$ 64 $\times$ 64 & 3 $\times$ 64 $\times$ 64 & 3 $\times$ 64 $\times$ 64 & 64 & 1000 \\
    \bottomrule
    \end{tabular}%
    }
\end{table}

\begin{figure}[t]
    \centering
    \includegraphics[width=0.94\textwidth]{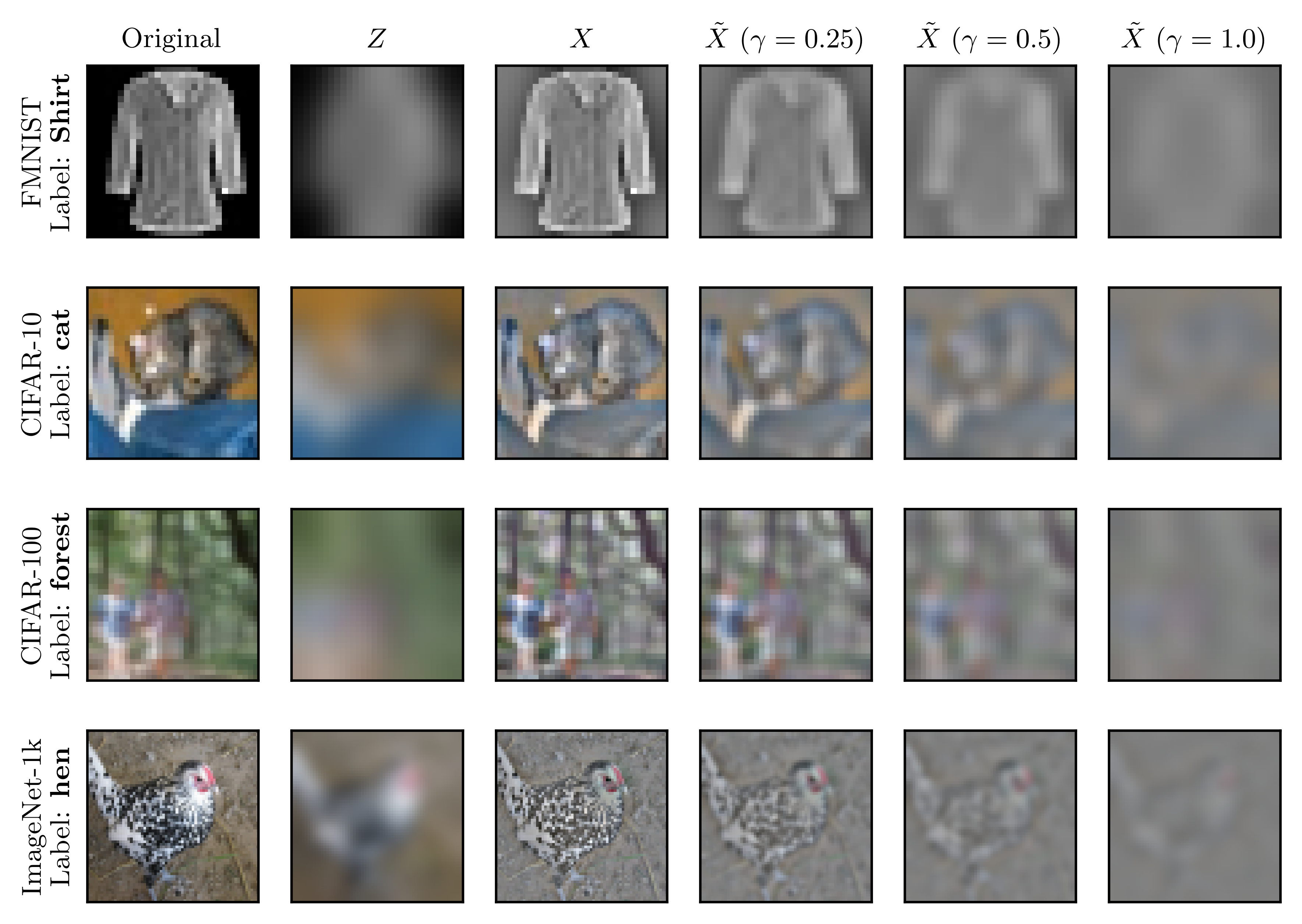}
    \caption{Examples of the original image, base, new, and degraded features for each image dataset. The degraded features apply an additional Gaussian blur with standard deviation $\gamma\sigma$ to $X$.}
    \label{app:fig:viz}
\end{figure}

\subsection{Feature construction details}

We provide additional details on the construction of the base features $Z$ and additional features $X$ for each dataset. For the synthetic datasets, $Z$ consists of the first $d_Z$ feature coordinates and $X$ consists of the remaining $d_X$ coordinates.

For the tabular datasets, we partition the input features into $X$ and $Z$ based on natural groupings of the feature types, as detailed below.

\paragraph{Wine Quality (Wine).} The base features $Z$ consist of five physical properties: chlorides, density, residual sugar, alcohol content, and pH. The additional features $X$ consist of the remaining six acidity- and sulfur-related measurements.

\paragraph{Human Activity Recognition (HAR).} The base features $Z$ consist of the 213 gyroscope-derived measurements, including time- and frequency-domain statistics computed from the gyroscope signals. The additional features $X$ consist of the remaining 348 features, including accelerometer-derived measurements and the three angle-between-vectors features.

\paragraph{Letter Recognition (Letter).} The base features $Z$ consist of seven statistical moments of the $x$- and $y$-pixel positions. The additional features $X$ consist of the remaining nine features: five bounding-box features describing position, width, height, and pixel count, and four edge-count and edge-correlation features.

\paragraph{Covertype.} The base features $Z$ consist of the 44 binary features encoding wilderness area and soil type. The additional features $X$ consist of the 10 quantitative cartographic variables, including elevation, slope, and distances to hydrology.

For all image datasets, we construct $Z$ by applying a Gaussian blur with standard deviation $\sigma = 3$ to the original image, with the Gaussian kernel truncated at four standard deviations on each side. The additional features $X$ are given by the residual between the original and blurred images. An example of the features for each image dataset is shown in Figure~\ref{app:fig:viz}.

\subsection{Degraded feature construction details}

We provide additional details on the construction of the degraded feature $\tilde{X}$ used in the data processing inequality experiments (Section~\ref{subsec:dpi_prob_score}). We apply degradation only to the additional features $X$, leaving the base features $Z$ unchanged. We consider three degradation levels, \emph{low}, \emph{medium}, and \emph{high}, corresponding to degradation multipliers $\gamma \in \{0.25, 0.5, 1.0\}$, respectively. We report results for all three degradation levels in Appendix~\ref{app:sec:experiments}. In the main text, we report results only for the medium degradation level, corresponding to $\gamma = 0.5$.

For the synthetic and tabular datasets, we degrade $X$ by adding independent Gaussian noise to each coordinate. For a degradation level $\gamma$, the noise standard deviation for each coordinate is $\gamma$ times its empirical standard deviation, estimated using the training data. The degradation noise is sampled once for each dataset and held fixed across all noise levels and subsequent experimental replicates in the analysis.

For the image datasets, we degrade $X$ by applying an additional Gaussian blur to the residual features, while leaving the already-blurred base features $Z$ unchanged. For a degradation level $\gamma$, we use a blur standard deviation of $\gamma \sigma$, where $\sigma=3$ is the blur scale used to construct $Z$. An example of the degraded features for each image dataset is shown in Figure~\ref{app:fig:viz}.

The synthetic datasets (S1 and S2) have known oracle distributions, so we compute the degraded posterior in closed form or using Gauss--Hermite quadrature. For the real-world datasets, we train a separate model on $\smash{(\tilde{X}, Z)}$ for each degradation level.

\subsection{Candidate feature details}

We describe the candidate features $\{X_1, \ldots, X_m\}$ used in the feature selection experiments. In the synthetic settings, the base features $Z$ are always observed, and each coordinate of the additional features $X$ forms a separate candidate feature. This yields $10$ and $50$ candidate features for S2 and S3, respectively, for which we can compute oracle conditional probabilities given any subset of features. We exclude S1 from the feature selection experiments due to the low dimensionality of $X$.

For the tabular datasets, no features are observed initially and each input feature (including those previously in $Z$) is treated as a separate candidate. The only exception is HAR, which contains 561 input features. To keep the number of candidates manageable, we randomly partition the features into 51 groups of 11, with each group forming a candidate feature.

The image candidates are constructed from the final convolutional representation of the frozen backbone trained on $(X, Z)$. As with the tabular datasets, no features are observed initially. Since the purpose of the feature selection experiment is to empirically probe the differences between set-size reduction and Shannon information, this representation serves to provide a collection of meaningful candidates, rather than physically acquirable features.

On FMNIST, CIFAR-10, and CIFAR-100, we partition the channels of the final convolutional feature map into 64 contiguous groups. Each group contains 2 channels of size $3 \times 3$ on FMNIST and 4 channels of size $4 \times 4$ on CIFAR-10 and CIFAR-100. The ResNet backbone on ImageNet-1k instead produces 512 globally average-pooled channels, which we partition into 64 groups of 8.

The number of candidate features $m$ for each dataset is summarized in Table~\ref{app:tab:dataset_statistics}.

\clearpage

\section{Training details} \label{app:sec:training}

We provide details on the model architectures, training procedures, hyperparameter selection, and final pre-calibration accuracy of our classification models. The synthetic datasets use oracle probabilities and therefore do not require model training.

\begin{table}[t]
    \centering
    \renewcommand{\arraystretch}{1.3}
    \caption{Selected hyperparameters and training configurations for the classification models. The learning rate, weight decay, and dropout rate are selected via grid search. The maximum number of epochs and patience correspond to the full training procedure after hyperparameter selection.}
    \label{app:tab:training_hparams}
    \begin{tabular}{llllccc}
        \toprule
        Dataset & Params. & Max. epochs & Patience & Learning rate & Weight decay & Dropout \\
        \midrule
        Wine        & 600   & 1,000 & 100 & $4\times10^{-4}$ & $1\times10^{-4}$ & $0.3$ \\
        HAR         & 178K  & 1,000 & 100 & $4\times10^{-4}$ & $1\times10^{-4}$ & $0.1$ \\
        Letter      & 106K  & 1,000 & 100 & $1\times10^{-3}$ & $1\times10^{-4}$ & $0.2$ \\
        Covertype   & 424K  & 500   & 100 & $4\times10^{-4}$ & $1\times10^{-5}$ & $0.1$ \\
        FMNIST      & 105K  & 1,000 & 100 & $1\times10^{-3}$ & $4\times10^{-5}$ & $0.3$ \\
        CIFAR-10    & 414K  & 500   & 100 & $1\times10^{-3}$ & $1\times10^{-4}$ & $0.1$ \\
        CIFAR-100   & 783K  & 500   & 100 & $1\times10^{-3}$ & $1\times10^{-4}$ & $0.3$ \\
        ImageNet-1k & 11.7M & 25    & 5   & $4\times10^{-4}$ & $4\times10^{-5}$ & $0.1$ \\
        \bottomrule
    \end{tabular}
\end{table}

\subsection{Model architectures}

For the tabular datasets, we use multi-layer perceptrons (MLPs) with dropout after each hidden layer. The hidden-layer widths are $(16, 16)$ for Wine, $(256, 128)$ for HAR, $(256, 256, 128)$ for Letter, and $(512, 512, 256)$ for Covertype. All non-binary features are independently standardized to zero mean and unit variance using statistics computed from the training data.

For FMNIST, CIFAR-10, and CIFAR-100, we use three-layer CNNs with $3 \times 3$ convolutional kernels, batch normalization, and ReLU activations. The convolutional layers have channel widths $(32, 64, 128)$ for FMNIST and $(64, 128, 256)$ for CIFAR-10 and CIFAR-100. Each convolutional layer is followed by $2 \times 2$ max pooling. After the final layer, we flatten the spatial representation, apply dropout, and use a linear classification head to produce the predicted probabilities.

For ImageNet-1k, we use the ResNet-18 architecture. We directly fine-tune the pretrained network on the base features $Z$ using standard ImageNet normalization and modify the input layer to accommodate the additional channels when using both $X$ and $Z$. For all ResNet models, the final convolutional representation is passed through global adaptive pooling and dropout, followed by a linear classification head.

\subsection{Training optimization} \label{app:training:optimization}

We train all models on the real datasets using the Adam optimizer with weight decay to minimize the standard cross-entropy loss. All newly initialized model parameters use the default PyTorch initialization. For the image datasets, we additionally apply data augmentation consisting of horizontal flips with probability $0.5$ and random crops after zero-padding the images by 2 pixels for FMNIST and 4 pixels for CIFAR-10, CIFAR-100, and ImageNet-1k. We also use cosine annealing to decay the learning rate from its initial value to zero over the maximum number of training epochs.

Training uses a batch size of 256 and early stopping based on validation loss, with the checkpoint attaining the lowest validation loss restored at the end of training. Separate models are trained for the base, degraded, and augmented feature sets. We set the maximum number of training epochs to 1,000 for Wine, HAR, Letter, and FMNIST; 500 for CIFAR-10, CIFAR-100, and Covertype; and 25 for ImageNet-1k. We use an early-stopping patience of 100 epochs for all datasets except ImageNet-1k, for which we use a patience of 5 epochs.

\subsection{Mask-conditioned model training}

The feature selection experiments require predicted conditional probabilities given arbitrary subsets of the candidate features. We therefore modify the model architectures to accommodate arbitrary conditioning masks. For the tabular datasets, we modify the MLP to take inputs of the form \mbox{$[x \odot M, M]$}, where $x$ is the full feature vector and $M$ is a binary mask indicating which features are observed, following the masked predictor construction in \citet{yoon2019invase}.

For the image datasets, we use the pretrained CNN or ResNet model previously trained on the full set of features $(X,Z)$. We freeze the backbone through the final convolutional layer, fixing the representation from which the candidate features are constructed. For FMNIST, CIFAR-10, and CIFAR-100, we train a masked MLP on this representation using the same masking construction as for the tabular datasets. We instead train a linear classification head on the masked representation on ImageNet-1k for scalability.

During training, we independently sample a conditioning mask for each example. Specifically, we first sample $u \sim \mathrm{Unif}[0,1]$ and then independently mask each candidate feature with probability $u$. To ensure sufficient exposure to the two extreme cases, we additionally mask all candidate features with probability $0.05$ and observe all candidate features with probability $0.15$. 

We double both the maximum number of training epochs and the patience relative to the full-feature training procedure described in Appendix~\ref{app:training:optimization}. For early stopping, we average the validation loss across masks generated with fixed masking probabilities $u \in \{0, 0.25, 0.5, 0.75, 1\}$.

\subsection{Hyperparameter selection}

We select the learning rate, weight decay, and dropout rate via grid search. We consider learning rates in $(10^{-3}, 4 \times 10^{-4}, 10^{-4})$, weight decay values in $(10^{-4}, 4 \times 10^{-5}, 10^{-5})$, and dropout rates in $(0.1, 0.2, 0.3)$, yielding 27 configurations in total. Hyperparameters are selected using the augmented model with both $X$ and $Z$. The selected configuration is then shared across the augmented model, the base model using only $Z$, and the models using degraded features $\smash{\tilde{X}}$ and $Z$.

For each configuration, we train the augmented model using 20\% of both the maximum epoch budget and early-stopping patience specified above. The image models use the same cosine annealing schedule during hyperparameter selection. For ImageNet-1k, each configuration is trained for the reduced epoch budget without early stopping.

\begin{table}[t]
    \centering
    \renewcommand{\arraystretch}{1.2}
    \caption{Top-1 classification accuracy (\%) on the held-out evaluation pool for the augmented, degraded, and base feature sets, and for the mask-conditioned model with all candidate features observed. For the synthetic datasets S1 and S2, we report the accuracy of the oracle model.}
    \label{app:tab:acc}
    \begin{tabular}{lcccccc}
        \toprule
        Dataset 
        & $(X,Z)$ 
        & \makecell{$(\tilde{X},Z)$ \\ low noise}
        & \makecell{$(\tilde{X},Z)$ \\ medium noise}
        & \makecell{$(\tilde{X},Z)$ \\ high noise}
        & $Z$
        & \makecell{Mask-conditioned \\ (all)} \\
        \midrule
        S1          & 56.52 & 54.86 & 51.83 & 46.69 & 39.33 & --- \\
        S2          & 50.83 & 48.84 & 44.00 & 34.21 & 18.43 & --- \\
        Wine        & 55.15 & 54.98 & 53.43 & 52.78 & 50.33 & 55.80 \\
        HAR         & 98.29 & 97.40 & 96.54 & 93.86 & 88.08 & 98.72 \\
        Letter      & 97.68 & 95.52 & 92.00 & 86.76 & 86.52 & 97.36 \\
        Covertype   & 96.24 & 83.11 & 74.93 & 68.85 & 65.16 & 94.12 \\
        FMNIST      & 93.06 & 92.77 & 91.43 & 89.92 & 89.85 & 93.11 \\
        CIFAR-10    & 86.40 & 84.10 & 77.55 & 70.31 & 68.80 & 86.30 \\
        CIFAR-100   & 60.97 & 56.77 & 50.14 & 42.94 & 40.36 & 56.79 \\
        ImageNet-1k & 58.79 & 57.84 & 54.57 & 47.06 & 43.70 & 57.99 \\
        \bottomrule
    \end{tabular}
\end{table}

Among the 27 configurations, we select the one achieving the highest validation accuracy after early stopping. We then train all models using this configuration with their respective epoch budget and early-stopping patience. The selected hyperparameters for each dataset are summarized in Table~\ref{app:tab:training_hparams}.

\begin{figure}[t]
    \centering
    \includegraphics[width=\textwidth]{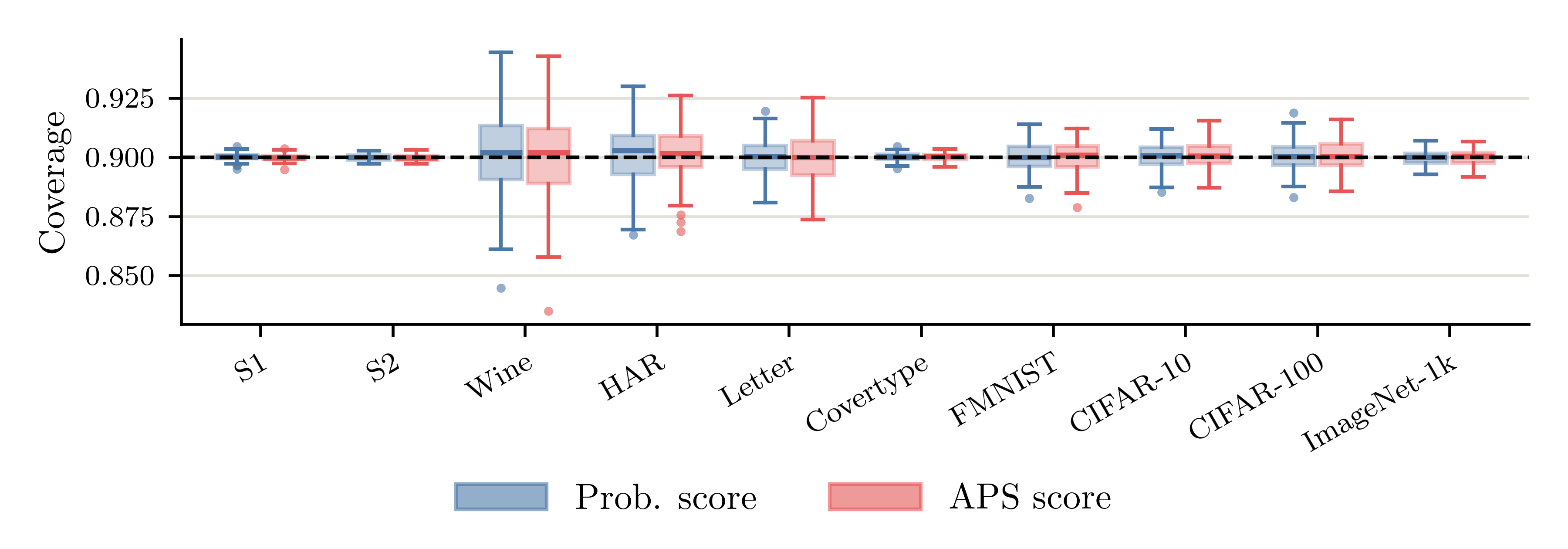}
    \caption{Empirical marginal coverage of the conformal predictors using $(X,Z)$ over 100 random equal calibration and test splits of the evaluation pool. The dashed line indicates the nominal coverage level $1 - \alpha = 0.9$.}
    \label{app:fig:coverage}
\end{figure}

\subsection{Accuracy and coverage metrics}

We report the top-1 classification accuracy of all models on the held-out evaluation pool. We evaluate models using the augmented features $(X,Z)$, the degraded features $\smash{(\tilde{X},Z)}$ at each noise level, the base features $Z$, as well as the mask-conditioned model with all candidate features observed. The results are summarized in Table~\ref{app:tab:acc}.

As expected, classification accuracy generally decreases as features are degraded or removed. The mask-conditioned models also obtain comparable performance to models trained on the full feature set $(X,Z)$, despite supporting predictions given arbitrary subsets of features. The models using the augmented features do not achieve state-of-the-art performance on these classification tasks, since we use simplified architectures and training procedures. Our primary goal is not to maximize predictive accuracy, but to evaluate our theoretical results under realistic levels of model imperfection.

We additionally evaluate the marginal coverage of the conformal predictors using $(X,Z)$ at the nominal coverage level $1 - \alpha = 0.9$, which is used throughout most of our experimental results. For each dataset, we randomly split the evaluation pool in half into calibration and test sets over 100 replicates. As shown in Figure~\ref{app:fig:coverage}, both the probability and APS scores achieve empirical coverage close to the nominal level on all datasets, with larger variability on the datasets with smaller evaluation pools, such as Wine and HAR.

\clearpage

\section{Analysis details} \label{app:sec:analysis}

We provide additional details on the estimation of the bounds, the evaluation protocols, and the analysis metrics used in our validation and feature selection experiments.

\subsection{Estimation of the bounds}

To evaluate the conformal mutual information terms in Theorem~\ref{thm:sandwich_prob_score}, we estimate $I_\lambda(X; Y \mid Z)$ with the plug-in (or oracle) probability distributions on the test set at the calibrated $\smash{\hat\lambda_\alpha^{\scriptscriptstyle (i)}}$. For Theorem~\ref{thm:sandwich_aps}, we compute $I^{\scriptscriptstyle (1)}$ in the same way, since $\smash{\hat\Lambda_\alpha^{\scriptscriptstyle (1)}}$ depends only on $Z$. However, the term $I^{\scriptscriptstyle (2)}$ in the lower bound cannot be computed using a plug-in estimate. Recall that
\begin{equation}
    I^{\scriptscriptstyle (2)} = H_{\hat\Lambda_\alpha^{\scriptscriptstyle (2)}}(Y \mid Z) - H_{\hat\Lambda_\alpha^{\scriptscriptstyle (2)}}(Y \mid X, Z).
\end{equation}
The second term can be evaluated separately for each test point, but the first term is problematic because $\smash{\hat\Lambda_\alpha^{\scriptscriptstyle (2)}}$ depends on both $X$ and $Z$. In particular,
\begin{equation}
    H_{\hat\Lambda_\alpha^{\scriptscriptstyle (2)}}(Y \mid Z) = \mathbb{E}_Z\Bigl[\sum\nolimits_{y} \min\Bigl\{1,\, \mathbb{E}\Bigl[\tfrac{p(y \mid X, Z)}{\hat\Lambda_\alpha^{\scriptscriptstyle (2)}} \Bigm| Z\Bigr]\Bigr\}\Bigr].
\end{equation}
Since the conditional expectation over $X \mid Z$ lies inside the minimum, computing it requires multiple draws of $X$ for each $Z$. In the synthetic settings, $p(x \mid z)$ is known, so we estimate it by Monte Carlo sampling with $M = 256$ draws of $X \mid Z$ for each test point. For the real-world datasets, we instead average $\smash{q(y \mid x_j, z_j) / \hat\Lambda_\alpha^{\scriptscriptstyle (2)}(x_j, z_j)}$ over the $k = 10$ nearest other neighbors $j$ of each test point in the evaluation pool. Neighbors are computed using Euclidean distance on the base features $Z$ for tabular data and cosine distance on the learned embedding of $Z$ for image data.

In the synthetic settings, we use the oracle conditional distributions, so $D^{\scriptscriptstyle (i)} = 0$ exactly. For the real-world datasets, estimating $D^{\scriptscriptstyle (i)}$ requires the true conditional distribution, so we set $D^{\scriptscriptstyle (i)} = 0$, which is consistent with the plug-in estimates of $I^{\scriptscriptstyle (i)}$. We emphasize that the resulting bounds for the real-world datasets are therefore approximate. These approximations do not undermine our validation, since the synthetic settings evaluate the bounds using the true conditional distributions, while the real-world datasets complement these results under realistic model imperfections.

The terms involving miscoverage rates are estimated by the test-set average of
\begin{equation}
    \frac{\mathbf{1}[Y \notin \mathcal{C}_\alpha(X^{\scriptscriptstyle (2)})] - \mathbf{1}[Y \notin \mathcal{C}_\alpha(X^{\scriptscriptstyle (1)})]}{\hat\lambda_\alpha^{\scriptscriptstyle (i)}}.
\end{equation}
For the APS bounds, we analogously replace $\smash{\hat\lambda_\alpha^{\scriptscriptstyle (i)}}$ with $\smash{\hat\Lambda_\alpha^{\scriptscriptstyle (i)}}$. Since the random threshold is a function of the input, this estimator is unbiased for the corresponding calibration terms in the bounds, conditional on the calibration set. 

\subsection{Evaluation protocols}

As described in Section~\ref{sec:experiments}, we reserve a dedicated pool of $n_\mathrm{eval}$ samples from each synthetic and real-world dataset for calibration and testing. We next describe how this evaluation pool is used in each experiment.

\subsubsection{Validation experiments}

We first describe the evaluation protocol for the validation experiments. For each experimental replicate, we sample a calibration dataset of size $n_\mathrm{cal}$ and a test dataset of size $n_\mathrm{test}$ without replacement from the evaluation pool. The calibration and test datasets are disjoint but do not necessarily partition the entire pool. For a given random seed, the sampled datasets are nested across sample sizes, while different seeds yield independent draws.

For each metric, we consider three regimes: (i) jointly varying $n_\mathrm{cal} = n_\mathrm{test}$, (ii) varying $n_\mathrm{cal}$ with $n_\mathrm{test}$ fixed, and (iii) varying $n_\mathrm{test}$ with $n_\mathrm{cal}$ fixed. For the normalized margin and violation rate, we use a logarithmically spaced grid of five to seven points ranging from $100$ to $\min(10{,}000,n_\mathrm{eval}/2)$, inclusive. Because the bound width becomes visually uninformative at larger sample sizes, we instead use a logarithmically spaced grid from $10$ to $\min(1{,}000, n_\mathrm{eval}/2)$. In regimes (ii) and (iii), the fixed sample size is set to the maximum of this grid.

Calibration follows the standard split conformal prediction procedure using either the probability score or APS score. For APS, we use the same score randomization draw $U \sim \mathrm{Unif}[0,1]$ between the procedures using $X^{\scriptscriptstyle (1)}$ and $X^{\scriptscriptstyle (2)}$. Both procedures are calibrated on the same calibration set and evaluated on the same test set. We estimate $\Delta_\alpha$ by the difference in average prediction set size on the test set under $X^{\scriptscriptstyle (1)}$ and $X^{\scriptscriptstyle (2)}$.

We additionally vary the nominal coverage level over $1-\alpha \in \{0.8, 0.85, 0.9, 0.95, 0.99\}$. For these experiments, $n_\mathrm{cal}$ and $n_\mathrm{test}$ are held fixed at the respective maximum sample sizes used for each metric, i.e., $\min(10{,}000, n_\mathrm{eval}/2)$ for the normalized margin and violation rate and $\min(1{,}000, n_\mathrm{eval}/2)$ for the bound width.

For the DPI validation experiments, we follow a similar calibration and evaluation procedure by sampling from the same evaluation pool. We fix the test set size at $n_\mathrm{test} = \min(10{,}000, n_\mathrm{eval}/2)$ and vary \mbox{$n_\mathrm{cal} \in \{10, 100, 1{,}000\}$}, omitting values of $n_\mathrm{cal}$ that exceed $n_\mathrm{eval} / 2$. Within each replicate, the procedures using the augmented and degraded features share the same calibration and test sets. We intentionally consider small calibration sample sizes to illustrate how the normalized DPI gap concentrates as $n_\mathrm{cal}$ increases and to demonstrate that finite-sample calibration variability can result in a negative gap at sufficiently small $n_\mathrm{cal}$.

\subsubsection{Feature selection experiments}

We next describe the evaluation protocol for the feature selection experiments. For each of 10 random seeds, we randomly partition the evaluation pool into three disjoint subsets, using $50\%$ of the samples as a selection set, $25\%$ as a calibration set, and the remaining $25\%$ as a test set. The selection set is used to compute the criteria and determine the feature ordering, while the calibration and test sets are used only to evaluate the resulting sequence. In the synthetic settings, the base features $Z$ are always observed, whereas no features are known initially for the real-world datasets.

Note that this setting contrasts with the active feature acquisition (AFA) literature, which typically selects features to acquire for each individual instance and trains dedicated acquisition policies to optimize predictive performance under a feature budget~\citep{shim2018joint}. We use a different evaluation protocol because our goal is not to propose a competitive AFA method. Instead, we use a simple protocol to empirically probe how the criteria from our theory rank features in practice.

Let $S$ denote the set of candidate features, initialized as $S = \emptyset$. At each step, we evaluate a selection criterion for every remaining candidate $X_j$ using the predicted probabilities given $X_{S \cup \{j\}}$, add the highest-scoring candidate to $S$, and repeat until all candidates are exhausted. This yields a single global feature ordering for each criterion and seed, which is shared across all test samples. For each prefix of the ordering, we then calibrate the conformal predictor on the calibration set at nominal coverage $1 - \alpha = 0.9$ and evaluate set size, coverage, and accuracy on the test set.

We compare the following selection criteria, each computed using the predicted probabilities of the oracle or mask-conditioned model as a plug-in estimate.
\begin{itemize}[leftmargin=*, topsep=2pt]
    \item $\Delta_\alpha$ (Prob.): the reduction in the average set size using probability score with $X_{S \cup \{j\}}$;

    \item $\Delta_\alpha$ (APS): the reduction in the average set size using APS score with $X_{S \cup \{j\}}$;
    
    \item $I_\lambda$ (Prob.): the conformal mutual information $I_\lambda(X_j; Y \mid X_S)$ evaluated at the geometric mean $\smash{\lambda = (\hat\lambda_\alpha^{\scriptscriptstyle (1)} \hat\lambda_\alpha^{\scriptscriptstyle (2)})^{1/2}}$ of the calibrated thresholds using $X_S$ and $X_{S \cup \{j\}}$;
    
    \item Shannon MI: the plug-in estimate $\hat I(X_j; Y \mid X_S) = \hat H(Y \mid X_S) - \hat H(Y \mid X_{S \cup \{j\}})$;
    
    \item Accuracy: the empirical top-1 accuracy gain with $X_{S \cup \{j\}}$;
    
    \item Random: a uniformly random ordering of the candidate features.
\end{itemize}
Since the conformal-based criteria require separate calibration data, we use 5-fold cross-fitting within the selection set. We calibrate the thresholds on four folds and evaluate the criterion on the held-out fold, then average the resulting values across folds. The Shannon mutual information and accuracy criteria do not require calibration and are computed directly on the full selection set.

We do not include an analogous $I_\Lambda$ criterion for the APS score since it requires the same $k$-NN approximation of the conditional expectation over $X_j \mid X_S$ as described above for $I^{\scriptscriptstyle (2)}$. Repeating this approximation for every candidate at every step is computationally expensive and introduces additional approximation error, so we omit this criterion.

\subsection{Metrics}

We next formalize the evaluation metrics in further detail.

\subsubsection{Validation experiments}

In the validation experiments, we consider four main metrics: normalized margin, violation rate, normalized width, and normalized DPI gap. Except for the violation rate, we normalize all metrics by the number of classes $K$ in order to compare across datasets. We next define each metric formally. 

Let $[L,U]$ denote the estimated bounds from Theorem~\ref{thm:sandwich_prob_score} for the probability score or Theorem~\ref{thm:sandwich_aps} for APS. When computing the normalized margin, violation rate, and normalized width, we discard replicates with invalid bounds ($U < L$), which occur in roughly $3\%$ of cases, and average over the remaining random splits of the data. This exclusion does not change our conclusions.

The \emph{normalized margin} quantifies the magnitude by which the empirical $\Delta_\alpha$ falls outside the theoretical bounds for a particular realization of the calibration and test data. Although the theoretical bounds hold exactly for the population quantities under the assumptions of our results, finite-sample Monte Carlo estimation and model approximation on the real-world datasets can lead to empirical violations. We define
\begin{equation}
    \mathrm{Normalized\ Margin} = \frac{d(\Delta_\alpha,[L,U])}{K} = \frac{\max\{L-\Delta_\alpha,\,\Delta_\alpha-U,\,0\}}{K},
\end{equation}
where $d(x,\mathcal{I})$ denotes the distance from $x$ to the interval $\mathcal{I}$, with $d(x,\mathcal{I})=0$ whenever $x \in \mathcal{I}$.

The \emph{violation rate} examines how frequently the empirical $\Delta_\alpha$ falls outside the theoretical bounds, providing a binary measure of empirical validity that does not account for the magnitude of a violation. We define
\begin{equation}
    \mathrm{Violation\ Rate} = \mathbb{P}\bigl(\Delta_\alpha \notin [L,U]\bigr),
\end{equation}
where the probability is estimated over random replicates of the calibration and test data. In practice, we expand the lower and upper bounds by a small slack of $10^{-4}$ to account for floating-point error when computing this metric.

The \emph{normalized width} measures the informativeness of the bounds through the size of the interval $[L,U]$. We define
\begin{equation}
    \mathrm{Normalized\ Width} = \frac{U-L}{K}.
\end{equation}
Since we exclude replicates with $U < L$, the normalized width is non-negative for every replicate.

Finally, the \emph{normalized DPI gap} compares the conformal set-size reduction obtained by augmenting the base features $Z$ with either $X$ or its degraded counterpart $\smash{\tilde{X}}$. For each replicate, we use identical test sets to estimate the expected set-size reductions $\Delta_\alpha$ and $\smash{\tilde{\Delta}_\alpha}$ and define
\begin{equation}
    \mathrm{Normalized\ DPI\ Gap} = \frac{\Delta_\alpha-\smash{\tilde{\Delta}_\alpha}}{K}.
\end{equation}
We examine the distribution of this gap across random draws of the calibration and test data.

\subsubsection{Feature selection experiments}

In the feature selection experiments, we evaluate each criterion by the quality of the conformal predictors obtained along its selection trajectory. Let $S_j$ denote the first $j$ candidates acquired under a given criterion. For each $j$, we compute the average prediction set size and empirical coverage on the test set using $X_{S_j}$ for both the probability score and the APS score. We additionally report the top-1 and top-5 classification accuracy of the model given $X_{S_j}$ on the test set.

Let $\mathcal{E}_j$ denote any one of these metrics evaluated using the first $j$ features. To summarize performance across the entire selection trajectory, we report the average
\begin{equation}
    \bar{\mathcal{E}} = \frac{1}{m - 1} \sum_{j=1}^{m-1} \mathcal{E}_j,
\end{equation}
where we exclude the cases $j = 0$ and $j = m$ since all criteria observe identical features at the endpoints. Intuitively, $\bar{\mathcal{E}}$ approximates the area under the curve of each metric normalized by $m$. This summarizes the performance across all steps rather than at a single fixed budget.

\subsection{Implementation details}

The validation experiments require evaluating a large number of calibration and test draws across sample sizes, random seeds, nominal coverage levels, feature sets, and conformity scores. To reduce the computational cost, we presort the predicted class probabilities for the evaluation pool and use binary search to efficiently compute the quantities required for each replicate. This optimization is an exact reformulation of the standard computation and does not affect any reported results.

For an evaluation pool of $N$ samples and $K$ classes, presorting requires $O(NK\log K)$ time. Each subsequent replicate requires $O(n_\mathrm{cal}\log n_\mathrm{cal} + n_\mathrm{test}\log K)$ time. This provides a substantial computational speedup over directly recomputing the conformal prediction sets, particularly for datasets with large $K$ such as ImageNet-1k. 

The presorting approach does not apply to the feature selection experiments, since the evaluated subsets depend on the selection criterion and the number of possible subsets grows exponentially. Instead, we naively compute the predicted probabilities for each candidate subset at each step.

\clearpage

\section{Additional experiments} \label{app:sec:experiments}

In this section, we further investigate the empirical behavior of our sandwich bounds for set-size reduction using the probability and APS scores. We then provide additional results for the DPI under different noise levels and report feature selection results across the remaining datasets. Details of the evaluation procedures and metrics are described in Appendix~\ref{app:sec:analysis}.

\subsection{Sandwich validation results}

We provide additional results demonstrating the empirical behavior of Theorems~\ref{thm:sandwich_prob_score}~and~\ref{thm:sandwich_aps}. We first investigate the violation rate of the bounds as we jointly vary $n_\mathrm{cal}$ and $n_\mathrm{test}$. We show these results in Figure~\ref{app:fig:sandwich_all}, together with the normalized margin and width results from Figure~\ref{fig:validation_experiments} for comparison.

Consistent with the normalized margin, the violation rate is nonzero at smaller calibration and test set sizes but decreases toward zero as the sample size increases. The main outlier is the Wine dataset using the probability score, for which the violation rate remains high over the evaluated range. This behavior largely reflects the binary nature of the violation rate. The corresponding normalized margin behaves similarly to those of the other datasets, indicating that the violations themselves are small in magnitude.

Comparing the probability and APS scores, we find that the violation rate is typically lower for APS across datasets and sample sizes. The lower violation rate for APS is consistent with its wider bounds, which are easier to satisfy and less sensitive to finite-sample estimation error.

We next examine the normalized margin, violation rate, and normalized width as we independently and jointly vary $n_\mathrm{cal}$ and $n_\mathrm{test}$ in Figures~\ref{app:fig:sandwich_margin}--\ref{app:fig:sandwich_width}. Additionally, we investigate the three metrics across different nominal coverage levels in Figure~\ref{app:fig:sandwich_alpha}.

\textbf{Normalized margin (Figure~\ref{app:fig:sandwich_margin}).~}~The magnitude of the empirical violations is driven primarily by the size of the test set. As $n_\mathrm{test}$ increases with $n_\mathrm{cal}$ fixed, the normalized margin decreases toward zero. In contrast, with $n_\mathrm{test}$ fixed, the margin is largely insensitive to $n_\mathrm{cal}$. This suggests that the observed violations primarily arise from finite test-set estimation error.

\vspace{12pt}
\begin{figure}[ht]
    \centering
    \includegraphics[width=\textwidth]{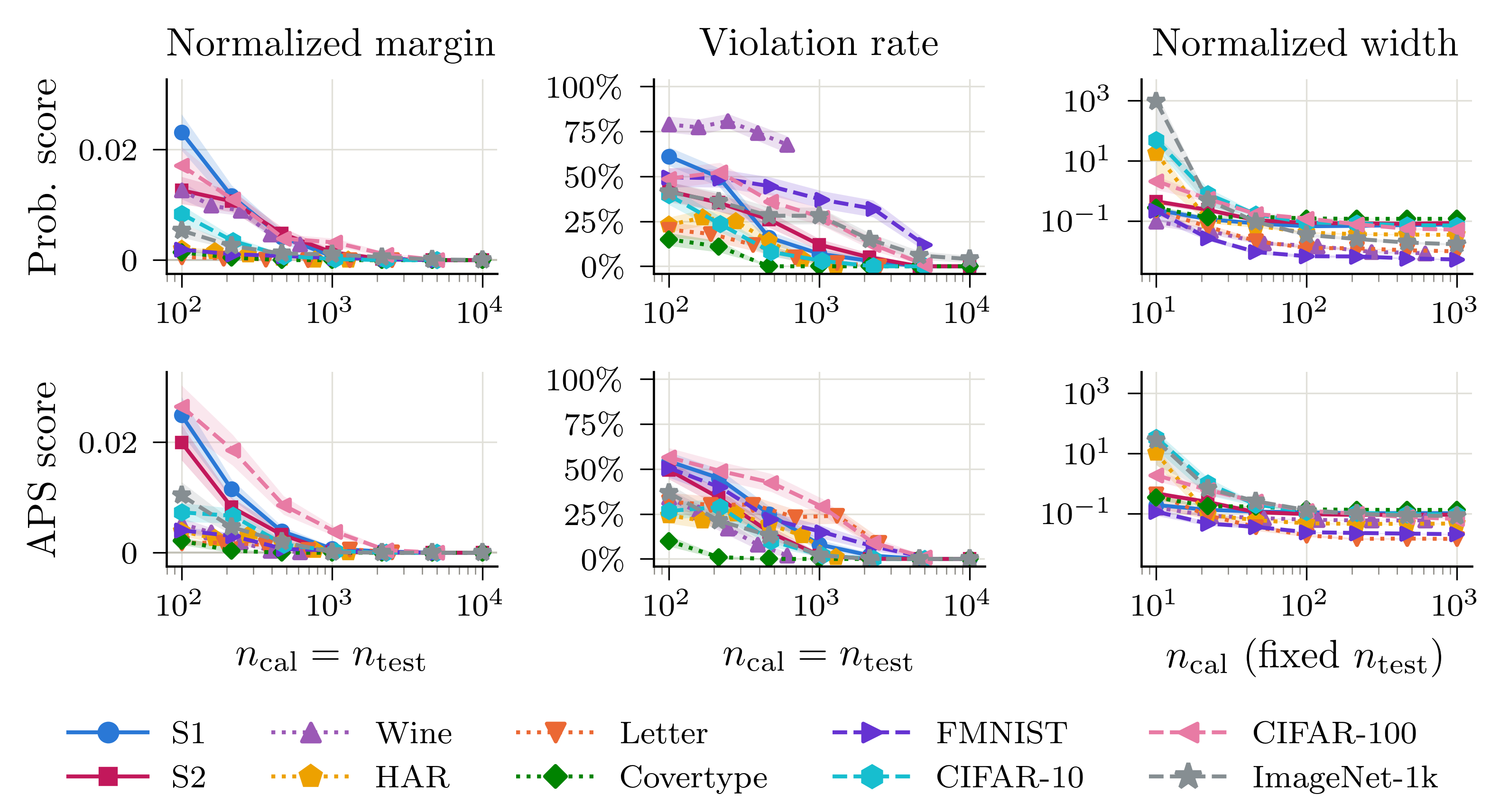}
    \caption{We show the normalized margin and violation rate as we jointly vary $n_\mathrm{cal}$ and $n_\mathrm{test}$ \mbox{(left and middle)} and the normalized width as we vary $n_\mathrm{cal}$ while keeping $n_\mathrm{test}$ fixed (right). All panels report the \mbox{mean $\pm$ standard error} based on $100$ random draws of the calibration and test sets from the fixed evaluation pool.}
    \label{app:fig:sandwich_all}
\end{figure}

\newpage

\begin{figure}[ht]
    \centering
    \includegraphics[width=\textwidth]{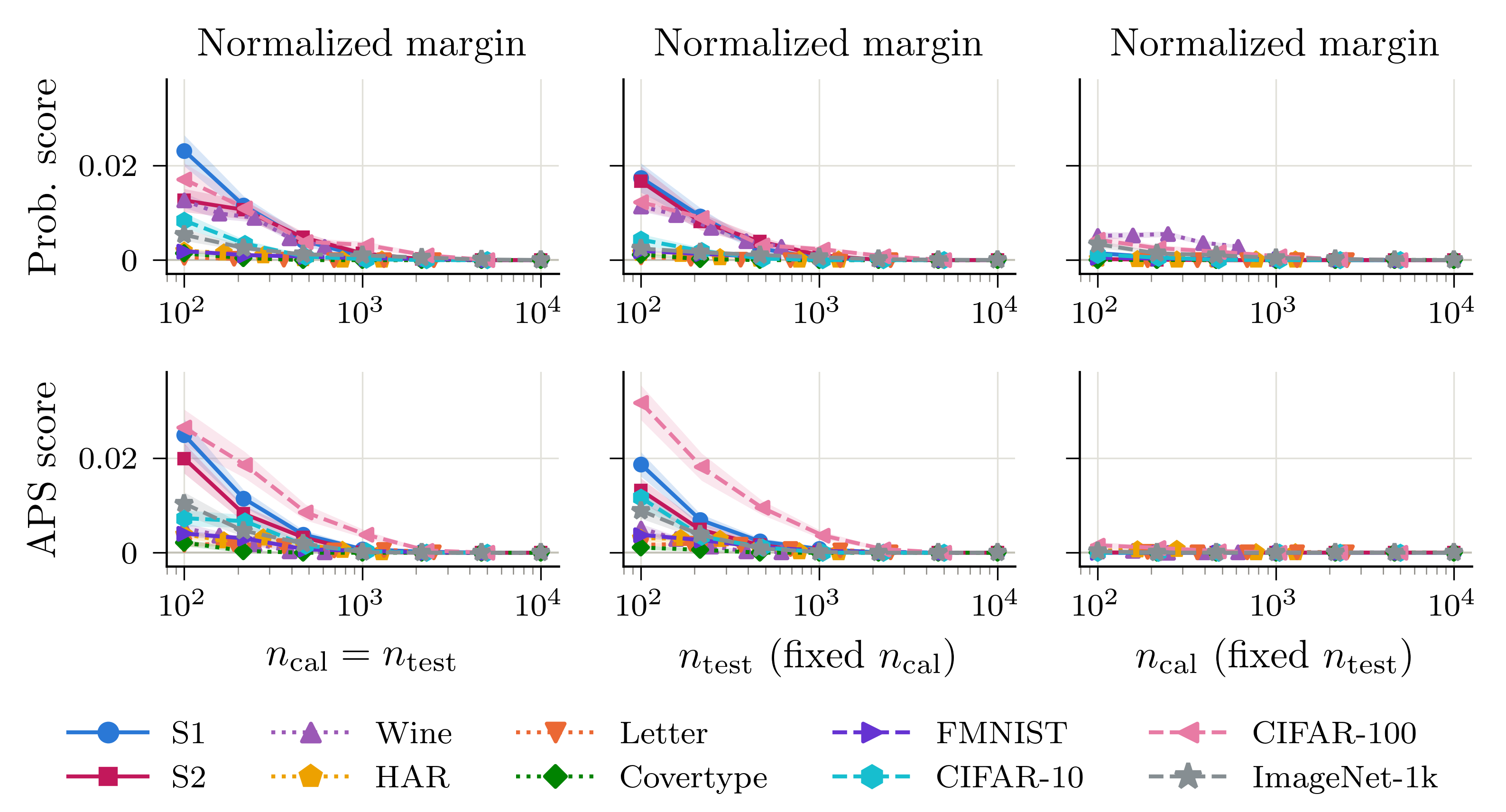}
    \caption{We show the normalized margin as we vary $n_\mathrm{cal}$ and $n_\mathrm{test}$ jointly (left) or independently (middle and right). All panels report the \mbox{mean $\pm$ standard error} based on $100$ random draws of the calibration and test sets from the fixed evaluation pool.}
    \label{app:fig:sandwich_margin}
\end{figure}

\begin{figure}[ht]
    \centering
    \includegraphics[width=\textwidth]{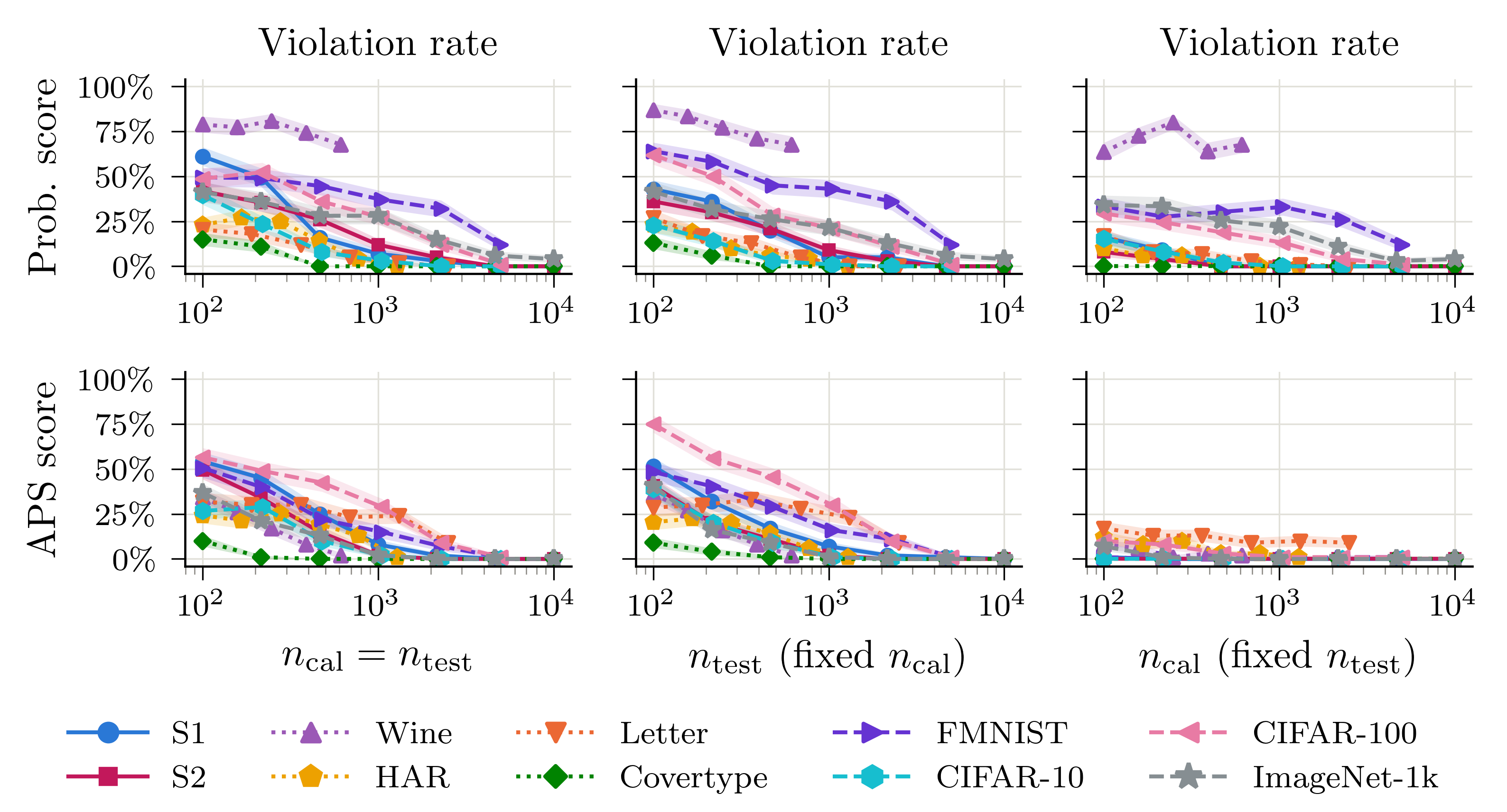}
    \caption{We show the violation rate as we vary $n_\mathrm{cal}$ and $n_\mathrm{test}$ jointly (left) or independently (middle and right). All panels report the \mbox{mean $\pm$ standard error} based on $100$ random draws of the calibration and test sets from the fixed evaluation pool.}
    \label{app:fig:sandwich_violation}
\end{figure}

\textbf{Violation rate (Figure~\ref{app:fig:sandwich_violation}).~}~The violation rate exhibits a similar dependence on $n_\mathrm{test}$. When $n_\mathrm{test}$ is fixed, the violation rate for APS is low and largely insensitive to $n_\mathrm{cal}$, whereas for the probability score it decreases toward zero as $n_\mathrm{cal}$ increases. The Wine dataset is the sole exception, where the violation rate remains between roughly $60\%$ and $80\%$.

\textbf{Normalized width (Figure~\ref{app:fig:sandwich_width}).~}~The normalized width mostly depends on the size of the calibration set and decreases quickly with $n_\mathrm{cal}$. This behavior agrees with our theoretical analysis of the finite-sample calibration terms. As the calibration size increases, the training-conditional miscoverage rates concentrate around the nominal coverage level, causing the finite-sample calibration terms to shrink and the bounds to tighten.

\begin{figure}[ht]
    \centering
    \includegraphics[width=\textwidth]{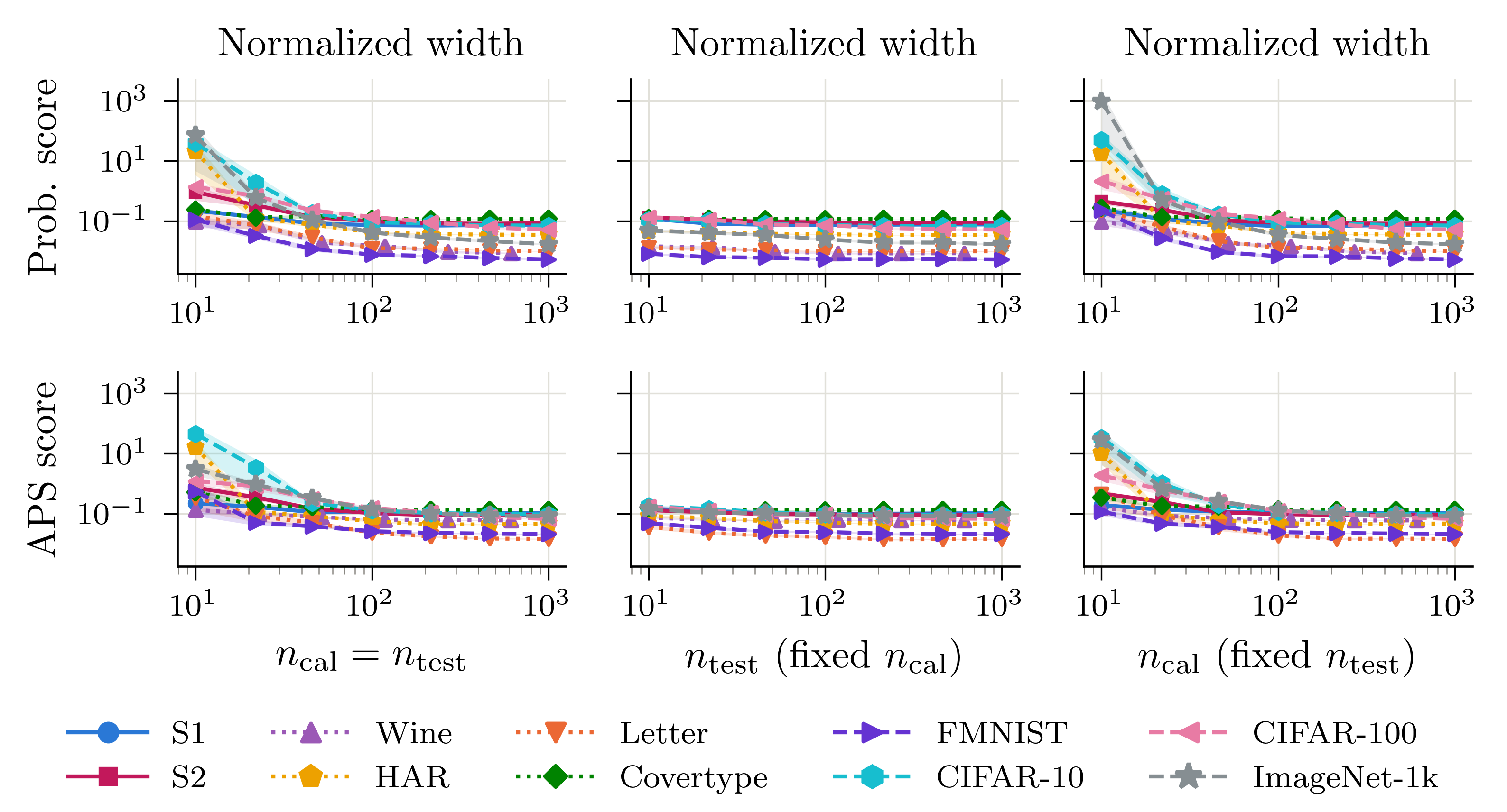}
    \caption{We show the normalized width as we vary $n_\mathrm{cal}$ and $n_\mathrm{test}$ jointly (left) or independently (middle and right). All panels report the \mbox{mean $\pm$ standard error} based on $100$ random draws of the calibration and test sets from the fixed evaluation pool.}
    \label{app:fig:sandwich_width}
\end{figure}

\begin{figure}[ht]
    \centering
    \includegraphics[width=\textwidth]{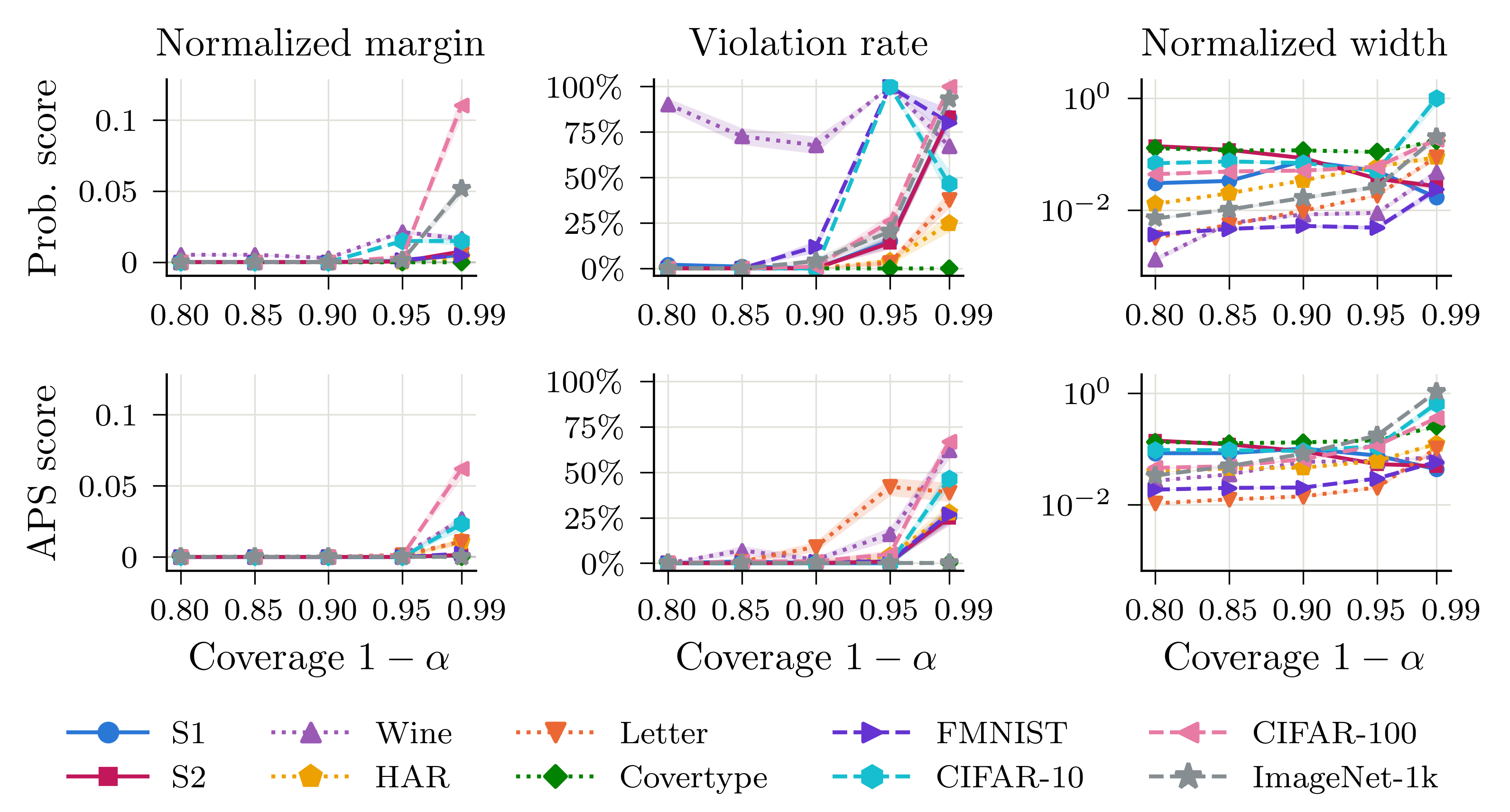}
    \caption{We show the normalized margin, violation rate, and normalized width as we vary the nominal coverage level $1 - \alpha \in \{0.8, 0.85, 0.9, 0.95, 0.99\}$, with $n_\mathrm{cal}$ and $n_\mathrm{test}$ fixed at their largest values for each dataset. All panels report the \mbox{mean $\pm$ standard error} based on $100$ random draws of the calibration and test sets from the fixed evaluation pool.}
    \label{app:fig:sandwich_alpha}
\end{figure}

\textbf{Nominal coverage levels (Figure~\ref{app:fig:sandwich_alpha}).~}~The bounds remain well-behaved across nominal coverage levels $1 - \alpha \leq 0.95$ for most datasets (with exceptions for Wine, FMNIST, and CIFAR-10). However, at $1 - \alpha = 0.99$, all three metrics increase sharply for most datasets. We attribute this behavior to sensitivity to the calibrated thresholds $\smash{\hat{\lambda}^{\scriptscriptstyle (i)}_\alpha}$ and $\smash{\hat{\Lambda}^{\scriptscriptstyle (i)}_\alpha}$. As the nominal coverage level increases, these thresholds become smaller, which amplifies the finite-sample calibration terms and consequently increases the width of the bounds. The smaller thresholds also increase the variance of the corresponding empirical estimates, leading to larger normalized margins and violation rates at fixed calibration and test sample sizes.

\subsection{DPI validation results}

We show the distribution of the normalized DPI gap across calibration sample sizes and degradation levels in Figure~\ref{app:fig:dpi}. The results for the medium degradation level ($\gamma = 0.5$) correspond to those reported in Section~\ref{sec:experiments}.

Across all degradation levels, the realized gap can be negative for particular draws of the calibration and test data. As the calibration sample size increases, however, the variability of the gap decreases and its distribution generally concentrates around a non-negative mean. Increasing the degradation level also generally increases the average gap as expected, since stronger noise removes more of the information present in $X$. These trends hold for both probability score and APS score.

These results further support the interpretation of conformal set-size reduction as a measure of information gain. Theorems~\ref{thm:dpi_prob_score} and~\ref{thm:dpi_aps} establish an approximate data processing inequality, with finite-sample calibration and model-error terms that allow the DPI gap to be negative. Empirically, the gap is generally non-negative on average, providing evidence of this data processing behavior in practice. At the same time, the negative gaps observed for individual replicates demonstrate that the deviations permitted by our theoretical results do indeed occur in practice.

\begin{figure}[t]
    \centering
    \includegraphics[width=\textwidth]{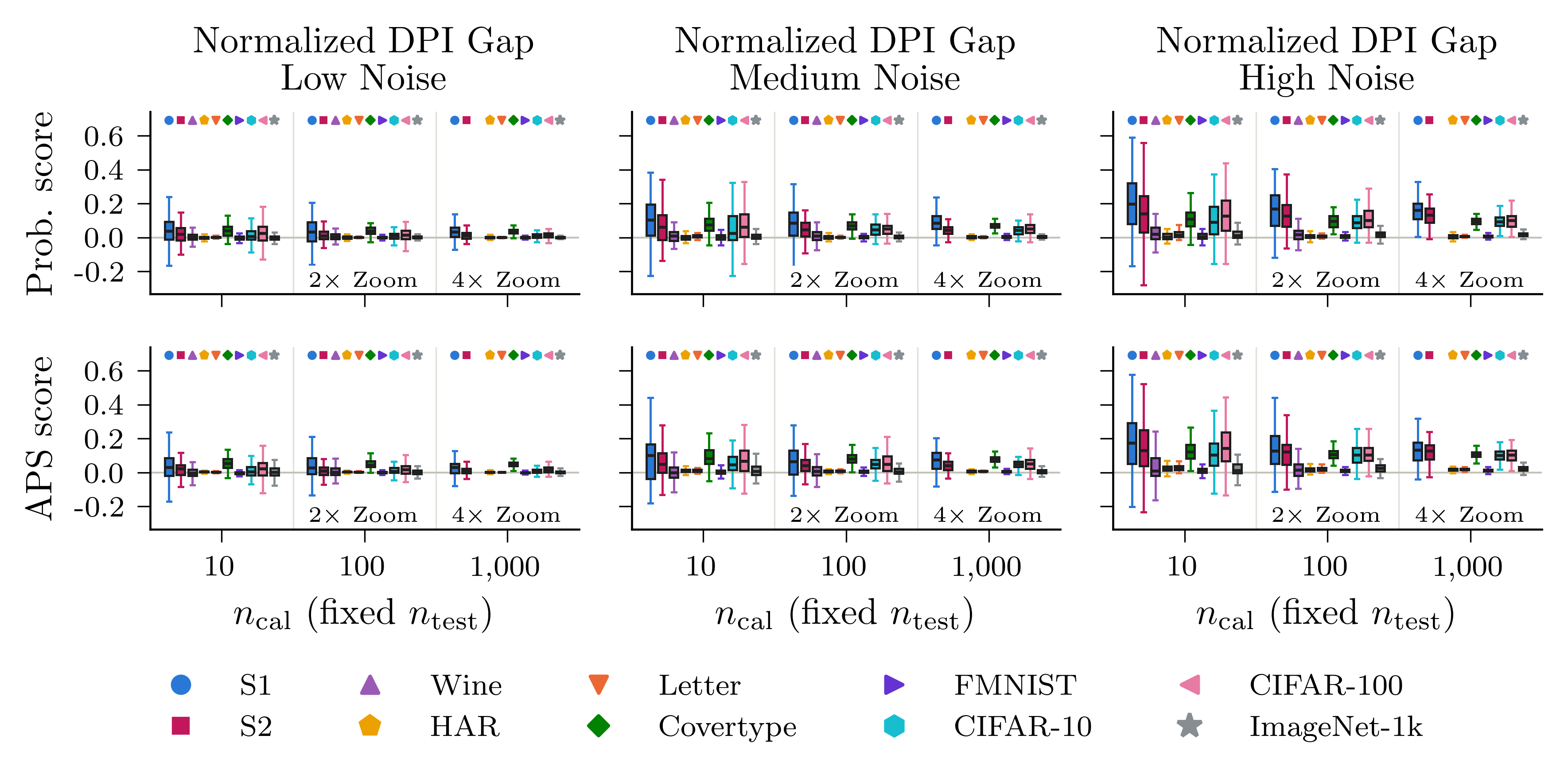}
    \caption{We show the normalized DPI gap across calibration sample sizes and degradation levels, using boxplots to summarize 100 random draws of the calibration and test sets. Each column corresponds to a degradation level, and the marker for each dataset is indicated above the corresponding boxplots. For visual clarity, we magnify the $y$-axis by \mbox{$2 \times $ and $4 \times$} for $n_\mathrm{cal}=100$ and $n_\mathrm{cal}=1{,}000$, respectively. Datasets whose evaluation pool is too small for a given calibration size are omitted.}
    \label{app:fig:dpi}
\end{figure}

\subsection{Feature selection results}

We report feature selection results for all datasets in Table~\ref{app:tab:feature_selection}, excluding synthetic setting S1 since it has too few candidate features to meaningfully distinguish the criteria.

Across most settings, selection based on set-size reduction yields the smallest prediction sets while typically achieving the best top-5 accuracy. The main exceptions occur for the APS score on Wine and FMNIST. The two $\Delta_\alpha$ criteria perform similarly to each other but do not always yield the smallest sets under their own score. Since these differences are often within one standard error, they likely reflect estimation noise in the selection set.

We also evaluate selection using $I_\lambda$, which performs similarly to $\Delta_\alpha$ under the probability score in the synthetic settings but often yields larger sets on the real-world datasets. We attribute this gap to model imperfection. Theorem~\ref{thm:sandwich_prob_score} brackets $\Delta_\alpha$ by conformal mutual information only up to the model error terms $D^{\scriptscriptstyle (i)}$, so the plug-in $I_\lambda$ at the geometric mean of the calibrated thresholds becomes a less precise proxy for $\Delta_\alpha$ when these terms are nonzero.

\begin{table}[t]
    \centering
    \caption{We evaluate feature selection performance using conformal set-size reduction. Each row represents one selection criterion and values are reported as \mbox{mean $\pm$ standard error} over 10 random splits of the evaluation pool. Arrows indicate the direction of improvement, and the best-performing method for each metric is bolded. Empirical coverage is reported for reference without bolding to indicate that differences in set size are not due to undercoverage.}
    \label{app:tab:feature_selection}
    \resizebox{\columnwidth}{!}{%
    \begin{tabular}{c c cccccc}
        \toprule
        \multirow{2}{*}{Dataset} & \multirow{2}{*}{Criterion}
        & \multicolumn{2}{c}{Prob.}
        & \multicolumn{2}{c}{APS}
        & \multicolumn{2}{c}{Acc.} \\
        \cmidrule(lr){3-4} \cmidrule(lr){5-6} \cmidrule(lr){7-8}
        &
        & $\mathbb{E}[|\mathcal{C}|]$ ($\downarrow$)
        & Cov.
        & $\mathbb{E}[|\mathcal{C}|]$ ($\downarrow$)
        & Cov.
        & Top-1 ($\uparrow$)
        & Top-5 ($\uparrow$) \\
        \midrule

        \multirow{6}{*}{\rotatebox[origin=c]{90}{S2}}
        & $\Delta_\alpha$ (Prob.)
        & $11.464 \pm 0.032$
        & $0.900 \pm 0.001$
        & $11.952 \pm 0.029$
        & $0.900 \pm 0.001$
        & $0.366 \pm 0.001$
        & $\mathbf{0.722 \pm 0.000}$ \\
        &
        $\Delta_\alpha$ (APS)
        & $11.459 \pm 0.033$
        & $0.900 \pm 0.001$
        & $11.950 \pm 0.029$
        & $0.900 \pm 0.001$
        & $0.366 \pm 0.001$
        & $\mathbf{0.722 \pm 0.001}$ \\
        &
        $I_\lambda$ (Prob.)
        & $\mathbf{11.456 \pm 0.032}$
        & $0.900 \pm 0.001$
        & $\mathbf{11.948 \pm 0.028}$
        & $0.900 \pm 0.001$
        & $0.366 \pm 0.001$
        & $\mathbf{0.722 \pm 0.001}$ \\
        &
        Shannon MI
        & $11.473 \pm 0.031$
        & $0.900 \pm 0.001$
        & $11.959 \pm 0.027$
        & $0.900 \pm 0.001$
        & $\mathbf{0.367 \pm 0.001}$
        & $\mathbf{0.722 \pm 0.001}$ \\
        &
        Accuracy
        & $11.503 \pm 0.029$
        & $0.900 \pm 0.001$
        & $11.986 \pm 0.026$
        & $0.900 \pm 0.001$
        & $\mathbf{0.367 \pm 0.001}$
        & $\mathbf{0.722 \pm 0.001}$ \\
        &
        Random
        & $12.207 \pm 0.085$
        & $0.900 \pm 0.001$
        & $12.670 \pm 0.085$
        & $0.900 \pm 0.001$
        & $0.354 \pm 0.002$
        & $0.707 \pm 0.002$ \\

        \midrule

        \multirow{6}{*}{\rotatebox[origin=c]{90}{S3}}
        & $\Delta_\alpha$ (Prob.)
        & $2.239 \pm 0.002$
        & $0.900 \pm 0.000$
        & $2.629 \pm 0.002$
        & $0.900 \pm 0.000$
        & $0.750 \pm 0.001$
        & $0.952 \pm 0.000$ \\
        &
        $\Delta_\alpha$ (APS)
        & $2.243 \pm 0.002$
        & $0.900 \pm 0.000$
        & $\mathbf{2.611 \pm 0.002}$
        & $0.900 \pm 0.000$
        & $0.752 \pm 0.000$
        & $0.952 \pm 0.000$ \\
        &
        $I_\lambda$ (Prob.)
        & $\mathbf{2.237 \pm 0.003}$
        & $0.900 \pm 0.000$
        & $2.626 \pm 0.002$
        & $0.900 \pm 0.000$
        & $0.753 \pm 0.000$
        & $\mathbf{0.953 \pm 0.000}$ \\
        &
        Shannon MI
        & $2.447 \pm 0.005$
        & $0.900 \pm 0.000$
        & $2.876 \pm 0.006$
        & $0.900 \pm 0.000$
        & $0.767 \pm 0.000$
        & $0.952 \pm 0.000$ \\
        &
        Accuracy
        & $3.613 \pm 0.020$
        & $0.900 \pm 0.000$
        & $4.487 \pm 0.029$
        & $0.900 \pm 0.000$
        & $\mathbf{0.768 \pm 0.001}$
        & $0.914 \pm 0.001$ \\
        &
        Random
        & $3.046 \pm 0.100$
        & $0.900 \pm 0.000$
        & $3.591 \pm 0.116$
        & $0.900 \pm 0.000$
        & $0.672 \pm 0.009$
        & $0.931 \pm 0.004$ \\

        \midrule

        \multirow{6}{*}{\rotatebox[origin=c]{90}{Wine}}
        & $\Delta_\alpha$ (Prob.)
        & $\mathbf{2.265 \pm 0.034}$
        & $0.904 \pm 0.008$
        & $2.387 \pm 0.029$
        & $0.905 \pm 0.006$
        & $\mathbf{0.553 \pm 0.006}$
        & $\mathbf{0.996 \pm 0.001}$ \\
        &
        $\Delta_\alpha$ (APS)
        & $2.279 \pm 0.042$
        & $0.902 \pm 0.008$
        & $2.378 \pm 0.036$
        & $0.904 \pm 0.007$
        & $0.548 \pm 0.005$
        & $0.995 \pm 0.001$ \\
        &
        $I_\lambda$ (Prob.)
        & $2.334 \pm 0.050$
        & $0.903 \pm 0.010$
        & $2.417 \pm 0.040$
        & $0.904 \pm 0.009$
        & $0.529 \pm 0.005$
        & $0.995 \pm 0.001$ \\
        &
        Shannon MI
        & $2.288 \pm 0.042$
        & $0.900 \pm 0.009$
        & $2.383 \pm 0.036$
        & $0.903 \pm 0.008$
        & $0.550 \pm 0.005$
        & $0.995 \pm 0.001$ \\
        &
        Accuracy
        & $2.284 \pm 0.042$
        & $0.904 \pm 0.009$
        & $\mathbf{2.371 \pm 0.032}$
        & $0.904 \pm 0.007$
        & $0.552 \pm 0.004$
        & $\mathbf{0.996 \pm 0.001}$ \\
        &
        Random
        & $2.420 \pm 0.046$
        & $0.906 \pm 0.007$
        & $2.506 \pm 0.046$
        & $0.902 \pm 0.007$
        & $0.507 \pm 0.005$
        & $0.995 \pm 0.001$ \\

        \midrule

        \multirow{6}{*}{\rotatebox[origin=c]{90}{HAR}}
        & $\Delta_\alpha$ (Prob.)
        & $0.912 \pm 0.005$
        & $0.899 \pm 0.005$
        & $0.963 \pm 0.002$
        & $0.900 \pm 0.002$
        & $\mathbf{0.971 \pm 0.001}$
        & $\mathbf{1.000 \pm 0.000}$ \\
        &
        $\Delta_\alpha$ (APS)
        & $\mathbf{0.909 \pm 0.006}$
        & $0.896 \pm 0.006$
        & $\mathbf{0.960 \pm 0.002}$
        & $0.899 \pm 0.001$
        & $0.970 \pm 0.001$
        & $\mathbf{1.000 \pm 0.000}$ \\
        &
        $I_\lambda$ (Prob.)
        & $0.918 \pm 0.005$
        & $0.897 \pm 0.005$
        & $0.961 \pm 0.001$
        & $0.898 \pm 0.002$
        & $0.968 \pm 0.002$
        & $\mathbf{1.000 \pm 0.000}$ \\
        &
        Shannon MI
        & $0.916 \pm 0.005$
        & $0.898 \pm 0.005$
        & $\mathbf{0.960 \pm 0.001}$
        & $0.899 \pm 0.002$
        & $0.968 \pm 0.002$
        & $\mathbf{1.000 \pm 0.000}$ \\
        &
        Accuracy
        & $0.910 \pm 0.005$
        & $0.898 \pm 0.004$
        & $0.961 \pm 0.003$
        & $0.900 \pm 0.001$
        & $\mathbf{0.971 \pm 0.001}$
        & $\mathbf{1.000 \pm 0.000}$ \\
        &
        Random
        & $0.999 \pm 0.015$
        & $0.900 \pm 0.004$
        & $1.064 \pm 0.017$
        & $0.899 \pm 0.002$
        & $0.934 \pm 0.005$
        & $\mathbf{1.000 \pm 0.000}$ \\

        \midrule

        \multirow{6}{*}{\rotatebox[origin=c]{90}{Letter}}
        & $\Delta_\alpha$ (Prob.)
        & $\mathbf{2.753 \pm 0.016}$
        & $0.898 \pm 0.003$
        & $\mathbf{3.358 \pm 0.019}$
        & $0.900 \pm 0.001$
        & $0.756 \pm 0.001$
        & $\mathbf{0.926 \pm 0.000}$ \\
        &
        $\Delta_\alpha$ (APS)
        & $2.835 \pm 0.038$
        & $0.898 \pm 0.002$
        & $3.398 \pm 0.032$
        & $0.900 \pm 0.001$
        & $0.758 \pm 0.002$
        & $0.924 \pm 0.001$ \\
        &
        $I_\lambda$ (Prob.)
        & $2.824 \pm 0.022$
        & $0.898 \pm 0.003$
        & $3.385 \pm 0.028$
        & $0.901 \pm 0.002$
        & $0.760 \pm 0.001$
        & $0.924 \pm 0.001$ \\
        &
        Shannon MI
        & $2.801 \pm 0.019$
        & $0.899 \pm 0.003$
        & $3.399 \pm 0.026$
        & $0.901 \pm 0.001$
        & $0.758 \pm 0.001$
        & $0.924 \pm 0.001$ \\
        &
        Accuracy
        & $2.879 \pm 0.020$
        & $0.899 \pm 0.002$
        & $3.415 \pm 0.022$
        & $0.901 \pm 0.001$
        & $\mathbf{0.763 \pm 0.001}$
        & $0.922 \pm 0.001$ \\
        &
        Random
        & $4.459 \pm 0.184$
        & $0.897 \pm 0.002$
        & $5.054 \pm 0.174$
        & $0.900 \pm 0.001$
        & $0.642 \pm 0.011$
        & $0.871 \pm 0.007$ \\

        \midrule

        \multirow{6}{*}{\rotatebox[origin=c]{90}{Covertype}}
        & $\Delta_\alpha$ (Prob.)
        & $\mathbf{1.010 \pm 0.001}$
        & $0.900 \pm 0.001$
        & $1.139 \pm 0.001$
        & $0.900 \pm 0.000$
        & $\mathbf{0.906 \pm 0.001}$
        & $\mathbf{1.000 \pm 0.000}$ \\
        &
        $\Delta_\alpha$ (APS)
        & $1.011 \pm 0.001$
        & $0.900 \pm 0.001$
        & $1.140 \pm 0.001$
        & $0.900 \pm 0.000$
        & $\mathbf{0.906 \pm 0.000}$
        & $\mathbf{1.000 \pm 0.000}$ \\
        &
        $I_\lambda$ (Prob.)
        & $\mathbf{1.010 \pm 0.001}$
        & $0.900 \pm 0.001$
        & $\mathbf{1.138 \pm 0.001}$
        & $0.900 \pm 0.000$
        & $\mathbf{0.906 \pm 0.001}$
        & $\mathbf{1.000 \pm 0.000}$ \\
        &
        Shannon MI
        & $\mathbf{1.010 \pm 0.001}$
        & $0.900 \pm 0.001$
        & $\mathbf{1.138 \pm 0.001}$
        & $0.900 \pm 0.000$
        & $\mathbf{0.906 \pm 0.001}$
        & $\mathbf{1.000 \pm 0.000}$ \\
        &
        Accuracy
        & $\mathbf{1.010 \pm 0.001}$
        & $0.900 \pm 0.001$
        & $1.139 \pm 0.001$
        & $0.900 \pm 0.000$
        & $\mathbf{0.906 \pm 0.001}$
        & $\mathbf{1.000 \pm 0.000}$ \\
        &
        Random
        & $1.593 \pm 0.035$
        & $0.901 \pm 0.001$
        & $1.731 \pm 0.036$
        & $0.900 \pm 0.000$
        & $0.717 \pm 0.011$
        & $0.996 \pm 0.000$ \\

        \midrule

        \multirow{6}{*}{\rotatebox[origin=c]{90}{FMNIST}}
        & $\Delta_\alpha$ (Prob.)
        & $\mathbf{1.020 \pm 0.002}$
        & $0.900 \pm 0.002$
        & $1.220 \pm 0.004$
        & $0.898 \pm 0.001$
        & $0.903 \pm 0.001$
        & $\mathbf{0.998 \pm 0.000}$ \\
        &
        $\Delta_\alpha$ (APS)
        & $1.028 \pm 0.002$
        & $0.901 \pm 0.002$
        & $1.167 \pm 0.002$
        & $0.899 \pm 0.001$
        & $0.901 \pm 0.001$
        & $\mathbf{0.998 \pm 0.000}$ \\
        &
        $I_\lambda$ (Prob.)
        & $1.040 \pm 0.002$
        & $0.901 \pm 0.002$
        & $1.166 \pm 0.002$
        & $0.899 \pm 0.001$
        & $0.899 \pm 0.001$
        & $\mathbf{0.998 \pm 0.000}$ \\
        &
        Shannon MI
        & $1.037 \pm 0.002$
        & $0.901 \pm 0.002$
        & $\mathbf{1.163 \pm 0.002}$
        & $0.899 \pm 0.001$
        & $0.901 \pm 0.001$
        & $\mathbf{0.998 \pm 0.000}$ \\
        &
        Accuracy
        & $1.023 \pm 0.002$
        & $0.902 \pm 0.002$
        & $1.220 \pm 0.003$
        & $0.898 \pm 0.001$
        & $\mathbf{0.904 \pm 0.001}$
        & $0.997 \pm 0.000$ \\
        &
        Random
        & $1.064 \pm 0.004$
        & $0.902 \pm 0.002$
        & $1.259 \pm 0.004$
        & $0.899 \pm 0.001$
        & $0.896 \pm 0.001$
        & $0.997 \pm 0.000$ \\

        \midrule

        \multirow{6}{*}{\rotatebox[origin=c]{90}{CIFAR-10}}
        & $\Delta_\alpha$ (Prob.)
        & $\mathbf{1.489 \pm 0.007}$
        & $0.899 \pm 0.003$
        & $1.774 \pm 0.007$
        & $0.899 \pm 0.001$
        & $0.795 \pm 0.002$
        & $\mathbf{0.986 \pm 0.000}$ \\
        &
        $\Delta_\alpha$ (APS)
        & $1.507 \pm 0.006$
        & $0.900 \pm 0.002$
        & $\mathbf{1.742 \pm 0.004}$
        & $0.900 \pm 0.001$
        & $0.793 \pm 0.002$
        & $\mathbf{0.986 \pm 0.000}$ \\
        &
        $I_\lambda$ (Prob.)
        & $1.592 \pm 0.010$
        & $0.900 \pm 0.002$
        & $1.774 \pm 0.008$
        & $0.899 \pm 0.001$
        & $0.781 \pm 0.001$
        & $0.984 \pm 0.000$ \\
        &
        Shannon MI
        & $1.561 \pm 0.007$
        & $0.899 \pm 0.002$
        & $1.756 \pm 0.005$
        & $0.899 \pm 0.001$
        & $0.785 \pm 0.001$
        & $0.984 \pm 0.000$ \\
        &
        Accuracy
        & $1.502 \pm 0.008$
        & $0.901 \pm 0.002$
        & $1.780 \pm 0.004$
        & $0.900 \pm 0.001$
        & $\mathbf{0.800 \pm 0.002}$
        & $0.985 \pm 0.000$ \\
        &
        Random
        & $1.711 \pm 0.012$
        & $0.900 \pm 0.002$
        & $1.996 \pm 0.015$
        & $0.900 \pm 0.002$
        & $0.772 \pm 0.003$
        & $0.978 \pm 0.001$ \\

        \midrule

        \multirow{6}{*}{\rotatebox[origin=c]{90}{CIFAR-100}}
        & $\Delta_\alpha$ (Prob.)
        & $12.683 \pm 0.100$
        & $0.904 \pm 0.002$
        & $15.661 \pm 0.127$
        & $0.902 \pm 0.002$
        & $0.466 \pm 0.002$
        & $\mathbf{0.753 \pm 0.002}$ \\
        &
        $\Delta_\alpha$ (APS)
        & $\mathbf{12.594 \pm 0.088}$
        & $0.902 \pm 0.002$
        & $\mathbf{15.472 \pm 0.129}$
        & $0.902 \pm 0.002$
        & $\mathbf{0.467 \pm 0.002}$
        & $\mathbf{0.753 \pm 0.002}$ \\
        &
        $I_\lambda$ (Prob.)
        & $13.700 \pm 0.115$
        & $0.901 \pm 0.002$
        & $16.438 \pm 0.110$
        & $0.901 \pm 0.002$
        & $0.456 \pm 0.003$
        & $0.740 \pm 0.002$ \\
        &
        Shannon MI
        & $13.763 \pm 0.115$
        & $0.902 \pm 0.002$
        & $16.597 \pm 0.145$
        & $0.902 \pm 0.002$
        & $0.458 \pm 0.003$
        & $0.741 \pm 0.002$ \\
        &
        Accuracy
        & $12.872 \pm 0.130$
        & $0.902 \pm 0.002$
        & $15.877 \pm 0.158$
        & $0.901 \pm 0.002$
        & $\mathbf{0.467 \pm 0.003}$
        & $0.752 \pm 0.002$ \\
        &
        Random
        & $14.243 \pm 0.253$
        & $0.902 \pm 0.003$
        & $17.189 \pm 0.224$
        & $0.902 \pm 0.003$
        & $0.454 \pm 0.002$
        & $0.737 \pm 0.002$ \\

        \midrule

        \multirow{6}{*}{\rotatebox[origin=c]{90}{ImageNet-1k}}
        & $\Delta_\alpha$ (Prob.)
        & $\mathbf{47.159 \pm 0.297}$
        & $0.901 \pm 0.001$
        & $58.993 \pm 0.352$
        & $0.900 \pm 0.001$
        & $0.476 \pm 0.001$
        & $0.711 \pm 0.001$ \\
        &
        $\Delta_\alpha$ (APS)
        & $47.262 \pm 0.322$
        & $0.901 \pm 0.001$
        & $\mathbf{58.155 \pm 0.337}$
        & $0.901 \pm 0.001$
        & $0.475 \pm 0.001$
        & $0.711 \pm 0.001$ \\
        &
        $I_\lambda$ (Prob.)
        & $51.958 \pm 0.443$
        & $0.901 \pm 0.001$
        & $63.887 \pm 0.359$
        & $0.900 \pm 0.001$
        & $0.472 \pm 0.001$
        & $0.707 \pm 0.001$ \\
        &
        Shannon MI
        & $55.959 \pm 0.358$
        & $0.900 \pm 0.001$
        & $68.675 \pm 0.320$
        & $0.900 \pm 0.001$
        & $0.471 \pm 0.001$
        & $0.705 \pm 0.001$ \\
        &
        Accuracy
        & $51.504 \pm 0.494$
        & $0.900 \pm 0.001$
        & $64.796 \pm 0.581$
        & $0.900 \pm 0.001$
        & $\mathbf{0.480 \pm 0.001}$
        & $\mathbf{0.713 \pm 0.001}$ \\
        &
        Random
        & $58.187 \pm 0.522$
        & $0.901 \pm 0.001$
        & $72.623 \pm 0.616$
        & $0.900 \pm 0.001$
        & $0.472 \pm 0.001$
        & $0.704 \pm 0.001$ \\

        \bottomrule
    \end{tabular}%
}

\end{table}

\clearpage

\begin{figure}[t]
    \centering
    \includegraphics[width=\textwidth]{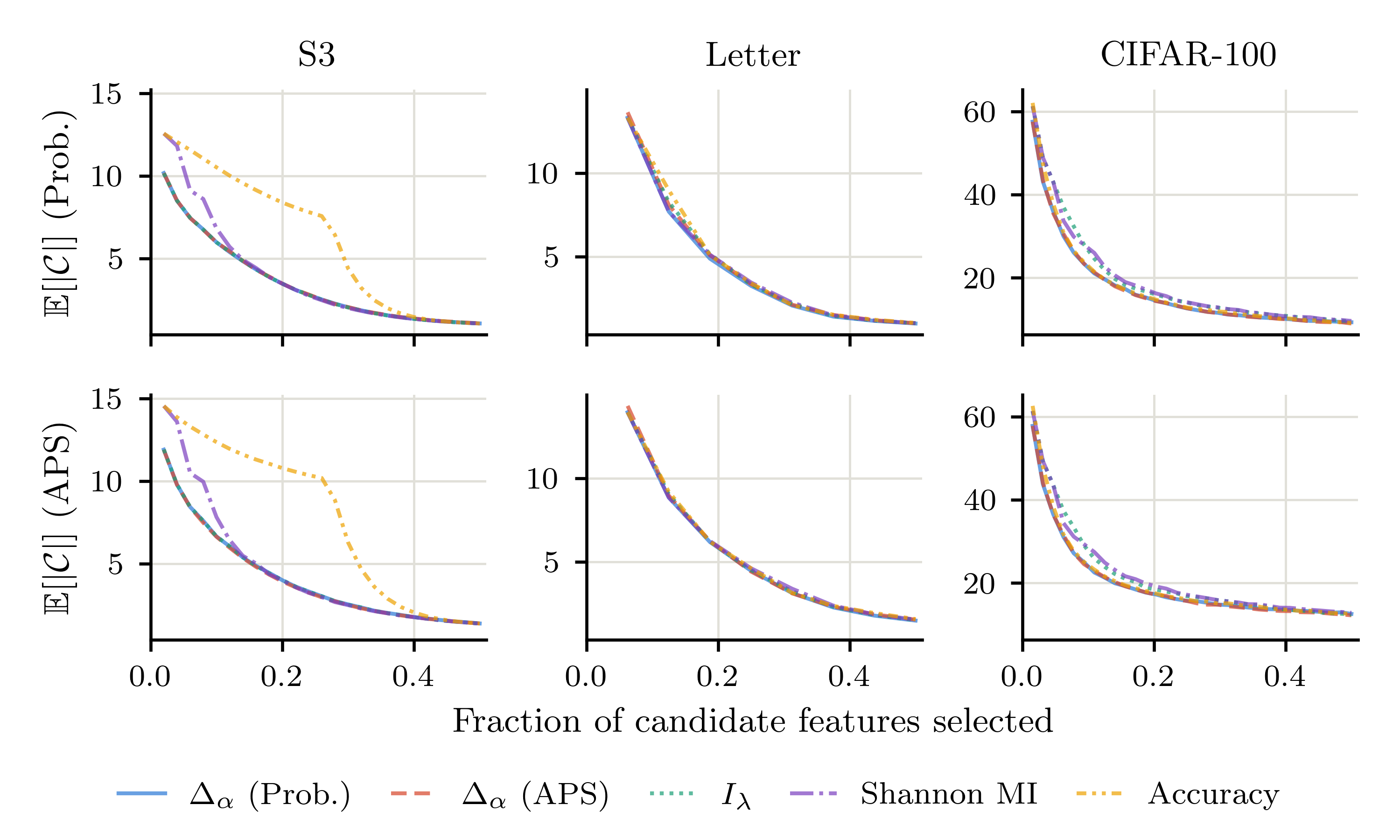}
    \caption{We show the expected set size using the probability score and APS score against the fraction of candidate features selected. Each curve is the mean over 10 random splits of the evaluation pool. We show only the first half of the selection trajectory, after which the curves largely coincide. On S3 and CIFAR-100, the gap between $\Delta_\alpha$ and Shannon MI is concentrated in the early steps, which becomes diluted when averaging over the full trajectory.}
    \label{app:fig:feature_selection}
\end{figure}

The metrics in Table~\ref{app:tab:feature_selection} are averaged over all intermediate steps of the feature ordering. Since every criterion eventually selects all candidate features, the set sizes converge as more features are acquired, and averaging over the full trajectory can mask differences that occur early in selection. To examine this, we show the expected set size at each step for the three datasets from Table~\ref{tab:feature_selection} in Figure~\ref{app:fig:feature_selection}. The $\Delta_\alpha$ criteria yield noticeably smaller sets than Shannon MI when only a small fraction of the candidates has been selected for S3 and CIFAR-100. On Letter, $\Delta_\alpha$ using the probability score provides only a small improvement throughout the trajectory.

\subsection{Subset-specific model ablation} \label{app:subsec:subset_specific_ablation}

In the feature selection experiments, the selection criteria and the evaluation both rely on a single mask-conditioned model. A difference between two criteria could therefore reflect how well this model fits particular subsets of features, rather than the information those features carry about $Y$. To test for this effect, we retrain a subset-specific model for each selected subset on CIFAR-100.

For each criterion and random split, we take the feature ordering from the main experiment and consider its first $k \in \{1, 2, 4, 8\}$ candidates, since the difference between the criteria is concentrated in the early steps of selection (Figure~\ref{app:fig:feature_selection}). We train a separate classification head on each resulting subset of the frozen backbone representation, using the same architecture and hyperparameters as the mask-conditioned head but without masking. Each subset-specific head is then calibrated and evaluated on the same data as the mask-conditioned model.

Table~\ref{app:tab:subset_specific_ablation} reports the difference in expected set size between Shannon MI and $\Delta_\alpha$ under both models. For simplicity, each $\Delta_\alpha$ criterion is evaluated under its own score. The subset-specific models yield smaller sets than the mask-conditioned model for every criterion, and the advantage of $\Delta_\alpha$ decreases when averaged over $k$ from $4.07$ to $1.89$ under the probability score and from $3.84$ to $2.11$ under the APS score. However, $\Delta_\alpha$ continues to yield smaller sets at every $k$, with the largest differences in the first two steps.

\newpage

\begin{table}[t]
    \centering
    \caption{Difference in expected set size between Shannon MI and $\Delta_\alpha$ on CIFAR-100 when each feature ordering is evaluated with the mask-conditioned model or with subset-specific models. Positive values indicate that $\Delta_\alpha$ yields smaller sets. Values are reported as \mbox{mean $\pm$ standard error} over 10 random splits of the evaluation pool.}
    \label{app:tab:subset_specific_ablation}
    \begin{tabular}{c cc cc}
        \toprule
        & \multicolumn{2}{c}{Prob.} & \multicolumn{2}{c}{APS} \\
        \cmidrule(lr){2-3} \cmidrule(lr){4-5}
        $k$ & Mask-cond. & Subset-specific & Mask-cond. & Subset-specific \\
        \midrule
        1    & $3.75 \pm 0.36$ & $2.49 \pm 0.32$ & $3.59 \pm 0.40$ & $2.33 \pm 0.35$ \\
        2    & $5.81 \pm 0.46$ & $4.06 \pm 0.35$ & $5.24 \pm 0.54$ & $3.87 \pm 0.46$ \\
        4    & $3.71 \pm 0.33$ & $0.42 \pm 0.24$ & $2.98 \pm 0.67$ & $1.32 \pm 0.31$ \\
        8    & $3.00 \pm 0.46$ & $0.58 \pm 0.32$ & $3.56 \pm 0.64$ & $0.91 \pm 0.28$ \\
        \midrule
        Avg. & $4.07 \pm 0.27$ & $1.89 \pm 0.14$ & $3.84 \pm 0.41$ & $2.11 \pm 0.14$ \\
        \bottomrule
    \end{tabular}
\end{table}

\phantom{text}

\clearpage

\section{Compute resources}

All training and analysis were performed on a single machine running Ubuntu 24.04 with 32 GB of RAM and an NVIDIA GeForce RTX 5070 Ti GPU. The hyperparameter grid search required approximately 10 hours and subsequent model training required approximately 12 hours. The validation experiments were run on a single CPU thread without GPU acceleration, whereas the feature selection experiments evaluated the mask-conditioned models on the GPU. All downstream analyses required approximately 4 hours in total.

\section{Experimental assets} \label{app:sec:assets}

We obtain the Human Activity Recognition dataset~\citep{reyesortiz2013human} directly from the UCI Machine Learning Repository, available under the CC BY 4.0 license. For \mbox{ImageNet-1k}, we use the $64 \times 64$ downsampled version of~\citet{chrabaszcz2017downsampled}, which is made available through the \texttt{ChocolateDave/imagenet-64} mirror on Hugging Face. The pretrained \mbox{ResNet-18} weights~\citep{he2016deep} are the \texttt{IMAGENET1K\_V1} weights distributed with torchvision. All remaining datasets are obtained through OpenML or torchvision, with their original sources cited in Section~\ref{sec:experiments} and Appendix~\ref{app:sec:datasets}. Code for reproducing our analyses will be released upon publication.

\end{document}